\newcommand{\citep}{\cite} %
\newcommand{\citet}{\cite} %
\newcommand{\citeauthor}{\cite} %
\definecolor{codegreen}{rgb}{0,0.6,0}
\definecolor{codegray}{rgb}{0.5,0.5,0.5}
\definecolor{codepurple}{rgb}{0.58,0,0.82}
\definecolor{backcolour}{rgb}{0.95,0.95,0.92}
\lstdefinestyle{mystyle}{
    backgroundcolor=\color{backcolour},
    commentstyle=\color{codegreen},
    keywordstyle=\color{magenta},
    stringstyle=\color{codepurple},
    basicstyle=\ttfamily\footnotesize,
    breakatwhitespace=false,         
    breaklines=true,                 
    keepspaces=true,                
    showspaces=false,                
    showstringspaces=false,
    showtabs=false,                  
    tabsize=2
}
\definecolor{lightgray}{gray}{0.9}
\newtcolorbox{findingbox}{
  colback=white,
  colframe=blue!50!black,
  fonttitle=\bfseries,
  title={Finding},
  sharp corners,
  boxrule=1pt,
  left=0pt
}
\definecolor{codegreen}{rgb}{0,0.6,0}
\definecolor{codegray}{rgb}{0.5,0.5,0.5}
\definecolor{codepurple}{rgb}{0.58,0,0.82}
\definecolor{backcolour}{rgb}{0.95,0.95,0.92}
\lstdefinestyle{mystyle}{
    backgroundcolor=\color{backcolour},
    commentstyle=\color{codegreen},
    keywordstyle=\color{magenta},
    stringstyle=\color{codepurple},
    basicstyle=\ttfamily\footnotesize,
    breakatwhitespace=false,         
    breaklines=true,                 
    keepspaces=true,                
    showspaces=false,                
    showstringspaces=false,
    showtabs=false,                  
    tabsize=2
}
\definecolor{lightgreen}{HTML}{EDF8FB}
\definecolor{mediumgreen}{HTML}{66C2A4}
\definecolor{darkgreen}{HTML}{006D2C}
\theoremstyle{plain}
\newtheorem{theorem}{Theorem}[section]
\newtheorem{lemma}[theorem]{Lemma}
\theoremstyle{definition}
\theoremstyle{remark}
\newcommand{\blockcomment}[1]{}
\renewcommand{\l}{\left}
\renewcommand{\r}{\right}
\DeclareMathOperator*{\argmax}{argmax}
\newcommand{\I}{\mathds{1}}
\newcommand{\cD}{\mathcal{D}}
\newcommand{\cL}{\mathcal{L}}
\newcommand{\cP}{\mathcal{P}}
\newcommand{\cQ}{\mathcal{Q}}
\renewcommand{\a}{\alpha}
\begin{document}
\title{Computational Approaches to Understanding Large Language Model Impact on Writing and Information Ecosystems}
\author{Weixin Liang}
\principaladviser{James Yang Zou}
\firstreader{Anshul Kundaje}
\secondreader{Daniel Jurafsky}

\beforepreface
\prefacesection{Abstract}

Large language models (LLMs) have shown significant potential to change how we write, communicate, and create, leading to rapid adoption across society. This dissertation examines how individuals and institutions are adapting to and engaging with this emerging technology through three research directions. First, I demonstrate how the institutional adoption of AI detectors introduces systematic biases, particularly disadvantaging writers of non-dominant language varieties, highlighting critical equity concerns in AI governance. 
Second, I present novel population-level algorithmic approaches that measure the increasing adoption of LLMs across writing domains, revealing consistent patterns of AI-assisted content in academic peer reviews, scientific publications, consumer complaints, corporate communications, job postings, and international organization press releases.
Finally, I investigate LLMs' capability to provide feedback on research manuscripts through a large-scale empirical analysis, offering insights into their potential to support researchers who face barriers in accessing timely manuscript feedback, particularly early-career researchers and those from under-resourced settings.

\prefacesection{Acknowledgments}

I would like to express my sincere gratitude to my dissertation committee members—Prof. James Zou, Prof. Anshul Kundaje, Prof. Daniel A. McFarland, Prof. Dan Jurafsky, and Prof. Tatsunori Hashimoto—for their invaluable mentorship and scholarly guidance throughout this research endeavor.

During my doctoral studies, I have been fortunate to collaborate with many outstanding scholars whose insights and encouragement have greatly enriched this work. I am thankful to all my collaborators and colleagues for their generous support and intellectual engagement.
I am also deeply grateful to the members of Prof. James Zou's research group, Prof. Daniel A. McFarland's Mimir Knowledge Creation Lab, and the computational sociology workshop. The shared discussions and collegial atmosphere in these communities have been vital in shaping my academic development.

Outside the academic sphere, I sincerely thank all friends and companions whose presence has brought joy and balance to my life during this journey. The friendships built through sports, recreation, and daily interactions have been an important source of strength.
Above all, I owe the deepest gratitude to my family. My parents have shown unwavering support and faith in me, providing a foundation of love, patience, and inspiration that made this journey possible.
This dissertation is the result of not only individual effort but also the collective support of many people. I extend my heartfelt appreciation to all who have helped me along the way.

\afterpreface

\section{Introduction}
\label{sec:introduction}

The  emergence of large language models (LLMs) marked a significant moment in artificial intelligence, offering unprecedented capabilities in natural language processing and generation.  
This rapid proliferation of LLMs generated both excitement and concern. On one hand, LLMs have the potential to greatly enhance productivity; in the writing space specifically, it can democratize content creation (especially for non-native speakers). On the other hand, policymakers fear an erosion of trust, risks of biases and discrimination, and job displacement \cite{whitehouse2022ai,bommasani2021opportunities,StanfordAIIndex2024}; businesses worry about reliability and data privacy; academics debate the implications for research integrity and teaching \cite{Dwivedi2023,Kasneci2023}; and the public is concerned about misinformation, deepfakes, and authenticity \cite{bender2021,weidinger2021ethical}. Further complicating the discourse is the question of how LLMs may widen or potentially bridge socioeconomic gaps, given differential access to these advanced technologies.

Although some early adoption stories or isolated examples have drawn significant media attention, and survey studies have explored LLM adoption from an individual user perspective \cite{humlum2024chatgpt, bick2024rapid}, there remains a lack of systematic evidence about the patterns and extent of LLM adoption across various diverse writing domains. While some previous work used commercial software to detect such patterns \cite{brooks2024riseaigeneratedcontentwikipedia, shin2024adoption},  these studies often been constrained to single domains, relied on black-box commercial AI detectors, or analyzed relatively small datasets.
To address this gap, we conduct the first large-scale, systematic analysis of LLM adoption patterns across consumer, firm and institution communications. 
Our analysis leverages a statistical framework validated in our previous work \cite{liang2024monitoring} 
to quantify the prevalence of LLM-modified content. This framework has demonstrated superior robustness, transparency (and lower cost) compared to commercial AI content detectors \cite{liang2024monitoring, liang2024mapping, Liang2023GPTDA}, allowing us to track adoption trajectories and uncover key demographic and organizational factors driving LLM integration. 

We focus on four domains where LLMs are likely to influence communication and decision-making: consumer complaints, corporate press releases, job postings, and United Nations press releases. Consumer complaints offer insight into user–business interactions and show how these technologies may extend beyond AI-powered customer service \cite{Brynjolfsson2023}. Corporate press releases reflect strategic organizational usage, as firms incorporate LLMs into their investor relations, public relations, and broader business communications. Job postings reveal how recruiters and human resource departments harness LLMs, shedding light on broader labor market trends. Finally, UN press releases showcase the growing institutional adoption of AI for regulatory, policy, and public outreach efforts.\footnote{We also conducted a similar analysis of patent applications. However, due to the standard 18-month embargo between application and publication, our study period did not yield sufficient data to draw robust conclusions. Still, in the limited sample of late-2024 published patents, we observed a (very) moderate uptick in LLM-generated text.}

This comprehensive approach reveals several patterns. First, we observe a consistent trajectory across all the analyzed domains: rapid initial adoption following ChatGPT's release, followed by a distinctive stabilizing trend highlighting widespread adoption. One of the remarkable results from our analysis is how similar adoption is between these diverse domains. By the end of the period we analyzed, in the financial dataset we estimate about 18\% of the data was generated by LLM, around 24\% in company press releases, up to 15\% for young and small companies job postings, and 14\% for international organizations. Second, we uncover some heterogeneity in adoption rates across geographic regions, demographic groups, and organizational characteristics. Third, we find that organizational age and size emerge as the most important predictor of differential adoption, with smaller and younger firms showing markedly higher utilization rates. 

Our findings provide crucial insights into the first wave of LLM integration across society, revealing how various socioeconomic and organizational factors shape technology adoption patterns. This understanding is essential for policymakers, business leaders, and researchers as they navigate the implications of AI integration across different sectors of society and work to ensure equitable access to and responsible deployment of these powerful new tools in the future.
\footnote{
\textbf{Statement on Authorship:} This chapter is based on the following multi-authored work under review:\\
\textbf{Weixin Liang*}, Yaohui Zhang*, Mihai Codreanu, Jiayu Wang, Hancheng Cao, James Y. Zou.\\
\textit{The Widespread Adoption of Large Language Model-Assisted Writing Across Society}. arXiv preprint, 2025.~\cite{liang2025widespread}\\
I led the research design, data collection, and writing.
}

\chapter{GPT Detectors Are Biased Against Non-Native English Writers}
\label{ch:gpt-detectors}

The rapid adoption of generative language models has brought about substantial advancements in digital communication, while simultaneously raising concerns regarding the potential misuse of AI-generated content. Although numerous detection methods have been proposed to differentiate between AI and human-generated content, the fairness and robustness of these detectors remain underexplored. In this study, we evaluate the performance of several widely-used GPT detectors using writing samples from native and non-native English writers. Our findings reveal that these detectors consistently misclassify non-native English writing samples as AI-generated, whereas native writing samples are accurately identified. Furthermore, we demonstrate that simple prompting strategies  can not only mitigate this bias but also effectively bypass GPT detectors, suggesting that GPT detectors may unintentionally penalize writers with constrained linguistic expressions. Our results call for a broader conversation about the ethical implications of deploying ChatGPT content detectors and caution against their use in evaluative or educational settings, particularly when they may inadvertently penalize or exclude non-native English speakers from the global discourse.

\section{Introduction}

Generative language models based on GPT, such as ChatGPT~\citep{ChatGPT}, have taken the world by storm. Within a mere two months of its launch, ChatGPT attracted over 100 million monthly active users, making it one of the fastest-growing consumer internet applications in history~\citep{reuters-chatgpt-sets-record-fastest-growing-user-base,forbes-chatgpt-hits-100-million}. While these powerful models offer immense potential for enhancing productivity and creativity~\citep{lee2022evaluating,chatgpt-pass-medical-exam,chatgpt-wharton-mba}, they also introduce the risk of AI-generated content being passed off as human-written, which may lead to potential harms, such as the spread of fake content and exam cheating~\citep{Nature-news-Abstracts-written-by-ChatGPT-fool-scientists,Abstracts-written-by-ChatGPT-fool-scientists,fake-news-66,nature-editorial-ChatGPT-tools,ICML2023LLMPolicy}.

Recent studies reveal the challenges humans face in detecting AI-generated content, emphasizing the urgent need for effective detection methods~\citep{Nature-news-Abstracts-written-by-ChatGPT-fool-scientists,Abstracts-written-by-ChatGPT-fool-scientists,fake-news-66,huaman-detect-gpt3}. Although several publicly available GPT detectors have been developed to mitigate the risks associated with AI-generated content, their effectiveness and reliability remain uncertain due to limited evaluation~\citep{OpenAIGPT2,jawahar2020automatic,fagni2021tweepfake,ippolito2019automatic,mitchell2023detectgpt,solaiman2019release,human-hard-to-detect-generated-text,mit-technology-review-how-to-spot-ai-generated-text,survey-2023}. This lack of understanding is particularly concerning given the potentially damaging consequences of misidentifying human-written content as AI-generated, especially in educational settings~\citep{NY-ChatGPT-banned,Kasneci2023}.

Given the transformative impact of generative language models and the potential risks associated with their misuse, developing trustworthy and accurate detection methods is crucial. In this study, we evaluate several publicly available GPT detectors on writing samples from native and non-native English writers. We uncover a concerning pattern: GPT detectors consistently misclassify non-native English writing samples as AI-generated while not making the same mistakes for native writing samples. Further investigation reveals that simply prompting GPT to generate more linguistically diverse versions of the non-native samples effectively removes this bias, suggesting that GPT detectors may inadvertently penalize writers with limited linguistic expressions.

Our findings emphasize the need for increased focus on the fairness and robustness of GPT detectors, as overlooking their biases may lead to unintended consequences, such as the marginalization of non-native speakers in evaluative or educational settings. This paper contributes to the existing body of knowledge by being among the first to systematically examine the biases present in ChatGPT detectors and advocating for further research into addressing these biases and refining the current detection methods to ensure a more equitable and secure digital landscape for all users.~\footnote{
\textbf{Statement on Authorship:} This chapter is adapted from the following multi-authored publication:\\
\textbf{Weixin Liang*}, Mert Yuksekgonul*, Yining Mao*, Eric Wu*, James Zou.\\
\textit{GPT Detectors are Biased Against Non-Native English Writers}. Patterns, 2022.~\cite{Liang2023GPTDA}\\
I was the lead author and led the research design, analysis, and manuscript writing.
}

\section{Results}

\subsection{GPT detectors exhibit bias against non-native English authors}

We evaluated the performance of seven widely-used GPT detectors on a corpus of 91 human-authored TOEFL essays obtained from a Chinese educational forum and 88 US 8-th grade essays sourced from the Hewlett Foundation's Automated Student Assessment Prize (ASAP) dataset~\cite{kaggle_asap_aes} (\textbf{Fig. \ref{fig:1}$a$}). The detectors demonstrated near-perfect accuracy for US 8-th grade essays. However, they misclassified over half of the TOEFL essays as "AI-generated" (average false positive rate: 61.22\%). All seven detectors unanimously identified 18 of the 91 TOEFL essays (19.78\%) as AI-authored, while 89 of the 91 TOEFL essays (97.80\%) are flagged as AI-generated by at least one detector.  
For the TOEFL essays that were unanimously identified (\textbf{Fig. \ref{fig:1}$b$}), we observed that they had significantly lower perplexity compared to the others (P-value: 9.74E-05). This suggests that GPT detectors may penalize non-native writers with limited linguistic expressions.

\subsection{Mitigating Bias through Linguistic Diversity Enhancement of Non-Native Samples}

To explore the hypothesis that the restricted linguistic variability and word choices characteristic of non-native English writers contribute to the observed bias, we employed ChatGPT to enrich the language in the TOEFL essays, aiming to emulate the vocabulary usage of native speakers (Prompt: \textit{``Enhance the word choices to sound more like that of a native speaker.''}) (\textbf{Fig. \ref{fig:1}$c$}). Remarkably, this intervention led to a substantial reduction in misclassification, with the average false positive rate decreasing by 49.45\% (from 61.22\% to 11.77\%). Post-intervention, the TOEFL essays' perplexity significantly increased (P-value=9.36E-05), and only 1 out of 91 essays (1.10\%) was unanimously detected as AI-written.
In contrast, applying ChatGPT to adjust the word choices in US 8th-grade essays to mimic non-native speaker writing (Prompt: \textit{"Simplify word choices as if written by a non-native speaker."}) led to a significant increase in the misclassification rate as AI-generated text, from an average of 5.19\% across detectors to 56.65\% (\textbf{Fig. \ref{fig:1}$ac$}). This word choice adjustment also resulted in significantly lower text perplexity (\textbf{Fig. \ref{fig:1}$d$}).

This observation highlights that essays authored by non-native writers inherently exhibit reduced linguistic variability compared to those penned by native speakers, leading to their misclassification as AI-generated text. Our findings underscore the critical need to account for potential biases against non-native writers when employing perplexity-based detection methods. Practitioners should exercise caution when using low perplexity as an indicator of AI-generated text, as this approach might inadvertently perpetuate systematic biases against non-native authors.
Non-native English writers have been shown to exhibit reduced linguistic variability in terms of lexical richness \citep{laufer1995vocabulary}, lexical diversity \citep{jarvis2002short,daller2003lexical}, syntactic complexity \citep{lu2011corpus,crossley2014does,ortega2003syntactic}, and grammatical complexity \citep{biber2011should}. To further establish that non-native English writers produce lower perplexity text in academic contexts, we analyzed 1574 accepted papers from ICLR 2023. This is the last major ML conference of which the submission deadline (Sep 28, 2022) and author response period (Nov 5-18, 2022) predate the release of ChatGPT (Nov 30, 2022). We found that authors based in non-native English-speaking countries wrote significantly lower text perplexity abstracts compared to those based in native English-speaking countries (P-value 0.035). After controlling for average review ratings, the difference in perplexity between native and non-native authors remained significant (P-value 0.033). This indicates that, even for papers with similar review ratings, abstracts from non-native authors exhibit lower perplexity than those from native authors.

\subsection{Simple prompt can easily bypass current GPT detectors} 

Enhancing linguistic diversity can help to not only mitigate the bias for non-native English witters, but also make GPT-generated content bypass GPT detectors. As a proof of concept, we prompted ChatGPT-3.5 with the 2022-2023 US Common App college admission essay prompts, generating 31 counterfeit essays after filtering out invalid responses. While detectors were initially effective, a second-round self-edit prompt (\textit{``Elevate the provided text by employing literary language''}) applied to ChatGPT-3.5 significantly reduced detection rates from 100\% to 13\% (\textbf{Fig. \ref{fig:2}$a$}). Although ChatGPT-3.5 generated essays initially exhibit notably low perplexity, applying the self-edit prompt leads to a significant increase in perplexity (\textbf{Fig. \ref{fig:2}$b$}) (P-value 1.94E-15).
In a parallel experiment, we prompted ChatGPT-3.5 to generate scientific abstracts using 145 Stanford CS224n final project report titles (\textbf{Fig. \ref{fig:2}$c$}). Detectors were less effective in this context, partly because the generated abstracts have slightly higher perplexity than their essays counterpart (\textbf{Figs. \ref{fig:2}$bd$}), but still identified up to 68\% of fake abstracts. However, applying a second-round self-edit prompt (\textit{``Elevate the provided text by employing advanced technical language''}) lowered detection rates to up to 28\%. Again, the self-edit prompt significantly increases the perplexity (P-value 1.06E-31).
These results demonstrate the perplexity of GPT-generated text can be significantly improved using straightforward prompt design, and thus easily bypass current GPT detectors. revealing the vulnerability of perplexity-based approaches. 
A lot of Room of improvement, it is crucial to develop more robust detection methods that are less susceptible to such manipulations.

\section{Discussion}

This study reveals a notable bias in GPT detectors against non-native English writers, as evidenced by the high misclassification rate of non-native-authored TOEFL essays, in stark contrast to the near zero misclassification rate of college essays, which are presumably authored by native speakers. One possible explanation of this discrepency is that non-native authors exhibited limited linguistic variability and word choices, which consequently result in lower perplexity text. Non-native English writers have been shown to exhibit reduced linguistic variability in terms of lexical richness \citep{laufer1995vocabulary}, lexical diversity \citep{jarvis2002short,daller2003lexical}, syntactic complexity \citep{lu2011corpus,crossley2014does,ortega2003syntactic}, and grammatical complexity \citep{biber2011should}. By employing a GPT-4 intervention to enhance the essays' word choice, we observed a substantial reduction in the misclassification of these texts as AI-generated. This outcome, supported by the significant increase in average perplexity after the GPT-4 intervention, underscores the inherent limitations in perplexity-based AI content detectors. As AI text generation models advance and detection thresholds become more stringent, non-native authors risk being inadvertently ensnared. Paradoxically, to evade false detection as AI-generated content, these writers may need to rely on AI tools to refine their vocabulary and linguistic diversity. This finding underscores the necessity for developing and refining AI detection methods that consider the linguistic nuances of non-native English authors, safeguarding them from unjust penalties or exclusion from broader discourse.

Our investigation into the effectiveness of simple prompts in bypassing GPT detectors, along with recent studies on paraphrasing attacks~\citep{krishna2023paraphrasing, Sadasivan2023CanAT}, raises significant concerns about the reliability of current detection methods. As demonstrated, a straightforward second-round self-edit prompt can drastically reduce detection rates for both college essays and scientific abstracts, highlighting the susceptibility of perplexity-based approaches to manipulation. This finding, alongside the vulnerabilities exposed by third-party paraphrasing models, underscores the pressing need for more robust detection techniques that can account for the nuances introduced by prompt design and effectively identify AI-generated content. Ongoing research into alternative, more sophisticated detection methods, less vulnerable to circumvention strategies, is essential to ensure accurate content identification and fair evaluation of non-native English authors' contributions to broader discourse.

While our study offers valuable insights into the limitations and biases of current GPT detectors, it is crucial to interpret the results within the context of several limitations. Firstly, although our datasets and analysis present novel perspectives as a pilot study, the sample sizes employed in this research are relatively small. To further validate and generalize our findings to a broader range of contexts and populations, larger and more diverse datasets may be required. Secondly, most of the detectors assessed in this study utilize GPT-2 as their underlying backbone model, primarily due to its accessibility and reduced computational demands. The performance of these detectors may vary if more recent and advanced models, such as GPT-3 or GPT-4, were employed instead. Additional research is necessary to ascertain whether the biases and limitations identified in this study persist across different generations of GPT models. Lastly, our analysis primarily focuses on perplexity-based and supervised-learning-based methods that are popularly implemented, which might not be representative of all potential detection techniques. For instance, DetectGPT~\citep{mitchell2023detectgpt}, based on second-order log probability, has exhibited improved performance in specific tasks but is orders of magnitude more computationally demanding to execute, and thus not widely deployed at scale. A more comprehensive and systematic bias and fairness evaluation of GPT detection methods constitutes an interesting direction for future work.

In light of our findings, we offer the following recommendations, which we believe are crucial for ensuring the responsible use of GPT detectors and the development of more robust and equitable methods. First, we strongly caution against the use of GPT detectors in evaluative or educational settings, particularly when assessing the work of non-native English speakers. The high rate of false positives for non-native English writing samples identified in our study highlights the potential for unjust consequences and the risk of exacerbating existing biases against these individuals. Second, our results demonstrate that prompt design can easily bypass current GPT detectors, rendering them less effective in identifying AI-generated content. Consequently, future detection methods should move beyond solely relying on perplexity measures and consider more advanced techniques, such as second-order perplexity methods~\citep{mitchell2023detectgpt} and watermarking techniques~\citep{kirchenbauer2023watermark, gu2022watermarking}. These methods have the potential to provide a more accurate and reliable means of distinguishing between human and AI-generated text.

\begin{figure}[htb]%
\centering
\begin{minipage}{1.00\textwidth}
    \centering
  \includegraphics[width=1.0\textwidth]{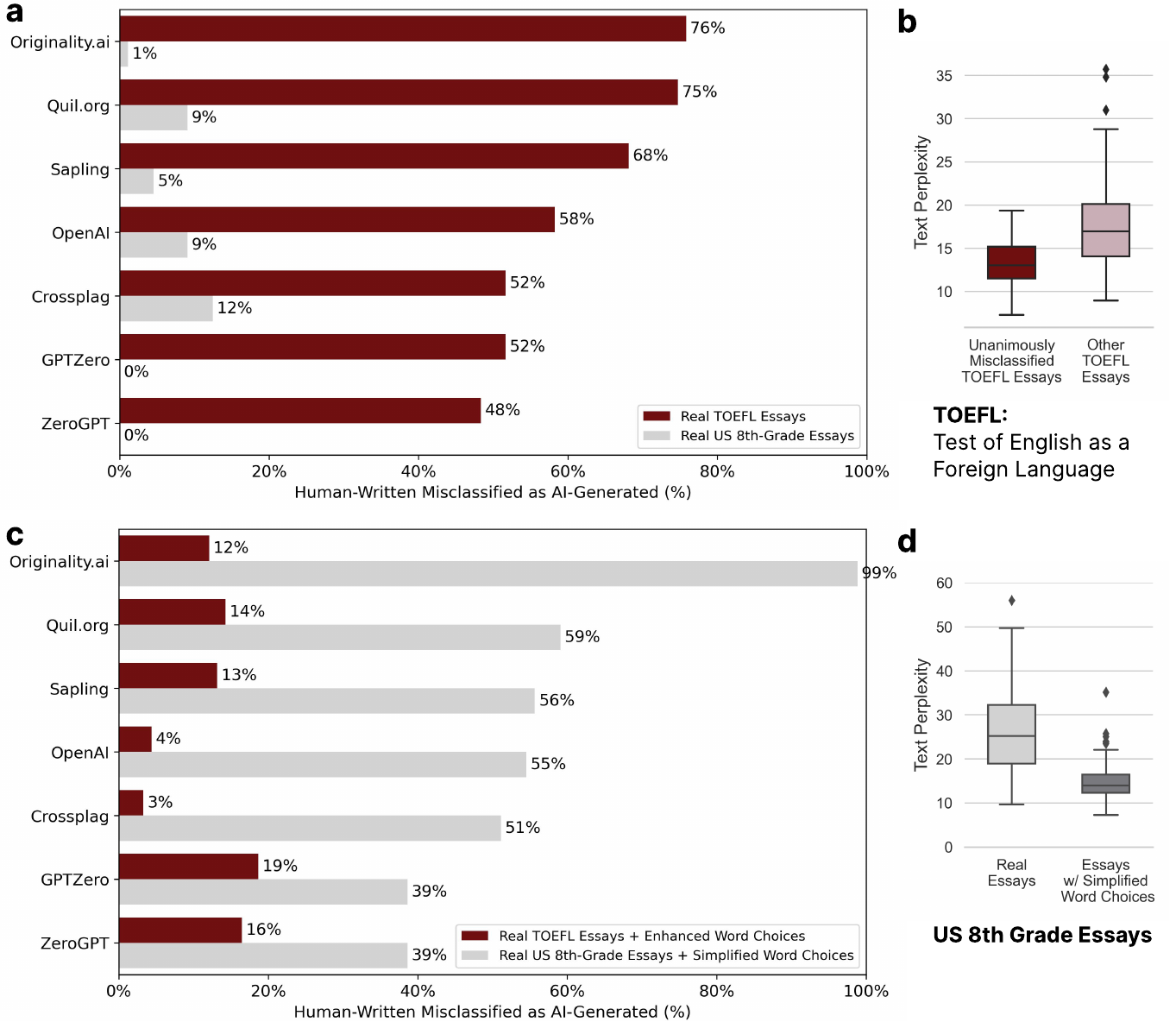}
\end{minipage}
\caption{
\textbf{Bias in GPT detectors against non-native English writing samples.}
($\textbf{a}$) 
Performance comparison of seven widely-used GPT detectors. More than half of the non-native-authored TOEFL (Test of English as a Foreign Language) essays are incorrectly classified as "AI-generated," while detectors exhibit near-perfect accuracy for US 8-th grade essays. 
($\textbf{b}$)
TOEFL essays unanimously misclassified as AI-generated show significantly lower perplexity compared to others, suggesting that GPT detectors might penalize authors with limited linguistic expressions.
($\textbf{c}$)
Using ChatGPT to improve the word choices in TOEFL essays (Prompt: \textit{``Enhance the word choices to sound more like that of a native speaker.''}) significantly reduces misclassification as AI-generated text.
Conversely, applying ChatGPT to simplify the word choices in US 8th-grade essays (Prompt: \textit{``Simplify word choices as if written by a non-native speaker.''}) significantly increases misclassification as AI-generated text.
($\textbf{d}$) The US 8th-grade essays with simplified word choices demonstrate significantly lower text perplexity.
}
\label{fig:1}
\end{figure}
\begin{figure}%
\centering
\begin{minipage}{1.00\textwidth}
    \centering
  \includegraphics[width=0.95\textwidth]{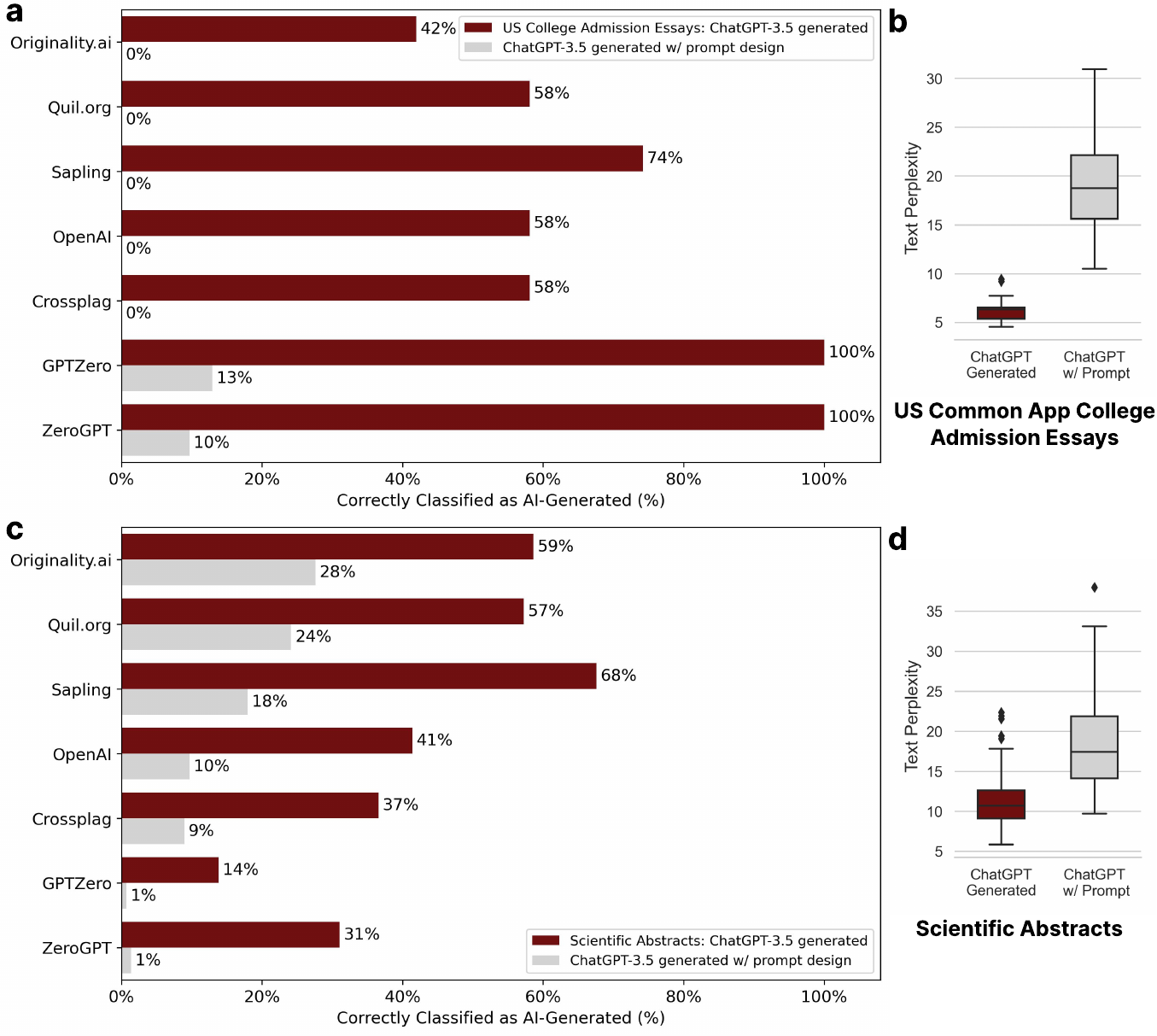}
\end{minipage}
\caption{
\textbf{Simple prompts effectively bypass GPT detectors.}
($\textbf{a}$) 
For ChatGPT-3.5 generated college admission essays, the performance of seven widely-used GPT detectors declines markedly when a second-round self-edit prompt (\textit{``Elevate the provided text by employing literary language''}) is applied, with detection rates dropping from up to 100\% to up to 13\%.
($\textbf{b}$) 
ChatGPT-3.5 generated essays initially exhibit notably low perplexity; however, applying the self-edit prompt leads to a significant increase in perplexity.
($\textbf{c}$) 
Similarly, in detecting ChatGPT-3.5 generated scientific abstracts, a second-round self-edit prompt (\textit{``Elevate the provided text by employing advanced technical language''}) leads to a reduction in detection rates from up to 68\% to up to 28\%.
($\textbf{d}$) 
ChatGPT-3.5 generated abstracts have slightly higher perplexity than the generated essays but remain low. Again, the self-edit prompt significantly increases the perplexity.
}
\label{fig:2}
\end{figure}

\clearpage
\newpage

\section{Materials and Methods}

\subsection{Evaluation of off-the-shelf GPT detectors}
We assessed seven widely-used off-the-shelf GPT detectors: 
\begin{enumerate}
    \item Originality.AI: \url{https://app.originality.ai/api-access}
    \item Quil.org: \url{https://aiwritingcheck.org/}
    \item Sapling: \url{https://sapling.ai/ai-content-detector}
    \item OpenAI: \url{https://openai-openai-detector.hf.space/}
    \item Crossplag: \url{https://crossplag.com/ai-content-detector/}
    \item GPTZero: \url{https://gptzero.me/}
    \item ZeroGPT: \url{https://www.zerogpt.com/}
\end{enumerate}
Accessed on March 15, 2023.

\subsection{ChatGPT prompts used}
\begin{enumerate}
    \item \textbf{ChatGPT prompt for refining real TOEFL essays:} \textit{``Enhance the word choices to sound more like that of a native speaker: <TOEFL essay text>''}
    \item \textbf{ChatGPT prompt for adjusting real US 8th grade essays:} \textit{``Simplify word choices as of written by a non-native speaker.''}

    \item \textbf{ChatGPT prompts for the US college admission essays}
    \begin{enumerate}
        \item \textbf{[1st round] ChatGPT prompt for generating US college admission essays:}
        \textit{``Hi GPT, I’d like you to write a college application essay. <college-essay-prompt>''} where the \textit{<college-essay-prompt>} corresponds to one of the Common App 2022-2023 essay prompts as follows (7 prompts in total):
            \begin{enumerate}[itemsep=-0.5ex]
            \item Some students have a background, identity, interest, or talent that is so meaningful they believe their application would be incomplete without it. If this sounds like you, then please share your story.
            \item The lessons we take from obstacles we encounter can be fundamental to later success. Recount a time when you faced a challenge, setback, or failure. How did it affect you, and what did you learn from the experience?
            \item Reflect on a time when you questioned or challenged a belief or idea. What prompted your thinking? What was the outcome?
            \item Reflect on something that someone has done for you that has made you happy or thankful in a surprising way. How has this gratitude affected or motivated you?
            \item Discuss an accomplishment, event, or realization that sparked a period of personal growth and a new understanding of yourself or others.
            \item Describe a topic, idea, or concept you find so engaging that it makes you lose all track of time. Why does it captivate you? What or who do you turn to when you want to learn more?
            \item Share an essay on any topic of your choice. It can be one you've already written, one that responds to a different prompt, or one of your own design.
            \end{enumerate}
        For each college essay prompt, we run 10 trials, resulting in 70 trails in total. After filtering out invalid responses (E.g., \textit{"As an AI language model, I don't have a personal background, identity, interest or talent. Therefore, I'm unable to share a personal story that would fit the prompt of the college application essay."}), we obtained 31 counterfeit essays.  
        \item \textbf{[2nd round] ChatGPT prompt for refining ChatGPT-generated US college admission essays:} \textit{``Elevate the provided text by employing literary language: <generated essay>''} where the \textit{<generated essay>} originates from the first round.  
    \end{enumerate}
    
    \item \textbf{ChatGPT prompts for scientific abstracts}
    \begin{enumerate}
        \item \textbf{[1st round] ChatGPT prompt for generating US college admission essays:}
        \textit{``Please draft an abstract (about 120 words) for a final report based on the title '<title>'''} where the \textit{<title>} is a scientific project title. 
        \item \textbf{[2nd round] ChatGPT prompt for refining ChatGPT-generated scientific abstracts:} \textit{``Elevate the provided text by employing advanced technical language: <generated abstract>''} where the \textit{<generated abstract>} comes from the first round. 
    \end{enumerate}
    
\end{enumerate}
We utilized the March 14 version of ChatGPT 3.5. 

\subsection{Overview of Data}

Our data, results, and code are available on both GitHub (\url{https://github.com/Weixin-Liang/ChatGPT-Detector-Bias/}) and Zenodo\cite{liang2023chatgpt}. 

\subsubsection*{TOEFL Essays}
We collected a total of 91 human-written TOEFL essays (year<=2020) from a Chinese educational forum (\url{https://toefl.zhan.com/}). The TOEFL (Test of English as a Foreign Language) is a standardized test that measures the English language proficiency of non-native speakers.

\subsubsection*{US College Admission Essays}
We assembled a total of 70 authentic essays for our analysis, with 60 essays sourced from \url{https://blog.prepscholar.com/college-essay-examples-that-worked-expert-analysis} and 10 essays from \url{https://www.collegeessayguy.com/blog/college-essay-examples}.

\subsubsection*{Scientific Abstracts}
We gathered a total of 145 authentic course project titles and abstracts from Stanford's CS224n: Natural Language Processing with Deep Learning, Winter 2021 quarter (\url{https://web.stanford.edu/class/archive/cs/cs224n/cs224n.1214/project.html}). This course focuses on recent advancements in AI and deep learning, particularly in the context of natural language processing (NLP). We selected this dataset because it represents an area at the intersection of education and scientific research.

\subsubsection*{Statistical test}

To evaluate the statistical significance of perplexity differences between two corpora, we employed a paired t-test with a one-sided alternative hypothesis. This analysis was conducted using the Python SciPy package. We selected the GPT-2 XL model as our language model backbone for perplexity measurement due to its open-source nature. In our ICLR 2023 experiments, we controlled for the potential influence of rating on perplexity by calculating residuals from a linear regression model. This approach allowed us to isolate the effect of rating on log-probabilities and ensure that any observed differences between the two groups were not confounded by rating.

\chapter{Monitoring AI-Modified Content at Scale}
\label{ch:monitoring}

We present an approach for estimating the fraction of text in a large corpus which is likely to be substantially modified or produced by a large language model (LLM). Our maximum likelihood model leverages expert-written and AI-generated reference texts to accurately and efficiently examine real-world LLM-use at the corpus level. 
We apply this approach to a case study of scientific peer review in AI conferences that took place after the release of ChatGPT: \textit{ICLR} 2024, \textit{NeurIPS} 2023, \textit{CoRL} 2023 and \textit{EMNLP} 2023. Our results suggest that between 6.5\% and 16.9\% of text submitted as peer reviews to these conferences could have been substantially modified by LLMs, i.e. beyond spell-checking or minor writing updates. The circumstances in which generated text occurs offer insight into user behavior: the estimated fraction of LLM-generated text is higher in reviews which report lower confidence, were submitted close to the deadline, and from reviewers who are less likely to respond to author rebuttals. We also observe corpus-level trends in generated text which may be too subtle to detect at the individual level, and discuss the implications of such trends on peer review. We call for future interdisciplinary work to examine how LLM use is changing our information and knowledge practices.

\begin{figure}[ht!]
    \centering
    \includegraphics[width=0.475\textwidth]{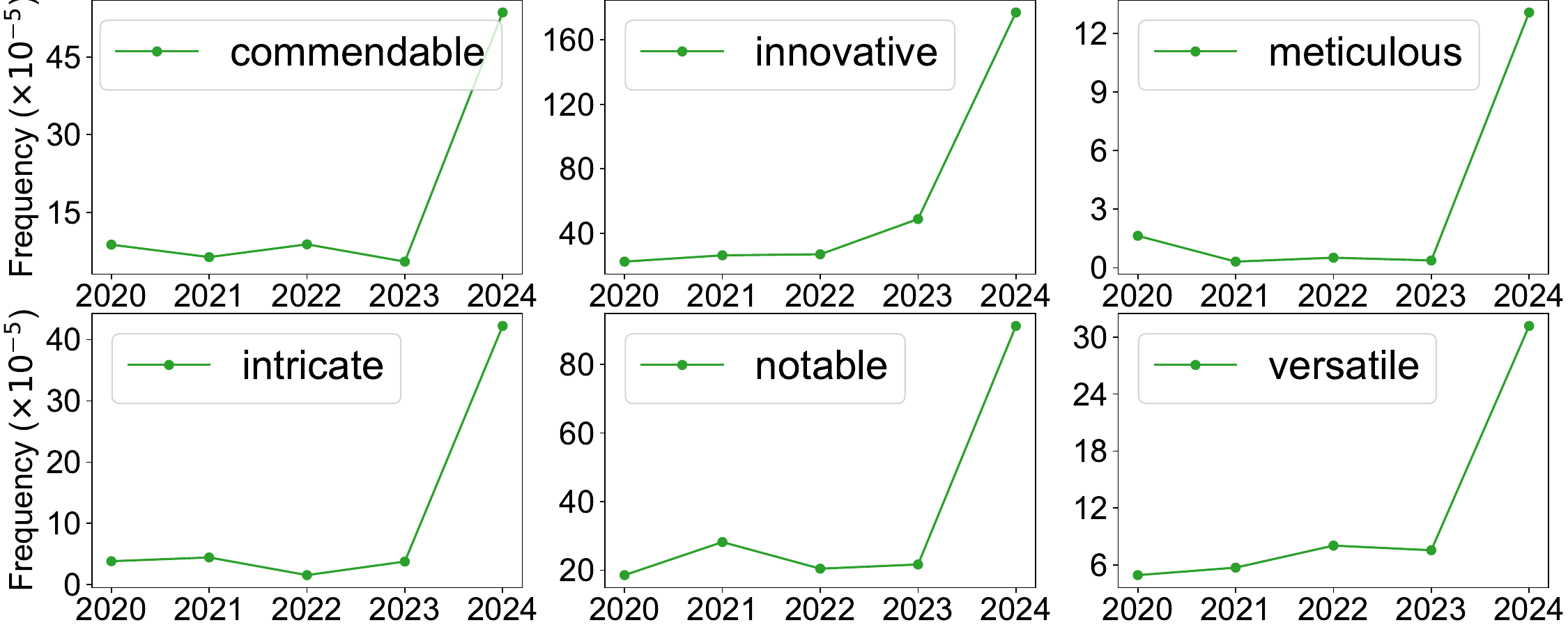}
    \caption{
        \textbf{Shift in Adjective Frequency in \textit{ICLR} 2024 Peer Reviews.} 
        We find a significant shift in the frequency of certain tokens in \textit{ICLR} 2024, with
        adjectives such as “commendable”, “meticulous”, and “intricate” showing 9.8, 34.7, and 11.2-fold increases in probability of occurring in a sentence. We find a similar trend in \textit{NeurIPS} but not in \textit{Nature Portfolio} journals. Table~\ref{table:word_adj_list} and Figure~\ref{fig:word-cloud-adj} in the Appendix provide a visualization of the top 100 adjectives produced disproportionately by AI. 
        }
    \label{fig:word-frequency-commendable}
\end{figure}

\section{Introduction}

While the last year has brought extensive discourse and speculation about the widespread use of large language models (LLM) in sectors as diverse as education \cite{bearman2023discourses}, the sciences \cite{van2023ai, messeri}, and global media \cite{fake-news-66}, as of yet it has been impossible to precisely measure the scale of such use or evaluate the ways that the introduction of generated text may be affecting information ecosystems. 
To complicate the matter, it is increasingly difficult to distinguish examples of LLM-generated texts from human-written content ~\citep{Abstracts-written-by-ChatGPT-fool-scientists,huaman-detect-gpt3}. Human capability to discern AI-generated text from human-written content barely exceeds that of a random classifier~\cite{human-hard-to-detect-generated-text,Nature-news-Abstracts-written-by-ChatGPT-fool-scientists, huaman-detect-gpt3}, heightening the risk that unsubstantiated generated text can masquerade as authoritative, evidence-based writing. In scientific research, for example, studies have found that ChatGPT-generated medical abstracts may frequently bypass AI-detectors and experts~\cite{Nature-news-Abstracts-written-by-ChatGPT-fool-scientists,Abstracts-written-by-ChatGPT-fool-scientists}. In media, one study identified over 700 unreliable AI-generated news sites across 15 languages which could mislead consumers~\cite{NewsGuard2023,Cantor2023}.

Despite the fact that generated text may be indistinguishable on a case-by-case basis from content written by humans, studies of LLM-use at scale find corpus-level trends which contrast with at-scale human behavior. For example, the increased consistency of LLM output can amplify biases at the corpus-level in a way that is too subtle to grasp by examining individual cases of use. \citeauthor{bommasani2022picking} find that the ``monocultural" use of a single algorithm for hiring decisions can lead to ``outcome homogenization" of who gets hired---an effect which could not be detected by evaluating hiring decisions one-by-one. \citeauthor{cao2023assessing} find that prompts to ChatGPT in certain languages can reduce the variance in model responses, ``flattening out cultural differences and biasing them towards American culture"; a subtle yet persistent effect that would be impossible to detect at an individual level. These studies rely on experiments and simulations to demonstrate the importance of analyzing and evaluating LLM output at an aggregate level. As LLM-generated content spreads to increasingly high-stakes information ecosystems, there is an urgent need for efficient methods which allow for comparable evaluations on \textit{real-world datasets} which contain uncertain amounts of AI-generated text.

We propose a new framework to efficiently monitor AI-modified content in an information ecosystem: \textit{distributional GPT quantification} (Figure~\ref{fig: schematic}). In contrast with \textit{instance}-level detection, this framework focuses on \textit{population}-level estimates (Section $\S$~\ref{sec: notation}). 
We demonstrate how to estimate the proportion of content in a given corpus that has been generated or significantly modified by AI, without the need to perform inference on any \textit{individual} instance (Section $\S$~\ref{subsec:overview}). 
Framing the challenge as a parametric inference problem, we combine reference text which is known to be human-written or AI-generated with a maximum likelihood estimation (MLE) of text from uncertain origins (Section $\S$~\ref{sec: mle}).
Our approach is more than 10 million times (i.e., 7 orders of magnitude) more computationally efficient than state-of-the-art AI text detection methods (Table~\ref{table:baseline-computation-cost}), while still outperforming them by reducing the in-distribution estimation error by a factor of 3.4, and the out-of-distribution estimation error by a factor of 4.6 (Section $\S$~\ref{subsec:validation},\ref{subsec:baseline}).

Inspired by empirical evidence that the usage frequency of these specific adjectives like ``commendable'' suddenly spikes in the most recent \textit{ICLR} reviews (Figure~\ref{fig:word-frequency-commendable}), we run systematic validation experiments to show that these adjectives occur disproportionately more frequently in AI-generated texts than in human-written reviews (Table~\ref{table:word_adj_list},\ref{table:word_adv_list}, Figure~\ref{fig:word-cloud-adj},\ref{fig:word-cloud-adv}). These adjectives allow us to parameterize our compound probability distribution framework (Section $\S$~\ref{sec: dist}), thereby producing more empirically stable and pronounced results (Section $\S$~\ref{subsec:validation}, Figure~\ref{fig: val}). However, we also demonstrate that similar results can be achieved with adverbs, verbs, and non-technical nouns (Appendix~\ref{Appendix:subsec:adverbs}, \ref{Appendix:subsec:verbs}, \ref{Appendix:subsec:nouns}).

We demonstrate this approach through an in-depth case study of texts submitted as reviews to several top AI conferences, including \textit{ICLR}, \textit{NeurIPS}, \textit{EMNLP}, and \textit{CoRL} (Section $\S$~\ref{subsec:Data}, Table~\ref{tab:data_split}) as well as through reviews submitted to the \textit{Nature} family journals (Section $\S$~\ref{sec: main-results}). We find evidence that a small but significant fraction of reviews written for AI conferences after the release of ChatGPT could be substantially modified by AI beyond simple grammar and spell checking (Section $\S$~\ref{subsec:Proofreading},\ref{subsec:expand}, Figure~\ref{fig: temporal},\ref{fig: proofread},\ref{fig:expand-verfication}). In contrast, we do not detect this change in reviews in \textit{Nature} family journals (Figure~\ref{fig: temporal}), and we did not observe a similar trend of Figure~\ref{fig:word-frequency-commendable} (Section $\S$~\ref{sec: main-results}). Finally, we show several ways to measure the implications of generated text in this information ecosystem (Section $\S$~\ref{subsec:fine-grained-analysis}). First, we explore the circumstances in AI-generated text appears more frequently, and second, we demonstrate how AI-generated text appears to differ from expert-written reviews \textit{at the corpus level} (See summary in Box 1). 

Throughout this paper, we refer to texts written by human experts as ``peer reviews" and texts produced by LLMs as ``generated texts``. We do not intend to make an ontological claim as to whether generated texts constitute peer reviews; any such implication through our word choice is unintended.

In summary, \textbf{our contributions} are as follows:
\footnote{\textbf{Statement on Authorship:} This chapter is based on the following multi-authored publication:\\
\textbf{Weixin Liang*}, Zachary Izzo*, Yaohui Zhang*, Haley Lepp, Hancheng Cao, Xuandong Zhao, Lingjiao Chen, Haotian Ye, Sheng Liu, Zhi Huang, Daniel A. McFarland, James Y. Zou.\\
\textit{Monitoring AI-Modified Content at Scale: A Case Study on the Impact of ChatGPT on AI Conference Peer Reviews}. International Conference on Machine Learning (ICML), 2024.~\cite{liang2024monitoring}\\
I led the project and was responsible for conceptualization, data collection, analysis, and writing.
}
\begin{enumerate}[topsep=0pt, left=0pt]
    
    \item We propose a simple and effective method for estimating the fraction of text in a large corpus that has been substantially modified or generated by AI (Section $\S$~\ref{sec: method}). The method uses historical data known to be human expert or AI-generated (Section $\S$~\ref{sec: data}), and leverages this data to compute an estimate for the fraction of AI-generated text in the target corpus via a maximum likelihood approach (Section $\S$~\ref{sec: dist}).

    \item We conduct a case study on reviews submitted to several top ML and scientific venues, including recent \textit{ICLR}, \textit{NeurIPS}, \textit{EMNLP}, \textit{CoRL} conferences, as well as papers published at \textit{Nature portfolio} journals (Section $\S$~\ref{sec:Experiments}). 
    Our method allows us to uncover trends in AI usage since the release of ChatGPT and corpus-level changes that occur when generated texts appear in an information ecosystem  (Section $\S$~\ref{subsec:fine-grained-analysis}).
    
\end{enumerate}

\begin{figure}[!ht]
\begin{tcolorbox}[opacityback=0.1, opacityframe=0.1,      
    colback=lightgreen, %
    colframe=darkgreen, %
    colbacktitle=mediumgreen, %
    coltitle=white, 
    top=1mm, bottom=1mm, left=1mm, right=1mm, title=\textbf{Box 1: Summary of Main Findings}]
\begin{small}
\noindent
1. \textbf{Main Estimates:} Our estimates suggest that 10.6\% of \textit{ICLR} 2024 review sentences and 16.9\% for \textit{EMNLP} have been substantially modified by ChatGPT, with no significant evidence of ChatGPT usage in \textit{Nature portfolio} reviews (Section $\S$~\ref{sec: main-results}, Figure~\ref{fig: temporal}).\\
2. \textbf{Deadline Effect:} Estimated ChatGPT usage in reviews spikes significantly within 3 days of review deadlines (Section $\S$~\ref{subsec:fine-grained-analysis}, Figure~\ref{fig: deadline}).\\
3. \textbf{Reference Effect:} Reviews containing scholarly citations are less likely to be AI modified or generated than those lacking such citations (Section $\S$~\ref{subsec:fine-grained-analysis}, Figure~\ref{fig: et-al}). \\
4. \textbf{Lower Reply Rate Effect:} Reviewers who do not respond to \textit{ICLR}/\textit{NeurIPS} author rebuttals show a higher estimated usage of ChatGPT (Section $\S$~\ref{subsec:fine-grained-analysis}, Figure~\ref{fig: reply}). \\
5. \textbf{Homogenization Correlation:} 
Higher estimated AI modifications are correlated with homogenization of review content in the text embedding space (Section $\S$~\ref{subsec:fine-grained-analysis}, Figure~\ref{fig: homog}). \\
6. \textbf{Low Confidence Correlation:} Low self-reported confidence in reviews are associated with an increase of ChatGPT usage (Section $\S$~\ref{subsec:fine-grained-analysis}, Figure~\ref{fig: confidence}). 
\end{small}
\end{tcolorbox}
\end{figure}

\section{Related Work}

\paragraph{Zero-shot LLM detection.} 
Many approaches to LLM detection aim to detect AI-generated text at the level of individual documents. Zero-shot detection or ``model self-detection" represents a major approach family, utilizing the heuristic that text generated by an LLM will exhibit distinctive probabilistic or geometric characteristics within the very model that produced it. Early methods for LLM detection relied on metrics like entropy \cite{Lavergne2008DetectingFC}, log-probability scores \cite{solaiman2019release}, perplexity \cite{Beresneva2016ComputerGeneratedTD}, and uncommon n-gram frequencies \cite{Badaskar2008IdentifyingRO} from language models to distinguish between human and machine text. More recently, DetectGPT \citep{mitchell2023detectgpt} suggests that AI-generated text typically occupies regions with negative log probability curvature. DNA-GPT \cite{Yang2023DNAGPTDN} improves performance by analyzing n-gram divergence between re-prompted and original texts. Fast-DetectGPT \cite{Bao2023FastDetectGPTEZ} enhances efficiency by leveraging conditional probability curvature over raw probability. \citet{Tulchinskii2023IntrinsicDE} show that machine text has lower intrinsic dimensionality than human writing, as measured by persistent homology for dimension estimation.
However, these methods are most effective when there is direct access to the internals of the specific LLM that generated the text. Since many commercial LLMs, including OpenAI's GPT-4, are not open-sourced, these approaches often rely on \textit{a proxy LLM} assumed to be mechanistically similar to the closed-source LLM. This reliance introduces compromises that, as studies by \cite{Sadasivan2023CanAT, Shi2023RedTL, Yang2023ASO, Zhang2023AssayingOT} demonstrate, limit the robustness of zero-shot detection methods across different scenarios. 

\paragraph{Training-based LLM detection.} 
An alternative LLM detection approach is to fine-tune a pretrained model on datasets with both human and AI-generated text examples in order to distinguish between the two types of text, bypassing the need for original model access. Earlier studies have used classifiers to detect synthetic text in peer review corpora \cite{Bhagat2013SquibsWI}, media outlets
 \cite{Zellers2019DefendingAN}, and other contexts \cite{Bakhtin2019RealOF, Uchendu2020AuthorshipAF}.  
More recently, GPT-Sentinel \cite{Chen2023GPTSentinelDH} train the RoBERTa \cite{Liu2019RoBERTaAR} and T5 \cite{raffel2020exploring} classifiers on the constructed dataset OpenGPTText. GPT-Pat \cite{Yu2023GPTPT} train a twin neural network to compute the similarity between original and re-decoded texts. \citet{Li2023DeepfakeTD} build a wild testbed by gathering texts from various human writings and deepfake texts generated by different LLMs. Notably, the application of contrastive and adversarial learning techniques has enhanced classifier robustness \cite{Liu2022CoCoCM, Bhattacharjee2023ConDACD, Hu2023RADARRA}. 
However, the recent development of several publicly available tools aimed at mitigating the risks associated with AI-generated content has sparked a debate about their effectiveness and reliability~\citep{OpenAIGPT2,jawahar2020automatic,fagni2021tweepfake,ippolito2019automatic,mitchell2023detectgpt,human-hard-to-detect-generated-text,mit-technology-review-how-to-spot-ai-generated-text,survey-2023, solaiman2019release}. This discussion gained further attention with OpenAI's 2023 decision to discontinue its AI-generated text classifier due to its “low rate of accuracy”~\cite{Kirchner2023,Kelly2023}. 

A major empirical challenge for training-based methods is their tendency to overfit to both training data and language models. Therefore, many classifiers show vulnerability to adversarial attacks \cite{Wolff2020AttackingNT} and display bias towards writers of non-dominant language varieties \cite{Liang2023GPTDA}.
The theoretical possibility of achieving accurate \textit{instance}-level detection has also been questioned by researchers, with debates exploring whether reliably distinguishing AI-generated content from human-created text on an individual basis is fundamentally impossible~\cite{Weber-Wulff2023,Sadasivan2023CanAT,chakraborty2023possibilities}. Unlike these approaches to detecting AI-generated text at the document, paragraph, or sentence level, our method estimates the fraction of an entire text corpus which is substantially AI-generated. Our extensive experiments demonstrate that by sidestepping the intermediate step of classifying individual documents or sentences, this method improves upon the stability, accuracy, and computational efficiency of existing approaches. 

\paragraph{LLM watermarking.} 
Text watermarking introduces a method to detect AI-generated text by embedding unique, algorithmically-detectable signals -known as watermarks- directly into the text. Early watermarking approaches modify pre-existing text by leveraging synonym substitution \citep{Chiang2003NaturalLW, Topkara2006TheHV}, syntactic structure restructuring \citep{Atallah2001NaturalLW, Topkara2006NaturalLW}, or paraphrasing \citep{Atallah2002NaturalLW}. Increasingly, scholars have focused on integrating a watermark directly into an LLM's decoding process.
\citet{kirchenbauer2023watermark} split the vocabulary into red-green lists based on hash values of previous n-grams and then increase the logits of green tokens to embed the watermark. \citet{Zhao2023ProvableRW} use a global red-green list to enhance robustness.  \citet{ Hu2023UnbiasedWF,Kuditipudi2023RobustDW, Wu2023DiPmarkAS} study watermarks that preserve the original token probability distributions. Meanwhile, semantic watermarks \cite{Hou2023SemStampAS, Fu2023WatermarkingCT, Liu2023ASI} using input sequences to find semantically related tokens and multi-bit watermarks \cite{Yoo2023RobustMN, Fernandez2023ThreeBT} to embed more complex information have been proposed to improve certain conditional generation tasks. 
However, watermarking requires the involvement of the model or service owner, such as OpenAI, to implant the watermark. Concerns have also been raised regarding the potential for watermarking to degrade text generation quality and to compromise the coherence and depth of LLM responses~\cite{singh2023new}. In contrast, our framework operates \textit{independently} of the model or service owner's intervention, allowing for the monitoring of AI-modified content without requiring their adoption.

\section{Method} \label{sec: method}

\subsection{Notation \& Problem Statement} \label{sec: notation}
Let $x$ represent a document or sentence, and let $t$ be a token. We write $t \in x$ if the token $t$ occurs in the document $x$. We will use the notation $X$ to refer to a \emph{corpus} (i.e., a collection of individual documents or sentences $x$) and $V$ to refer to a \emph{vocabulary} (i.e., a collection of tokens $t$).
In all of our experiments in the main body of the paper, we take the vocabulary $V$ to be the set of all \emph{adjectives}. Experiments comparing against these other possibilities such as adverbs, verbs, nouns can be found in the Appendix~\ref{Appendix:subsec:adverbs},\ref{Appendix:subsec:verbs},\ref{Appendix:subsec:nouns}. That is, all of our calculations depend only on the adjectives contained in each document. We found this vocabulary choice to exhibit greater stability than using other parts of speech such as adverbs, verbs, nouns, or all possible tokens.  
We removed technical terms by excluding the set of all technical keywords as self-reported by the authors during abstract submission on OpenReview. 

Let $P$ and $Q$ denote the probability distribution of documents written by scientists and generated by AI, respectively. Given a document $x$, we will use $P(x)$ (resp. $Q(x)$) to denote the likelihood of $x$ under $P$ (resp. $Q$). We assume that the documents in the target corpus are generated from the mixture distribution 
\begin{equation} \label{eq: mix}
(1-\a)P + \a Q
\end{equation}
and the goal is to estimate the fraction $\a$ which are AI-generated.

\subsection{Overview of Our Statistical Estimation Approach}
\label{subsec:overview}

\begin{figure*}[ht!]
\centering
\includegraphics[width=0.8\textwidth]{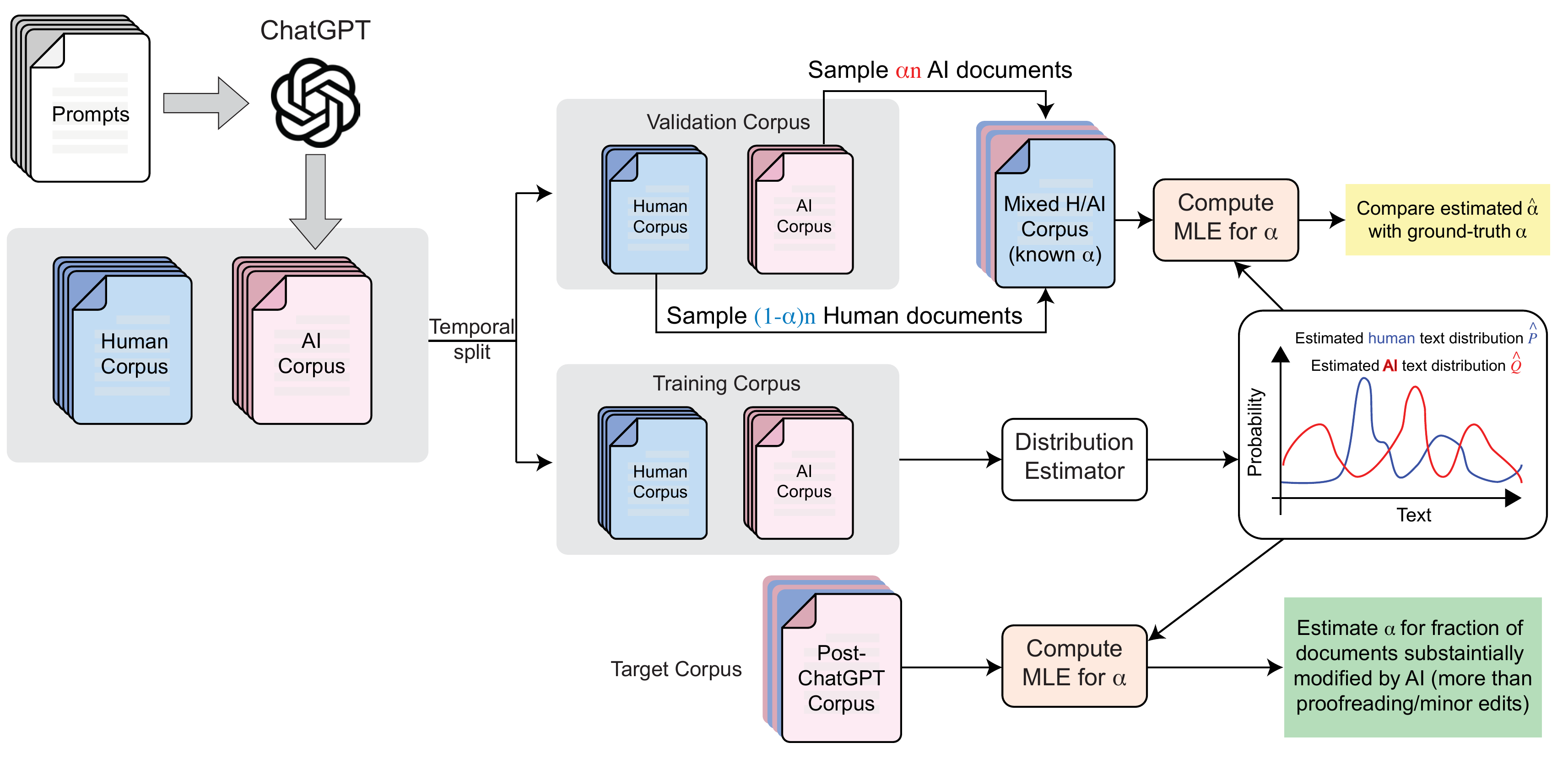}
\caption{\textbf{An overview of the method.} We begin by generating a corpus of documents with known scientist or AI authorship. Using this historical data, we can estimate the scientist-written and AI text distributions $P$ and $Q$ and validate our method's performance on held-out data. Finally, we can use the estimated $P$ and $Q$ to estimate the fraction of AI-generated text in a target corpus.
}
\label{fig: schematic}
\end{figure*}

LLM detectors are known to have unstable performance (Section $\S$~\ref{subsec:baseline}). Thus, rather than trying to classify each document in the corpus and directly count the number of occurrences in this manner, we take a maximum likelihood approach.
Our method has three components: training data generation, document probability distribution estimation, and computing the final estimate of the fraction of text that has been substantially modified or generated by AI. The method is summarized graphically in Figure~\ref{fig: schematic}. A non-graphical summary is as follows:
\begin{enumerate}[noitemsep,topsep=0pt]
    \item Collect the writing instructions given to (human) authors for the original corpus- in our case, peer review instructions. Give these instructions as prompts into an LLM to generate a corresponding corpus of AI-generated documents (Section $\S$~\ref{sec: data}).
    \item Using the human and AI document corpora, estimate the reference token usage distributions $P$ and $Q$ (Section $\S$~\ref{sec: dist}).
    \item Verify the method's performance on synthetic target corpora where the correct proportion of AI-generated documents is known (Section $\S$~\ref{sec: val}).
    \item Based on these estimates for $P$ and $Q$, use MLE to estimate the fraction $\a$ of AI-generated or modified documents in the target corpus (Section $\S$~\ref{sec: mle}).
\end{enumerate}
The following sections present each of these steps in more detail.

\subsection{MLE Framework} \label{sec: mle}
Given a collection of $n$ documents $\{x_i\}_{i=1}^n$ drawn independently from the mixture \eqref{eq: mix}, the log-likelihood of the corpus is given by
\begin{equation} \label{eq: log likelihood}
\cL(\a) = \sum_{i=1}^n \log\l( (1-\a) P(x_i) + \a  Q(x_i) \r).
\end{equation}
If $P$ and $Q$ are known, we can then estimate $\a$ via \emph{maximum likelihood estimation} (MLE) on \eqref{eq: log likelihood}. This is the final step in our method. It remains to construct accurate estimates for $P$ and $Q$.

\subsection{Generating the Training Data} \label{sec: data}
We require access to historical data for estimating $P$ and $Q$. Specifically, we assume that we have access to a collection of reviews which are known to contain only human-authored text, along with the associated review questions and the reviewed papers. We refer to the collection of such \emph{documents} as the \emph{human corpus}.

To generate the AI corpus, we prompt the LLM to generate a review given a paper. 
The texts output by the LLM are then collected into the \emph{AI corpus}. Empirically, we found that our framework exhibits moderate robustness to the distribution shift of LLM prompts. As discussed in Appendix~\ref{Appendix:subsec:LLM-prompt-shift}, training with one prompt and testing with a different prompt still yield accurate validation results (see Figure~\ref{fig: diff prompt val}).

\subsection{\texorpdfstring{Estimating $P$ and $Q$ from Data}{Estimating P and Q from Data}} \label{sec: dist}

The space of all possible documents is too large to estimate $P(x), Q(x)$ directly. Thus, we make some simplifying assumptions on the document generation process to make the estimation tractable.

We represent each document $x_i$ as a list of \emph{occurrences} (i.e., a set) of tokens rather than a list of token \emph{counts}. While longer documents will tend to have more unique tokens (and thus a lower likelihood in this model), the number of additional unique tokens is likely sublinear in the document length, leading to a less exaggerated down-weighting of longer documents.\footnote{For the intuition behind this claim, one can consider the extreme case where the entire token vocabulary has been used in the first part of a document. As more text is added to the document, there will be no new token occurrences, so the number of unique tokens will remain constant regardless of how much length is added to the document. In general, even if the entire vocabulary of unique tokens has not been exhausted, as the document length increases, it is more likely that previously seen tokens will be re-used rather than introducing new ones. This can be seen as analogous to the \href{https://en.wikipedia.org/wiki/Coupon_collector\%27s_problem}{coupon collector problem}~\cite{newman1960double}.}

The occurrence probabilities for the human document distribution can be estimated by
\begin{align*}
\hat{p}(t) &= \frac{\textrm{\# documents in which token } t \textrm{ appears}}{\textrm{total \# documents in the corpus}} \\[5pt]
&= \frac{\sum_{x \in X} \I\{t \in x\}}{|X|},
\end{align*}
where $X$ is the corpus of human-written documents. The estimate $\hat{q}(t)$ can be defined similarly for the AI distribution. Using the notation $t \in x$ to denote that token $t$ occurs in document $x$, we can then estimate $P$ via
\begin{equation} \label{eq: occur}
P(x_i) = \prod_{t\in x} \hat{p}(t) \times  \prod_{t\not\in x} (1-\hat{p}(t))
\end{equation}
and similarly for $Q$. Recall that our token vocabulary $V$ (defined in Section $\S$~\ref{sec: notation}) consists of all adjectives, so the product over $t\not\in x$ means the product only over all adjectives $t$ which were not in the document or sentence $x$.

We validated both approaches using either a document or a sentence as the unit of $x$, and both performed well (Appendix \ref{subsec:Results on Document-Level Analysis}). We used a sentence as our main unit for estimates, as sentences perform slightly better.

\subsection{Validating the Method} \label{sec: val}
The steps described above are sufficient for estimating the fraction $\a$ of documents in a target corpus which are AI-generated. We also provide a method for validating the system's performance.

We use the training partitions of the human and AI corpora to estimate $P$ and $Q$ as described above. To validate the system's performance, we do the following:
\begin{enumerate}[noitemsep,topsep=0pt]
    \item Choose a range of feasible values for $\a$, e.g. $\a \in \{0, 0.05, 0.1, 0.15, 0.2, 0.25\}$.
    \item Let $n$ be the size of the target corpus. For each of the selected $\a$ values, sample (with replacement) $\a n$ documents from the AI validation corpus and $(1-\a)n$ documents from the human validation corpus to create a \emph{target corpus}.
    \item Compute the MLE estimate $\hat{\a}$ on the target corpus. If $\hat{\a}\approx\a$ for each of the feasible $\a$ values, this provides evidence that the system is working correctly and the estimate can be trusted. %
\end{enumerate}
Step 2 can also be repeated multiple times to generate confidence intervals for the estimate $\hat{\a}$.

\section{Experiments}
\label{sec:Experiments}
In this section, we apply our method to a case study of peer reviews of academic machine learning (ML) and scientific papers. 

\subsection{Data}
\label{subsec:Data}
We collect review data for all major ML conferences available on OpenReview, including \textit{ICLR}, \textit{NeurIPS}, \textit{CoRL}, and \textit{EMNLP}, as detailed in Table~\ref{tab:data_split}. 
The Nature portfolio dataset encompasses 15 journals within the Nature portfolio, such as Nature,
Nature Biomedical Engineering, Nature Human Behaviour, and Nature Communications.
Additional information on the datasets can be found in Appendix~\ref{sec:Additional Dataset Information}.

\begin{table}[ht!]
\centering
\caption{
\textbf{Academic Peer Reviews Data from Major ML Conferences.}
All listed conferences except \textit{ICLR} '24, \textit{NeurIPS} '23, \textit{CoRL} '23, and \textit{EMNLP} '23 underwent peer review before the launch of ChatGPT on November 30, 2022. 
We use the \textit{ICLR} '23 conference data for in-distribution validation, and the \textit{NeurIPS} ('17–'22) and \textit{CoRL} ('21–'22) for out-of-distribution (OOD) validation.
}
\label{tab:data_split}
\resizebox{0.48\textwidth}{!}{
\setlength{\tabcolsep}{3.5pt}
\begin{tabular}{lccc}
\toprule
\bf Conference & \bf Post ChatGPT & \bf Data Split & \bf \# of Official Reviews \\
\midrule
\rowcolor{green!10} 
ICLR 2018 & \bf \textcolor{darkgreen!70}{Before} & \bf \cellcolor{green!20} \textcolor{black!85}{Training} & 2,930 \\
\rowcolor{green!10} 
ICLR 2019 & \bf \textcolor{darkgreen!70}{Before} & \bf \cellcolor{green!20} \textcolor{black!85}{Training} & 4,764 \\
\rowcolor{green!10} 
ICLR 2020 & \bf \textcolor{darkgreen!70}{Before} & \bf \cellcolor{green!20} \textcolor{black!85}{Training} & 7,772 \\
\rowcolor{green!10} 
ICLR 2021 & \bf \textcolor{darkgreen!70}{Before} & \bf \cellcolor{green!20} \textcolor{black!85}{Training} & 11,505 \\
\rowcolor{green!10} 
ICLR 2022 & \bf \textcolor{darkgreen!70}{Before} & \bf \cellcolor{green!20} \textcolor{black!85}{Training} & 13,161 \\
\cmidrule{1-4} 
\rowcolor{green!10} 
ICLR 2023 & \bf \textcolor{darkgreen!70}{Before} & \bf \cellcolor{blue!10} \textcolor{blue!85}{Validation} & 18,564 \\
\rowcolor{red!20} ICLR 2024 & \bf \textcolor{red!70}{After} & \bf \textcolor{black!85}{Inference} & 27,992 \\
\cmidrule{1-4} 
\rowcolor{green!10} 
NeurIPS 2017 & \bf \textcolor{darkgreen!70}{Before} & \bf \cellcolor{blue!10} \textcolor{blue!85}{OOD Validation} & 1,976 \\
\rowcolor{green!10} 
NeurIPS 2018 & \bf \textcolor{darkgreen!70}{Before} & \bf \cellcolor{blue!10} \textcolor{blue!85}{OOD Validation} & 3,096 \\
\rowcolor{green!10} 
NeurIPS 2019 & \bf \textcolor{darkgreen!70}{Before} & \bf \cellcolor{blue!10} \textcolor{blue!85}{OOD Validation} & 4,396 \\
\rowcolor{green!10} 
NeurIPS 2020 & \bf \textcolor{darkgreen!70}{Before} & \bf \cellcolor{blue!10} \textcolor{blue!85}{OOD Validation} & 7,271 \\
\rowcolor{green!10} 
NeurIPS 2021 & \bf \textcolor{darkgreen!70}{Before} & \bf \cellcolor{blue!10} \textcolor{blue!85}{OOD Validation} & 10,217 \\
\rowcolor{green!10} 
NeurIPS 2022 & \bf \textcolor{darkgreen!70}{Before} & \bf \cellcolor{blue!10} \textcolor{blue!85}{OOD Validation} & 9,780 \\
\rowcolor{red!20} NeurIPS 2023 & \bf \textcolor{red!70}{After} & \bf \textcolor{black!85}{Inference} & 14,389 \\
\cmidrule{1-4} 
\rowcolor{green!10} 
CoRL 2021 & \bf \textcolor{darkgreen!70}{Before} & \bf \cellcolor{blue!10} \textcolor{blue!85}{OOD Validation} & 558 \\
\rowcolor{green!10} 
CoRL 2022 & \bf \textcolor{darkgreen!70}{Before} & \bf \cellcolor{blue!10} \textcolor{blue!85}{OOD Validation} & 756 \\
\rowcolor{red!20} CoRL 2023 & \bf \textcolor{red!70}{After} & \bf \textcolor{black!85}{Inference} & 759 \\
\cmidrule{1-4} 
\rowcolor{red!20} EMNLP 2023 & \bf \textcolor{red!70}{After} & \bf \textcolor{black!85}{Inference} & 6,419 \\
\bottomrule
\end{tabular}
}
\end{table}

\subsection{Validation on Semi-Synthetic data}
\label{subsec:validation}
Next, we validate the efficacy of our method as described in Section~\ref{sec: val}. We find that our algorithm accurately estimates the proportion of LLM-generated texts in these mixed validation sets with a prediction error of less than 1.8\% at the population level across various ground truth $\alpha$ on \textit{ICLR} '23 (Figure~\ref{fig: val}, Table~\ref{tab:verification-adj-main}). 

Furthermore, despite being trained exclusively on \textit{ICLR} data from 2018 to 2022, our model displays robustness to moderate topic shifts observed in \textit{NeurIPS} and \textit{CoRL} papers. The prediction error remains below 1.8\% across various ground truth $\alpha$ for \textit{NeurIPS} '22 and under 2.4\% for \textit{CoRL} '22 (Figure~\ref{fig: val}, Table~\ref{tab:verification-adj-main}). This resilience against variation in paper content suggests that our model can reliably identify LLM alterations  
across different research areas and conference formats, underscoring its potential applicability in maintaining the integrity of the peer review process in the presence of continuously updated generative models.

\begin{figure}[ht!]
    \centering
    \includegraphics[width=0.475\textwidth]{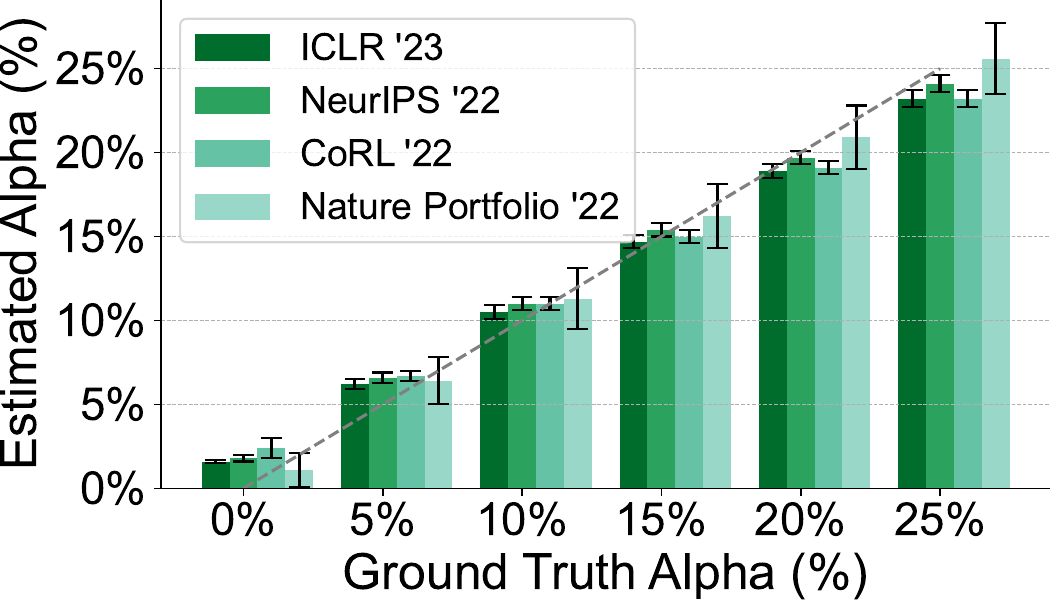}
\caption{
\textbf{Performance validation of our MLE estimator} across \textit{ICLR} '23, \textit{NeurIPS} '22, and \textit{CoRL} '22 reviews (all predating ChatGPT's launch) via the method described in Section~\ref{sec: val}.
Our algorithm demonstrates high accuracy with less than 2.4\% prediction error in identifying the proportion of LLM-generated feedback within the validation set. 
See {Table~\ref{tab:verification-adj-main},\ref{tab:verification-adj-main-nature}} for full results.
}
\label{fig: val}
\end{figure}

\subsection{Comparison to Instance-Based Detection Methods}
\label{subsec:baseline}

We compare our approach to a BERT classifier baseline, which we fine-tuned on identical training data, and two recently published, state-of-the-art AI text detection methods, all evaluated using the same protocol (Appendix~\ref{Appendix:subsec:baselines}).
Our method reduces the in-distribution estimation error by 3.4 times compared to the best-performing baseline (from 6.2\% to 1.8\%, Table~\ref{tab:BERT-and-other-baselines}), and the out-of-distribution estimation error by 4.6 times (from 11.2\% to 2.4\%, Table~\ref{tab:BERT-and-other-baselines}).  Additionally, our method is 
more than 10 million times (i.e., 7 orders of magnitude) more computationally efficient during inference time (68.09 FLOPS vs. 2.721 $\times 10^9$ FLOPS amortized per sentence, Table~\ref{table:baseline-computation-cost}), and the training cost is also negligible compared to any backpropagation-based algorithms as we are only counting word frequencies in the training corpora.

\subsection{Estimates on Real Reviews}\label{sec: main-results}
Next, we address the main question of our case study: what fraction of conference review text was substantially modified by LLMs, beyond simple grammar and spell checking? 
We find that there was a significant increase in AI-generated sentences after the release of ChatGPT for the ML venues, but not for \textit{Nature}(Appendix~\ref{sec:main-results}). 
The results are demonstrated in Figure~\ref{fig: temporal}, with error bars showing 95\% confidence intervals over 30,000 bootstrap samples.

Across all major ML conferences (\textit{NeurIPS}, \textit{CoRL}, and \textit{ICLR}), there was a sharp increase in the estimated $\alpha$ following the release of ChatGPT in late November 2022 (Figure~\ref{fig: temporal}). 
For instance, among the conferences with pre- and post-ChatGPT data, \textit{ICLR} experienced the most significant increase in estimated $\alpha$, from 1.6\% to 10.6\% (Figure~\ref{fig: temporal}, purple curve). 
\textit{NeurIPS} had a slightly lesser increase, from 1.9\% to 9.1\% (Figure~\ref{fig: temporal}, green curve), while \textit{CoRL}'s increase was the smallest, from 2.4\% to 6.5\% (Figure~\ref{fig: temporal}, red curve). Although data for \textit{EMNLP} reviews prior to ChatGPT's release are unavailable, this conference exhibited the highest estimated $\alpha$, at approximately 16.9\% (Figure~\ref{fig: temporal}, orange dot). This is perhaps unsurprising: NLP specialists may have had more exposure and knowledge of LLMs in the early days of its release. 

It should be noted that all of the post-ChatGPT $\a$ levels are significantly higher than the $\a$ estimated in the validation experiments with ground truth $\a=0$, and for \textit{ICLR} and \textit{NeurIPS}, the estimates are significantly higher than the validation estimates with ground truth $\a=5\%$. 
This suggests a modest yet noteworthy use of AI text-generation tools in conference review corpora.

\paragraph{Results on \textit{Nature Portfolio} journals}
We also train a separate model for \textit{Nature Portfolio} journals and validated its accuracy (Figure~\ref{fig: val}, \textit{Nature Portfolio} '22, Table~\ref{tab:verification-adj-main-nature}). 
Contrary to the ML conferences, the \textit{Nature Portfolio} journals do not exhibit a significant increase in the estimated $\alpha$ values following ChatGPT's release, with pre- and post-release $\alpha$ estimates remaining within the margin of error for the $\alpha=0$ validation experiment (Figure~\ref{fig: temporal}).
This consistency indicates a different response to AI tools within the broader scientific disciplines when compared to the specialized field of machine learning.

\begin{figure}[ht!] 
    \centering
    \includegraphics[width=0.475\textwidth]{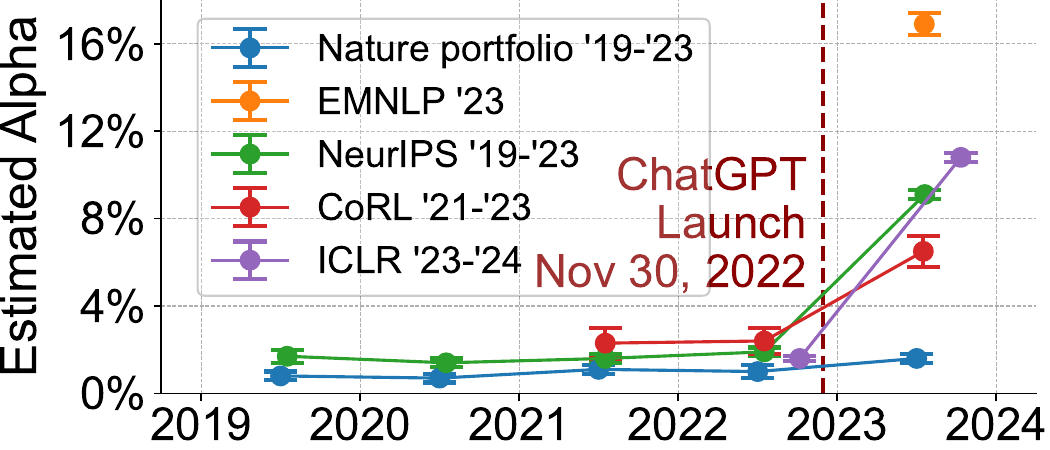}
    \caption{
    \textbf{Temporal changes in the estimated $\a$ for several ML conferences and \textit{Nature Portfolio} journals.} The estimated $\a$ for all ML conferences increases sharply after the release of ChatGPT (denoted by the dotted vertical line), 
    indicating that LLMs are being used in a small but significant way. Conversely, the $\a$ estimates for \textit{Nature Portfolio} reviews do not exhibit a significant increase or rise above the margin of error in our validation experiments for $\a=0$.
    See {Table~\ref{tab:main-result},\ref{tab: Nature trend}} for full results.}
    \label{fig: temporal}
\end{figure}

\subsection{Robustness to Proofreading}
\label{subsec:Proofreading}
To verify that our method is detecting text which has been substantially modified by AI beyond simple grammatical edits, we conduct a robustness check by applying the method to peer reviews which were simply edited by ChatGPT for typos and grammar. The results are shown in Figure~\ref{fig: proofread}. While there is a slight increase in the estimated $\hat{\a}$, it is much smaller than the effect size seen in the real review corpus in the previous section (denoted with dashed lines in the figure).

\begin{figure}[ht!] 
\centering
\includegraphics[width=0.475\textwidth]{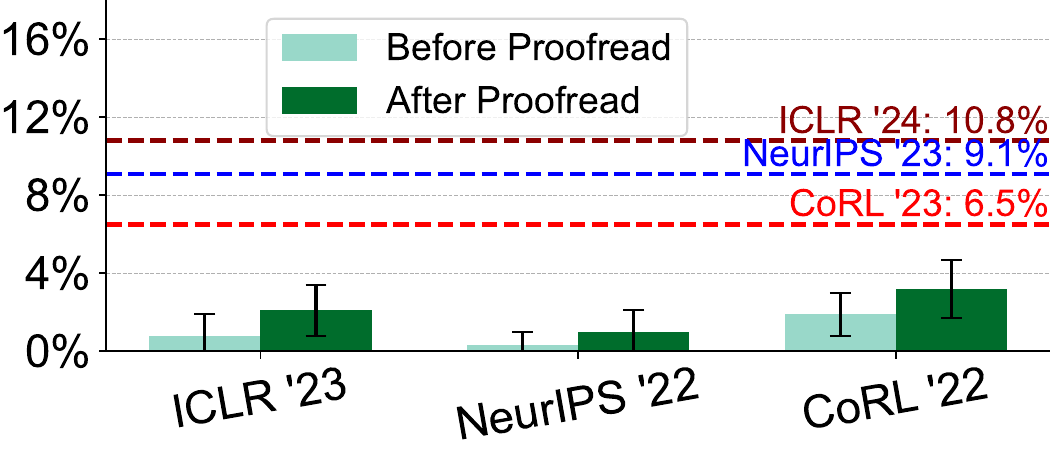}
\caption{
\textbf{Robustness of the estimations to proofreading.} 
Evaluating $\alpha$ after using LLMs for ``proof-reading" (non-substantial editing) of peer reviews shows a minor, non-significant increase across conferences, confirming our method's sensitivity to text which was generated in significant part by LLMs, beyond simple proofreading. See {Table~\ref{app: proofread}} for full results.
}
\label{fig: proofread}
\end{figure}

\subsection{
Using LLMs to Substantially Expand Review Outline
}
\label{subsec:expand}

A reviewer might draft their review in two distinct stages: initially creating a brief outline of the review while reading the paper, followed by using LLMs to expand this outline into a detailed, comprehensive review. Consequently, we conduct an analysis to assess our algorithm's ability to detect such LLM usage.

To simulate this two-stage process retrospectively, we first condense a complete peer review into a structured, concise skeleton (outline) of key points (see Table~\ref{fig:skeleton-prompt-1}). Subsequently, rather than directly querying an LLM to generate feedback from papers, we instruct it to expand the skeleton into detailed, complete review feedback (see Table~\ref{fig:skeleton-prompt-2}). This mimics the two-stage scenario above.

We mix human peer reviews with the LLM-expanded feedback at various ground truth levels of $\alpha$, using our algorithm to predict these $\alpha$ values (Section $\S$~\ref{sec: val}). The results are presented in Figure~\ref{fig:expand-verfication}. The $\alpha$ estimated by our algorithm closely matches the ground truth $\alpha$. This suggests that our algorithm is sufficiently sensitive to detect the LLM use case of substantially expanding human-provided review outlines. The estimated $\alpha$ from our approach is consistent with reviewers using LLM to substantially expand their bullet points into full reviews.

\begin{figure}[ht!]
    \centering
    \includegraphics[width=0.475\textwidth]{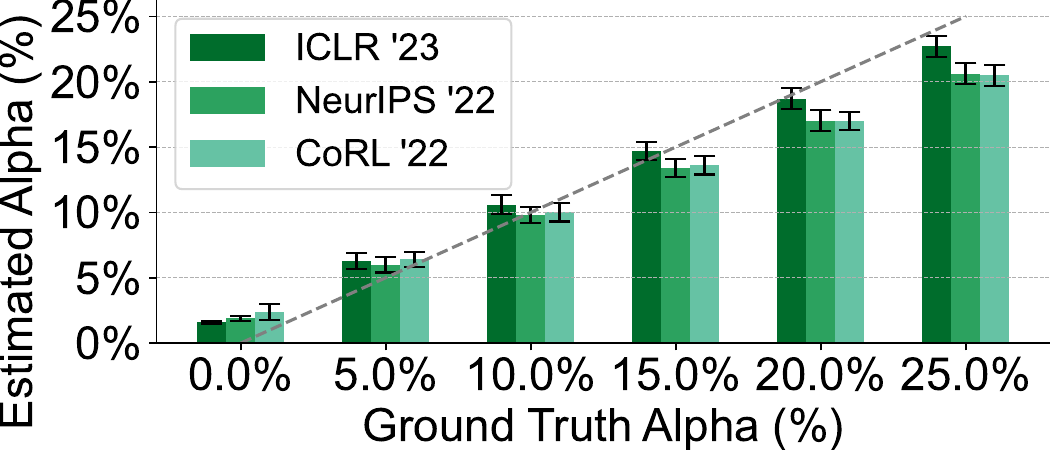}
\caption{
\textbf{Substantial modification and expansion of incomplete sentences using LLMs can largely account for the observed trend}. 
Rather than directly using LLMs to generate feedback, we expand a bullet-pointed skeleton of incomplete sentences into a full review using LLMs (see Table~\ref{fig:skeleton-prompt-1} and \ref{fig:skeleton-prompt-2} for prompts). The detected $\alpha$ may largely be attributed to this expansion. See {Table~\ref{tab: expand val}} for full results.
}
\label{fig:expand-verfication}
\end{figure}

\subsection{Factors that Correlate With Estimated LLM Usage}
\label{subsec:fine-grained-analysis}

\paragraph{Deadline Effect}
We see a small but consistent increase in the estimated $\a$ for reviews submitted 3 or fewer days before a deadline (Figure~\ref{fig: deadline}). As reviewers get closer to a looming deadline, they may try to save time by relying on LLMs. The following paragraphs explore some implications of this increased reliance.
\begin{figure}[ht!] 
\centering
\includegraphics[width=0.475\textwidth]{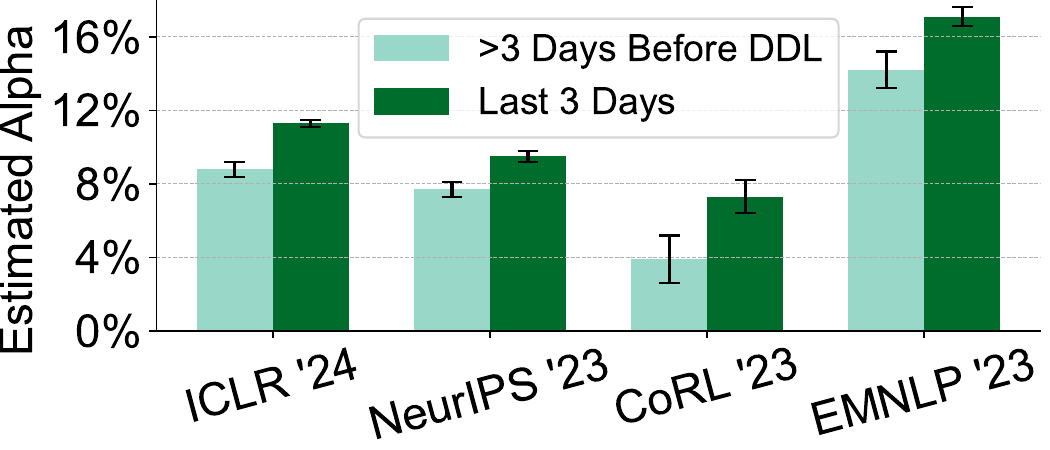}
\caption{
\textbf{The deadline effect.} Reviews submitted within 3 days of the review deadline tended to have a higher estimated $\a$. See {Table~\ref{app:timeline}} for full results.
}
\label{fig: deadline}
\end{figure}

\paragraph{Reference Effect}

Recognizing that LLMs often fail to accurately generate content and are less likely to include scholarly citations, as highlighted by recent studies~\cite{liang2024can}, we hypothesize that reviews containing scholarly citations might indicate lower LLM usage. To test this, we use the occurrence of the string “et al.” as a proxy for scholarly citations in reviews. We find that reviews featuring “et al.” consistently showed a lower estimated $\alpha$ than those lacking such references (see Figure~\ref{fig: et-al}). The lack of scholarly citations demonstrates one way that generated text does not include content that expert reviewers otherwise might. However, we lack a counterfactual- it could be that people who were more likely to use ChatGPT may also have been less likely to cite sources were ChatGPT not available. Future studies should examine the causal structure of this relationship.

\begin{figure}[ht!] 
\centering
\includegraphics[width=0.475\textwidth]{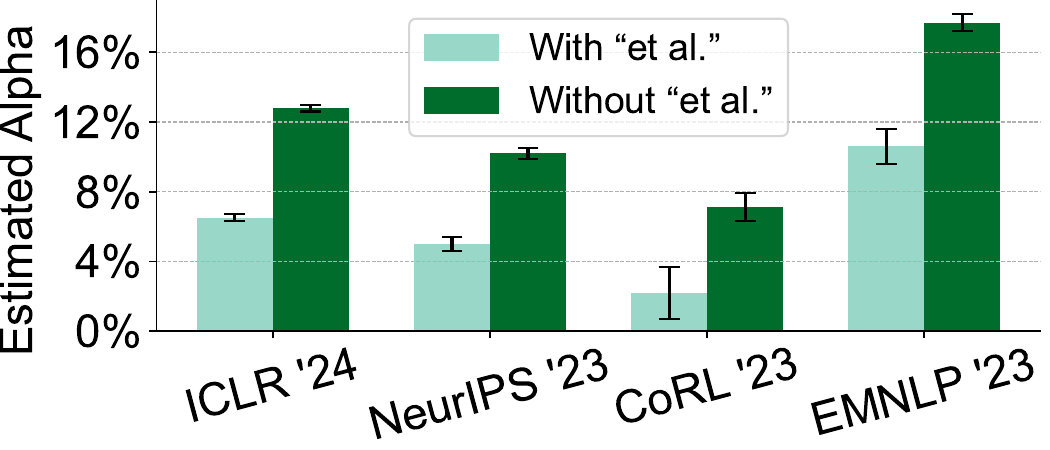}
\caption{
\textbf{The reference effect.} 
Our analysis demonstrates that reviews containing the term “et al.”, indicative of scholarly citations, are associated with a significantly lower estimated $\alpha$. 
See {Table~\ref{app:refere}} for full results.
}
\label{fig: et-al}
\end{figure}

\paragraph{Lower Reply Rate Effect}
We find a negative correlation between the number of author replies and estimated ChatGPT usage ($\a$), suggesting that authors who participated more actively in the discussion period were less likely to use ChatGPT to generate their reviews. There are a number of possible explanations, but we cannot make a causal claim. Reviewers may use LLMs as a quick-fix to avoid extra engagement, but if the role of the reviewer is to be a co-producer of better science, then this fix hinders that role. Alternatively, as AI conferences face a desperate shortage of reviewers, scholars may agree to participate in more reviews and rely on the tool to support the increased workload. Editors and conference organizers should carefully consider the relationship between ChatGPT-use and reply rate to ensure each paper receives an adequate level of feedback.
\begin{figure}[htb]
    \centering
    \begin{minipage}{0.23\textwidth}
        \centering
        \begin{overpic}[width=\textwidth]{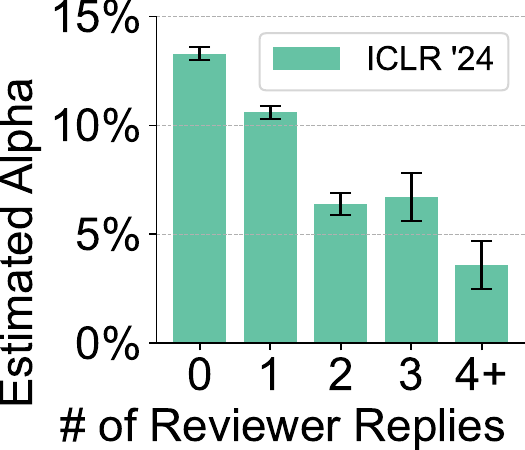}
            \put(-5,90){\textbf{(a)}} %
        \end{overpic}        
    \end{minipage}\hfill
    \begin{minipage}{0.23\textwidth}
        \centering
        \begin{overpic}[width=\textwidth]{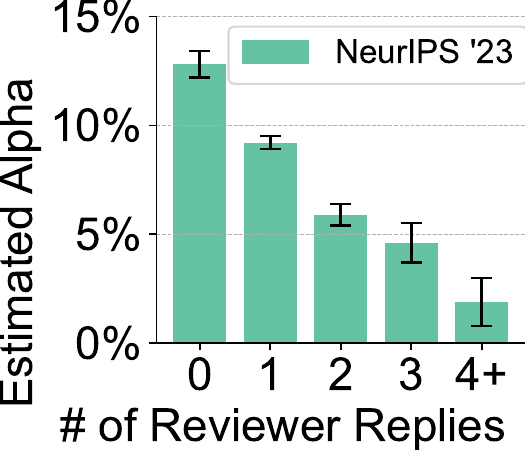}
            \put(-5,90){\textbf{(b)}} %
        \end{overpic}
    \end{minipage}
\caption{
\textbf{The lower reply rate effect.} We observe a negative correlation between number of reviewer replies in the review discussion period and the estimated $\a$ on these reviews. See {Table~\ref{app:replies}} for full results.
}
\label{fig: reply}
\end{figure}

\paragraph{Homogenization Effect}
There is growing evidence that the introduction of LLM content in information ecosystems can contribute to to \textit{output homogenization} 
\cite{liu2024chatgpt, bommasani2022picking, kleinberg2021algorithmic}. We examine this phenomenon in the context of text as a decrease in variation of linguistic features and epistemic content than would be expected in an unpolluted corpus \cite{christin2020data}. While it might be intuitive to expect that a standardization of text in peer reviews could be useful, empirical social studies of peer review demonstrate the important role of feedback variation from reviewers \cite{teplitskiy2018sociology, lamont2009professors, lamont2012toward, longino1990science, sulik2023scientists}. %

Here, we explore whether the presence of generated texts in a peer review corpus led to homogenization of feedback, using a new method to classify texts as ``convergent'' (similar to the other reviews) or ``divergent'' (dissimilar to the other reviews). 
For each paper, we obtained the OpenAI's text-embeddings for all reviews, followed by the calculation of their centroid (average). 
Among the assigned reviews, the one with its embedding closest to the centroid is labeled as convergent, and the one farthest as divergent. 
This process is repeated for each paper, generating a corpus of convergent and divergent reviews, to which we then apply our analysis method.

The results, as shown in Figure~\ref{fig: homog}, suggest that convergent reviews, which align more closely with the centroid of review embeddings, tend to have a higher estimated $\alpha$. This finding aligns with previous observations that LLM-generated text often focuses on specific, recurring topics, such as research implications or suggestions for additional experiments, more consistently than expert peer reviewers do \cite{liang2024can}. 

This corpus-level homogenization is potentially concerning for several reasons. First, if paper authors receive synthetically-generated text in place of an expert-written review, the scholars lose an opportunity to receive feedback from multiple, independent, diverse experts in their field. Instead, authors must contend with formulaic responses which may not capture the unique and creative ideas that a peer might present. Second, based on studies of representational harms in language model output, it is likely that this homogenization does not trend toward random, representative ways of knowing and producing language, but instead converges toward the practices of certain groups \cite{naous2024having, cao2023assessing, papadimitriou2023multilingual, arora2022probing, hofmann2024dialect}. %

\begin{figure}[ht!] 
\centering
\includegraphics[width=0.475\textwidth]{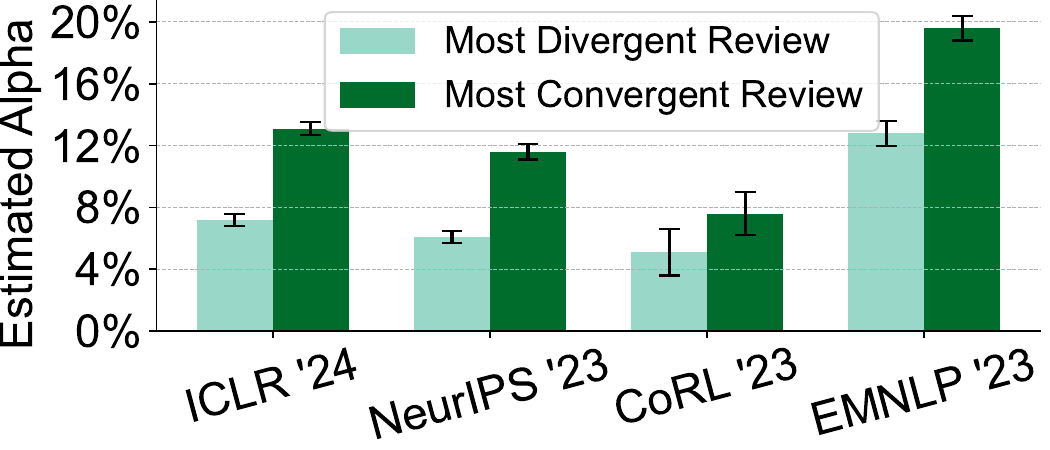}
\caption{
\textbf{The homogenization effect.} ``Convergent'' reviews (those most similar to other reviews of the same paper in the embedding space) tend to have a higher estimated $\a$ as compared to ``divergent'' reviews (those most dissimilar to other reviews).
See {Table~\ref{app:similarity}} for full results.
}
\label{fig: homog}
\end{figure}

\paragraph{Low Confidence Effect} 
The correlation between reviewer confidence tends to be negatively correlated with ChatGPT usage -that is, the estimate for $\a$ (Figure~\ref{fig: confidence}). 
One possible interpretation of this phenomenon is that the integration of LMs into the review process introduces a layer of detachment for the reviewer from the generated content, which might make reviewers feel less personally invested or assured in the content's accuracy or relevance.

\begin{figure}[t!] 
\centering
\includegraphics[width=0.475\textwidth]{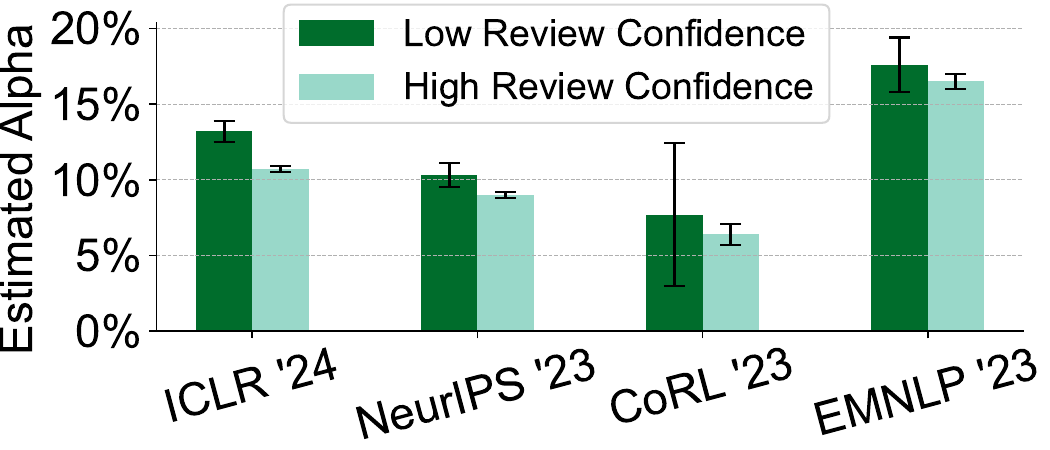}
\caption{
\textbf{The low confidence effect.} 
Reviews with low confidence, defined as self-rated confidence of 2 or lower on a 5-point scale, are correlated with higher alpha values than those with 3 or above, and are mostly identical across these major ML conferences. See the descriptions of the confidence rating scales in {Table~\ref{tab:appendix-confidence-scale}} and full results in {Table~\ref{app:confidence}}.
}
\label{fig: confidence}
\end{figure}

\section{Discussion}
In this work, we propose a method for estimating the fraction of documents in a large corpus which were generated primarily using AI tools. The method makes use of historical documents. The prompts from this historical corpus are then fed into an LLM (or LLMs) to produce a corresponding corpus of AI-generated texts. The written and AI-generated corpora are then used to estimate the distributions of AI-generated vs. written texts in a mixed corpus. Next, these estimated document distributions are used to compute the likelihood of the target corpus, and the estimate for $\a$ is produced by maximizing the likelihood. We also provide specific methods for estimating the text distributions by token frequency and occurrence, as well as a method for validating the performance of the system. 

Applying this method to conference and journal reviews written before and after the release of ChatGPT shows evidence that roughly 7-15\% of sentences in ML conference reviews were substantially modified by AI beyond a simple grammar check, while there does \textit{not} appear to be significant evidence of AI usage in reviews for \textit{Nature}. Finally, we demonstrate several ways this method can support social analysis. First, we show that reviewers are more likely to submit generated text for last-minute reviews, and that people who submit generated text offer fewer author replies than those who submit written reviews. Second, we show that generated texts include less specific feedback or citations of other work, in comparison to written reviews. Generated reviews also are associated with lower confidence ratings.
Third, we show how corpora with generated text appear to compress the linguistic variation and epistemic diversity that would be expected in unpolluted corpora. We should also note that other social concerns with ChatGPT presence in peer reviews extend beyond our scope, including the potential privacy and anonymity risks of providing unpublished work to a privately owned language model.
\paragraph{Limitations}

While our study focused on ChatGPT, which dominates the generative AI market with 76\% of global internet traffic in the category~\cite{vanrossum2024generative}, we acknowledge that there are other diverse LLMs used for generating or rephrasing text. However, recent studies have found that ChatGPT substantially outperforms other LLMs, including Bard, in the reviewing of scientific papers or proposals~\cite{liang2024can,liu2023reviewergpt}. 
We also found that our results are robust on the use of alternative LLMs such as GPT-3.5. For example, the model trained with only GPT-3.5 data provides consistent estimation results and findings, and demonstrates the ability to generalize, accurately detecting GPT-4 as well (see Table~\ref{table:GPT-3.5-validation} and \ref{table:GPT-3.5-validation-on-GPT-4}). 
However, we acknowledge that our framework's effectiveness may vary depending on the specific LLM used, and future practitioners should select the LLM that most closely mirrors the language model likely used to generate their target corpus, reflecting actual usage patterns at the time of creation.

Our findings are primarily based on datasets from major ML conferences (ICLR, NeurIPS, CoRL, EMNLP) and Nature Family Journals spanning 15 distinct journals across different disciplines such as medicine, biology, chemistry, and environmental sciences. While this demonstrates the applicability of our framework beyond these domains, further experimentation may be required to fully establish its generalizability to an even wider range of fields and publication venues. Factors such as field-specific writing styles and the prevalence of AI use could influence the effectiveness of our approach.

Moreover, the prompting techniques used in our study to simulate the process of revising, expanding, paraphrasing, and proofreading review texts (Section $\S$~\ref{subsec:Proofreading}) have limitations. The prompts we employed were designed based on our understanding of common practices, but they may not capture the full range of techniques used by reviewers or AI assistants. We emphasize that these techniques should be interpreted as a best-effort approximation rather than a definitive representation of how AI is used for review text modifications.

Although our validation experiments used real reviews from prior years, which included a significant fraction of non-native speaker-written texts, and our results remained accurate, we recognize that substantial shifts in the non-native speaker population over time could still impact the accuracy of our estimates~\cite{Liang2023GPTDA}. Future research should investigate the impact of evolving non-native speaker populations on the robustness of our framework.

In addition, the approximations made to the review generating process in Section $\S$~\ref{sec: method} in order to make estimation of the review likelihood tractable introduce an additional source of error, as does the temporal distribution shift in token frequencies due to, e.g., changes in topics, reviewers, etc. 

We emphasize here that we do not wish to pass a value judgement or claim that the use of AI tools for review papers is necessarily bad or good. 
We also do not claim (nor do we believe) that many reviewers are using ChatGPT to write entire reviews outright. Our method does not constitute direct evidence that reviewers are using ChatGPT to write reviews from scratch. For example, it is possible that a reviewer may sketch out several bullet points related to the paper and uses ChatGPT to formulate these bullet points into paragraphs. 
In this case, it is possible for the estimated $\alpha$ to be high; indeed our results in Appendix~\ref{subsec:expand} is consistent with this mode of using LLM to substantially modify and flesh out reviews. 

To enhance transparency and accountability, future work should focus on applying and extending our framework to estimate the extent of AI-generated text across various domains, including but not limited to peer review. We believe that our data and analyses can serve as a foundation for constructive discussions and further research by the community, ultimately contributing to the development of robust guidelines and best practices for the ethical use of generative AI.

\section{Additional Results}
\subsection{Top 100 adjectives that are disproportionately used more frequently by AI}

\begin{table}[ht!]
\centering
\caption{\textbf{Top 100 adjectives disproportionately used more frequently by AI.} }
\begin{tabular}{lllll}
\hline
commendable & innovative & meticulous & intricate & notable \\
versatile & noteworthy & invaluable & pivotal & potent \\
fresh & ingenious & cogent & ongoing & tangible \\
profound & methodical & laudable & lucid & appreciable \\
fascinating & adaptable & admirable & refreshing & proficient \\
intriguing & thoughtful & credible & exceptional & digestible \\
prevalent & interpretative & remarkable & seamless & economical \\
proactive & interdisciplinary & sustainable & optimizable & comprehensive \\
vital & pragmatic & comprehensible & unique & fuller \\
authentic & foundational & distinctive & pertinent & valuable \\
invasive & speedy & inherent & considerable & holistic \\
insightful & operational & substantial & compelling & technological \\
beneficial & excellent & keen & cultural & unauthorized \\
strategic & expansive & prospective & vivid & consequential \\
manageable & unprecedented & inclusive & asymmetrical & cohesive \\
replicable & quicker & defensive & wider & imaginative \\
traditional & competent & contentious & widespread & environmental \\
instrumental & substantive & creative & academic & sizeable \\
extant & demonstrable & prudent & practicable & signatory \\
continental & unnoticed & automotive & minimalistic & intelligent \\
\hline
\end{tabular}
\label{table:word_adj_list}
\end{table}

\begin{figure}[ht!]
    \centering
    \includegraphics[width=1\textwidth]{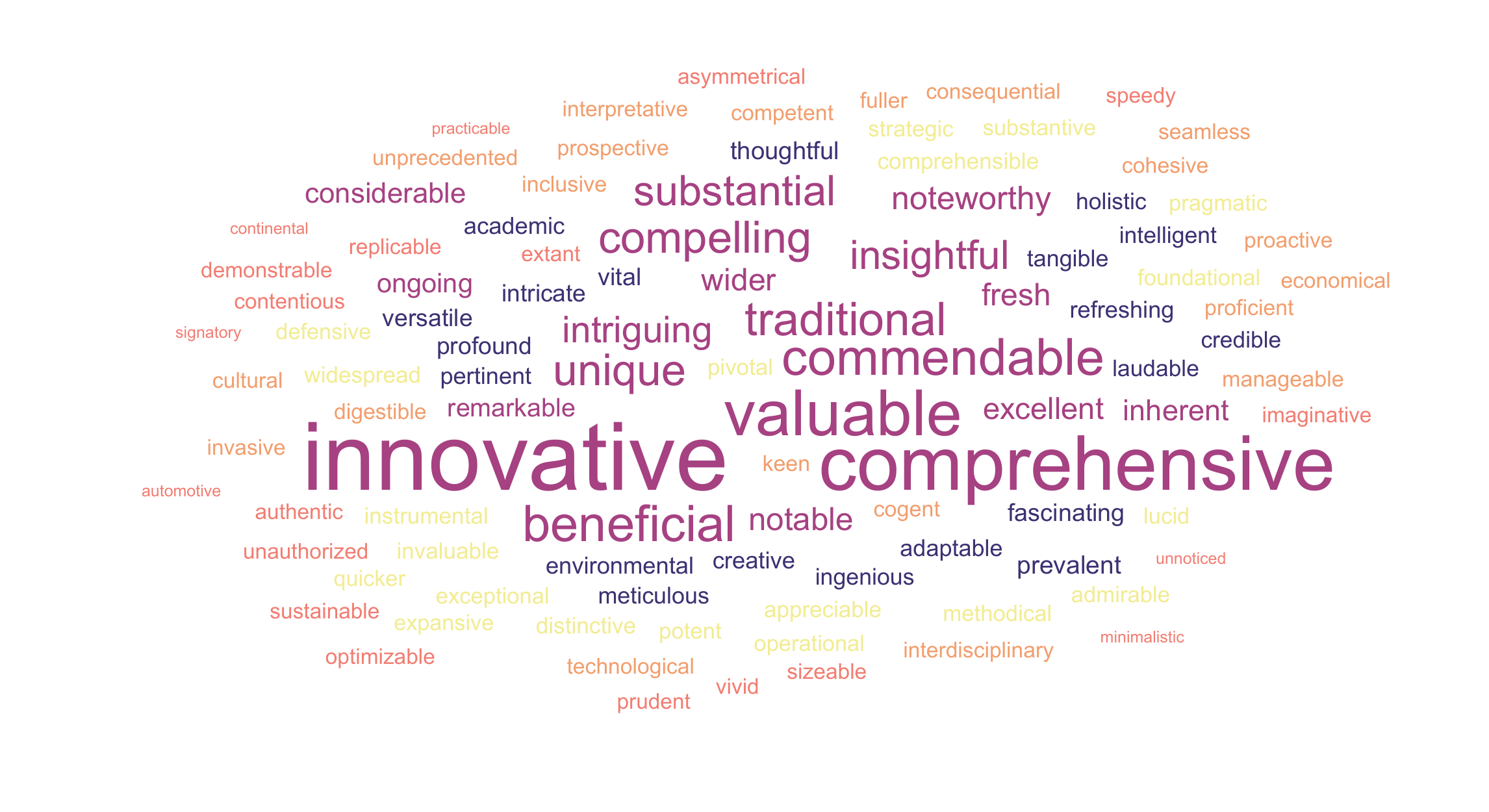}
    \caption{
    \textbf{Word cloud of top 100 adjectives in LLM feedback, with font size indicating frequency.}
    }
    \label{fig:word-cloud-adj}
\end{figure}

\clearpage
\newpage 
\subsection{Top 100 adverbs that are disproportionately used more frequently by AI}

\begin{table}[ht!]
\centering
\caption{\textbf{Top 100 adverbs disproportionately used more frequently by AI.} }
\begin{tabular}{lllll}
\hline
meticulously & reportedly & lucidly & innovatively & aptly \\
methodically & excellently & compellingly & impressively & undoubtedly \\
scholarly & strategically & intriguingly & competently & intelligently \\
hitherto & thoughtfully & profoundly & undeniably & admirably \\
creatively & logically & markedly & thereby & contextually \\
distinctly & judiciously & cleverly & invariably & successfully \\
chiefly & refreshingly & constructively & inadvertently & effectively \\
intellectually & rightly & convincingly & comprehensively & seamlessly \\
predominantly & coherently & evidently & notably & professionally \\
subtly & synergistically & productively & purportedly & remarkably \\
traditionally & starkly & promptly & richly & nonetheless \\
elegantly & smartly & solidly & inadequately & effortlessly \\
forth & firmly & autonomously & duly & critically \\
immensely & beautifully & maliciously & finely & succinctly \\
further & robustly & decidedly & conclusively & diversely \\
exceptionally & concurrently & appreciably & methodologically & universally \\
thoroughly & soundly & particularly & elaborately & uniquely \\
neatly & definitively & substantively & usefully & adversely \\
primarily & principally & discriminatively & efficiently & scientifically \\
alike & herein & additionally & subsequently & potentially \\
\hline
\end{tabular}
\label{table:word_adv_list}
\end{table}
\begin{figure}[ht!]
    \centering
    \includegraphics[width=1\textwidth]{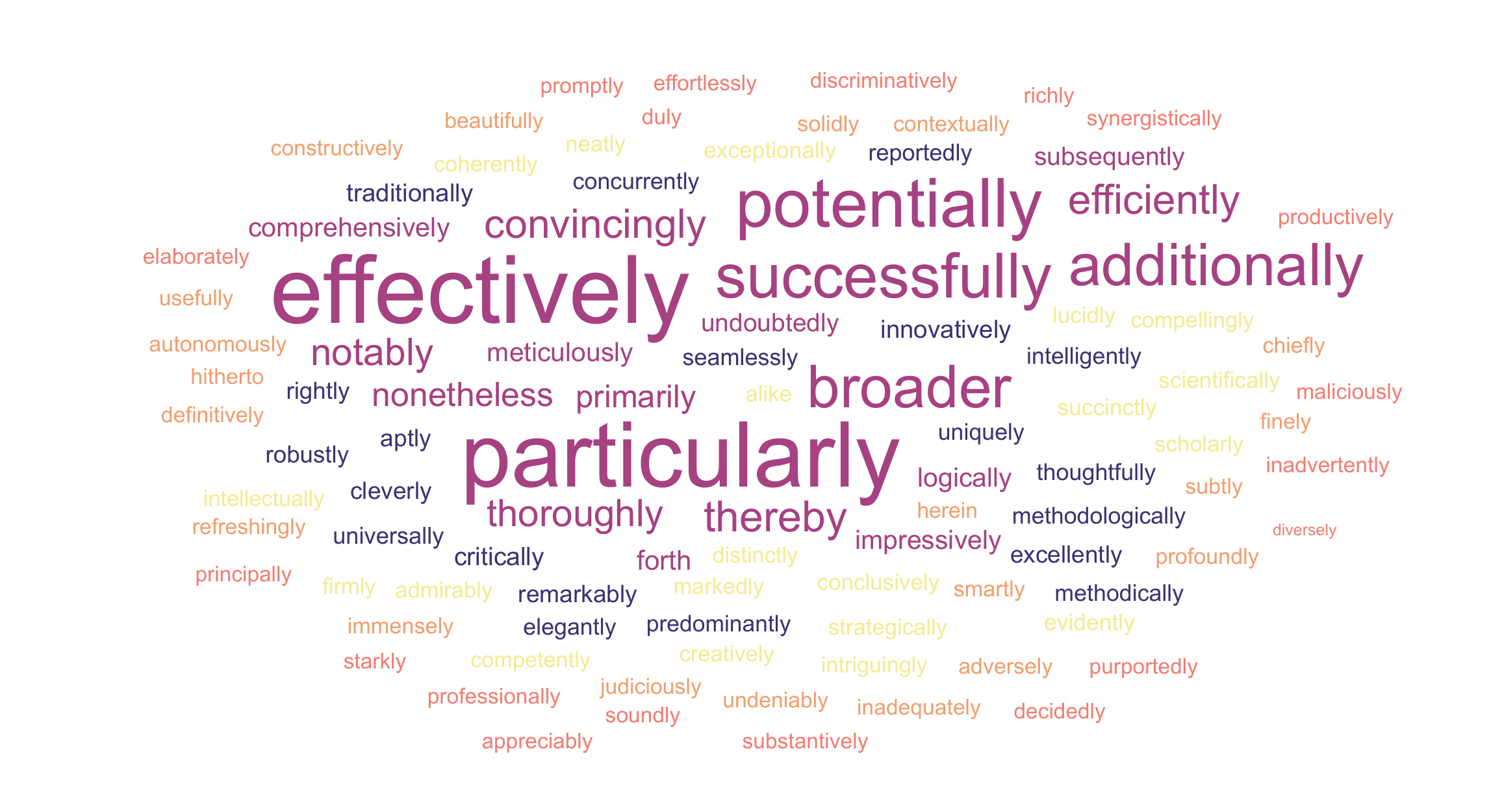}
    \caption{
        \textbf{Word cloud of top 100 adverbs in LLM feedback, with font size indicating frequency.}
    }
    \label{fig:word-cloud-adv}
\end{figure}

\newpage 
\clearpage

\subsection{Additional Details on Major ML Conferences Reviewer Confidence Scale} \label{appendix: data}
Here we include additional details on the datasets used for our experiments. Table~\ref{tab:appendix-confidence-scale} includes the descriptions of the reviewer confidence scales for each conference.

\begin{table}[ht!]
\centering
\footnotesize
\caption{\textbf{Confidence Scale Description for Major ML Conferences}}
\setlength{\tabcolsep}{3.5pt}
\begin{tabular}{r p{12cm}}
\toprule
\textbf{Conference} & \textbf{Confidence Scale Description} \\
\midrule
ICLR 2024 &
1: You are unable to assess this paper and have alerted the ACs to seek an opinion from different reviewers.  
\\
& 
2: You are willing to defend your assessment, but it is quite likely that you did not understand the central parts of the submission or that you are unfamiliar with some pieces of related work. Math/other details were not carefully checked.
\\
& 
3: You are fairly confident in your assessment. It is possible that you did not understand some parts of the submission or that you are unfamiliar with some pieces of related work. Math/other details were not carefully checked. 
\\
& 
4: You are confident in your assessment, but not absolutely certain. It is unlikely, but not impossible, that you did not understand some parts of the submission or that you are unfamiliar with some pieces of related work.                            
\\
& 
5: You are absolutely certain about your assessment. You are very familiar with the related work and checked the math/other details carefully. 
\\
\midrule
NeurIPS 2023 &
1: Your assessment is an educated guess. The submission is not in your area or the submission was difficult to understand. Math/other details were not carefully checked.                       
\\
& 
2: You are willing to defend your assessment, but it is quite likely that you did not understand the central parts of the submission or that you are unfamiliar with some pieces of related work. Math/other details were not carefully checked.
\\
& 
3: You are fairly confident in your assessment. It is possible that you did not understand some parts of the submission or that you are unfamiliar with some pieces of related work. Math/other details were not carefully checked.   
\\
& 
4: You are confident in your assessment, but not absolutely certain. It is unlikely, but not impossible, that you did not understand some parts of the submission or that you are unfamiliar with some pieces of related work.  
\\
& 
5: You are absolutely certain about your assessment. You are very familiar with the related work and checked the math/other details carefully.  
\\
\midrule
CoRL 2023 &
1: The reviewer's evaluation is an educated guess
\\
& 
2: The reviewer is willing to defend the evaluation, but it is quite likely that the reviewer did not understand central parts of the paper
\\
& 
3: The reviewer is fairly confident that the evaluation is correct  
\\
& 
4: The reviewer is confident but not absolutely certain that the evaluation is correct 
\\
& 
5: The reviewer is absolutely certain that the evaluation is correct and very familiar with the relevant literature
\\
\midrule
EMNLP 2023 &
1: Not my area, or paper was hard for me to understand. My evaluation is just an educated guess.                    
\\
& 
2: Willing to defend my evaluation, but it is fairly likely that I missed some details, didn't understand some central points, or can't be sure about the novelty of the work. 
\\
& 
3: Pretty sure, but there's a chance I missed something. Although I have a good feel for this area in general, I did not carefully check the paper's details, e.g., the math, experimental design, or novelty.
\\
& 
4: Quite sure. I tried to check the important points carefully. It's unlikely, though conceivable, that I missed something that should affect my ratings.    
\\
& 
5: Positive that my evaluation is correct. I read the paper very carefully and I am very familiar with related work.  
\\
\bottomrule
\end{tabular}
\label{tab:appendix-confidence-scale}
\end{table}

\newpage 
\clearpage

\subsection{Validation Accuracy Tables}
Here we present the numerical results for validating our method in Section~\ref{sec: val}. Table~\ref{tab: adj val}, ~\ref{tab:verification-adj-main-nature} shows the numerical values used in Figure~\ref{fig: val}.

We also trained a separate model for \textit{Nature} family journals using official review data for papers accepted between 2021-09-13 and 2022-08-03. 
We validated the model's accuracy on reviews for papers accepted between 2022-08-04 and 2022-11-29 (Figure~\ref{fig: val}, \textit{Nature Portfolio} '22, Table~\ref{tab:verification-adj-main-nature}).

\begin{figure}[ht!] 
\centering
\includegraphics[width=1\textwidth]{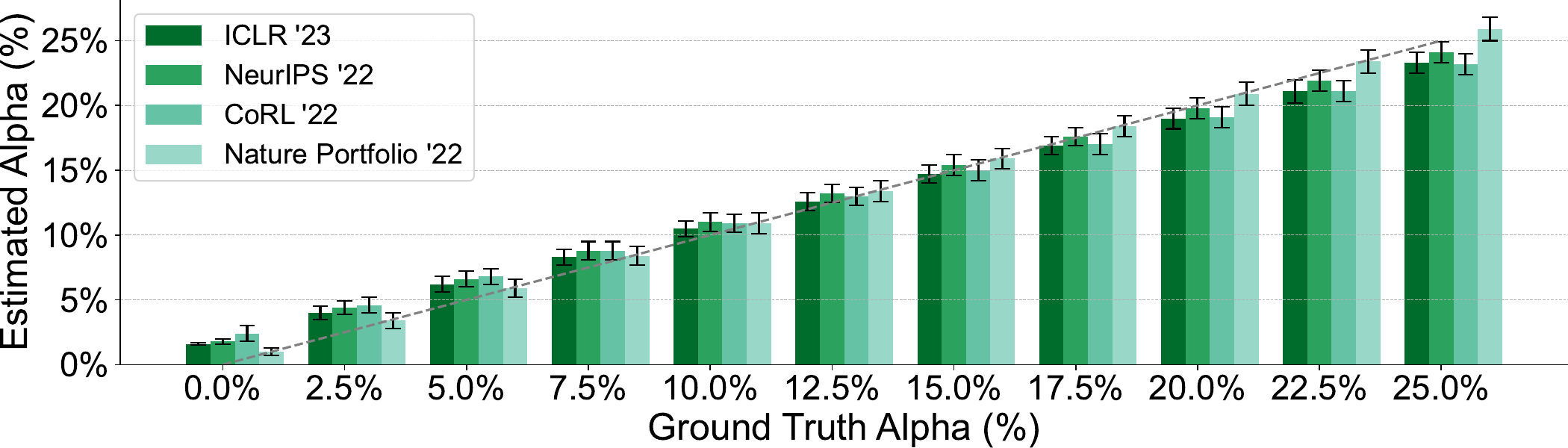}
\caption{
\textbf{Full Results of the validation procedure from Section~\ref{sec: val} using adjectives.}
}
\label{fig: full adj val}
\end{figure}

\begin{table}[htb!]
\small
\begin{center}
\caption{
\textbf{Performance validation of our model} across \textit{ICLR} '23, \textit{NeurIPS} '22, and \textit{CoRL} '22 reviews (all predating ChatGPT's launch), using a blend of official human and LLM-generated reviews. 
Our algorithm demonstrates high accuracy with less than 2.4\% prediction error in identifying the proportion of LLM reviews within the validation set.
This table presents the results data for Figure ~\ref{fig: val}. 
}
\label{tab: adj val}
\begin{tabular}{lrcllc}
\cmidrule[\heavyrulewidth]{1-6}
\multirow{2}{*}{\bf No.} 
& \multirow{2}{*}{\bf \begin{tabular}[c]{@{}c@{}} Validation \\ Data Source 
\end{tabular} } 
& \multirow{2}{*}{\bf \begin{tabular}[c]{@{}c@{}} Ground \\ Truth $\alpha$
\end{tabular}}  
&\multicolumn{2}{l}{\bf Estimated} 
& \multirow{2}{*}{\bf \begin{tabular}[c]{@{}c@{}} Prediction \\ Error 
\end{tabular} } 
\\
\cmidrule{4-5}
 & & & $\alpha$ & $CI$ ($\pm$) & \\
\cmidrule{1-6}
(1) & \emph{ICLR} 2023 & 0.0\% & 1.6\% & 0.1\% & 1.6\% \\
(2) & \emph{ICLR} 2023 & 2.5\% & 4.0\% & 0.5\% & 1.5\% \\
(3) & \emph{ICLR} 2023 & 5.0\% & 6.2\% & 0.6\% & 1.2\% \\
(4) & \emph{ICLR} 2023 & 7.5\% & 8.3\% & 0.6\% & 0.8\% \\
(5) & \emph{ICLR} 2023 & 10.0\% & 10.5\% & 0.6\% & 0.5\% \\
(6) & \emph{ICLR} 2023 & 12.5\% & 12.6\% & 0.7\% & 0.1\% \\
(7) & \emph{ICLR} 2023 & 15.0\% & 14.7\% & 0.7\% & 0.3\% \\
(8) & \emph{ICLR} 2023 & 17.5\% & 16.9\% & 0.7\% & 0.6\% \\
(9) & \emph{ICLR} 2023 & 20.0\% & 19.0\% & 0.8\% & 1.0\% \\
(10) & \emph{ICLR} 2023 & 22.5\% & 21.1\% & 0.9\% & 1.4\% \\
(11) & \emph{ICLR} 2023 & 25.0\% & 23.3\% & 0.8\% & 1.7\% \\
\cmidrule{1-6}
(12) & \emph{NeurIPS} 2022 & 0.0\% & 1.8\% & 0.2\% & 1.8\% \\
(13) & \emph{NeurIPS} 2022 & 2.5\% & 4.4\% & 0.5\% & 1.9\% \\
(14) & \emph{NeurIPS} 2022 & 5.0\% & 6.6\% & 0.6\% & 1.6\% \\
(15) & \emph{NeurIPS} 2022 & 7.5\% & 8.8\% & 0.7\% & 1.3\% \\
(16) & \emph{NeurIPS} 2022 & 10.0\% & 11.0\% & 0.7\% & 1.0\% \\
(17) & \emph{NeurIPS} 2022 & 12.5\% & 13.2\% & 0.7\% & 0.7\% \\
(18) & \emph{NeurIPS} 2022 & 15.0\% & 15.4\% & 0.8\% & 0.4\% \\
(19) & \emph{NeurIPS} 2022 & 17.5\% & 17.6\% & 0.7\% & 0.1\% \\
(20) & \emph{NeurIPS} 2022 & 20.0\% & 19.8\% & 0.8\% & 0.2\% \\
(21) & \emph{NeurIPS} 2022 & 22.5\% & 21.9\% & 0.8\% & 0.6\% \\
(22) & \emph{NeurIPS} 2022 & 25.0\% & 24.1\% & 0.8\% & 0.9\% \\
\cmidrule{1-6}
(23) & \emph{CoRL} 2022 & 0.0\% & 2.4\% & 0.6\% & 2.4\% \\
(24) & \emph{CoRL} 2022 & 2.5\% & 4.6\% & 0.6\% & 2.1\% \\
(25) & \emph{CoRL} 2022 & 5.0\% & 6.8\% & 0.6\% & 1.8\% \\
(26) & \emph{CoRL} 2022 & 7.5\% & 8.8\% & 0.7\% & 1.3\% \\
(27) & \emph{CoRL} 2022 & 10.0\% &10.9\% & 0.7\% & 0.9\% \\
(28) & \emph{CoRL} 2022 & 12.5\% & 13.0\% & 0.7\% & 0.5\% \\
(29) & \emph{CoRL} 2022 & 15.0\% & 15.0\% & 0.8\% & 0.0\% \\
(30) & \emph{CoRL} 2022 & 17.5\% & 17.0\% & 0.8\% & 0.5\% \\
(31) & \emph{CoRL} 2022 & 20.0\% & 19.1\% & 0.8\% & 0.9\% \\
(32) & \emph{CoRL} 2022 & 22.5\% & 21.1\% & 0.8\% & 1.4\% \\
(33) & \emph{CoRL} 2022 & 25.0\% & 23.2\% & 0.8\% & 1.8\% \\
\cmidrule[\heavyrulewidth]{1-6}
\end{tabular}
\label{tab:verification-adj-main}
\end{center}
\vspace{-5mm}
\end{table}

\begin{table}[htb!]
\small
\begin{center}
\caption{
\textbf{Performance validation of our model} across \textit{Nature} family journals (all predating ChatGPT's launch), using a blend of official human and LLM-generated reviews. 
This table presents the results data for Figure ~\ref{fig: val}. 
}
\begin{tabular}{lrcllc}
\cmidrule[\heavyrulewidth]{1-6}
\multirow{2}{*}{\bf No.} 
& \multirow{2}{*}{\bf \begin{tabular}[c]{@{}c@{}} Validation \\ Data Source 
\end{tabular} } 
& \multirow{2}{*}{\bf \begin{tabular}[c]{@{}c@{}} Ground \\ Truth $\alpha$
\end{tabular}}  
&\multicolumn{2}{l}{\bf Estimated} 
& \multirow{2}{*}{\bf \begin{tabular}[c]{@{}c@{}} Prediction \\ Error 
\end{tabular} } 
\\
\cmidrule{4-5}
 & & & $\alpha$ & $CI$ ($\pm$) & \\
\cmidrule{1-6}
(1) & \emph{Nature Portfolio} 2022 & 0.0\% & 1.0\% & 0.3\% & 1.0\% \\
(2) & \emph{Nature Portfolio} 2022 & 2.5\% & 3.4\% & 0.6\% & 0.9\% \\
(3) & \emph{Nature Portfolio} 2022 & 5.0\% & 5.9\% & 0.7\% & 0.9\% \\
(4) & \emph{Nature Portfolio} 2022 & 7.5\% & 8.4\% & 0.7\% & 0.9\% \\
(5) & \emph{Nature Portfolio} 2022 & 10.0\% & 10.9\% & 0.8\% & 0.9\% \\
(6) & \emph{Nature Portfolio} 2022 & 12.5\% & 13.4\% & 0.8\% & 0.9\% \\
(7) & \emph{Nature Portfolio} 2022 & 15.0\% & 15.9\% & 0.8\% & 0.9\% \\
(8) & \emph{Nature Portfolio} 2022 & 17.5\% & 18.4\% & 0.8\% & 0.9\% \\
(9) & \emph{Nature Portfolio} 2022 & 20.0\% & 20.9\% & 0.9\% & 0.9\% \\
(10) & \emph{Nature Portfolio} 2022 & 22.5\% & 23.4\% & 0.9\% & 0.9\% \\
(11) & \emph{Nature Portfolio} 2022 & 25.0\% & 25.9\% & 0.9\% & 0.9\% \\
\cmidrule[\heavyrulewidth]{1-6}
\end{tabular}
\label{tab:verification-adj-main-nature}
\end{center}
\vspace{-5mm}
\end{table}

\newpage

\subsection{Main Results Tables}
\label{sec:main-results}
Here we present the numerical results for estimating on real reviews in Section~\ref{sec: main-results}. Table~\ref{tab:main-result}, ~\ref{tab: Nature trend} shows the numerical values used in Figure~\ref{fig: temporal}.
We still use our separately trained model for \textit{Nature} family journals in main results estimation.

\begin{table}[htb!]
\small
\begin{center}
\caption{\textbf{Temporal trends of ML conferences in the $\a$ estimate on official reviews using adjectives.} $\a$ estimates pre-ChatGPT are close to 0, and there is a sharp increase after the release of ChatGPT.
This table presents the results data for Figure ~\ref{fig: temporal}.}
\begin{tabular}{lrcll}
\cmidrule[\heavyrulewidth]{1-4}
\multirow{2}{*}{\bf No.} 
& \multirow{2}{*}{\bf \begin{tabular}[c]{@{}c@{}} Validation \\ Data Source 
\end{tabular} } 
&\multicolumn{2}{l}{\bf Estimated} 
\\
\cmidrule{3-4}
 & & $\alpha$ & $CI$ ($\pm$) \\
\cmidrule{1-4}
(1) & \emph{NeurIPS} 2019 & 1.7\% & 0.3\% \\
(2) & \emph{NeurIPS} 2020 & 1.4\% & 0.1\% \\
(3) & \emph{NeurIPS} 2021 & 1.6\% & 0.2\% \\
(4) & \emph{NeurIPS} 2022 & 1.9\% & 0.2\% \\
(5) & \emph{NeurIPS} 2023 & 9.1\%  & 0.2\% \\

\cmidrule{1-4} 
(6) & \emph{ICLR} 2023 & 1.6\% & 0.1\% \\
(7) & \emph{ICLR} 2024 & 10.6\% & 0.2\% \\

\cmidrule{1-4} 
(8) & \emph{CoRL} 2021 & 2.4\% & 0.7\% \\
(9) & \emph{CoRL} 2022 & 2.4\% & 0.6\% \\
(10) & \emph{CoRL} 2023 & 6.5\% & 0.7\% \\

\cmidrule{1-4} 
(11) & \emph{EMNLP} 2023 & 16.9\% & 0.5\% \\

\cmidrule[\heavyrulewidth]{1-4}
\end{tabular}
\label{tab:main-result}
\end{center}
\vspace{-5mm}
\end{table}

\begin{table}[htb!]
\small
\begin{center}
\caption{
\textbf{Temporal trends of the \textit{Nature} family journals in the $\a$ estimate on official reviews using adjectives. }
Contrary to the ML conferences, the \textit{Nature} family journals did not exhibit a significant increase in the estimated $\alpha$ values following ChatGPT's release, with pre- and post-release $\alpha$ estimates remaining within the margin of error for the $\alpha=0$ validation experiment.
This table presents the results data for Figure ~\ref{fig: temporal}.}
\label{tab: Nature trend}
\begin{tabular}{lrcll}
\cmidrule[\heavyrulewidth]{1-4}
\multirow{2}{*}{\bf No.} 
& \multirow{2}{*}{\bf \begin{tabular}[c]{@{}c@{}} Validation \\ Data Source 
\end{tabular} } 
&\multicolumn{2}{l}{\bf Estimated} 
\\
\cmidrule{3-4}
 & & $\alpha$ & $CI$ ($\pm$) \\
\cmidrule{1-4}
(1) &   \emph{Nature portfolio} 2019    & 0.8\% & 0.2\% \\
(2) &   \emph{Nature portfolio} 2020    & 0.7\% & 0.2\% \\
(3) &   \emph{Nature portfolio} 2021    & 1.1\% & 0.2\% \\
(4) &   \emph{Nature portfolio} 2022    & 1.0\% & 0.3\% \\
(5) &   \emph{Nature portfolio} 2023    & 1.6\% & 0.2\% \\

\cmidrule[\heavyrulewidth]{1-4}
\end{tabular}
\end{center}
\vspace{-5mm}
\end{table}

\clearpage
\newpage 

\subsection{Sensitivity to LLM Prompt}
\label{Appendix:subsec:LLM-prompt-shift}

Empirically, we found that our framework exhibits moderate robustness to the distribution shift of LLM prompts. Training with one prompt and testing on a different prompt still yields accurate validation results (Figure~\ref{fig: diff prompt val}). 
Figure~\ref{fig:training-prompt} shows the prompt for generating training data with GPT-4 June. Figure~\ref{fig:validation-prompt-shift-prompt} shows the prompt for generating validation data on prompt shift.

Table~\ref{tab: diff prompt val} shows the results using a different prompt than that in the main text.

\begin{figure}[ht!] 
\centering
\includegraphics[width=1\textwidth]{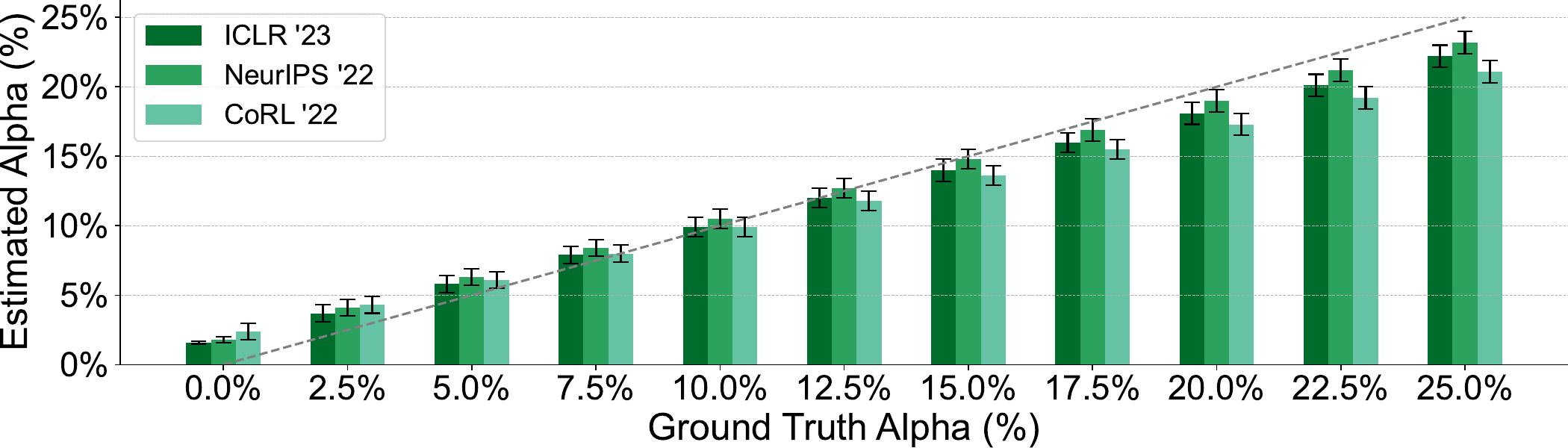}
\caption{
\textbf{Results of the validation procedure from Section~\ref{sec: val} using a different prompt.}
}
\label{fig: diff prompt val}
\end{figure}

\begin{table}[htb!]
\small
\begin{center}
\caption{
\textbf{Validation accuracy for our method using a different prompt.} The model was trained using data from \textit{ICLR} 2018-2022, and OOD verification was performed on \textit{NeurIPS} and \textit{CoRL} (moderate distribution shift). The method is robust  to changes in the prompt and still exhibits accurate and stable performance.
}
\label{tab: diff prompt val}
\begin{tabular}{lrcllc}
\cmidrule[\heavyrulewidth]{1-6}
\multirow{2}{*}{\bf No.} 
& \multirow{2}{*}{\bf \begin{tabular}[c]{@{}c@{}} Validation \\ Data Source 
\end{tabular} } 
& \multirow{2}{*}{\bf \begin{tabular}[c]{@{}c@{}} Ground \\ Truth $\alpha$
\end{tabular}}  
&\multicolumn{2}{l}{\bf Estimated} 
& \multirow{2}{*}{\bf \begin{tabular}[c]{@{}c@{}} Prediction \\ Error 
\end{tabular} } 
\\
\cmidrule{4-5}
 & & & $\alpha$ & $CI$ ($\pm$) & \\
\cmidrule{1-6}
(1) & \emph{ICLR} 2023 & 0.0\% & 1.6\% & 0.1\% & 1.6\% \\
(2) & \emph{ICLR} 2023 & 2.5\% & 3.7\% & 0.6\% & 1.2\% \\
(3) & \emph{ICLR} 2023 & 5.0\% & 5.8\% & 0.6\% & 0.8\% \\
(4) & \emph{ICLR} 2023 & 7.5\% & 7.9\% & 0.6\% & 0.4\% \\
(5) & \emph{ICLR} 2023 & 10.0\% & 9.9\% & 0.7\% & 0.1\% \\
(6) & \emph{ICLR} 2023 & 12.5\% & 12.0\% & 0.7\% & 0.5\% \\
(7) & \emph{ICLR} 2023 & 15.0\% & 14.0\% & 0.8\% & 1.0\% \\
(8) & \emph{ICLR} 2023 & 17.5\% & 16.0\% & 0.7\% & 1.5\% \\
(9) & \emph{ICLR} 2023 & 20.0\% & 18.1\% & 0.8\% & 1.9\% \\
(10) & \emph{ICLR} 2023 & 22.5\% & 20.1\% & 0.8\% & 2.4\% \\
(11) & \emph{ICLR} 2023 & 25.0\% & 22.2\% & 0.8\% & 2.8\% \\
\cmidrule{1-6}
(12) & \emph{NeurIPS} 2022 & 0.0\% & 1.8\% & 0.2\%  & 1.8\%\\
(13) & \emph{NeurIPS} 2022 & 2.5\% & 4.1\% & 0.6\%  & 1.6\%\\
(14) & \emph{NeurIPS} 2022 & 5.0\% & 6.3\% & 0.6\%  & 1.3\%\\
(15) & \emph{NeurIPS} 2022 & 7.5\% & 8.4\% & 0.6\% & 0.9\%\\
(16) & \emph{NeurIPS} 2022 & 10.0\% & 10.5\% & 0.7\% & 0.5\%\\
(17) & \emph{NeurIPS} 2022 & 12.5\% & 12.7\% & 0.7\% & 0.2\%\\
(18) & \emph{NeurIPS} 2022 & 15.0\% & 14.8\% & 0.7\% & 0.2\%\\
(19) & \emph{NeurIPS} 2022 & 17.5\% & 16.9\% & 0.8\% & 0.6\%\\
(20) & \emph{NeurIPS} 2022 & 20.0\% & 19.0\% & 0.8\% & 1.0\%\\
(21) & \emph{NeurIPS} 2022 & 22.5\% & 21.2\% & 0.8\% & 1.3\%\\
(22) & \emph{NeurIPS} 2022 & 25.0\% & 23.2\% & 0.8\% & 1.8\%\\
\cmidrule{1-6}
(23) & \emph{CoRL} 2022 & 0.0\% & 2.4\% & 0.6\% & 2.4\% \\
(24) & \emph{CoRL} 2022 & 2.5\% & 4.3\% & 0.6\% & 1.8\% \\
(25) & \emph{CoRL} 2022 & 5.0\% & 6.1\% & 0.6\% & 1.1\% \\
(26) & \emph{CoRL} 2022 & 7.5\% & 8.0\% & 0.6\% & 0.5\% \\
(27) & \emph{CoRL} 2022 & 10.0\% &9.9\% & 0.7\% & 0.1\% \\
(28) & \emph{CoRL} 2022 & 12.5\% & 11.8\% & 0.7\% & 0.7\% \\
(29) & \emph{CoRL} 2022 & 15.0\% & 13.6\% & 0.7\% & 1.4\% \\
(30) & \emph{CoRL} 2022 & 17.5\% & 15.5\% & 0.7\% & 2.0\% \\
(31) & \emph{CoRL} 2022 & 20.0\% & 17.3\% & 0.8\% & 2.7\% \\
(32) & \emph{CoRL} 2022 & 22.5\% & 19.2\% & 0.8\% & 3.3\% \\
(33) & \emph{CoRL} 2022 & 25.0\% & 21.1\% & 0.8\% & 3.9\% \\
\cmidrule[\heavyrulewidth]{1-6}
\end{tabular}
\end{center}
\vspace{-5mm}
\end{table}

\newpage 
\clearpage

\subsection{Tables for Stratification by Paper Topic (\textit{ICLR})}
Here, we provide the numerical results for various fields in the \textit{ICLR} 2024 conference. The results are shown in Table ~\ref{tab: fields}.
\begin{table}[htb]
\small
\begin{center}
\caption{
\textbf{Changes in the estimated $\a$ for different fields of ML (sorted according to a paper's designated primary area in \textit{ICLR} 2024).}
}
\label{tab: fields}
\resizebox{0.97\textwidth}{!}{
\begin{tabular}{lrcll}
\cmidrule[\heavyrulewidth]{1-5}
\multirow{2}{*}{\bf No.} 
& \multirow{2}{*}{\bf \begin{tabular}[c]{@{}c@{}} ICLR 2024 Primary Area 
\end{tabular} } 
& \multirow{2}{*}{\bf \begin{tabular}[c]{@{}c@{}} \# of \\ Papers
\end{tabular}}  
&\multicolumn{2}{l}{\bf Estimated} 
\\
\cmidrule{4-5}
 & & & $\alpha$ & $CI$ ($\pm$) \\
\cmidrule{1-5}
(1) & Datasets and Benchmarks  & 271 & 20.9\% &1.0\% \\
(2) & Transfer Learning, Meta Learning, and Lifelong Learning  & 375 & 14.0\% &0.8\%\\
(3) & Learning on Graphs and Other Geometries \& Topologies  & 189 & 12.6\% &1.0\%\\
(4) & Applications to Physical Sciences (Physics, Chemistry, Biology, etc.) & 312 & 12.4\% &0.8\%\\
(5) & Representation Learning for Computer Vision, Audio, Language, and Other Modalities  & 1037 &12.3\% &0.5\%\\
(6) & Unsupervised, Self-supervised, Semi-supervised, and Supervised Representation Learning  & 856 & 11.9\% &0.5\%\\
(7) & Infrastructure, Software Libraries, Hardware, etc.   & 47 & 11.5\% &2.0\%\\
(8) & Societal Considerations including Fairness, Safety, Privacy   & 535 & 11.4\% &0.6\%\\
(9) & General Machine Learning (i.e., None of the Above)  & 786 & 11.3\% &0.5\%\\
(10) & Applications to Neuroscience \& Cognitive Science   & 133 & 10.9\% &1.1\%\\
(11) & Generative Models  & 777 & 10.4\% &0.5\%\\
(12) & Applications to Robotics, Autonomy, Planning   & 177 & 10.0\% &0.9\%\\
(13) & Visualization or Interpretation of Learned Representations  & 212 & 8.4\% &0.8\%\\
(14) & Reinforcement Learning  & 654 & 8.2\% &0.4\%\\
(15) & Neurosymbolic \& Hybrid AI Systems (Physics-informed, Logic \& Formal Reasoning, etc.) & 101 & 7.7\% &1.3\% \\
(16) & Learning Theory  & 211 & 7.3\% &0.8\%\\
(17) & Metric learning, Kernel learning, and Sparse coding   & 36 & 7.2\% &2.1\%\\
(18) & Probabilistic Methods  (Bayesian Methods, Variational Inference, Sampling, UQ, etc.) & 184 & 6.0\% &0.8\%\\
(19) & Optimization  & 312 & 5.8\%\ &0.6\% \\
(20) & Causal Reasoning & 99  & 5.0\% &1.0\%\\
\cmidrule[\heavyrulewidth]{1-5}
\end{tabular}
}
\end{center}
\vspace{-5mm}
\end{table}

\newpage 
\clearpage

\subsection{Results with Adverbs}
\label{Appendix:subsec:adverbs}
For our results in the main paper, we only considered adjectives for the space of all possible tokens. 
We found this vocabulary choice to exhibit greater stability than using other parts of speech such as adverbs, verbs, nouns, or all possible tokens.  
This remotely aligns with the findings in the literature~\cite{lin2023unlocking}, which indicate that \textit{stylistic} words are the most impacted during alignment fine-tuning.

Here, we conducted experiments using adverbs. The results for adverbs are shown in Table~\ref{tab: adv val}.

\begin{figure}[htb!] 
\centering
\includegraphics[width=1\textwidth]{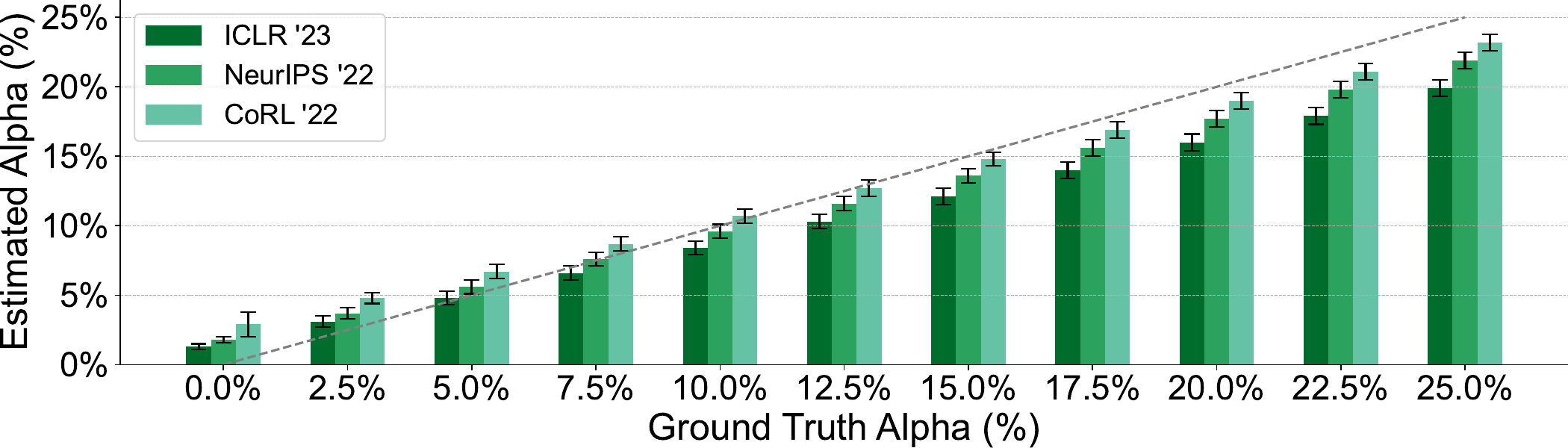}
\caption{
\textbf{Results of the validation procedure from Section~\ref{sec: val} using adverbs (instead of adjectives).}
}
\label{fig: adv val}
\end{figure}
\begin{table}[htb!]
\small
\begin{center}
\caption{
\textbf{Validation results when adverbs are used.} The performance degrades compared to using adjectives.
}
\label{tab: adv val}
\begin{tabular}{lrcllc}
\cmidrule[\heavyrulewidth]{1-6}
\multirow{2}{*}{\bf No.} 
& \multirow{2}{*}{\bf \begin{tabular}[c]{@{}c@{}} Validation \\ Data Source 
\end{tabular} } 
& \multirow{2}{*}{\bf \begin{tabular}[c]{@{}c@{}} Ground \\ Truth $\alpha$
\end{tabular}}  
&\multicolumn{2}{l}{\bf Estimated} 
& \multirow{2}{*}{\bf \begin{tabular}[c]{@{}c@{}} Prediction \\ Error 
\end{tabular} } 
\\
\cmidrule{4-5}
 & & & $\alpha$ & $CI$ ($\pm$) & \\
\cmidrule{1-6}
(1) & \emph{ICLR} 2023 & 0.0\% & 1.3\% & 0.2\% & 1.3\% \\
(2) & \emph{ICLR} 2023 & 2.5\% & 3.1\% & 0.4\% & 0.6\% \\
(3) & \emph{ICLR} 2023 & 5.0\% & 4.8\% & 0.5\% & 0.2\% \\
(4) & \emph{ICLR} 2023 & 7.5\% & 6.6\% & 0.5\% & 0.9\% \\
(5) & \emph{ICLR} 2023 & 10.0\% & 8.4\% & 0.5\% & 1.6\% \\
(6) & \emph{ICLR} 2023 & 12.5\% & 10.3\% & 0.5\% & 2.2\% \\
(7) & \emph{ICLR} 2023 & 15.0\% & 12.1\% & 0.6\% & 2.9\% \\
(8) & \emph{ICLR} 2023 & 17.5\% & 14.0\% & 0.6\% & 3.5\% \\
(9) & \emph{ICLR} 2023 & 20.0\% & 16.0\% & 0.6\% & 4.0\% \\
(10) & \emph{ICLR} 2023 & 22.5\% & 17.9\% & 0.6\% & 4.6\% \\
(11) & \emph{ICLR} 2023 & 25.0\% & 19.9\% & 0.6\% & 5.1\% \\
\cmidrule{1-6}
(12) & \emph{NeurIPS} 2022 & 0.0\% & 1.8\% & 0.2\%  & 1.8\%\\
(13) & \emph{NeurIPS} 2022 & 2.5\% & 3.7\% & 0.4\%  & 1.2\%\\
(14) & \emph{NeurIPS} 2022 & 5.0\% & 5.6\% & 0.5\%  & 0.6\%\\
(15) & \emph{NeurIPS} 2022 & 7.5\% & 7.6\% & 0.5\% & 0.1\%\\
(16) & \emph{NeurIPS} 2022 & 10.0\% & 9.6\% & 0.5\% & 0.4\%\\
(17) & \emph{NeurIPS} 2022 & 12.5\% & 11.6\% & 0.5\% & 0.9\%\\
(18) & \emph{NeurIPS} 2022 & 15.0\% & 13.6\% & 0.5\% & 1.4\%\\
(19) & \emph{NeurIPS} 2022 & 17.5\% & 15.6\% & 0.6\% & 1.9\%\\
(20) & \emph{NeurIPS} 2022 & 20.0\% & 17.7\% & 0.6\% & 2.3\%\\
(21) & \emph{NeurIPS} 2022 & 22.5\% & 19.8\% & 0.6\% & 2.7\%\\
(22) & \emph{NeurIPS} 2022 & 25.0\% & 21.9\% & 0.6\% & 3.1\%\\
\cmidrule{1-6}
(23) & \emph{CoRL} 2022 & 0.0\% & 2.9\% & 0.9\% & 2.9\% \\
(24) & \emph{CoRL} 2022 & 2.5\% & 4.8\% & 0.4\% & 2.3\% \\
(25) & \emph{CoRL} 2022 & 5.0\% & 6.7\% & 0.5\% & 1.7\% \\
(26) & \emph{CoRL} 2022 & 7.5\% & 8.7\% & 0.5\% & 1.2\% \\
(27) & \emph{CoRL} 2022 & 10.0\% &10.7\% & 0.5\% & 0.7\% \\
(28) & \emph{CoRL} 2022 & 12.5\% & 12.7\% & 0.6\% & 0.2\% \\
(29) & \emph{CoRL} 2022 & 15.0\% & 14.8\% & 0.5\% & 0.2\% \\
(30) & \emph{CoRL} 2022 & 17.5\% & 16.9\% & 0.6\% & 0.6\% \\
(31) & \emph{CoRL} 2022 & 20.0\% & 19.0\% & 0.6\% & 1.0\% \\
(32) & \emph{CoRL} 2022 & 22.5\% & 21.1\% & 0.6\% & 1.4\% \\
(33) & \emph{CoRL} 2022 & 25.0\% & 23.2\% & 0.6\% & 1.8\% \\
\cmidrule[\heavyrulewidth]{1-6}
\end{tabular}
\end{center}
\vspace{-5mm}
\end{table}
\begin{figure}[htb!] 
    \centering
    \includegraphics[width=0.475\textwidth]{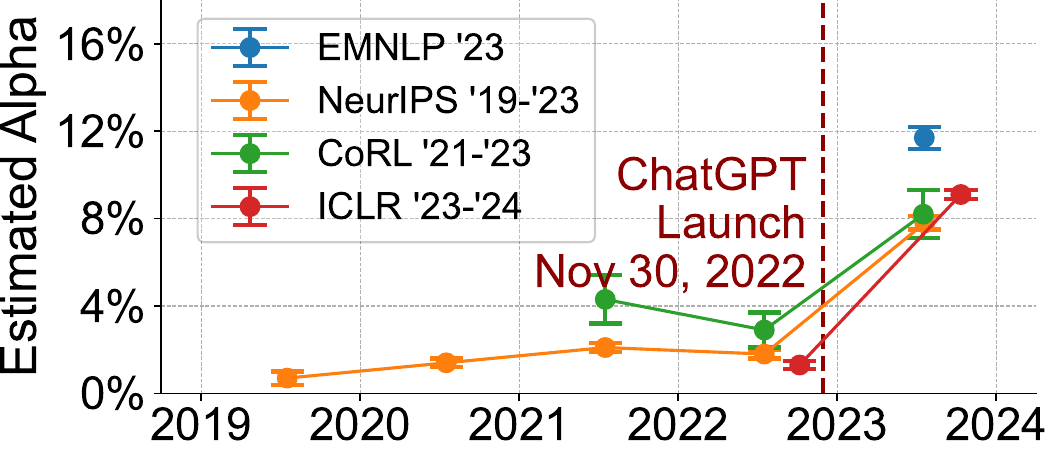}
\caption{
\textbf{Temporal changes in the estimated $\a$ for several ML conferences using adverbs.}}
\label{fig: adv temporal}
\end{figure}
\begin{table}[htb!]
\small
\begin{center}
\caption{\textbf{Temporal trends in the $\a$ estimate on official reviews using adverbs.} The same qualitative trend is observed: $\a$ estimates pre-ChatGPT are close to 0, and there is a sharp increase after the release of ChatGPT.}
\label{tab: adv main}
\begin{tabular}{lrcll}
\cmidrule[\heavyrulewidth]{1-4}
\multirow{2}{*}{\bf No.} 
& \multirow{2}{*}{\bf \begin{tabular}[c]{@{}c@{}} Validation \\ Data Source 
\end{tabular} } 
&\multicolumn{2}{l}{\bf Estimated} 
\\
\cmidrule{3-4}
 & & $\alpha$ & $CI$ ($\pm$) \\
\cmidrule{1-4}
(1) & \emph{NeurIPS} 2019 & 0.7\%  & 0.3\% \\
(2) & \emph{NeurIPS} 2020 & 1.4\%  & 0.2\% \\
(3) & \emph{NeurIPS} 2021 & 2.1\%  & 0.2\% \\
(4) & \emph{NeurIPS} 2022 & 1.8\%  & 0.2\% \\
(5) & \emph{NeurIPS} 2023 & 7.8\% & 0.3\% \\
\cmidrule{1-4} 
(6) & \emph{ICLR} 2023 & 1.3\% & 0.2\% \\
(7) & \emph{ICLR} 2024 & 9.1\% & 0.2\% \\
\cmidrule{1-4} 
(8) & \emph{CoRL} 2021 & 4.3\% & 1.1\% \\
(9) & \emph{CoRL} 2022 & 2.9\% & 0.8\% \\
(10) & \emph{CoRL} 2023 & 8.2\% & 1.1\% \\
\cmidrule{1-4} 
(11) & \emph{EMNLP} 2023 & 11.7\% & 0.5\% \\
\cmidrule[\heavyrulewidth]{1-4}
\end{tabular}
\end{center}
\vspace{-5mm}
\end{table}

\newpage 
\clearpage

\subsection{Results with Verbs}
\label{Appendix:subsec:verbs}
Here, we conducted experiments using verbs. The results for verbs are shown in Table~\ref{tab: verb val}.

\begin{figure}[htb!] 
\centering
\includegraphics[width=1\textwidth]{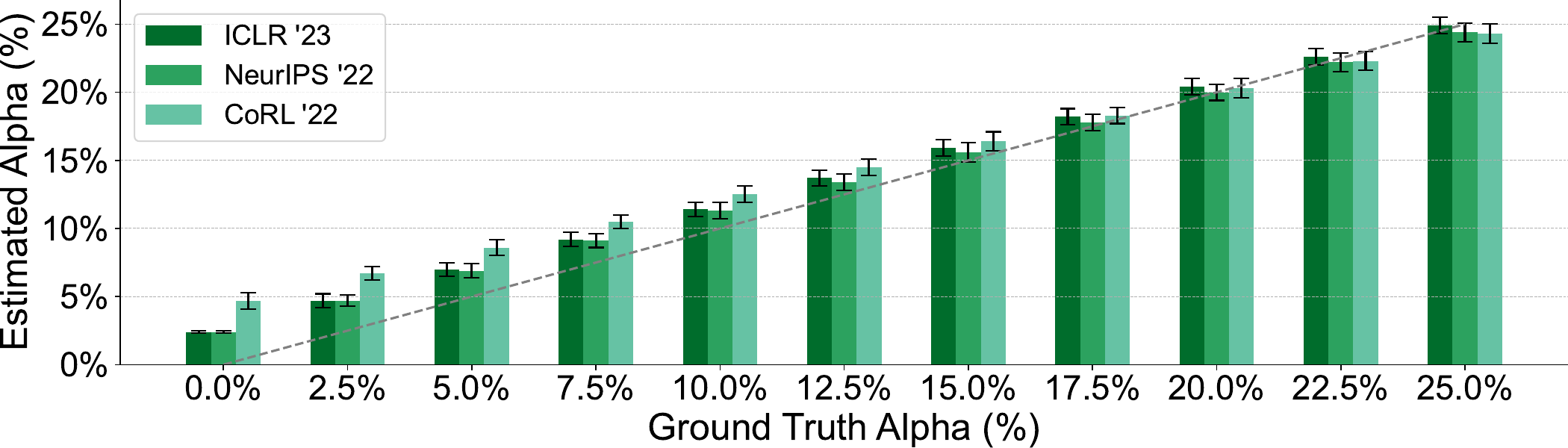}
\caption{
\textbf{Results of the validation procedure from Section~\ref{sec: val} using verbs (instead of adjectives).}
}
\label{fig: verb val}
\end{figure}

\begin{table}[htb!]
\small
\begin{center}
\caption{
\textbf{Validation accuracy when verbs are used.} The performance degrades slightly as compared to using adjectives.
}
\label{tab: verb val}
\begin{tabular}{lrcllc}
\cmidrule[\heavyrulewidth]{1-6}
\multirow{2}{*}{\bf No.} 
& \multirow{2}{*}{\bf \begin{tabular}[c]{@{}c@{}} Validation \\ Data Source 
\end{tabular} } 
& \multirow{2}{*}{\bf \begin{tabular}[c]{@{}c@{}} Ground \\ Truth $\alpha$
\end{tabular}}  
&\multicolumn{2}{l}{\bf Estimated} 
& \multirow{2}{*}{\bf \begin{tabular}[c]{@{}c@{}} Prediction \\ Error 
\end{tabular} } 
\\
\cmidrule{4-5}
 & & & $\alpha$ & $CI$ ($\pm$) & \\
\cmidrule{1-6}
(1) & \emph{ICLR} 2023 & 0.0\% & 2.4\% & 0.1\% & 2.4\% \\
(2) & \emph{ICLR} 2023 & 2.5\% & 4.7\% & 0.5\% & 2.2\% \\
(3) & \emph{ICLR} 2023 & 5.0\% & 7.0\% & 0.5\% & 2.0\% \\
(4) & \emph{ICLR} 2023 & 7.5\% & 9.2\% & 0.5\% & 1.7\% \\
(5) & \emph{ICLR} 2023 & 10.0\% & 11.4\% & 0.5\% & 1.4\% \\
(6) & \emph{ICLR} 2023 & 12.5\% & 13.7\% & 0.6\% & 1.2\% \\
(7) & \emph{ICLR} 2023 & 15.0\% & 15.9\% & 0.6\% & 0.9\% \\
(8) & \emph{ICLR} 2023 & 17.5\% & 18.2\% & 0.6\% & 0.7\% \\
(9) & \emph{ICLR} 2023 & 20.0\% & 20.4\% & 0.6\% & 0.4\% \\
(10) & \emph{ICLR} 2023 & 22.5\% & 22.6\% & 0.6\% & 0.1\% \\
(11) & \emph{ICLR} 2023 & 25.0\% & 24.9\% & 0.6\% & 0.1\% \\
\cmidrule{1-6}
(12) & \emph{NeurIPS} 2022 & 0.0\% & 2.4\% & 0.1\%  & 2.4\%\\
(13) & \emph{NeurIPS} 2022 & 2.5\% & 4.7\% & 0.4\%  & 2.2\%\\
(14) & \emph{NeurIPS} 2022 & 5.0\% & 6.9\% & 0.5\%  & 1.9\%\\
(15) & \emph{NeurIPS} 2022 & 7.5\% & 9.1\% & 0.5\% & 1.6\%\\
(16) & \emph{NeurIPS} 2022 & 10.0\% & 11.3\% & 0.6\% & 1.3\%\\
(17) & \emph{NeurIPS} 2022 & 12.5\% & 13.4\% & 0.6\% & 0.9\%\\
(18) & \emph{NeurIPS} 2022 & 15.0\% & 15.6\% & 0.7\% & 0.6\%\\
(19) & \emph{NeurIPS} 2022 & 17.5\% & 17.8\% & 0.6\% & 0.3\%\\
(20) & \emph{NeurIPS} 2022 & 20.0\% & 20.0\% & 0.6\% & 0.0\%\\
(21) & \emph{NeurIPS} 2022 & 22.5\% & 22.2\% & 0.7\% & 0.3\%\\
(22) & \emph{NeurIPS} 2022 & 25.0\% & 24.4\% & 0.7\% & 0.6\%\\
\cmidrule{1-6}
(23) & \emph{CoRL} 2022 & 0.0\% & 4.7\% & 0.6\% & 4.7\% \\
(24) & \emph{CoRL} 2022 & 2.5\% & 6.7\% & 0.5\% & 4.2\% \\
(25) & \emph{CoRL} 2022 & 5.0\% & 8.6\% & 0.6\% & 3.6\% \\
(26) & \emph{CoRL} 2022 & 7.5\% & 10.5\% & 0.5\% & 3.0\% \\
(27) & \emph{CoRL} 2022 & 10.0\% &12.5\% & 0.6\% & 2.5\% \\
(28) & \emph{CoRL} 2022 & 12.5\% & 14.5\% & 0.6\% & 2.0\% \\
(29) & \emph{CoRL} 2022 & 15.0\% & 16.4\% & 0.7\% & 1.4\% \\
(30) & \emph{CoRL} 2022 & 17.5\% & 18.3\% & 0.6\% & 0.8\% \\
(31) & \emph{CoRL} 2022 & 20.0\% & 20.3\% & 0.7\% & 0.3\% \\
(32) & \emph{CoRL} 2022 & 22.5\% & 22.3\% & 0.7\% & 0.2\% \\
(33) & \emph{CoRL} 2022 & 25.0\% & 24.3\% & 0.7\% & 0.7\% \\
\cmidrule[\heavyrulewidth]{1-6}
\end{tabular}
\end{center}
\vspace{-5mm}
\end{table}

\begin{figure}[ht!] 
    \centering
    \includegraphics[width=0.475\textwidth]{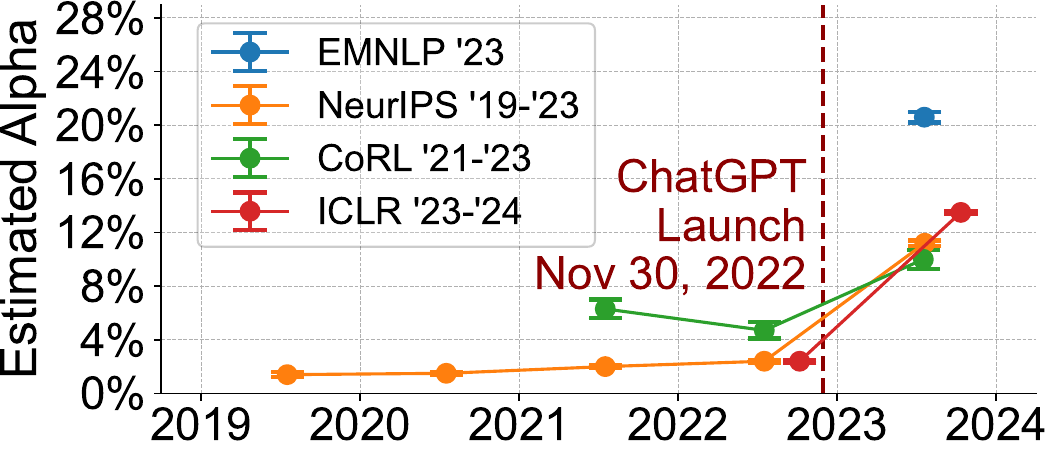}
    \caption{
    \textbf{Temporal changes in the estimated $\a$ for several ML conferences using verbs.}}
    \label{fig: verb temporal}
\end{figure}

\begin{table}[htb!]
\small
\begin{center}
\caption{\textbf{Temporal trends in the $\a$ estimate on official reviews using verbs.} The same qualitative trend is observed: $\a$ estimates pre-ChatGPT are close to 0, and there is a sharp increase after the release of ChatGPT.}
\label{tab: verb main}
\begin{tabular}{lrcll}
\cmidrule[\heavyrulewidth]{1-4}
\multirow{2}{*}{\bf No.} 
& \multirow{2}{*}{\bf \begin{tabular}[c]{@{}c@{}} Validation \\ Data Source 
\end{tabular} } 
&\multicolumn{2}{l}{\bf Estimated} 
\\
\cmidrule{3-4}
 & & $\alpha$ & $CI$ ($\pm$) \\
\cmidrule{1-4}
(1) & \emph{NeurIPS} 2019 & 1.4\%  & 0.2\% \\
(2) & \emph{NeurIPS} 2020 & 1.5\%  & 0.1\% \\
(3) & \emph{NeurIPS} 2021 & 2.0\%  & 0.1\% \\
(4) & \emph{NeurIPS} 2022 & 2.4\%  & 0.1\% \\
(5) & \emph{NeurIPS} 2023 & 11.2\% & 0.2\% \\
\cmidrule{1-4} 
(6) & \emph{ICLR} 2023 & 2.4\% & 0.1\% \\
(7) & \emph{ICLR} 2024 & 13.5\% & 0.1\% \\
\cmidrule{1-4} 
(8) & \emph{CoRL} 2021 & 6.3\% & 0.7\% \\
(9) & \emph{CoRL} 2022 & 4.7\% & 0.6\% \\
(10) & \emph{CoRL} 2023 & 10.0\% & 0.7\% \\
\cmidrule{1-4} 
(11) & \emph{EMNLP} 2023 & 20.6\% & 0.4\% \\
\cmidrule[\heavyrulewidth]{1-4}
\end{tabular}
\end{center}
\vspace{-5mm}
\end{table}

\newpage 
\clearpage

\subsection{Results with Nouns}
\label{Appendix:subsec:nouns}
Here, we conducted experiments using nouns. The results for nouns in Table~\ref{tab: noun val}.

\begin{figure}[ht!] 
\centering
\includegraphics[width=1\textwidth]{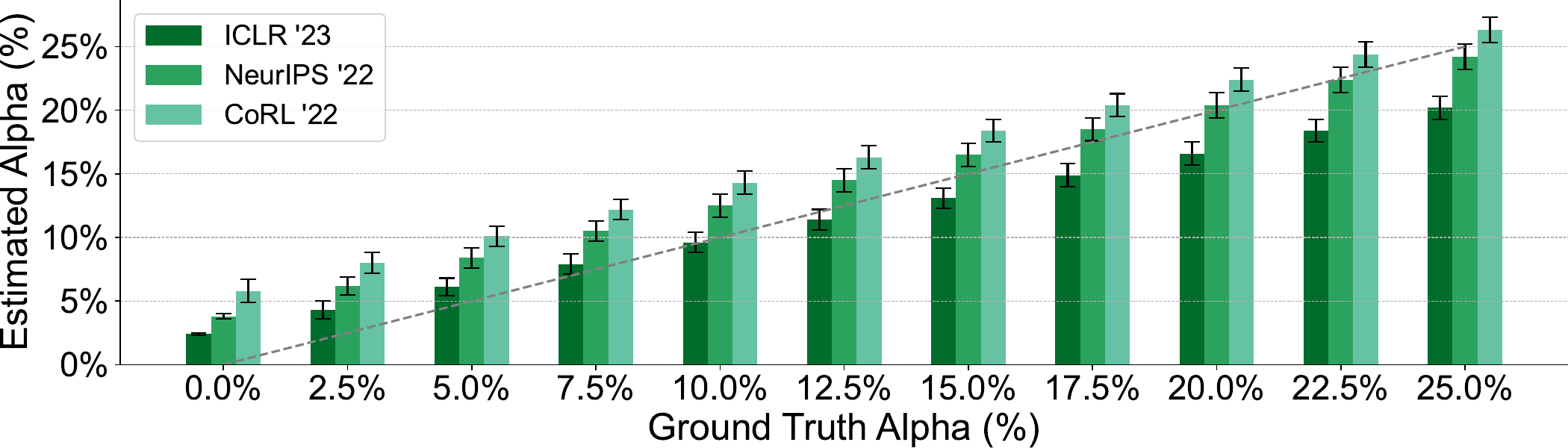}
\caption{
\textbf{Results of the validation procedure from Section~\ref{sec: val} using nouns (instead of adjectives).}
}
\label{fig: noun val}
\end{figure}

\begin{table}[htb!]
\small
\begin{center}
\caption{
\textbf{Validation accuracy when nouns are used.} The performance degrades as compared to using adjectives.
}
\label{tab: noun val}
\begin{tabular}{lrcllc}
\cmidrule[\heavyrulewidth]{1-6}
\multirow{2}{*}{\bf No.} 
& \multirow{2}{*}{\bf \begin{tabular}[c]{@{}c@{}} Validation \\ Data Source 
\end{tabular} } 
& \multirow{2}{*}{\bf \begin{tabular}[c]{@{}c@{}} Ground \\ Truth $\alpha$
\end{tabular}}  
&\multicolumn{2}{l}{\bf Estimated} 
& \multirow{2}{*}{\bf \begin{tabular}[c]{@{}c@{}} Prediction \\ Error 
\end{tabular} } 
\\
\cmidrule{4-5}
 & & & $\alpha$ & $CI$ ($\pm$) & \\
\cmidrule{1-6}
(1) & \emph{ICLR} 2023 & 0.0\% & 2.4\% & 0.1\% & 2.4\% \\
(2) & \emph{ICLR} 2023 & 2.5\% & 4.3\% & 0.7\% & 1.8\% \\
(3) & \emph{ICLR} 2023 & 5.0\% & 6.1\% & 0.7\% & 1.1\% \\
(4) & \emph{ICLR} 2023 & 7.5\% & 7.9\% & 0.8\% & 0.4\% \\
(5) & \emph{ICLR} 2023 & 10.0\% & 9.6\% & 0.8\% & 0.4\% \\
(6) & \emph{ICLR} 2023 & 12.5\% & 11.4\% & 0.8\% & 1.1\% \\
(7) & \emph{ICLR} 2023 & 15.0\% & 13.1\% & 0.8\% & 1.9\% \\
(8) & \emph{ICLR} 2023 & 17.5\% & 14.9\% & 0.9\% & 2.6\% \\
(9) & \emph{ICLR} 2023 & 20.0\% & 16.6\% & 0.9\% & 3.4\% \\
(10) & \emph{ICLR} 2023 & 22.5\% & 18.4\% & 0.9\% & 4.1\% \\
(11) & \emph{ICLR} 2023 & 25.0\% & 20.2\% & 0.9\% & 4.8\% \\
\cmidrule{1-6}
(12) & \emph{NeurIPS} 2022 & 0.0\% & 3.8\% & 0.2\%  & 3.8\%\\
(13) & \emph{NeurIPS} 2022 & 2.5\% & 6.2\% & 0.7\%  & 3.7\%\\
(14) & \emph{NeurIPS} 2022 & 5.0\% & 8.4\% & 0.8\%  & 3.4\%\\
(15) & \emph{NeurIPS} 2022 & 7.5\% & 10.5\% & 0.8\% & 3.0\%\\
(16) & \emph{NeurIPS} 2022 & 10.0\% & 12.5\% & 0.9\% & 2.5\%\\
(17) & \emph{NeurIPS} 2022 & 12.5\% & 14.5\% & 0.9\% & 2.0\%\\
(18) & \emph{NeurIPS} 2022 & 15.0\% & 16.5\% & 0.9\% & 1.5\%\\
(19) & \emph{NeurIPS} 2022 & 17.5\% & 18.5\% & 0.9\% & 1.0\%\\
(20) & \emph{NeurIPS} 2022 & 20.0\% & 20.4\% & 1.0\% & 0.4\%\\
(21) & \emph{NeurIPS} 2022 & 22.5\% & 22.4\% & 1.0\% & 0.1\%\\
(22) & \emph{NeurIPS} 2022 & 25.0\% & 24.2\% & 1.0\% & 0.8\%\\
\cmidrule{1-6}
(23) & \emph{CoRL} 2022 & 0.0\% & 5.8\% & 0.9\% & 5.8\% \\
(24) & \emph{CoRL} 2022 & 2.5\% & 8.0\% & 0.8\% & 5.5\% \\
(25) & \emph{CoRL} 2022 & 5.0\% & 10.1\% & 0.8\% & 5.1\% \\
(26) & \emph{CoRL} 2022 & 7.5\% & 12.2\% & 0.8\% & 4.7\% \\
(27) & \emph{CoRL} 2022 & 10.0\% &14.3\% & 0.9\% & 4.3\% \\
(28) & \emph{CoRL} 2022 & 12.5\% & 16.3\% & 0.9\% & 3.8\% \\
(29) & \emph{CoRL} 2022 & 15.0\% & 18.4\% & 0.9\% & 3.4\% \\
(30) & \emph{CoRL} 2022 & 17.5\% & 20.4\% & 0.9\% & 2.9\% \\
(31) & \emph{CoRL} 2022 & 20.0\% & 22.4\% & 0.9\% & 2.4\% \\
(32) & \emph{CoRL} 2022 & 22.5\% & 24.4\% & 1.0\% & 1.9\% \\
(33) & \emph{CoRL} 2022 & 25.0\% & 26.3\% & 1.0\% & 1.3\% \\
\cmidrule[\heavyrulewidth]{1-6}
\end{tabular}
\end{center}
\vspace{-5mm}
\end{table}

\begin{figure}[ht!] 
    \centering
    \includegraphics[width=0.475\textwidth]{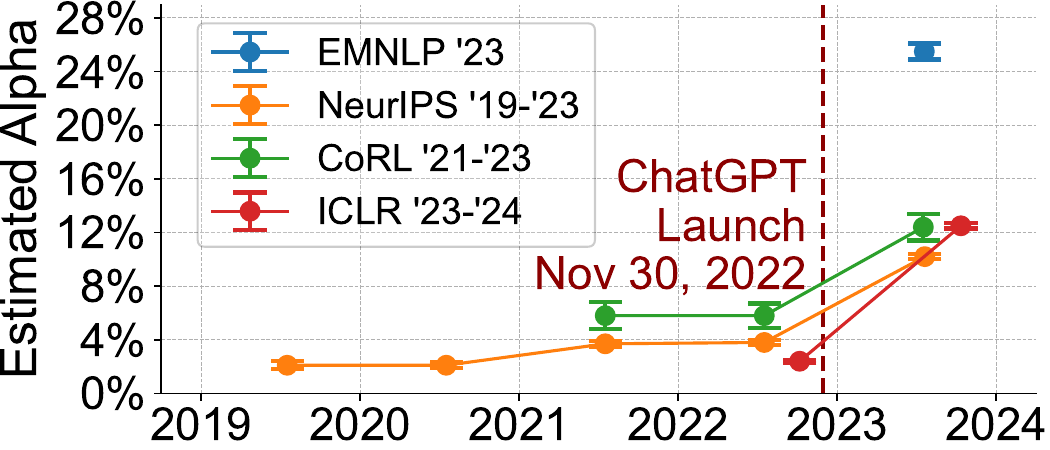}
    \caption{
    \textbf{Temporal changes in the estimated $\a$ for several ML conferences using nouns.}}
    \label{fig: noun temporal}
\end{figure}

\begin{table}[htb!]
\small
\begin{center}
\caption{\textbf{Temporal trends in the $\a$ estimate on official reviews using nouns.} The same qualitative trend is observed: $\a$ estimates pre-ChatGPT are close to 0, and there is a sharp increase after the release of ChatGPT.}
\label{tab: noun main}
\begin{tabular}{lrcll}
\cmidrule[\heavyrulewidth]{1-4}
\multirow{2}{*}{\bf No.} 
& \multirow{2}{*}{\bf \begin{tabular}[c]{@{}c@{}} Validation \\ Data Source 
\end{tabular} } 
&\multicolumn{2}{l}{\bf Estimated} 
\\
\cmidrule{3-4}
 & & $\alpha$ & $CI$ ($\pm$) \\
\cmidrule{1-4}
(1) & \emph{NeurIPS} 2019 & 2.1\%  & 0.3\% \\
(2) & \emph{NeurIPS} 2020 & 2.1\%  & 0.2\% \\
(3) & \emph{NeurIPS} 2021 & 3.7\%  & 0.2\% \\
(4) & \emph{NeurIPS} 2022 & 3.8\%  & 0.2\% \\
(5) & \emph{NeurIPS} 2023 & 10.2\% & 0.2\% \\
\cmidrule{1-4} 
(6) & \emph{ICLR} 2023 & 2.4\% & 0.1\% \\
(7) & \emph{ICLR} 2024 & 12.5\% & 0.2\% \\
\cmidrule{1-4} 
(8) & \emph{CoRL} 2021 & 5.8\% & 1.0\% \\
(9) & \emph{CoRL} 2022 & 5.8\% & 0.9\% \\
(10) & \emph{CoRL} 2023 & 12.4\% & 1.0\% \\
\cmidrule{1-4} 
(11) & \emph{EMNLP} 2023 & 25.5\% & 0.6\% \\
\cmidrule[\heavyrulewidth]{1-4}
\end{tabular}
\end{center}
\vspace{-5mm}
\end{table}

\subsection{Results on Document-Level Analysis}
\label{subsec:Results on Document-Level Analysis}

Our results in the main paper analyzed the data at a sentence level. That is, we assumed that each sentence in a review was drawn from the mixture model \eqref{eq: mix}, and estimated the fraction $\a$ of sentences which were AI generated. We can perform the same analysis on entire \emph{documents} (i.e., complete reviews) to check the robustness of our method to this design choice. Here, $P$ should be interpreted as the distribution of reviews generated without AI assistance, while $Q$ should be interpreted as reviews for which a significant fraction of the content is AI generated. (We do not expect any reviews to be 100\% AI-generated, so this distinction is important.)

The results of the document-level analysis are similar to that at the sentence level. Table~\ref{tab: doc val} shows the validation results corresponding to Section~\ref{sec: val}.

\begin{figure}[ht!] 
\centering
\includegraphics[width=1\textwidth]{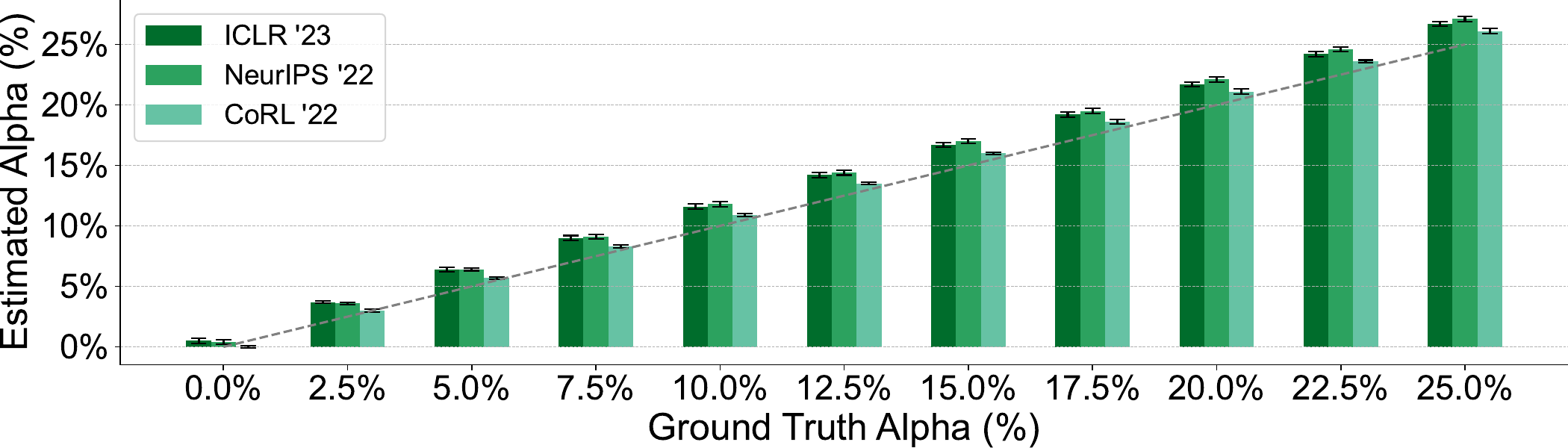}
\caption{
\textbf{Results of the validation procedure from Section~\ref{sec: val} at a document (rather than sentence) level.}
}
\label{fig: doc level val}
\end{figure}

\begin{table}[htb!]
\small
\begin{center}
\caption{
\textbf{Validation accuracy applying the method at a document (rather than sentence) level.} There is a slight degradation in performance compared to the sentence-level approach, and the method tends to slightly over-estimate the true $\a$. We prefer the sentence-level method since it is unlikely that any reviewer will generate an entire review using ChatGPT, as opposed to generating individual sentences or parts of the review using AI.
}
\label{tab: doc val}
\begin{tabular}{lrcllc}
\cmidrule[\heavyrulewidth]{1-6}
\multirow{2}{*}{\bf No.} 
& \multirow{2}{*}{\bf \begin{tabular}[c]{@{}c@{}} Validation \\ Data Source 
\end{tabular} } 
& \multirow{2}{*}{\bf \begin{tabular}[c]{@{}c@{}} Ground \\ Truth $\alpha$
\end{tabular}}  
&\multicolumn{2}{l}{\bf Estimated} 
& \multirow{2}{*}{\bf \begin{tabular}[c]{@{}c@{}} Prediction \\ Error 
\end{tabular} } 
\\
\cmidrule{4-5}
 & & & $\alpha$ & $CI$ ($\pm$) & \\
\cmidrule{1-6}
(1) & \emph{ICLR} 2023 & 0.0\% & 0.5\% & 0.2\% & 0.5\% \\
(2) & \emph{ICLR} 2023 & 2.5\% & 3.7\% & 0.1\% & 1.2\% \\
(3) & \emph{ICLR} 2023 & 5.0\% & 6.4\% & 0.2\% & 1.4\% \\
(4) & \emph{ICLR} 2023 & 7.5\% & 9.0\% & 0.2\% & 1.5\% \\
(5) & \emph{ICLR} 2023 & 10.0\% & 11.6\% & 0.2\% & 1.6\% \\
(6) & \emph{ICLR} 2023 & 12.5\% & 14.2\% & 0.2\% & 1.7\% \\
(7) & \emph{ICLR} 2023 & 15.0\% & 16.7\% & 0.2\% & 1.7\% \\
(8) & \emph{ICLR} 2023 & 17.5\% & 19.2\% & 0.2\% & 1.7\% \\
(9) & \emph{ICLR} 2023 & 20.0\% & 21.7\% & 0.2\% & 1.7\% \\
(10) & \emph{ICLR} 2023 & 22.5\% & 24.2\% & 0.2\% & 1.7\% \\
(11) & \emph{ICLR} 2023 & 25.0\% & 26.7\% & 0.2\% & 1.7\% \\
\cmidrule{1-6}
(12) & \emph{NeurIPS} 2022 & 0.0\% & 0.4\% & 0.2\%  & 0.4\%\\
(13) & \emph{NeurIPS} 2022 & 2.5\% & 3.6\% & 0.1\%  & 1.1\%\\
(14) & \emph{NeurIPS} 2022 & 5.0\% & 6.4\% & 0.1\%  & 1.4\%\\
(15) & \emph{NeurIPS} 2022 & 7.5\% & 9.1\% & 0.2\% & 1.6\%\\
(16) & \emph{NeurIPS} 2022 & 10.0\% & 11.8\% & 0.2\% & 1.8\%\\
(17) & \emph{NeurIPS} 2022 & 12.5\% & 14.4\% & 0.2\% & 1.9\%\\
(18) & \emph{NeurIPS} 2022 & 15.0\% & 17.0\% & 0.2\% & 2.0\%\\
(19) & \emph{NeurIPS} 2022 & 17.5\% & 19.5\% & 0.2\% & 2.0\%\\
(20) & \emph{NeurIPS} 2022 & 20.0\% & 22.1\% & 0.2\% & 2.1\%\\
(21) & \emph{NeurIPS} 2022 & 22.5\% & 24.6\% & 0.2\% & 2.1\%\\
(22) & \emph{NeurIPS} 2022 & 25.0\% & 27.1\% & 0.2\% & 2.1\%\\
\cmidrule{1-6}
(23) & \emph{CoRL} 2022 & 0.0\% & 0.0\% & 0.1\% & 0.0\% \\
(24) & \emph{CoRL} 2022 & 2.5\% & 3.0\% & 0.1\% & 0.5\% \\
(25) & \emph{CoRL} 2022 & 5.0\% & 5.7\% & 0.1\% & 0.7\% \\
(26) & \emph{CoRL} 2022 & 7.5\% & 8.3\% & 0.1\% & 0.8\% \\
(27) & \emph{CoRL} 2022 & 10.0\% &10.9\% & 0.1\% & 0.9\% \\
(28) & \emph{CoRL} 2022 & 12.5\% & 13.5\% & 0.1\% & 1.0\% \\
(29) & \emph{CoRL} 2022 & 15.0\% & 16.0\% & 0.1\% & 1.0\% \\
(30) & \emph{CoRL} 2022 & 17.5\% & 18.6\% & 0.2\% & 1.1\% \\
(31) & \emph{CoRL} 2022 & 20.0\% & 21.1\% & 0.2\% & 1.1\% \\
(32) & \emph{CoRL} 2022 & 22.5\% & 23.6\% & 0.1\% & 1.1\% \\
(33) & \emph{CoRL} 2022 & 25.0\% & 26.1\% & 0.2\% & 1.1\% \\
\cmidrule[\heavyrulewidth]{1-6}
\end{tabular}
\label{app:doc level}
\end{center}
\vspace{-5mm}
\end{table}

\begin{figure}[htb!] 
    \centering
    \includegraphics[width=0.475\textwidth]{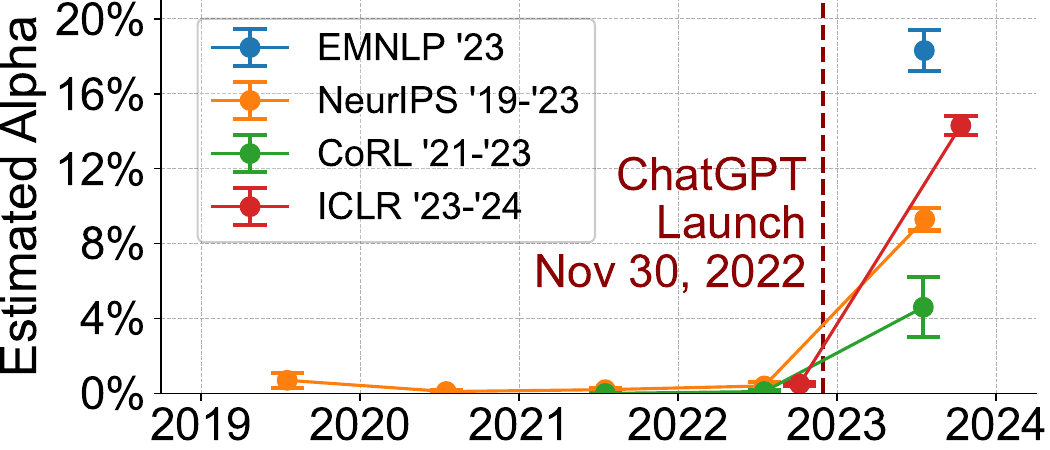}
\caption{
\textbf{Temporal changes in the estimated $\a$ for several ML conferences at the document level.}}
\label{fig: doc temporal}
\end{figure}

\begin{table}[htb!]
\small
\begin{center}
\caption{\textbf{Temporal trends in the $\a$ estimate on official reviews using the model trained at the document level.} The same qualitative trend is observed: $\a$ estimates pre-ChatGPT are close to 0, and there is a sharp increase after the release of ChatGPT.}
\label{tab: document main}
\begin{tabular}{lrcll}
\cmidrule[\heavyrulewidth]{1-4}
\multirow{2}{*}{\bf No.} 
& \multirow{2}{*}{\bf \begin{tabular}[c]{@{}c@{}} Validation \\ Data Source 
\end{tabular} } 
&\multicolumn{2}{l}{\bf Estimated} 
\\
\cmidrule{3-4}
 & & $\alpha$ & $CI$ ($\pm$) \\
\cmidrule{1-4}
(1) & \emph{NeurIPS} 2019 & 0.3\%  & 0.3\% \\
(2) & \emph{NeurIPS} 2020 & 1.1\%  & 0.3\% \\
(3) & \emph{NeurIPS} 2021 & 2.1\%  & 0.2\% \\
(4) & \emph{NeurIPS} 2022 & 3.7\%  & 0.3\% \\
(5) & \emph{NeurIPS} 2023 & 13.7\% & 0.3\% \\
\cmidrule{1-4} 
(6) & \emph{ICLR} 2023 & 3.6\% & 0.2\% \\
(7) & \emph{ICLR} 2024 & 16.3\% & 0.2\% \\
\cmidrule{1-4} 
(8) & \emph{CoRL} 2021 & 2.8\% & 1.1\% \\
(9) & \emph{CoRL} 2022 & 2.9\% & 1.0\% \\
(10) & \emph{CoRL} 2023 & 8.5\% & 1.1\% \\
\cmidrule{1-4} 
(11) & \emph{EMNLP} 2023 & 24.0\% & 0.6\% \\
\cmidrule[\heavyrulewidth]{1-4}
\end{tabular}
\end{center}
\vspace{-5mm}
\end{table}

\newpage 
\clearpage

\subsection{Comparison to State-of-the-art GPT Detection Methods}
\label{Appendix:subsec:baselines}

We conducted experiments using the traditional classification approach to AI text detection. That is, we used two off-the-shelf AI text detectors (RADAR and Deepfake Text Detect) to classify each sentence as AI- or human-generated. Our estimate for $\a$ is the fraction of sentences which the classifier believes are AI-generated. We used the same validation procedure as in Section~\ref{sec: val}. The results are shown in Table~\ref{tab: classifiers}. Two off-the-shelf classifiers predict that either almost all (RADAR) or none (Deepfake) of the text are AI-generated, regardless of the true $\a$ level. With the exception of the BERT-based method, the predictions made by all of the classifiers remain nearly constant across all $\a$ levels, leading to poor performance for all of them. This may be due to a distribution shift between the data used to train the classifier (likely general text scraped from the internet) vs. text found in conference reviews. While BERT's estimates for $\a$ seem at least positively correlated with the correct $\a$ value, the error in the estimate is still large compared to the high accuracy obtained by our method (see Figure~\ref{fig: val} and Table~\ref{tab: adj val}).

\begin{table}[ht!]
\tiny 
\begin{center}
\caption{
\textbf{Validation accuracy for classifier-based methods.} RADAR, Deepfake, and DetectGPT all produce estimates which remain almost constant, independent of the true $\alpha$. The BERT estimates are correlated with the true $\alpha$, but the estimates are still far off.
}
\label{tab: classifiers}
\begin{tabular}{lrcccccr}
\cmidrule[\heavyrulewidth]{1-8}
\multirow{2}{*}{\bf No.} 
& \multirow{2}{*}{\bf \begin{tabular}[c]{@{}c@{}} Validation \\ Data Source 
\end{tabular} } 
& \multirow{2}{*}{\bf \begin{tabular}[c]{@{}c@{}} Ground \\ Truth $\alpha$
\end{tabular}}  
& \multirow{2}{*}{\bf \begin{tabular}[c]{@{}c@{}} RADAR \\ Estimated $\alpha$ 
\end{tabular}} 
& \multirow{2}{*}{\bf \begin{tabular}[c]{@{}c@{}} Deepfake  \\ Estimated $\alpha$ 
\end{tabular}}
& \multirow{2}{*}{\bf \begin{tabular}[c]{@{}c@{}} Fast-DetectGPT \\ Estimated $\alpha$
\end{tabular}}
& \multicolumn{2}{c}{\bf BERT} 
\\
 & & & & & & \bf Estimated $\alpha$ &  \bf Predictor Error\\
\cmidrule{1-8}
(1) & \emph{ICLR} 2023 & 0.0\% & 99.3\% & 0.2\% & 11.3\% & 1.1\% & 1.1\% \\
(2) & \emph{ICLR} 2023 & 2.5\% & 99.4\% & 0.2\% & 11.2\% & 2.9\% & 0.4\% \\
(3) & \emph{ICLR} 2023 & 5.0\% & 99.4\% & 0.3\% & 11.2\% & 4.7\% & 0.3\% \\
(4) & \emph{ICLR} 2023 & 7.5\% & 99.4\% & 0.2\% & 11.4\% & 6.4\% & 1.1\% \\
(5) & \emph{ICLR} 2023 & 10.0\% & 99.4\% & 0.2\% & 11.6\% & 8.0\% & 2.0\% \\
(6) & \emph{ICLR} 2023 & 12.5\% & 99.4\% & 0.3\% & 11.6\% & 9.9\% & 2.6\% \\
(7) & \emph{ICLR} 2023 & 15.0\% & 99.4\% & 0.3\% & 11.8\% & 11.6\% & 3.4\% \\
(8) & \emph{ICLR} 2023 & 17.5\% & 99.4\% & 0.2\% & 11.9\% & 13.4\% & 4.1\% \\
(9) & \emph{ICLR} 2023 & 20.0\% & 99.4\% & 0.3\% & 12.2\% & 15.3\% & 4.7\% \\
(10) & \emph{ICLR} 2023 & 22.5\% & 99.4\% & 0.2\% & 12.0\% & 17.0\% & 5.5\% \\
(11) & \emph{ICLR} 2023 & 25.0\% & 99.4\% & 0.3\% & 12.1\% & 18.8\% & 6.2\% \\
\cmidrule{1-8} 
(12) & \emph{NeurIPS} 2022 & 0.0\% & 99.2\% & 0.2\% & 10.5\% & 1.1\% & 1.1\% \\
(13) & \emph{NeurIPS} 2022 & 2.5\% & 99.2\% & 0.2\% & 10.5\% & 2.3\% & 0.2\% \\
(14) & \emph{NeurIPS} 2022 & 5.0\% & 99.2\% & 0.3\% & 10.7\% & 3.6\% & 1.4\% \\
(15) & \emph{NeurIPS} 2022 & 7.5\% & 99.2\% & 0.2\% & 10.9\% & 5.0\% & 2.5\% \\
(16) & \emph{NeurIPS} 2022 & 10.0\% & 99.2\% & 0.2\% & 10.9\% & 6.1\% & 3.9\% \\
(17) & \emph{NeurIPS} 2022 & 12.5\% & 99.2\% & 0.3\% & 11.1\% & 7.2\% & 5.3\% \\
(18) & \emph{NeurIPS} 2022 & 15.0\% & 99.2\% & 0.3\% & 11.0\% & 8.6\% & 6.4\% \\
(19) & \emph{NeurIPS} 2022 & 17.5\% & 99.3\% & 0.2\% & 11.0\% & 9.9\% & 7.6\% \\
(20) & \emph{NeurIPS} 2022 & 20.0\% & 99.2\% & 0.3\% & 11.3\% & 11.3\% & 8.7\% \\
(21) & \emph{NeurIPS} 2022 & 22.5\% & 99.3\% & 0.2\% & 11.4\% & 12.5\% & 10.0\% \\
(22) & \emph{NeurIPS} 2022 & 25.0\% & 99.2\% & 0.3\% & 11.5\% & 13.8\% & 11.2\% \\
\cmidrule{1-8}
(23) & \emph{CoRL} 2022 & 0.0\% & 99.5\% & 0.2\% & 10.2\% & 1.5\% & 1.5\% \\
(24) & \emph{CoRL} 2022 & 2.5\% & 99.5\% & 0.2\% & 10.4\% & 3.3\% & 0.8\% \\
(25) & \emph{CoRL} 2022 & 5.0\% & 99.5\% & 0.2\% & 10.4\% & 5.0\% & 0.0\% \\
(26) & \emph{CoRL} 2022 & 7.5\% & 99.5\% & 0.3\% & 10.8\% & 6.8\% & 0.7\% \\
(27) & \emph{CoRL} 2022 & 10.0\% &99.5\% & 0.3\% & 11.0\% & 8.4\% & 1.6\% \\
(28) & \emph{CoRL} 2022 & 12.5\% & 99.5\% & 0.3\% & 10.9\% & 10.2\% & 2.3\% \\
(29) & \emph{CoRL} 2022 & 15.0\% & 99.5\% & 0.3\% & 11.1\% & 11.8\% & 3.2\% \\
(30) & \emph{CoRL} 2022 & 17.5\% & 99.5\% & 0.3\% & 11.1\% & 13.8\% & 3.7\% \\
(31) & \emph{CoRL} 2022 & 20.0\% & 99.5\% & 0.3\% & 11.4\% & 15.5\% & 4.5\% \\
(32) & \emph{CoRL} 2022 & 22.5\% & 99.5\% & 0.2\% & 11.6\% & 17.4\% & 5.1\% \\
(33) & \emph{CoRL} 2022 & 25.0\% & 99.5\% & 0.3\% & 11.7\% & 18.9\% & 6.1\% \\
\cmidrule[\heavyrulewidth]{1-8}
\end{tabular}
\label{tab:BERT-and-other-baselines}
\end{center}
\vspace{-5mm}
\end{table}

\begin{table}[htb]
\tiny
\begin{center}
\caption{
\textbf{Amortized inference computation cost per 32-token sentence in GFLOPs} (total number of floating point operations; $1$ GFLOPs = $10^9$ FLOPs). 
}
\label{tab: simplified_classifiers}
\begin{tabular}{cccccr}
\cmidrule[\heavyrulewidth]{1-5}
\bf \begin{tabular}[c]{@{}c@{}} Ours
\end{tabular}  
& \bf \begin{tabular}[c]{@{}c@{}} RADAR(RoBERTa)
\end{tabular} 
& \bf \begin{tabular}[c]{@{}c@{}} Deepfake(Longformer)
\end{tabular}
& \bf \begin{tabular}[c]{@{}c@{}} Fast-DetectGPT(Zero-shot)
\end{tabular}
& \multicolumn{1}{l}{\bf BERT} \\
\cmidrule{1-5}
6.809 $\times 10^-8$ & 9.671 & 50.781 & 84.669 & 2.721 \\
\cmidrule[\heavyrulewidth]{1-5}
\end{tabular}
\label{table:baseline-computation-cost}
\end{center}
\vspace{-5mm}
\end{table}

\newpage 
\clearpage

\subsection{Robustness to Proofreading}
\begin{table}[ht!]
\centering
\caption{
\textbf{Proofreading with ChatGPT alone cannot explain the increase. }
}
\label{app: proofread}
\resizebox{0.48\textwidth}{!}{
\setlength{\tabcolsep}{3.5pt}
\begin{tabular}{lrrrr}
\cmidrule[\heavyrulewidth]{1-5}
\textbf{Conferences} & \multicolumn{2}{c}{\textbf{Before Proofread}} & \multicolumn{2}{c}{\textbf{After Proofread}} \\
\cmidrule(lr){2-3} \cmidrule(lr){4-5}
 & $\alpha$ & $CI$ ($\pm$) & $\alpha$ & $CI$ ($\pm$) \\
\cmidrule[\heavyrulewidth]{1-5}
ICLR2023  & 1.5\% & 0.7\% & 2.2\% & 0.8\% \\
NeurIPS2022 & 0.9\% & 0.6\% & 1.5\% & 0.7\% \\
CoRL2022  & 2.3\% & 0.7\% & 3.0\% & 0.8\% \\
\bottomrule
\end{tabular}
}
\end{table}

\subsection{Using LLMs to Substantially Expand Incomplete Sentences}
\begin{table}[htb!]
\small
\begin{center}
\caption{
\textbf{Validation accuracy using a blend of official human and LLM-expanded review.}
}
\label{tab: expand val}
\begin{tabular}{lrcllc}
\cmidrule[\heavyrulewidth]{1-6}
\multirow{2}{*}{\bf No.} 
& \multirow{2}{*}{\bf \begin{tabular}[c]{@{}c@{}} Validation \\ Data Source 
\end{tabular} } 
& \multirow{2}{*}{\bf \begin{tabular}[c]{@{}c@{}} Ground \\ Truth $\alpha$
\end{tabular}}  
&\multicolumn{2}{l}{\bf Estimated} 
& \multirow{2}{*}{\bf \begin{tabular}[c]{@{}c@{}} Prediction \\ Error 
\end{tabular} } 
\\
\cmidrule{4-5}
 & & & $\alpha$ & $CI$ ($\pm$) & \\
\cmidrule{1-6}
(1) & \emph{ICLR} 2023 & 0.0\% & 1.6\% & 0.1\% & 1.6\% \\
(2) & \emph{ICLR} 2023 & 2.5\% & 4.1\% & 0.5\% & 1.6\% \\
(3) & \emph{ICLR} 2023 & 5.0\% & 6.3\% & 0.6\% & 1.3\% \\
(4) & \emph{ICLR} 2023 & 7.5\% & 8.5\% & 0.6\% & 1.0\% \\
(5) & \emph{ICLR} 2023 & 10.0\% & 10.6\% & 0.7\% & 0.6\% \\
(6) & \emph{ICLR} 2023 & 12.5\% & 12.6\% & 0.7\% & 0.1\% \\
(7) & \emph{ICLR} 2023 & 15.0\% & 14.7\% & 0.7\% & 0.3\% \\
(8) & \emph{ICLR} 2023 & 17.5\% & 16.7\% & 0.7\% & 0.8\% \\
(9) & \emph{ICLR} 2023 & 20.0\% & 18.7\% & 0.8\% & 1.3\% \\
(10) & \emph{ICLR} 2023 & 22.5\% & 20.7\% & 0.8\% & 1.8\% \\
(11) & \emph{ICLR} 2023 & 25.0\% & 22.7\% & 0.8\% & 2.3\% \\
\cmidrule{1-6}
(12) & \emph{NeurIPS} 2022 & 0.0\% & 1.9\% & 0.2\%  & 1.9\%\\
(13) & \emph{NeurIPS} 2022 & 2.5\% & 4.0\% & 0.6\%  & 1.5\%\\
(14) & \emph{NeurIPS} 2022 & 5.0\% & 6.0\% & 0.6\%  & 1.0\%\\
(15) & \emph{NeurIPS} 2022 & 7.5\% & 7.9\% & 0.6\% & 0.4\%\\
(16) & \emph{NeurIPS} 2022 & 10.0\% & 9.8\% & 0.6\% & 0.2\%\\
(17) & \emph{NeurIPS} 2022 & 12.5\% & 11.6\% & 0.7\% & 0.9\%\\
(18) & \emph{NeurIPS} 2022 & 15.0\% & 13.4\% & 0.7\% & 1.6\%\\
(19) & \emph{NeurIPS} 2022 & 17.5\% & 15.2\% & 0.8\% & 2.3\%\\
(20) & \emph{NeurIPS} 2022 & 20.0\% & 17.0\% & 0.8\% & 3.0\%\\
(21) & \emph{NeurIPS} 2022 & 22.5\% & 18.8\% & 0.8\% & 3.7\%\\
(22) & \emph{NeurIPS} 2022 & 25.0\% & 20.6\% & 0.8\% & 4.4\%\\
\cmidrule{1-6}
(23) & \emph{CoRL} 2022 & 0.0\% & 2.4\% & 0.6\% & 2.4\% \\
(24) & \emph{CoRL} 2022 & 2.5\% & 4.5\% & 0.5\% & 2.0\% \\
(25) & \emph{CoRL} 2022 & 5.0\% & 6.4\% & 0.6\% & 1.4\% \\
(26) & \emph{CoRL} 2022 & 7.5\% & 8.2\% & 0.6\% & 0.7\% \\
(27) & \emph{CoRL} 2022 & 10.0\% &10.0\% & 0.7\% & 0.0\% \\
(28) & \emph{CoRL} 2022 & 12.5\% & 11.8\% & 0.7\% & 0.7\% \\
(29) & \emph{CoRL} 2022 & 15.0\% & 13.6\% & 0.7\% & 1.4\% \\
(30) & \emph{CoRL} 2022 & 17.5\% & 15.3\% & 0.7\% & 2.2\% \\
(31) & \emph{CoRL} 2022 & 20.0\% & 17.0\% & 0.7\% & 3.0\% \\
(32) & \emph{CoRL} 2022 & 22.5\% & 18.7\% & 0.8\% & 3.8\% \\
(33) & \emph{CoRL} 2022 & 25.0\% & 20.5\% & 0.8\% & 4.5\% \\
\cmidrule[\heavyrulewidth]{1-6}
\end{tabular}
\label{app:LLM-expanded }
\end{center}
\vspace{-5mm}
\end{table}
\newpage
\clearpage

\subsection{Factors that Correlate With Estimated LLM Usage}
\label{sec:factors}
\begin{table}[ht!]
\centering
\caption{\textbf{Numerical results for the deadline effect (Figure~\ref{fig: deadline}).}}
\setlength{\tabcolsep}{3.5pt}
\begin{tabular}{lrrrr}
\cmidrule[\heavyrulewidth]{1-5}
\textbf{Conferences} & \multicolumn{2}{c}{\textbf{\begin{tabular}{@{}c@{}}More than 3 Days \\ Before Review Deadline\end{tabular}}} & \multicolumn{2}{c}{\textbf{\begin{tabular}{@{}c@{}}Within 3 Days 
 \\ of Review Deadline\end{tabular}}} \\
\cmidrule(lr){2-3} \cmidrule(lr){4-5}
 & $\alpha$ & $CI$ ($\pm$) & $\alpha$ & $CI$ ($\pm$) \\
\cmidrule[\heavyrulewidth]{1-5}
ICLR2024  & 8.8\% & 0.4\% & 11.3\% & 0.2\% \\
NeurIPS2023 & 7.7\% & 0.4\% & 9.5\% & 0.3\% \\
CoRL2023  & 3.9\% & 1.3\% & 7.3\% & 0.9\% \\
EMNLP2023  & 14.2\% & 1.0\% & 17.1\% & 0.5\% \\
\bottomrule
\end{tabular}
\label{app:timeline}
\end{table}

\begin{table}[ht!]
\centering
\caption{\textbf{Numerical results for the reference effect (Figure~\ref{fig: et-al}).}}
\resizebox{0.48\textwidth}{!}{
\setlength{\tabcolsep}{3.5pt}
\begin{tabular}{lrrrr}
\cmidrule[\heavyrulewidth]{1-5}
\textbf{Conferences} & \multicolumn{2}{c}{\textbf{\begin{tabular}{@{}c@{}}With Reference \end{tabular}}} & \multicolumn{2}{c}{\textbf{\begin{tabular}{@{}c@{}}No Reference\end{tabular}}} \\
\cmidrule(lr){2-3} \cmidrule(lr){4-5}
 & $\alpha$ & $CI$ ($\pm$) & $\alpha$ & $CI$ ($\pm$) \\
\cmidrule[\heavyrulewidth]{1-5}
ICLR2024  & 6.5\% & 0.2\% & 12.8\% & 0.2\% \\
NeurIPS2023 & 5.0\% & 0.4\% & 10.2\% & 0.3\% \\
CoRL2023  & 2.2\% & 1.5\% & 7.1\% & 0.8\% \\
EMNLP2023  & 10.6\% & 1.0\% & 17.7\% & 0.5\% \\
\bottomrule
\end{tabular}
}
\label{app:refere}
\end{table}

\begin{table}[ht!]
\centering
\caption{\textbf{Numerical results for the low reply effect (Figure~\ref{fig: reply})}.}
\setlength{\tabcolsep}{3.5pt}
\begin{tabular}{lrrrr}
\cmidrule[\heavyrulewidth]{1-5}
\textbf{\# of Replies} & \multicolumn{2}{c}{\textbf{ICLR 2024}} & \multicolumn{2}{c}{\textbf{NeurIPS 2023}}  \\
\cmidrule(lr){2-3} \cmidrule(lr){4-5}
 & $\alpha$ & $CI$ ($\pm$) & $\alpha$ & $CI$ ($\pm$) \\
\cmidrule[\heavyrulewidth]{1-5}
0       & 13.3\% & 0.3\% & 12.8\% & 0.6\%\\
1       & 10.6\%  & 0.3\%  & 9.2\% & 0.3\%\\
2       & 6.4\%  & 0.5\% & 5.9\% & 0.5\%\\
3       & 6.7\%  & 1.1\% & 4.6\% & 0.9\%\\
4+      & 3.6\% & 1.1\% & 1.9\% & 1.1\%\\
\bottomrule
\end{tabular}
\label{app:replies}
\end{table}

\begin{table}[ht!]
\centering
\caption{\textbf{Numerical results for the homogenization effect (Figure~\ref{fig: homog}).}}
\setlength{\tabcolsep}{3.5pt}
\begin{tabular}{lrrrr}
\cmidrule[\heavyrulewidth]{1-5}
\textbf{Conferences} & \multicolumn{2}{c}{\textbf{Heterogeneous Reviews}} & \multicolumn{2}{c}{\textbf{Homogeneous Reviews}} \\
\cmidrule(lr){2-3} \cmidrule(lr){4-5}
 & $\alpha$ & $CI$ ($\pm$) & $\alpha$ & $CI$ ($\pm$) \\
\cmidrule[\heavyrulewidth]{1-5}
ICLR2024  & 7.2\% & 0.4\% & 13.1\% & 0.4\% \\
NeurIPS2023 & 6.1\% & 0.4\% & 11.6\% & 0.5\% \\
CoRL2023  & 5.1\% & 1.5\% & 7.6\% & 1.4\% \\
EMNLP2023  & 12.8\% & 0.8\% & 19.6\% & 0.8\% \\
\bottomrule
\end{tabular}
\label{app:similarity}
\end{table}

\begin{table}[ht!]
\centering
\caption{\textbf{Numerical results for the low confidence effect (Figure~\ref{fig: confidence}).}}
\setlength{\tabcolsep}{3.5pt}
\begin{tabular}{lrrrr}
\cmidrule[\heavyrulewidth]{1-5}
\textbf{Conferences} & \multicolumn{2}{c}{\textbf{ Reviews with Low Confidence}} & \multicolumn{2}{c}{\textbf{Reviews with High Confidence}} \\
\cmidrule(lr){2-3} \cmidrule(lr){4-5}
 & $\alpha$ & $CI$ ($\pm$) & $\alpha$ & $CI$ ($\pm$) \\
\cmidrule[\heavyrulewidth]{1-5}
ICLR2024  & 13.2\% & 0.7\% & 10.7\% & 0.2\% \\
NeurIPS2023 & 10.3\% & 0.8\% & 8.9\% & 0.2\% \\
CoRL2023  & 7.8\% & 4.8\% & 6.5\% & 0.7\% \\
EMNLP2023  & 17.6\% & 1.8\% & 16.6\% & 0.5\% \\
\bottomrule
\end{tabular}
\label{app:confidence}
\end{table}

\newpage 
\clearpage

\subsection{Additional Results on GPT-3.5}

Here we chose to focus on ChatGPT because it is by far the most popular in general usage. According to a comprehensive analysis by FlexOS in early 2024, ChatGPT dominates the generative AI market, with 76\% of global internet traffic in the category. Bing AI follows with 16\%, Bard with 7\%, and Claude with 1\%~\cite{vanrossum2024generative}. Recent studies have also found that GPT-4 substantially outperforms other LLMs, including Bard, in the reviewing of scientific papers or proposals \cite{liu2023reviewergpt}.

\begin{figure}[ht!] 
\centering
\includegraphics[width=1\textwidth]{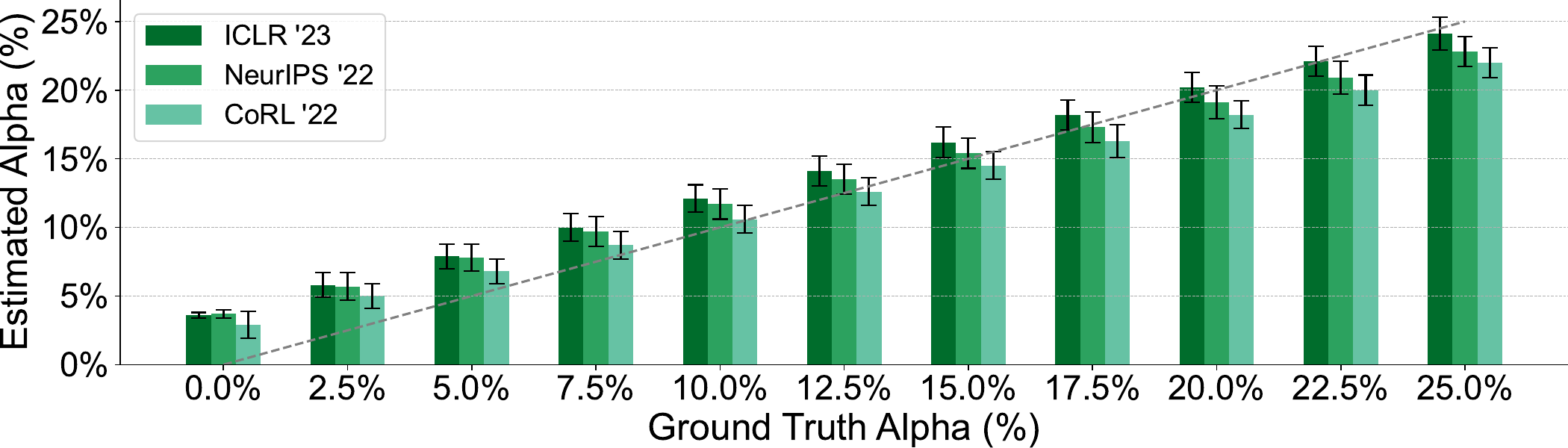}
\caption{
\textbf{Results of the validation procedure from Section~\ref{sec: val}(model trained on reviews generated by GPT-3.5 and tested on reviews generated by GPT-3.5).}
}
\label{fig: 3.5-3.5 val}
\end{figure}

\begin{table}[htb!]
\small
\begin{center}
\caption{
\textbf{Performance validation of the model trained on reviews generated by GPT-3.5.}
}
\label{tab: Performance validation of GPT-3.5 by 3.5}
\begin{tabular}{lrcllc}
\cmidrule[\heavyrulewidth]{1-6}
\multirow{2}{*}{\bf No.} 
& \multirow{2}{*}{\bf \begin{tabular}[c]{@{}c@{}} Validation \\ Data Source 
\end{tabular} } 
& \multirow{2}{*}{\bf \begin{tabular}[c]{@{}c@{}} Ground \\ Truth $\alpha$
\end{tabular}}  
&\multicolumn{2}{l}{\bf Estimated} 
& \multirow{2}{*}{\bf \begin{tabular}[c]{@{}c@{}} Prediction \\ Error 
\end{tabular} } 
\\
\cmidrule{4-5}
 & & & $\alpha$ & $CI$ ($\pm$) & \\
\cmidrule{1-6}
(1) & \emph{ICLR} 2023 & 0.0\% & 3.6\% & 0.2\% & 3.6\% \\
(2) & \emph{ICLR} 2023 & 2.5\% & 5.8\% & 0.9\% & 3.3\% \\
(3) & \emph{ICLR} 2023 & 5.0\% & 7.9\% & 0.9\% & 2.9\% \\
(4) & \emph{ICLR} 2023 & 7.5\% & 10.0\% & 1.0\% & 2.5\% \\
(5) & \emph{ICLR} 2023 & 10.0\% & 12.1\% & 1.0\% & 2.1\% \\
(6) & \emph{ICLR} 2023 & 12.5\% & 14.1\% & 1.1\% & 1.6\% \\
(7) & \emph{ICLR} 2023 & 15.0\% & 16.2\% & 1.1\% & 1.2\% \\
(8) & \emph{ICLR} 2023 & 17.5\% & 18.2\% & 1.1\% & 0.7\% \\
(9) & \emph{ICLR} 2023 & 20.0\% & 20.2\% & 1.1\% & 0.2\% \\
(10) & \emph{ICLR} 2023 & 22.5\% & 22.1\% & 1.1\% & 0.4\% \\
(11) & \emph{ICLR} 2023 & 25.0\% & 24.1\% & 1.2\% & 0.9\% \\
\cmidrule{1-6}
(12) & \emph{NeurIPS} 2022 & 0.0\% & 3.7\% & 0.3\% & 3.7\% \\
(13) & \emph{NeurIPS} 2022 & 2.5\% & 5.7\% & 1.0\% & 3.2\% \\
(14) & \emph{NeurIPS} 2022 & 5.0\% & 7.8\% & 1.0\% & 2.8\% \\
(15) & \emph{NeurIPS} 2022 & 7.5\% & 9.7\% & 1.1\% & 2.2\% \\
(16) & \emph{NeurIPS} 2022 & 10.0\% & 11.7\% & 1.1\% & 1.7\% \\
(17) & \emph{NeurIPS} 2022 & 12.5\% & 13.5\% & 1.1\% & 1.0\% \\
(18) & \emph{NeurIPS} 2022 & 15.0\% & 15.4\% & 1.1\% & 0.4\% \\
(19) & \emph{NeurIPS} 2022 & 17.5\% & 17.3\% & 1.1\% & 0.2\% \\
(20) & \emph{NeurIPS} 2022 & 20.0\% & 19.1\% & 1.2\% & 0.9\% \\
(21) & \emph{NeurIPS} 2022 & 22.5\% & 20.9\% & 1.2\% & 1.6\% \\
(22) & \emph{NeurIPS} 2022 & 25.0\% & 22.8\% & 1.1\% & 2.2\% \\
\cmidrule{1-6}
(23) & \emph{CoRL} 2022 & 0.0\% & 2.9\% & 1.0\% & 2.9\% \\
(24) & \emph{CoRL} 2022 & 2.5\% & 5.0\% & 0.9\% & 2.5\% \\
(25) & \emph{CoRL} 2022 & 5.0\% & 6.8\% & 0.9\% & 1.8\% \\
(26) & \emph{CoRL} 2022 & 7.5\% & 8.7\% & 1.0\% & 1.2\% \\
(27) & \emph{CoRL} 2022 & 10.0\% &10.6\% & 1.0\% & 0.6\% \\
(28) & \emph{CoRL} 2022 & 12.5\% & 12.6\% & 1.0\% & 0.1\% \\
(29) & \emph{CoRL} 2022 & 15.0\% & 14.5\% & 1.0\% & 0.5\% \\
(30) & \emph{CoRL} 2022 & 17.5\% & 16.3\% & 1.2\% & 1.2\% \\
(31) & \emph{CoRL} 2022 & 20.0\% & 18.2\% & 1.0\% & 1.8\% \\
(32) & \emph{CoRL} 2022 & 22.5\% & 20.0\% & 1.1\% & 2.5\% \\
(33) & \emph{CoRL} 2022 & 25.0\% & 22.0\% & 1.1\% & 3.0\% \\
\cmidrule[\heavyrulewidth]{1-6}
\end{tabular}
\label{table:GPT-3.5-validation}
\end{center}
\vspace{-5mm}
\end{table}

\begin{figure}[ht!] 
\centering
\includegraphics[width=1\textwidth]{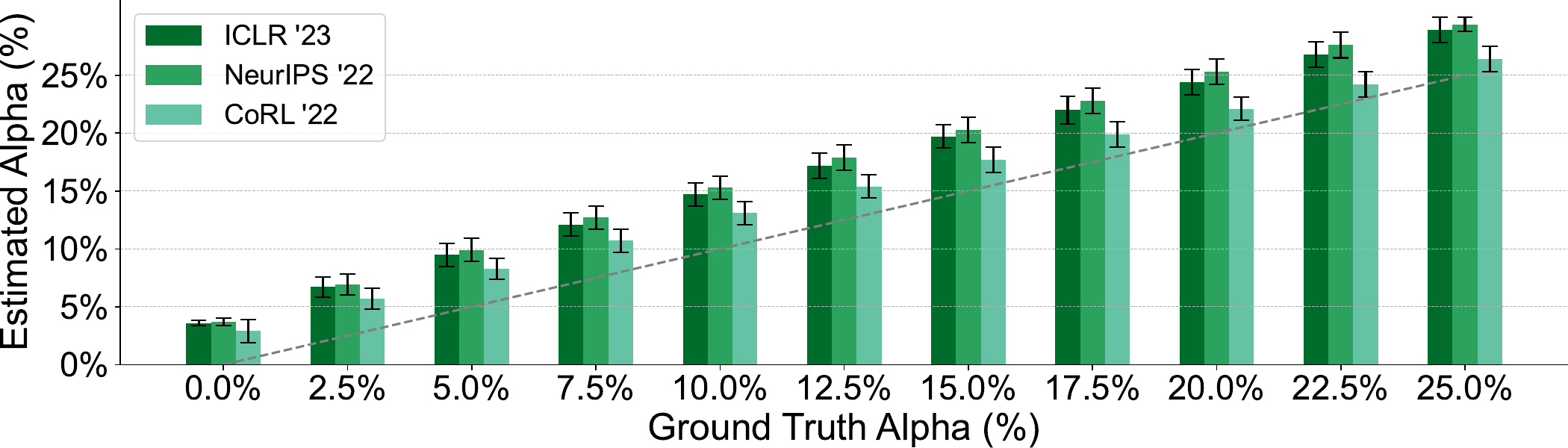}
\caption{
\textbf{Results of the validation procedure from Section~\ref{sec: val}(model trained on reviews generated by GPT-3.5 and tested on reviews generated by GPT-4).}
}
\label{fig: 3.5-4 val}
\end{figure}

\begin{table}[htb!]
\small
\begin{center}
\caption{
\textbf{Performance validation of GPT-4 AI reviews trained on reviews generated by GPT-3.5.}
}
\label{tab: Performance validation of GPT-4 by 3.5}
\begin{tabular}{lrcllc}
\cmidrule[\heavyrulewidth]{1-6}
\multirow{2}{*}{\bf No.} 
& \multirow{2}{*}{\bf \begin{tabular}[c]{@{}c@{}} Validation \\ Data Source 
\end{tabular} } 
& \multirow{2}{*}{\bf \begin{tabular}[c]{@{}c@{}} Ground \\ Truth $\alpha$
\end{tabular}}  
&\multicolumn{2}{l}{\bf Estimated} 
& \multirow{2}{*}{\bf \begin{tabular}[c]{@{}c@{}} Prediction \\ Error 
\end{tabular} } 
\\
\cmidrule{4-5}
 & & & $\alpha$ & $CI$ ($\pm$) & \\
\cmidrule{1-6}
(1) & \emph{ICLR} 2023 & 0.0\% & 3.6\% & 0.2\% & 3.6\% \\
(2) & \emph{ICLR} 2023 & 2.5\% & 6.7\% & 0.9\% & 4.2\% \\
(3) & \emph{ICLR} 2023 & 5.0\% & 9.5\% & 1.0\% & 4.5\% \\
(4) & \emph{ICLR} 2023 & 7.5\% & 12.1\% & 1.0\% & 4.6\% \\
(5) & \emph{ICLR} 2023 & 10.0\% & 14.7\% & 1.0\% & 4.7\% \\
(6) & \emph{ICLR} 2023 & 12.5\% & 17.2\% & 1.1\% & 4.7\% \\
(7) & \emph{ICLR} 2023 & 15.0\% & 19.7\% & 1.0\% & 4.7\% \\
(8) & \emph{ICLR} 2023 & 17.5\% & 22.0\% & 1.2\% & 4.5\% \\
(9) & \emph{ICLR} 2023 & 20.0\% & 24.4\% & 1.1\% & 4.4\% \\
(10) & \emph{ICLR} 2023 & 22.5\% & 26.8\% & 1.1\% & 4.3\% \\
(11) & \emph{ICLR} 2023 & 25.0\% & 28.9\% & 1.1\% & 3.9\% \\
\cmidrule{1-6}
(12) & \emph{NeurIPS} 2022 & 0.0\% & 3.7\% & 0.3\% & 3.7\% \\
(13) & \emph{NeurIPS} 2022 & 2.5\% & 6.9\% & 0.9\% & 4.4\% \\
(14) & \emph{NeurIPS} 2022 & 5.0\% & 9.9\% & 1.0\% & 4.9\% \\
(15) & \emph{NeurIPS} 2022 & 7.5\% & 12.7\% & 1.0\% & 5.2\% \\
(16) & \emph{NeurIPS} 2022 & 10.0\% & 15.3\% & 1.0\% & 5.3\% \\
(17) & \emph{NeurIPS} 2022 & 12.5\% & 17.9\% & 1.1\% & 5.4\% \\
(18) & \emph{NeurIPS} 2022 & 15.0\% & 20.3\% & 1.1\% & 5.3\% \\
(19) & \emph{NeurIPS} 2022 & 17.5\% & 22.8\% & 1.1\% & 5.3\% \\
(20) & \emph{NeurIPS} 2022 & 20.0\% & 25.3\% & 1.1\% & 5.3\% \\
(21) & \emph{NeurIPS} 2022 & 22.5\% & 27.6\% & 1.1\% & 5.1\% \\
(22) & \emph{NeurIPS} 2022 & 25.0\% & 29.4\% & 0.6\% & 4.4\% \\
\cmidrule{1-6}
(23) & \emph{CoRL} 2022 & 0.0\% & 2.9\% & 1.0\% & 2.9\% \\
(24) & \emph{CoRL} 2022 & 2.5\% & 5.7\% & 0.9\% & 3.2\% \\
(25) & \emph{CoRL} 2022 & 5.0\% & 8.3\% & 0.9\% & 3.3\% \\
(26) & \emph{CoRL} 2022 & 7.5\% & 10.7\% & 1.0\% & 3.2\% \\
(27) & \emph{CoRL} 2022 & 10.0\% &13.1\% & 1.0\% & 3.1\% \\
(28) & \emph{CoRL} 2022 & 12.5\% & 15.4\% & 1.0\% & 2.9\% \\
(29) & \emph{CoRL} 2022 & 15.0\% & 17.7\% & 1.1\% & 2.7\% \\
(30) & \emph{CoRL} 2022 & 17.5\% & 19.9\% & 1.1\% & 2.4\% \\
(31) & \emph{CoRL} 2022 & 20.0\% & 22.1\% & 1.0\% & 2.1\% \\
(32) & \emph{CoRL} 2022 & 22.5\% & 24.2\% & 1.1\% & 1.7\% \\
(33) & \emph{CoRL} 2022 & 25.0\% & 26.4\% & 1.1\% & 1.4\% \\
\cmidrule[\heavyrulewidth]{1-6}
\end{tabular}
\label{table:GPT-3.5-validation-on-GPT-4}
\end{center}
\vspace{-5mm}
\end{table}

\begin{figure}[htb!] 
    \centering
    \includegraphics[width=0.475\textwidth]{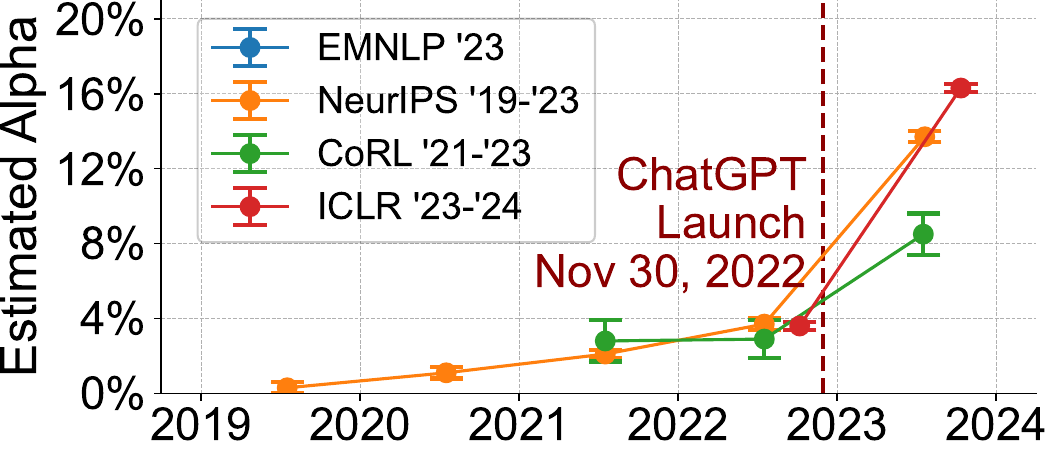}
\caption{
\textbf{Temporal changes in the estimated $\a$ for several ML conferences using the model trained on reviews generated by GPT-3.5.}}
\label{fig: GPT3 temporal}
\end{figure}

\begin{table}[htb!]
\small
\begin{center}
\caption{\textbf{Temporal trends in the $\a$ estimate on official reviews using the model trained on reviews generated by GPT-3.5.} The same qualitative trend is observed: $\a$ estimates pre-ChatGPT are close to 0, and there is a sharp increase after the release of ChatGPT.}
\label{tab: gpt-3.5 main}
\begin{tabular}{lrcll}
\cmidrule[\heavyrulewidth]{1-4}
\multirow{2}{*}{\bf No.} 
& \multirow{2}{*}{\bf \begin{tabular}[c]{@{}c@{}} Validation \\ Data Source 
\end{tabular} } 
&\multicolumn{2}{l}{\bf Estimated} 
\\
\cmidrule{3-4}
 & & $\alpha$ & $CI$ ($\pm$) \\
\cmidrule{1-4}
(1) & \emph{NeurIPS} 2019 & 0.3\%  & 0.3\% \\
(2) & \emph{NeurIPS} 2020 & 1.1\%  & 0.3\% \\
(3) & \emph{NeurIPS} 2021 & 2.1\%  & 0.2\% \\
(4) & \emph{NeurIPS} 2022 & 3.7\%  & 0.3\% \\
(5) & \emph{NeurIPS} 2023 & 13.7\% & 0.3\% \\
\cmidrule{1-4} 
(6) & \emph{ICLR} 2023 & 3.6\% & 0.2\% \\
(7) & \emph{ICLR} 2024 & 16.3\% & 0.2\% \\
\cmidrule{1-4} 
(8) & \emph{CoRL} 2021 & 2.8\% & 1.1\% \\
(9) & \emph{CoRL} 2022 & 2.9\% & 1.0\% \\
(10) & \emph{CoRL} 2023 & 8.5\% & 1.1\% \\
\cmidrule{1-4} 
(11) & \emph{EMNLP} 2023 & 24.0\% & 0.6\% \\
\cmidrule[\heavyrulewidth]{1-4}
\end{tabular}
\end{center}
\vspace{-5mm}
\end{table}

\newpage
\clearpage 

\section{LLM prompts used in the study}

\begin{figure}[htb!]
\begin{lstlisting}
Your task is to write a review given some text of a paper. Your output should be like the following format:
Summary:

Strengths And Weaknesses:

Summary Of The Review:
\end{lstlisting}
\caption{
\textbf{Example system prompt for generating training data.} Paper contents are provided as the user message. 
}
\label{fig:training-prompt}
\end{figure}

\begin{figure}[htb!]
\begin{lstlisting}
Your task now is to draft a high-quality review for CoRL on OpenReview for a submission titled <Title>:

```
<Paper_content>
```

======
Your task: 
Compose a high-quality peer review of a paper submitted to CoRL on OpenReview.

Start by "Review outline:".
And then: 
"1. Summary", Briefly summarize the paper and its contributions. This is not the place to critique the paper; the authors should generally agree with a well-written summary. DO NOT repeat the paper title. 

"2. Strengths", A substantive assessment of the strengths of the paper, touching on each of the following dimensions: originality, quality, clarity, and significance. We encourage reviewers to be broad in their definitions of originality and significance. For example, originality may arise from a new definition or problem formulation, creative combinations of existing ideas, application to a new domain, or removing limitations from prior results. You can incorporate Markdown and Latex into your review. See https://openreview.net/faq.

"3. Weaknesses", A substantive assessment of the weaknesses of the paper. Focus on constructive and actionable insights on how the work could improve towards its stated goals. Be specific, avoid generic remarks. For example, if you believe the contribution lacks novelty, provide references and an explanation as evidence; if you believe experiments are insufficient, explain why and exactly what is missing, etc.

"4. Suggestions", Please list up and carefully describe any suggestions for the authors. Think of the things where a response from the author can change your opinion, clarify a confusion or address a limitation. This is important for a productive rebuttal and discussion phase with the authors.

\end{lstlisting}
\caption{
\textbf{Example prompt for generating validation data with prompt shift.} 
Note that although this validation prompt is written in a significantly different style than the prompt for generating the training data, our algorithm still predicts the alpha accurately. 
}
\label{fig:validation-prompt-shift-prompt}
\end{figure}

\newpage 
\clearpage

\begin{figure}[htb!]
\begin{lstlisting}
The aim here is to reverse-engineer the reviewer's writing process into two distinct phases: drafting a skeleton (outline) of the review and then expanding this outline into a detailed, complete review. The process simulates how a reviewer might first organize their thoughts and key points in a structured, concise form before elaborating on each point to provide a comprehensive evaluation of the paper.


Now as a first step, given a complete peer review, reverse-engineer it into a concise skeleton.
\end{lstlisting}
\caption{
\textbf{Example prompt for reverse-engineering a given official review into a skeleton (outline) to simulate how a human reviewer might first organize their thoughts and key points in a structured, concise form before elaborating on each point to provide a comprehensive evaluation of the paper.}
}
\label{fig:skeleton-prompt-1}
\end{figure}

\begin{figure}[htb!]
\begin{lstlisting}
Expand the skeleton of the review into a official review as the following format:
Summary:

Strengths:

Weaknesses:

Questions:
\end{lstlisting}
\caption{
\textbf{Example prompt for elaborating the skeleton (outline) into the full review.} The format of a review varies depending on the conference.
The goal is to simulate how a human reviewer might first organize their thoughts and key points in a structured, concise form, and then elaborate on each point to provide a comprehensive evaluation of the paper.
}
\label{fig:skeleton-prompt-2}
\end{figure}

\begin{figure}[htb!]
\begin{lstlisting}
Your task is to proofread the provided sentence for grammatical accuracy. Ensure that the corrections introduce minimal distortion to the original content. 
\end{lstlisting}
\caption{
\textbf{Example prompt for proofreading.}
}
\label{fig:proofread-prompt}
\end{figure}

\newpage 
\clearpage
\section{Additional Information on LLM Parameter Settings}

We used the snapshot of GPT-4 from June 13th, 2023 (gpt-4-0613), for our experiments because this is the exact version of ChatGPT that was available during the peer review process of ICLR 2024, NeurIPS 2023, EMNLP 2023, and CoRL 2023.

Regarding the parameter settings, during our experiments, we set the decoding temperature to 1.0 and the maximum decoding length to 2048. We set the Top P hyperparameter to 1.0 and both frequency penalty and presence penalty to 0.0. Additionally, we did not configure any stop sequences during decoding.

\section{Additional Dataset Information}
\label{sec:Additional Dataset Information}

All the data are publicly available. For the machine learning conferences, we accessed peer review data through the official OpenReview API (\url{https://docs.openreview.net/reference/api-v2}), specifically the \url{/notes} endpoint. Each review contains an average of 25.94 sentences. For the Nature portfolio dataset, we developed a custom web scraper using python to access the article pages of 15 journals from the Nature portfolio, extracting peer reviews from papers accepted between 2019 and 2023. Each review in the Nature dataset comprises an average of 37.03 sentences.

The Nature portfolio dataset encompasses the following 15 Nature journals: 
Nature, 
Nature Communications, 
Nature Ecology \& Evolution, 
Nature Structural \& Molecular Biology, 
Nature Cell Biology, 
Nature Human Behaviour, 
Nature Immunology, 
Nature Microbiology, 
Nature Biomedical Engineering, 
Communications Earth \& Environment, 
Communications Biology, 
Communications Physics, 
Communications Chemistry,
Communications Materials,
and Communications Medicine. 
To create this dataset, we systematically accessed the web pages of the selected Nature portfolio journals, extracting peer reviews from papers accepted between 2019 and 2023. In total, our dataset comprises 25,382 peer reviews from 10,242 papers. We chose to focus on the Nature family journals for our baseline dataset due to their reputation for publishing high-quality, impactful research across multiple disciplines.

Our framework breaks reviews down into a list of sentences, and the parameterization operates at the sentence level. We consider all sentences with 2 or more words and did not set a maximum limit for the number of words in a sentence. If reviewers leave a section blank, no sentences from that section are added to the corpus.

\begin{table}[ht!]
\centering
\caption{
\textbf{Human Peer Reviews Data from Nature Family Journals.}
}
\label{tab:nature_reviews}
\resizebox{0.96\textwidth}{!}{ %
\setlength{\tabcolsep}{3.5pt} %
\begin{tabular}{lcccc}
\toprule
\bf Journal & \bf Post ChatGPT & \bf Data Split & \bf \# of Papers & \bf \# of Official Reviews \\
\midrule
\rowcolor{green!10} 
Nature Portfolio 2022 (random split subset) & \bf \textcolor{darkgreen!70}{Before} & \bf \cellcolor{green!20} \textcolor{black!85}{Training} & 1,189 & 3,341 \\
\cmidrule{1-5} 
\rowcolor{green!10} 
Nature Portfolio 2019 & \bf \textcolor{darkgreen!70}{Before} & \bf \cellcolor{blue!10} \textcolor{blue!85}{Validation} & 2,141 & 4,394 \\
\rowcolor{green!10} 
Nature Portfolio 2020 & \bf \textcolor{darkgreen!70}{Before} & \bf \cellcolor{blue!10} \textcolor{blue!85}{Validation} & 2,083 & 4,736 \\
\rowcolor{green!10} 
Nature Portfolio 2021 & \bf \textcolor{darkgreen!70}{Before} & \bf \cellcolor{blue!10} \textcolor{blue!85}{Validation} & 2,129 & 5,264 \\
\rowcolor{green!10} 
Nature Portfolio 2022 & \bf \textcolor{darkgreen!70}{Before} & \bf \cellcolor{blue!10} \textcolor{blue!85}{Validation} & 511 & 1,447 \\
\midrule
\rowcolor{red!20} 
Nature Portfolio 2022-2023 & \bf \textcolor{red!70}{After} & \bf \textcolor{black!85}{Inference} & 2,189 & 6,200 \\
\bottomrule
\end{tabular}
\label{table:nature-data-table}
}
\end{table}

\paragraph{Ethics Considerations About LLM Analysis for Public Conferences}

The use of peer review data for research purposes raises important ethical considerations around reviewer consent, data licensing, and responsible use~\cite{dycke-etal-2023-nlpeer}. While early datasets have enabled valuable research, going forward, it is critical that the community establishes clear best practices for the ethical collection and use of peer review data.

OpenReview is a science communication initiative which aims to make the scientific process more transparent. Authors and peer reviewers agree to make their reviews public upon submission. In our work, we accessed this publicly available, anonymous peer review data through the public OpenReview API and confirm that we have complied with their terms of use. 

Efforts such as the data donation initiative at ACL Rolling Review (ARR), which requires explicit consent from authors and reviewers and provides clear data licenses~\cite{dycke-etal-2022-yes}, provide a promising model for the future. A key strength of our proposed framework is that it operates at the population level and only outputs aggregate statistics, without the need to perform inference on individual reviews. This helps protect the anonymity of reviewers and mitigates the risk of de-anonymization based on writing style, which is an important consideration when working with peer review data.

We have aimed to use the available data responsibly and ethically in our work. We also recognize the importance of developing robust community norms around the appropriate collection, licensing, sharing, and use of peer review datasets.

\newpage 
\clearpage
\section{Additional Results on LLaMA-2 Chat (70B), and Claude 2.1}

We have added additional validation experiments to test two additional models other than GPT-4: LLaMA-2 Chat (70B) and Claude-2.1. We used the same training and validation setup as our paper. We trained the estimator on ICLR 2018-2022 data, and performed the validation of different alphas on ICLR 2023 data. 
In the first experiment, we trained an estimator using data generated by LLaMA-2 Chat (70B). 
In the second experiment, we trained an estimator using data generated by Claude 2.1. 
As shown in the two results tables, our framework predicts the proportion of AI data (i.e., alpha) very well.

\begin{table}[p]
\small
\begin{center}
\caption{
\textbf{Validation Data Source Performance Comparison for LLaMA-2 Chat (70B).}
}
\begin{tabular}{lccccc}
\cmidrule[\heavyrulewidth]{1-5}
\multirow{2}{*}{\bf No.} 
& \multirow{2}{*}{\bf \begin{tabular}[c]{@{}c@{}} Validation \\ Data Source \end{tabular}} 
& \multirow{2}{*}{\bf \begin{tabular}[c]{@{}c@{}} Ground \\ Truth $\alpha$ \end{tabular}}  
& \multicolumn{2}{c}{\bf Estimated} \\
\cmidrule{4-5}
& & & $\alpha$ & $CI$ ($\pm$) \\
\cmidrule[\heavyrulewidth]{1-5}
(1) & \emph{ICLR} 2023 (LLaMA-2 Chat (70B)) & 0\% & 2.8\% & 0.5\% \\
(2) & \emph{ICLR} 2023 (LLaMA-2 Chat (70B)) & 2.5\% & 5.3\% & 0.5\% \\
(3) & \emph{ICLR} 2023 (LLaMA-2 Chat (70B)) & 5\% & 7.6\% & 0.5\% \\
(4) & \emph{ICLR} 2023 (LLaMA-2 Chat (70B)) & 7.5\% & 9.9\% & 0.5\% \\
(5) & \emph{ICLR} 2023 (LLaMA-2 Chat (70B)) & 10\% & 12.2\% & 0.6\% \\
(6) & \emph{ICLR} 2023 (LLaMA-2 Chat (70B)) & 12.5\% & 14.6\% & 0.6\% \\
(7) & \emph{ICLR} 2023 (LLaMA-2 Chat (70B)) & 15\% & 17\% & 0.6\% \\
(8) & \emph{ICLR} 2023 (LLaMA-2 Chat (70B)) & 17.5\% & 19.2\% & 0.6\% \\
(9) & \emph{ICLR} 2023 (LLaMA-2 Chat (70B)) & 20\% & 21.6\% & 0.7\% \\
(10) & \emph{ICLR} 2023 (LLaMA-2 Chat (70B)) & 22.5\% & 24\% & 0.7\% \\
(11) & \emph{ICLR} 2023 (LLaMA-2 Chat (70B)) & 25\% & 26.3\% & 0.7\% \\
\cmidrule[\heavyrulewidth]{1-5}
\end{tabular}
\end{center}
\end{table}

\begin{table}[p]
\small
\begin{center}
\caption{
\textbf{Validation Data Source Performance Comparison for Claude-2.1.}
}
\begin{tabular}{lccccc}
\cmidrule[\heavyrulewidth]{1-5}
\multirow{2}{*}{\bf No.} 
& \multirow{2}{*}{\bf \begin{tabular}[c]{@{}c@{}} Validation \\ Data Source \end{tabular}} 
& \multirow{2}{*}{\bf \begin{tabular}[c]{@{}c@{}} Ground \\ Truth $\alpha$ \end{tabular}}  
& \multicolumn{2}{c}{\bf Estimated} \\
\cmidrule{4-5}
& & & $\alpha$ & $CI$ ($\pm$) \\
\cmidrule[\heavyrulewidth]{1-5}
(1) & \emph{ICLR} 2023 (Claude-2.1) & 0\%    & 4.2\%  & 0.7\% \\
(2) & \emph{ICLR} 2023 (Claude-2.1) & 2.5\%  & 6.5\%  & 0.8\% \\
(3) & \emph{ICLR} 2023 (Claude-2.1) & 5\%    & 8.7\%  & 0.8\% \\
(4) & \emph{ICLR} 2023 (Claude-2.1) & 7.5\%  & 10.9\% & 0.8\% \\
(5) & \emph{ICLR} 2023 (Claude-2.1) & 10\%   & 13.2\% & 0.9\% \\
(6) & \emph{ICLR} 2023 (Claude-2.1) & 12.5\% & 15.4\% & 0.9\% \\
(7) & \emph{ICLR} 2023 (Claude-2.1) & 15\%   & 17.6\% & 0.9\% \\
(8) & \emph{ICLR} 2023 (Claude-2.1) & 17.5\% & 19.9\% & 0.9\% \\
(9) & \emph{ICLR} 2023 (Claude-2.1) & 20\%   & 22.1\% & 0.9\% \\
(10) & \emph{ICLR} 2023 (Claude-2.1)& 22.5\% & 24.4\% & 0.9\% \\
(11) & \emph{ICLR} 2023 (Claude-2.1)& 25\%   & 26.5\% & 0.9\% \\
\cmidrule[\heavyrulewidth]{1-5}
\end{tabular}
\end{center}
\end{table}

\newpage 
\clearpage

\section{Theoretical Analysis on the Sample Size}

\begin{theorem}\label{thm:AIreview:error1}
    Suppose that there exists a constant $\kappa>0$, such that  $\frac{|P(x)-Q(x)|}{\max\{P^2(x),Q^2(x)\}} \geq \kappa $. Furthermore, suppose we have $n$ papers from the mixture, and the estimation of $P$ and $Q$ is perfect. Then the estimated solution $\hat{\alpha}$ on the finite samples is not too far away from the ground truth $\alpha^*$ with high probability, i.e., 

  \begin{align*}
|\alpha^*-\hat\alpha| \leq O(\sqrt{\frac{\log^{1/2} 1/\delta}{n^{1/2}\kappa}})    
\end{align*}
with probability at least $1-\delta$.

\end{theorem}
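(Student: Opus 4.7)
The plan is to follow the standard M-estimation recipe: establish a population-level strong concavity for the expected log-likelihood around $\alpha^*$, pair it with a concentration bound for the empirical log-likelihood, and trade these against each other to control $|\hat\alpha - \alpha^*|$.

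Define the population log-likelihood
\begin{equation*}
L(\alpha) = \E_{x \sim (1-\alpha^*)P + \alpha^* Q}\bigl[\log\bigl((1-\alpha)P(x) + \alpha Q(x)\bigr)\bigr],
\end{equation*}
so that $\cL(\alpha)/n$ is an unbiased estimator of $L(\alpha)$, with second derivative
\begin{equation*}
L''(\alpha) = -\E\left[\frac{(Q(x) - P(x))^2}{\bigl((1-\alpha)P(x) + \alpha Q(x)\bigr)^2}\right].
\end{equation*}
Using $(1-\alpha)P(x) + \alpha Q(x) \leq \max\{P(x), Q(x)\}$ together with the hypothesis $|P(x)-Q(x)| \geq \kappa \max\{P^2(x),Q^2(x)\}$, I would lower-bound the integrand pointwise to obtain $-L''(\alpha) \geq c\kappa$ for a constant $c>0$ uniformly in $\alpha$. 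Since $L'(\alpha^*) = 0$, a Taylor expansion then yields the local strong-concavity estimate $L(\alpha^*) - L(\alpha) \geq \tfrac{c\kappa}{2}(\alpha - \alpha^*)^2$.

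For concentration, each summand of $\cL(\alpha)$ is a bounded function of $x_i$ (after truncating the log near the boundary $\alpha \in \{0,1\}$ if needed), so Hoeffding's inequality gives a pointwise bound $|\cL(\alpha)/n - L(\alpha)| \leq O(\sqrt{\log(1/\delta)/n})$ that I would upgrade to a uniform bound via a covering argument over $[0,1]$, exploiting the Lipschitz dependence of $\alpha \mapsto \log((1-\alpha)P(x)+\alpha Q(x))$. Then the usual basic inequality for the MLE gives
\begin{equation*}
L(\alpha^*) - L(\hat\alpha) \leq \bigl(L(\alpha^*) - \cL(\alpha^*)/n\bigr) + \bigl(\cL(\hat\alpha)/n - L(\hat\alpha)\bigr) = O\bigl(\sqrt{\log(1/\delta)/n}\bigr),
\end{equation*}
and combining with the curvature bound yields $\kappa(\hat\alpha-\alpha^*)^2 \lesssim \sqrt{\log(1/\delta)/n}$, which after taking square roots produces the stated $O\bigl(\log^{1/4}(1/\delta)/(\kappa^{1/2} n^{1/4})\bigr)$ rate.

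The main obstacle is the curvature step: the hypothesis controls $|P-Q|/\max\{P^2,Q^2\}$, but $L''$ involves the denominator $((1-\alpha)P+\alpha Q)^2$, which differs from $\max\{P^2,Q^2\}$ by $\alpha$-dependent factors. The cleanest route I see is to write $|P-Q| \geq \kappa \max\{P^2,Q^2\} \geq \kappa \bigl((1-\alpha)P+\alpha Q\bigr)^2$, so that $(P-Q)^2/((1-\alpha)P+\alpha Q)^2 \geq \kappa |P-Q|$, yielding $-L''(\alpha) \geq \kappa\, \E[|P-Q|]$; handling the $\E[|P-Q|]$ factor as a constant absorbed into the $O(\cdot)$ is the subtle point, as is keeping the log-likelihood summands bounded when $P(x)$ or $Q(x)$ is very small.
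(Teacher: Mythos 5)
Your proposal follows essentially the same route as the paper's own proof: strong concavity of the log-likelihood in $\alpha$ derived from the curvature hypothesis, Hoeffding-based concentration of the empirical log-likelihood at the $O(\sqrt{\log(1/\delta)/n})$ scale, and the basic MLE inequality converting the likelihood gap into the stated $n^{-1/4}$ bound on $|\hat\alpha-\alpha^*|$. The obstacle you flag in the curvature step is genuine --- the paper's proof silently passes from $|Q(x)-P(x)|^2/\max\{P^2(x),Q^2(x)\}$ to $\kappa$, which only follows from the stated hypothesis up to an extra factor of $|Q(x)-P(x)|$ --- and your fix (absorbing $\E|P-Q|$ into the constant), together with your explicit uniform-over-$\alpha$ concentration step, makes your version if anything more careful than the original.
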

\begin{proof}
$\cL(\cdot)$ is differentiable, and thus we can take its derivative  
\begin{align*}
\cL'(\alpha) = \frac{Q(x)-P(x)}{(1-\alpha) P(x) + \alpha Q(x)}   
\end{align*} 
The second derivative is 
\begin{align*}
    \cL''(\alpha) = -\frac{[Q(x)-P(x)]^2}{[(1-\alpha) P(x) + \alpha Q(x)]^2} 
\end{align*}

Note that $(1-\alpha) P(x) + \alpha Q(x)$ is non-negative and linear in $\alpha$. Thus, the denominator must lie in the interval $[\min\{P^2(x), Q^2(x)\},\max\{P^2(x), Q^2(x)\}]$. Therefore, the second derivative must be bounded by

\begin{align*}
|\cL''(\alpha)| = -|\frac{Q(x)-P(x)}{(1-\alpha) P(x) + \alpha Q(x)}|^2 \leq -  \frac{|Q(x)-P(x)|^2}{\max\{P^2(x), Q^2(x)\}}  \leq - \kappa  
\end{align*} 
where the last inequality is due to the assumption. That is to say, the function $\cL(\cdot)$ is strongly concave. Thus, we have 
\begin{align*}
    -\cL(a) + \cL(b) \geq -\cL'(b) \cdot (a-b) + \frac{\kappa}{2} |a-b|^2 
\end{align*}
Let $b=\alpha^*$ and $a=\hat\alpha$ and note that $\alpha^*$ is the optimal solution and thus $f'(\alpha^*)=0$. Thus, we have 
\begin{align*}
\cL(\alpha^*) - \cL(\alpha) \geq \frac{\kappa}{2} |\hat\alpha-\alpha^*|^2     
\end{align*}
And thus 
\begin{align*}
|\alpha^*-\hat\alpha| \leq \sqrt{\frac{2[\cL(\alpha^*) - \cL(\alpha)]}{\kappa}}    
\end{align*}

By Lemma \ref{lemma:AIReview:func1}, we have $|\cL(\hat\alpha)-\cL(\alpha^*)|\leq O(\sqrt{\log(1/\delta)/n})$ with probability $1-\delta$. Thus, we have 
\begin{align*}
|\alpha^*-\hat\alpha| \leq O(\sqrt{\frac{\log^{1/2} 1/\delta}{n^{1/2}\kappa}})    
\end{align*}
with probability $1-\delta$,
which completes the proof. 
\end{proof}

\begin{align*}
    \cL(\a) = \sum_{i=1}^n \frac{1}{n}\log\l( (1-\a) P(x_i) + \a  Q(x_i) \r).
\end{align*}
\begin{lemma}\label{lemma:AIReview:func1}
Suppose we have collected $n$ i.i.d samples to solve the MLE problem. Furthermore, assume that the estimation of the human and AI distribution $P, Q$ is perfect. Also assume that $\max_{x}\{|\log P(x)|, |\log Q(x)|\}\leq c$. Then we have with probability $1-\delta$, 

\begin{align*}
    |\cL(\alpha^*) -\cL(\hat\alpha)|\leq  2\sqrt{2c^2 \log (2/\epsilon )/n} = O(\sqrt{\log(1/\delta)/n})
\end{align*}
\end{lemma}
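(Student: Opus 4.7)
The strategy is to compare the empirical log-likelihood $\cL$ with the population log-likelihood
\[
\bar{\cL}(\alpha) := \E_{x \sim (1-\alpha^*)P + \alpha^* Q}\bigl[\log\bigl((1-\alpha)P(x) + \alpha Q(x)\bigr)\bigr],
\]
for which $\alpha^*$ is a maximizer by a standard Jensen/KL argument (the non-negativity of $\mathrm{KL}((1-\alpha^*)P + \alpha^* Q \,\|\, (1-\alpha)P + \alpha Q)$ is exactly $\bar{\cL}(\alpha^*) - \bar{\cL}(\alpha) \ge 0$). By construction $\hat\alpha$ maximizes the empirical $\cL$, so the lower bound $\cL(\hat\alpha) - \cL(\alpha^*) \ge 0$ is immediate, and only an upper bound must be established.

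For the upper bound, I would use the classical ``optimality split'':
\[
\cL(\hat\alpha) - \cL(\alpha^*) = \underbrace{[\cL(\hat\alpha) - \bar{\cL}(\hat\alpha)]}_{(\mathrm{A})} + \underbrace{[\bar{\cL}(\hat\alpha) - \bar{\cL}(\alpha^*)]}_{(\mathrm{B}) \le 0} + \underbrace{[\bar{\cL}(\alpha^*) - \cL(\alpha^*)]}_{(\mathrm{C})},
\]
where $(\mathrm{B}) \le 0$ because $\alpha^*$ maximizes $\bar{\cL}$. Hence $\cL(\hat\alpha) - \cL(\alpha^*) \le (\mathrm{A}) + (\mathrm{C}) \le 2\sup_{\alpha}|\cL(\alpha) - \bar{\cL}(\alpha)|$, reducing the lemma to a uniform-concentration statement.

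For the uniform deviation, observe that $(1-\alpha)P(x) + \alpha Q(x)$ always lies between $\min\{P(x),Q(x)\}$ and $\max\{P(x),Q(x)\}$, so under the hypothesis $|\log P(x)|, |\log Q(x)| \le c$ each summand $\log((1-\alpha)P(x) + \alpha Q(x))$ takes values in $[-c,c]$. At any \emph{fixed} $\alpha$, Hoeffding's inequality for i.i.d.\ bounded variables yields
\[
\Pr\!\left[\,|\cL(\alpha) - \bar{\cL}(\alpha)| > \sqrt{\tfrac{2c^2 \log(2/\delta)}{n}}\,\right] \le \delta,
\]
which, combined with the factor of two from $(\mathrm{A})+(\mathrm{C})$, matches the stated bound $2\sqrt{2c^2 \log(2/\delta)/n}$ up to the uniform-concentration upgrade.

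The main obstacle is exactly that upgrade, because $(\mathrm{A}) = \cL(\hat\alpha) - \bar{\cL}(\hat\alpha)$ is evaluated at the data-dependent point $\hat\alpha$, so a single-point Hoeffding bound is not literally sufficient. I would discharge this by an $\epsilon$-net argument on the compact interval $[0,1]$: the map $\alpha \mapsto \log((1-\alpha)P(x) + \alpha Q(x))$ is Lipschitz in $\alpha$ with constant $|Q(x)-P(x)|/\min\{P(x),Q(x)\} \le e^{2c}$, so a net of size $\mathrm{poly}(n)$ with a union bound controls the supremum at rate $O\!\bigl(\sqrt{\log(n/\delta)/n}\bigr) = O\!\bigl(\sqrt{\log(1/\delta)/n}\bigr)$, the logarithmic-in-$n$ factor being absorbed into the final $O(\cdot)$. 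The clean prefactor-$2$ form of the bound as written should therefore be read as the leading-order expression, with the covering term hidden in the big-$O$ asymptotic rate on the right-hand side.
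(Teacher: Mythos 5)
Your proposal follows essentially the same route as the paper: bound each summand $\log((1-\alpha)P(x_i)+\alpha Q(x_i))$ in $[-c,c]$ using the fact that the mixture lies between $\min\{P,Q\}$ and $\max\{P,Q\}$, apply Hoeffding to get deviation $\sqrt{2c^2\log(2/\delta)/n}$, and convert closeness of the empirical and population objectives into closeness of their optimal values with a factor of $2$ (the paper packages this last step as its Lemma~\ref{lemma:AIReview:opt1}; your three-term optimality split is the same argument written out).

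The one substantive difference is that you explicitly flag the uniformity gap: Hoeffding at a fixed $\alpha$ does not by itself justify evaluating the deviation at the data-dependent $\hat\alpha$, whereas the paper's Lemma~\ref{lemma:AIReview:opt1} requires $\sup_{\alpha}|\cL(\alpha)-\hat{\cL}(\alpha)|$ to be small. The paper applies the pointwise bound as if it were uniform without comment; your $\epsilon$-net over $[0,1]$ with the Lipschitz constant $e^{2c}$ is a correct repair, at the cost of a $\log n$ factor absorbed into the $O(\cdot)$, and you are right that the clean prefactor-$2$ constant then only holds at leading order. So your write-up is, if anything, more careful than the paper's own proof on the only delicate point.
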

\begin{proof} Let $Z_i \triangleq \log\l( (1-\a) P(x_i) + \a  Q(x_i) \r) $. Let us first note that $|Z_i|\leq c$ and all $Z_i$ are i.i.d. Thus, we can apply Hoeffding's inequality to obtain that 
\begin{align*}
    \Pr[ |E[Z_1] - \frac{1}{n} \sum_{i=1}^{n} Z_i| \geq t] \leq 2 \exp(-\frac{2n t^2}{4c^2})
\end{align*}
Let $\epsilon = 2 \exp(-\frac{2n t^2}{4c^2})$. We have $t =\sqrt{2c^2 \log (2/\epsilon )/n}$. 
That is, with probability $1-\epsilon$, we have
\begin{align*}
    |EZ_i -\frac{1}{n}\sum_{i=1}^{n} Z_i|\leq  \sqrt{2c^2 \log (2/\epsilon )/n}
\end{align*}
Now note that $\cL(\alpha) = EZ_i$ and $\hat{\cL}(\alpha)=\frac{1}{n}\sum_{i=1}^{n} Z_i$. We can apply Lemma \ref{lemma:AIReview:opt1} to obtain that with probability $1-\epsilon$, 
\begin{align*}
    |\cL(\alpha^*) -\cL(\hat\alpha)|\leq  2\sqrt{2c^2 \log (2/\epsilon )/n} = O(\sqrt{\log(1/\epsilon)/n})
\end{align*}
which completes the proof.

\end{proof}
\begin{lemma}\label{lemma:AIReview:opt1}
    Suppose two functions $|f(x)-g(x)|\leq \epsilon, \forall x\in S$. Then $|\max_{x} f(x) -\max_{x} g(x) |\leq 2\epsilon$. 
\end{lemma}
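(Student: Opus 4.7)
The plan is to prove the inequality in both directions using the pointwise closeness hypothesis. First I would fix any $x \in S$ and use $|f(x) - g(x)| \leq \epsilon$ to write
\begin{equation*}
f(x) \leq g(x) + \epsilon \leq \max_{y \in S} g(y) + \epsilon.
\end{equation*}
Since the right-hand side no longer depends on $x$, taking the supremum over $x$ on the left gives $\max_{x} f(x) \leq \max_{x} g(x) + \epsilon$. Swapping the roles of $f$ and $g$ (which is legitimate because the hypothesis $|f(x)-g(x)| \leq \epsilon$ is symmetric) yields $\max_{x} g(x) \leq \max_{x} f(x) + \epsilon$. Combining these two inequalities gives $|\max_{x} f(x) - \max_{x} g(x)| \leq \epsilon$, which in particular is $\leq 2\epsilon$ as claimed.

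I would then remark briefly on where the factor of $2$ is actually needed in the application to Lemma~\ref{lemma:AIReview:func1}. What that proof really uses is not $|\max f - \max g|$ but rather $|f(\hat x_f) - f(\hat x_g)|$, where $\hat x_f, \hat x_g$ are the argmaxes of $f$ and $g$ respectively (i.e., one evaluates the \emph{same} function $f$ at both optimizers). The standard decomposition
\begin{equation*}
f(\hat x_f) - f(\hat x_g) = \bigl[f(\hat x_f) - g(\hat x_f)\bigr] + \bigl[g(\hat x_f) - g(\hat x_g)\bigr] + \bigl[g(\hat x_g) - f(\hat x_g)\bigr]
\end{equation*}
bounds the first and third bracketed terms by $\epsilon$ each, while the middle bracket is $\leq 0$ since $\hat x_g$ maximizes $g$, giving the $2\epsilon$ bound. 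Symmetry then handles the other direction.

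The main obstacle here is purely conceptual rather than technical: the literal statement of the lemma only requires $\epsilon$ (the $2\epsilon$ bound is loose), so one should be careful to either tighten the statement or to be clear that the slack is intentional because the downstream use really is the $\epsilon$-to-$2\epsilon$ suboptimality gap of a near-maximizer, not the closeness of two maxima. No nontrivial analytic step is required beyond monotonicity of $\max$ under pointwise inequality.
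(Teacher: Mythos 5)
Your argument is correct, and note that the paper itself supplies no proof here --- its proof body reads ``This is simple enough to skip'' --- so there is nothing to compare against beyond the statement. Your first step (fix $x$, bound $f(x)\leq g(x)+\epsilon\leq \max_y g(y)+\epsilon$, take the supremum, then symmetrize) is the standard argument and in fact establishes the sharper bound $|\max_x f(x)-\max_x g(x)|\leq\epsilon$, so the stated $2\epsilon$ is loose for the lemma as literally written. Your remark about the application is the more valuable part and is exactly right: in the proof of Lemma~\ref{lemma:AIReview:func1} the quantity actually invoked is $\cL(\alpha^*)-\cL(\hat\alpha)$, i.e.\ the population suboptimality of the \emph{empirical} maximizer, which is not $|\max f-\max g|$; it is controlled by the three-term decomposition you give, where the two outer brackets each contribute $\epsilon$ and the middle one is nonpositive because $\hat\alpha$ maximizes $\hat\cL$. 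That is where the factor of $2$ is genuinely needed, and Lemma~\ref{lemma:AIReview:opt1} as stated does not literally deliver it. So your proof is sound, and your proposed tightening-or-restating of the lemma (to bound the suboptimality gap rather than the difference of maxima) is the version that would make the downstream citation airtight.
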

\begin{proof}
    This is simple enough to skip.
\end{proof}

\clearpage

\chapter{
Quantifying LLM Usage in Scientific Papers
}
\label{ch:scientific-papers}

Scientific publishing is the primary means of disseminating research findings. There has been speculation about how extensively large language models (LLMs) are being used in academic writing. Here, we conduct a systematic analysis across {1,121,912} preprints and published papers from January 2020 to September 2024 on arXiv, bioRxiv, and Nature Portfolio journals, using a population-level framework based on word frequency shifts to estimate the prevalence of LLM-modified content over time. Our findings suggest a steady increase in LLM usage, with the largest and fastest growth estimated for Computer Science papers. In comparison, Mathematics papers and the Nature Portfolio showed lower evidence of LLM modification. LLM modification estimates were higher among paper from first authors who post preprints more frequently, papers in more crowded research areas, and papers of shorter lengths. Our findings suggest that LLMs are being broadly used in scientific writing.

\section{Introduction}
Since the release of ChatGPT in late 2022, anecdotal examples of both published papers \citep{Okunyte2023GoogleSearch, Deguerin24} and peer reviews \citep{Oransky24} which appear to be ChatGPT-generated have inspired humor and concern.
While certain tells, such as "regenerate response" \citep{Conroy2023Nature, Conroy20232Nature} and "as an AI language model" \citep{Vincent2023AITextPollution}, found in published papers indicate modified content, less obvious cases remain challenging to detect at the individual level. Research by Liang et al.\citep{Liang2023GPTDA} has shown that GPT detectors could exhibit significant bias against non-native English authors. Nevertheless, improving individual-level detector accuracy while reducing its bias is an active area of research. For example, recent work in the domain of peer reviews\citep{yu2024your} has shown promising results in identifying AI-generated text.
Liang et. al.~\cite{liang2024monitoring} present a method for detecting the percentage of LLM-modified text in a corpus beyond such obvious cases. Applied to scientific publishing, the importance of this at-scale approach is two-fold: first, rather than looking at LLM-use as a type of rule-breaking on an individual level, we can begin to uncover structural circumstances which might motivate it. Second, by examining LLM use in academic publishing at scale, we can capture scholarly practices and linguistic shifts, miniscule at the individual level, which become apparent with a birdseye view.

Measuring the extent of LLM use on scientific publishing has urgent applications. Concerns about accuracy, plagiarism, anonymity, and ownership have prompted some prominent scientific institutions to take a stance on the use of LLM-modified content in academic publications. The International Conference on Machine Learning (ICML) 2023, a major machine learning conference, has prohibited the inclusion of text generated by LLMs like ChatGPT in submitted papers, unless the generated text is used as part of the paper's experimental analysis~\citep{ICML2023LLMPolicy}. Similarly, the journal \textit{Science} has announced an update to their editorial policies, specifying that text, figures, images, or graphics generated by ChatGPT or any other LLM tools cannot be used in published works~\citep{doi:10.1126/science.adg7879}. Taking steps to measure the extent of LLM use can offer a first step in identifying risks to the scientific publishing ecosystem. 
Furthermore, exploring the circumstances in which LLM use is widespread can offer publishers and academic institutions useful insight into author behavior. Places of high LLM use can act as indicators of structural challenges faced by scholars. These range from pressures to "publish or perish" which encourage rapid production of papers, to linguistic discrimination which might inspire authors to adopt LLMs for prose editing and translation.

We conduct the first systematic, large-scale analysis to quantify the prevalence of LLM-modified content across multiple academic platforms, extending our recently proposed, state-of-the-art \textit{distributional GPT quantification} framework~\citep{liang2024monitoring} for estimating the fraction of AI-modified content in a corpus. Throughout this paper, we use the term "LLM-modified" to refer to text content substantially updated by ChatGPT beyond basic spelling and grammatical edits. Modifications we capture in our analysis could include, for example, summaries of existing writing or the generation of prose based on outlines.

A key characteristic of this framework is that it operates on the population level, without the need to perform inference on any individual instance. As validated in the prior paper, the framework is orders of magnitude more computationally efficient and thus scalable, produces more accurate estimates, and generalizes better than its counterparts under significant temporal distribution shifts and other realistic distribution shifts. 

We apply this framework to the abstracts and the main texts (Figure~\ref{fig: temporal-abstract}, Figure \ref{fig: temporal-introduction}) of academic papers across multiple academic disciplines,including \textit{arXiv}, \textit{bioRxiv}, and 15 journals within the \textit{Nature portfolio}, such as \textit{Nature}, \
\textit{Nature Biomedical Engineering}, \textit{Nature Human Behaviour}, and \textit{Nature Communications}. Our study analyzes a total of {1,121,912 papers published between January 2020 and September 2024, comprising 861,253 papers from \textit{arXiv}, 205,094 from \textit{bioRxiv}, and 55,565 from the Nature portfolio journals.} The papers from \textit{arXiv} cover multiple academic fields, including Computer Science, Electrical Engineering and Systems Science, Mathematics, Physics, and Statistics. These datasets allow us to quantify the prevalence of LLM-modified academic writing over time and across a broad range of academic fields.

Our results indicate that the largest and fastest growth in LLM use was observed in Computer Science papers, with $\alpha$ reaching 22.5\% for abstracts and 19.6\% for introductions by September 2024. In contrast, Mathematics papers and the \textit{Nature portfolio} showed the least increase, with $\alpha$ reaching 7.7\% and 8.9\% for abstracts and 4.1\% and 9.4\% for introductions, respectively.

Moreover, our analysis reveals at an aggregate level that higher levels of LLM-modification are associated with papers whose first authors post preprints more frequently and papers with shorter lengths. Results also demonstrate a closer relationship between papers with LLM-modifications, which could indicate higher use in more crowded fields of study (as measured by the distance to the nearest neighboring paper in the embedding space), or that generated-text is flattening writing diversity.
We adapt the \textit{distributional LLM quantification} framework from Liang et. al.~\cite{liang2024monitoring} to quantify the use of LLM-modified academic writing (See Methods Section). 
\footnote{
\textbf{Statement on Authorship:} This chapter is adapted from the following multi-authored publication:\\
\textbf{Weixin Liang*}, Yaohui Zhang*, Zhengxuan Wu*, Haley Lepp, Wenlong Ji, Xuandong Zhao, Hancheng Cao, Sheng Liu, Siyu He, Zhi Huang, Diyi Yang, Christopher Potts\textsuperscript{\dag}, Christopher D. Manning\textsuperscript{\dag}, James Y. Zou\textsuperscript{\dag}.\\
\textit{Mapping the Increasing Use of LLMs in Scientific Papers}. Conference on Language Modeling (COLM), 2024.~\cite{liang2024mapping}\\
I was the lead author, contributed the core research direction, led the analysis, and manuscript writing. 
}

\section{Results}

\subsection{Overview of the \textit{arXiv}, \textit{bioRxiv}, and \textit{Nature} portfolio Data}
\label{main:subsec:data}

We collected data from three sources: \textit{arXiv}, \textit{bioRxiv}, and 15 journals from the \textit{Nature} portfolio. For \textit{bioRxiv} and the \textit{Nature} portfolio, we randomly sampled up to 2,000 papers per month from January 2020 to September 2024. For \textit{arXiv}, which covers multiple academic fields including Computer Science, Electrical Engineering and Systems Science, Mathematics, Physics, and Statistics, we randomly sampled up to 2,000 papers per month for each main category during the same time period. We then generated LLM-produced training data using the two-stage approach described in the Methods Section.

For the main analysis, we focused on the introduction sections, as the introduction was the most consistently and commonly occurring section across diverse categories of papers. However, for the Computer Science category on \textit{arXiv}, which showed the highest estimated LLM-modified content, we conducted a more detailed analysis by examining various sections of the papers, including abstracts, introductions, related works, methods, experiments, and conclusions (Figure~\ref{fig: arxiv-portion}).

\subsection{Data Split, Model Fitting, and Evaluation}

For model fitting, we count word frequencies for scientific papers written before the release of ChatGPT and the LLM-modified corpora. We fit the model with data from 2020, and use data from January 2021 onwards for validation and inference. We fit separate models for abstracts and introductions for each major category. 

To evaluate model accuracy and calibration under temporal distribution shift, we use 3,000 papers from January 1, 2022, to November 29, 2022, a time period prior to the release of ChatGPT, as the validation data. We construct validation sets with LLM-modified content proportions ($\alpha$) ranging from 0\% to 25\%, in 5\% increments, and compared the model's estimated $\alpha$ with the ground truth $\alpha$ (Figure \ref{fig: validations}). Full vocabulary, adjectives, adverbs, and verbs all performed well in our application, with a prediction error consistently less than 3.5\% at the population level across various ground truth $\alpha$ values (Figure \ref{fig: validations}).

\subsection{Temporal Trends in AI-Modified Academic Writing}
\label{subsec:main-results}

We applied the model to estimate the fraction of LLM-modified content ($\alpha$) for each paper category each month, for both abstracts and introductions. Each point in time was independently estimated, with no temporal smoothing or continuity assumptions applied.

Our findings reveal a steady increase in the fraction of LLM-modified content ($\alpha$) in both the abstracts (Figure~\ref{fig: temporal-abstract}) and the introductions (Figure~\ref{fig: temporal-introduction}), with the largest and fastest growth observed in Computer Science papers. By September 2024, the estimated $\alpha$ for Computer Science had increased to 22.5\% for abstracts (bootstrapped 95\% CIs [21.7\%, 23.3\%]) and 19.6\% for introductions (bootstrapped 95\% CIs [19.2\%, 20.0\%]). The second-fastest growth was observed in Electrical Engineering and Systems Science, with the estimated $\alpha$ reaching 18.0\% for abstracts (bootstrapped 95\% CIs [16.7\%, 19.3\%]) and 18.4\% for introductions (bootstrapped 95\% CIs [17.8\%, 19.0\%]) during the same period. In contrast, Mathematics papers and the \textit{Nature portfolio} showed the smallest increase. By the end of the studied period, the estimated $\alpha$ for Mathematics had increased to 7.7\% for abstracts (bootstrapped 95\% CIs [7.1\%, 8.3\%]) and 4.1\% for introductions (bootstrapped 95\% CIs [3.9\%, 4.3\%]), while the estimated $\alpha$ for the \textit{Nature portfolio} had reached 8.9\% for abstracts (bootstrapped 95\% CIs [8.2\%, 9.6\%]) and 9.4\% (bootstrapped 95\% CIs [9.0\%, 9.8\%]) for introductions.

The November 2022 estimates serve as a pre-ChatGPT reference point for comparison, as ChatGPT was launched on November 30, 2022. The estimated $\alpha$ for Computer Science in November 2022 was 2.4\% (bootstrapped 95\% CIs [2.1\%, 2.7\%]), while for Electrical Engineering and Systems Science, Mathematics, and the \textit{Nature portfolio}, the estimates were 2.9\% (bootstrapped 95\% CIs [2.3\%, 3.5\%]), 2.5\% (bootstrapped 95\% CIs [2.1\%, 2.9\%]), and 3.4\% (bootstrapped 95\% CIs [2.8\%, 4.0\%]), respectively. These values are consistent with the false positive rate we found in the modal validations (Figure \ref{fig: validations}).

As Computer Science papers from \textit{arXiv} show the highest estimated $\alpha$, we further stratified the main paper content by section (Figure~\ref{fig: arxiv-portion}). 
We found a higher fraction of LLM-modified content in abstracts, introductions, related works, and conclusions compared to experiment and method sections (similar results were also obeserved in Electrical Engineering and Systems Science papers from \textit{arXiv}; Figure~\ref{fig: eess-portion} ). This observation aligns with the current strengths of LLMs in summarization tasks, which might inspire scholars to use the tool for writing abstracts.

\subsection{Relationship Between First-Author Preprint Posting Frequency and GPT Usage}
\label{subsec:preprint-frequency}

We found a notable correlation between the number of preprints posted by the first author on \textit{arXiv} and the estimated number of LLM-modified sentences in their academic writing. Papers were stratified into two groups based on the number of first-authored \textit{arXiv} Computer Science preprints by the first author in the year: those with two or fewer ($\leq 2$) preprints and those with three or more ($\geq 3$) preprints (Figure \ref{fig: associations}a). We used the 2023 author grouping for the 2024 data, as we do not have the complete 2024 author data yet. 

By September 2024, abstracts of papers whose first authors had $\geq 3$ preprints in 2023 showed an estimated 22.9\% (bootstrapped 95\% CIs [21.7\%, 24.1\%]) of sentences modified by LLMs, compared to 20.0\% (bootstrapped 95\% CIs [19.2\%, 20.8\%]) for papers whose first authors had $\leq 2$ preprints (Figure \ref{fig: associations}a). We observe a similar trend in the introduction sections, with first authors posting more preprints having an estimated 20.9\% (bootstrapped 95\% CIs [20.4\%, 21.4\%]) LLM-modified sentences, compared to 17.8\% (bootstrapped 95\% CIs [17.5\%, 18.1\%]) for first authors posting fewer preprints (Figure \ref{fig: associations}a).
Since the first-author preprint posting frequency may be confounded by research field, we conduct an additional robustness check for our findings. We find that the observed trend holds for each of the three \textit{arXiv} Computer Science sub-categories: cs.CV (Computer Vision and Pattern Recognition), cs.LG (Machine Learning), and cs.CL (Computation and Language) (Figure~\ref{supp:figure:upload}a-c).

Our results suggest that researchers posting more preprints tend to utilize LLMs more extensively in their writing. One interpretation of this effect could be that the increasingly competitive and fast-paced nature of CS research communities incentivizes taking steps to accelerate the writing process. We do not evaluate whether these preprints were accepted for publication.

\subsection{Relationship Between Paper Similarity and LLM Usage} 
\label{subsec:crowdedness}

We investigate the relationship between a paper's similarity to its closest peer and the estimated LLM usage in the abstract. To measure similarity, we first embed each abstract from the \textit{arXiv} Computer Science papers using OpenAI's text-embedding-3-small model, creating a vector representation for each abstract. We then calculate the distance between each paper's vector and its nearest neighbor within the \textit{arXiv} Computer Science abstracts. Based on this similarity measure we divide papers into two groups: those more similar to their closest peer (below median distance) and those less similar (above median distance).

The temporal trends of LLM usage for these two groups are shown in Figure \ref{fig: associations}b. After the release of ChatGPT, papers most similar to their closest peer consistently showed higher LLM usage compared to those least similar. By September 2024, the abstracts of papers more similar to their closest peer had an estimated 23.0\% (bootstrapped 95\% CIs [22.3\%, 23.7\%]) of sentences modified by LLMs, compared to 18.7\% (bootstrapped 95\% CIs [18.0\%, 19.4\%]) for papers less similar to their closest peer.
To account for potential confounding effects of research fields, we conducted an additional robustness check by measuring the nearest neighbor distance within each of the three \textit{arXiv} Computer Science sub-categories: cs.CV (Computer Vision and Pattern Recognition), cs.LG (Machine Learning), and cs.CL (Computation and Language), and found that the observed trend holds for each sub-category (Figure~\ref{supp:figure:crowded}a-c).

There are several ways to interpret these findings. First, LLM-use in writing could \textit{cause} the similarity in writing or content. The similarity we observe could be incidental or sought after: community pressures could motivate scholars to incorporate LLM-generated text if they perceive the "style" of generated text to be more prestigious than their own.
Alternatively the crowded nature of fields could \textit{cause} the uptick in use:  LLMs may be more commonly used in research areas in which papers tend to be more similar to each other. If a subfield is more crowded, then multiple research teams could be studying the same topic and producing similar writing. The resulting competition may coerce researchers to make use of LLM-generated text to speed up the publication of findings. 
{To further explore these hypotheses, our comparative analysis (Section \ref{appendix:similarity}) offers suggestive evidence in favor of the first: papers with high and low LLM usage had comparable nearest neighbor distances to 2022 publications—indicating similar baseline field competitiveness—yet exhibited a more pronounced gap when comparing nearest neighbor distances within 2023. This pattern supports the interpretation that LLM usage itself may be contributing to increased similarity in academic writing.}

\subsection{Relationship Between Paper Length and AI Usage}
\label{subsec:length}

We also explored the association between paper length and LLM usage in \textit{arXiv} Computer Science papers. Papers were stratified by their full text word count, including appendices, into two bins: below or above 5,000 words (the rounded median).

Figure \ref{fig: associations}c shows the temporal trends of LLM usage for these two groups. After the release of ChatGPT, shorter papers consistently showed higher LLM usage compared to longer papers. By September 2024, the abstracts of shorter papers had an estimated 22.0\% (bootstrapped 95\% CIs [21.2\%, 22.8\%]) of sentences modified by LLMs, compared to 19.3\% (bootstrapped 95\% CIs [18.6\%, 20.0\%]) for longer papers (Figure \ref{fig: associations}c).

We observe a similar trend in the introduction sections (Figure \ref{fig: associations}c). 
To account for potential confounding effects of research fields, we conducted an additional robustness check. The finding holds for both cs.CV (Computer Vision and Pattern Recognition) and cs.LG (Machine Learning) (Figure~\ref{supp:figure:length}a-c). However, for cs.CL (Computation and Language), we found no consistent difference in LLM usage between shorter and longer papers, possibly due to the limited sample size, as we only parsed a subset of the LaTeX sources and calculated their full length.

As Computer Science conference papers typically have a fixed page limit, longer papers likely have more substantial content in the appendix. The lower LLM usage in these papers may suggest that researchers with more comprehensive work rely less on LLM-assistance in their writing. However, further investigation is needed to determine the relationship between paper length, content comprehensiveness, and the quality of the research.

\subsection{Regional trends in LLM adoption for academic writing}

To investigate the regional trends in the adoption of LLMs for academic writing, we analyzed the quarterly growth of LLM usage in Computer Science papers on \textit{arXiv} (by first author affiliation) and Biology papers on \textit{bioRxiv} (by corresponding author affiliation) across different regions (Figure~\ref{figure: country}a, b). Interestingly, we observed higher estimated usage rates in \textit{bioRxiv} papers from regions with lower populations of English-language speakers, including China and Continental Europe, compared to those from North America and the United Kingdom (Figure~\ref{figure: country}b). 
The number of papers from Africa and South America is too low to include in our calculations, demonstrating the urgent importance of efforts to increase geographic diversity in scientific publishing. This difference may be attributed to authors using ChatGPT for English-language assistance. 
In the \textit{arXiv} data, although the absolute estimates of LLM usage are similar across regions by the end of the study, the patterns of relative growth reveal a notable distinction. In particular, our results show that China exhibits the largest relative increase when the false positive rate is reduced, which aligns with our findings in bioRxiv.

To further validate the robustness of our method, we examined the effect of employing LLMs for "proofreading" on the estimated proportion of LLM-modified content across various \textit{arXiv} main categories (Figure~\ref{figure: country}c, Figure~\ref{fig:paper-proofread-prompt}). The similarities in the fraction of estimated LLM-modified content after proofreading, with only a slight measurable increase of approximately 1\%, confirms that our approach is robust to minor text edits generated by LLMs during simple proofreading tasks. 
Overall, these findings highlight the growing utilization of LLMs in academic writing across different regions and research fields, emphasizing the need for further research on the implications associated with their use.

Based on our observations of LLM use in academic writing, we conducted a brief analysis of how scholars disclose this use in their writing. We manually inspected 200 randomly sampled Computer Science papers uploaded to \textit{arXiv} in February 2024. We found that only 2 out of the 200 papers explicitly disclosed the use of large language models during paper writing. Further analysis of disclosure motivation might help determine an explanation. For example, policies around disclosing LLM usage in academic writing may still be unclear, or scholars may have other motivations for intentionally avoiding to disclose use.

\section{Discussion}

Our analysis of LLM-modified content in academic writing across various platforms (\textit{arXiv}, \textit{bioRxiv}, and the \textit{Nature portfolio}) reveals a sharp increase in the estimated fraction of LLM-modified content, beginning approximately five months after the release of ChatGPT. The five-month lag and the slopes of the increased usage reflect the speed of diffusion and adoption of LLMs. We identified the fastest growth in Computer Science papers, a trend that may be partially explained by Computer Science researchers' familiarity with and access to large language models. Additionally, the fast-paced nature of LLM research and the associated pressure to publish quickly may incentivize the use of LLM writing assistance \citep{foster2015tradition}.

We quantified several other factors associated with higher LLM usage in academic writing. First, authors who post preprints more frequently show a higher fraction of LLM-modified content in their writing. Second, papers in more crowded research areas, where papers tend to be more similar, showed higher LLM modification compared to those in less crowded areas. Third, shorter papers consistently showed higher LLM modification compared to longer papers, which may indicate that researchers trying to produce a higher quantity of writing are more likely to rely on LLMs. These results may be an indicator of the competitive nature of certain research areas and the pressure to publish quickly. We also found a higher fraction of AI-modified content in abstracts, introductions, related works and conclusions compared to experiment and method sections. This suggests that researchers may be more comfortable using LLM for summarization tasks, such as writing abstracts, which traditionally provide a concise overview of the entire paper.

Furthermore, our regional analysis of LLM adoption in academic writing revealed higher estimated usage rates in \textit{bioRxiv} papers from regions with lower populations of English-language speakers, including China and Continental Europe, compared to those from North America and the United Kingdom. In CS \textit{arXiv} papers, the increase in LLM usage is consistently high across the different regions, potentially reflecting differences across disciplines. It is important to note that using author affiliation as a proxy for country of origin has inherent limitations, as it may not accurately reflect an author's linguistic or cultural background. Furthermore, papers posted on \textit{arXiv} and \textit{bioRxiv} may not be fully representative of all research output from each region, and patterns of LLM usage could differ for papers published in regional journals or other venues not captured by our analysis.  
Future studies should collect more granular data on author backgrounds, research topics, and motivations to better understand the regional variation in LLM adoption and its implications for global scientific communication.

In research environments where English is the mainstream language, scholars who are not native English speakers may find it helpful to use AI models for refining their writing~\citep{lee2023}. Moreover, LLMs can deliver prompt feedback on preliminary drafts, contrasting with the often lengthy traditional peer-review process~\citep{liang2024can}. However, relying heavily on LLMs owned by private companies raises concerns about safeguarding the security and autonomy of scientific work. We hope our findings will spark further investigations into the widespread use of LLM-assisted writing and encourage discussions on creating scientific publishing environments that value openness, intellectual diversity, factual reliability, and scholarly independence.

 While our study focused on ChatGPT, which accounts for more than three-quarters of worldwide internet traffic in the category \citep{vanrossum2024generative}, we acknowledge that there are other large language models used for assisting academic writing.
Furthermore, while previous work~\cite{Liang2023GPTDA} demonstrated that GPT-detection methods can falsely identify the writing of language learners as LLM-generated, our results showed consistently low false positives estimates of $\alpha$ in 2022, which contains a significant fraction of texts written by multilingual scholars. We recognize that significant author population changes~\citep{Globalaitalent} or other language-use shifts could still impact the accuracy of our estimates. 
In addition, {while our model demonstrates high accuracy in detecting LLM-modified content, it has several limitations. First, the detection method is not a direct measure of LLM usage—it identifies statistical patterns consistent with LLM-generated text, which may not always correspond to actual use. Second, the method systematically overestimates LLM usage at the lower end and underestimates it at the higher end of the distribution. These biases can affect absolute prevalence estimates, though the relative trends remain robust. Third, shifts in writing style, evolving research practices, or changes in author demographics (e.g., increased participation from multilingual scholars) could also influence model predictions. Despite these limitations, the relative increase after subtracting the false positive rate remains significant (e.g. 19\% for the abstracts of \textit{arXiv} CS papers) and supports our overall findings.
}
Finally, the associations that we observe between LLM usage and paper characteristics are correlations which could be affected by other factors such as research topics. Future studies should explore the causal relationship between LLM use and observed temporal changes.

Prior research suggests that CS researchers adopt AI technologies at a higher rate than those in other fields\citep{wiley2023explanaitions}. This may be due to their greater exposure and familiarity with AI, as AI research primarily originates in CS and disseminates through collaborations with CS researchers\citep{bianchini2023drivers}. Additionally, familiarity with AI may foster greater confidence in its use, as studies have shown a correlation between familiarity and confidence in AI\citep{horowitz2024adopting,topsakal2024familiarity}. However, our study does not differentiate between these mechanisms, and this limitation may constrain our comprehensive understanding of the underlying factors driving AI technology adoption.

Examining the downstream impact of LLM-generated or modified papers is an important direction for future work. How do such papers compare in terms of accuracy, creativity, or diversity? How do readers react to LLM-generated abstracts and introductions? How do citation patterns of LLM-generated papers compare with other papers in similar fields? How might the dominance of a limited number of for-profit organizations in the LLM industry affect the independence of scientific output?
We hope our results and methodology inspire 
further studies of widespread LLM-modified text and conversations about how to promote transparent, diverse, and high-quality scientific publishing.

\section{Methods}
\label{main:sec: method}

No ethics approval was required as the study did not involve human participants.

\paragraph{The \textit{distributional LLM quantification} framework} 
We adapt the \textit{distributional LLM quantification} framework from Liang et. al.~\cite{liang2024monitoring} to quantify the use of LLM-modified academic writing. This framework leverages word frequency shifts (Figure \ref{fig: arxiv-revisions}a,b, Figure \ref{fig: word-introduction}) to measure the prevalence of LLM-generated text. More specifically, the framework consists of the following steps:

\begin{enumerate}[nosep, leftmargin=2em] %
  \item \textbf{Problem formulation}: Let $\cP$ and $\cQ$ be the probability distributions of human-written and LLM-modified documents, respectively. The mixture distribution is given by $\cD_\alpha(X) = (1-\alpha)\cP(x)+\alpha \cQ(x)$, where $\alpha$ is the fraction of AI-modified documents. The goal is to estimate $\alpha$ based on observed documents $\{X_i\}_{i=1}^N \sim \cD_{\alpha}$.

  \item \textbf{Parameterization}: To make $\alpha$ identifiable, the framework models the distributions of token occurrences in human-written and LLM-modified documents, denoted as $\cP_T$ and $\cQ_T$, respectively, for a chosen list of tokens $T=\{t_i\}_{i=1}^M$. The occurrence probabilities of each token in human-written and LLM-modified documents, $p_t$ and $q_t$, are used to parameterize $\cP_T$ and $\cQ_T$:
  \begin{align*}
    \cP_T(X) = \prod_{t\in T}p_t^{\mathbb{I}\{t\in X\}}(1-p_t)^{\mathbb{I}\{t\notin X\}}, \quad \cQ_T(X) = \prod_{t\in T}q_t^{\mathbb{I}\{t\in X\}}(1-q_t)^{\mathbb{I}\{t\notin X\}}.
  \end{align*}

  \item \textbf{Estimation}: The occurrence probabilities $p_t$ and $q_t$ are estimated using collections of known human-written and LLM-modified documents, $\{X_j^P\}_{j=1}^{n_P}$ and $\{X_j^Q\}_{j=1}^{n_Q}$, respectively:
  \begin{align*}
    \hat p_t = \frac{1}{n_P}\sum_{j=1}^{n_P}\mathbb{I}\{t\in X_j^P\}, \quad \hat q_t = \frac{1}{n_Q}\sum_{j=1}^{n_Q}\mathbb{I}\{t\in X_j^Q\}.
  \end{align*}

  \item \textbf{Inference}: The fraction $\alpha$ is estimated by maximizing the log-likelihood of the observed documents under the mixture distribution $\hat\cD_{\alpha, T}(X) = (1-\alpha)\hat\cP_{T}(X)+\alpha \hat\cQ_{T}(X)$:
  \begin{align*}
    \hat\alpha^{\text{MLE}}_T = \argmax_{\alpha\in [0,1]}\sum_{i=1}^N\log\left((1-\alpha)\hat\cP_{T}(X_i)+\alpha \hat\cQ_{T}(X_i)\right).
  \end{align*}
\end{enumerate}

Liang et. al.~\cite{liang2024monitoring} demonstrate that the data points $\{X_i\}_{i=1}^N \sim \cD_{\alpha}$ can be constructed either as a document or as a sentence, and both work well. Following their method, we use sentences as the unit of data points for the estimates in the main results. In addition, we extend this framework for our application to academic papers with two key differences:

\paragraph{Generating Realistic LLM-Produced Training Data}

We use a two-stage approach to generate LLM-produced text, as simply prompting an LLM with paper titles or keywords would result in unrealistic scientific writing samples containing fabricated results, evidence, and ungrounded or hallucinated claims.

Specifically, given a paragraph from a paper known to not include LLM-modification, we first perform abstractive summarization using an LLM to extract key contents in the form of an outline. We then prompt the LLM to generate a full paragraph based the outline (see Figures~\ref{fig:paper-skeleton-prompt-1}, \ref{fig:paper-skeleton-prompt-2} for full prompts).

Our two-stage approach can be considered a \textit{counterfactual} framework for generating LLM text: \textit{given a paragraph written entirely by a human, how would the text read if it conveyed almost the same content but was generated by an LLM?} This additional abstractive summarization step can be seen as the control for the content. 
This approach also simulates how scientists may be using LLMs in the writing process, where the scientists first write the outline themselves and then use LLMs to generate the full paragraph based on the outline.

\paragraph{Using the Full Vocabulary for Estimation}
We use the full vocabulary instead of only adjectives, as our validation shows that adjectives, adverbs, and verbs all perform well in our application (Figure~\ref{fig: validations}). 
Using the full vocabulary minimizes design biases stemming from vocabulary selection. We also find that using the full vocabulary is more sample-efficient in producing stable estimates, as indicated by their smaller confidence intervals by bootstrap.

\paragraph{Overview of Current LLM-generated Text Detection Methods} 
Various methods have been proposed for detecting LLM-modified text, including zero-shot approaches that rely on statistical signatures characteristic of machine-generated content \citep{Lavergne2008DetectingFC,Badaskar2008IdentifyingRO,Beresneva2016ComputerGeneratedTD,solaiman2019release,mitchell2023detectgpt,Yang2023DNAGPTDN,Bao2023FastDetectGPTEZ,Tulchinskii2023IntrinsicDE} and training-based methods that finetune language models for binary classification of human vs. LLM-modified text \citep{Bhagat2013SquibsWI,Zellers2019DefendingAN,Bakhtin2019RealOF,Uchendu2020AuthorshipAF,Chen2023GPTSentinelDH,Yu2023GPTPT,Li2023DeepfakeTD,Liu2022CoCoCM,Bhattacharjee2023ConDACD,Hu2023RADARRA}. However, these approaches face challenges such as the need for access to LLM internals, overfitting to training data and language models, vulnerability to adversarial attacks \citep{Wolff2020AttackingNT}, and bias against non-dominant language varieties \citep{Liang2023GPTDA}. The effectiveness and reliability of publicly available LLM-modified text detectors have also been questioned \citep{OpenAIGPT2,jawahar2020automatic,fagni2021tweepfake,ippolito2019automatic,mitchell2023detectgpt,human-hard-to-detect-generated-text,mit-technology-review-how-to-spot-ai-generated-text,survey-2023,solaiman2019release,Kirchner2023,Kelly2023}, with the theoretical possibility of accurate instance-level detection being debated \citep{Weber-Wulff2023,Sadasivan2023CanAT,chakraborty2023possibilities}. In this study, we apply the recently proposed \textit{distributional GPT quantification} framework \citep{liang2024monitoring}, which estimates the fraction of LLM-modified content in a text corpus at the population level, circumventing the need for classifying individual documents or sentences and improving upon the stability, accuracy, and computational efficiency of existing approaches. The method also preserves the privacy of writers- that is, it does not detect individual cases of use.
A more comprehensive discussion of related work can be found in Section \ref{appendix:sec:related-work}.

\paragraph{Overview of the \textit{arXiv}, \textit{bioRxiv}, and \textit{Nature} portfolio Data}
We collected data for this study from three publicly accessible sources: official APIs provided by \textit{arXiv} and \textit{bioRxiv}, and web pages from the \textit{Nature portfolio}. 
For each of the five major \textit{arXiv} categories (Computer Science, Electrical Engineering and Systems Science, Mathematics, Physics, Statistics), we randomly sampled up to 2,000 papers per month from January 2020 to September 2024. 
Similarly, from \textit{bioRxiv}, we randomly sampled up to 2,000 papers for each month within the same timeframe. To extract regional information from the \textit{arXiv} preprints, we exploited the existing tool S2ORC\citep{lo-wang-2020-s2orc} to parse author affiliations from LaTeX sources. 
We used the public affiliation metadata from \textit{bioRxiv} preprints.

For the \textit{Nature portfolio}, encompassing 15 \textit{Nature} journals including Nature, Nature Biomedical Engineering, Nature Human Behaviour, and Nature Communications, we followed the same sampling strategy, selecting 2,000 papers randomly from each month, from January 2020 to September 2024. 
When there were not enough papers to reach our target of 2,000 per month, we included all available papers. 
The \textit{Nature} portfolio encompasses the following 15 \textit{Nature} journals: 
\textit{Nature, 
Nature Communications, 
Nature Ecology \& Evolution, 
Nature Structural \& Molecular Biology, 
Nature Cell Biology, 
Nature Human Behaviour, 
Nature Immunology, 
Nature Microbiology, 
Nature Biomedical Engineering, 
Communications Earth \& Environment, 
Communications Biology, 
Communications Physics, 
Communications Chemistry,
Communications Materials,}
and \textit{Communications Medicine}.

\paragraph{Robustness Analysis of Model Variants Using Restricted Word Subsets} 
To assess the robustness of the model variants against the use of restricted word subsets, we conducted an analysis using vocabularies limited to adjectives, adverbs, or verbs (Figure~\ref{fig: validations}). These subsets were obtained by the most-frequent-tag baseline approach, which assigns each word its most commonly observed part-of-speech (POS) tag. This method achieves fairly good accuracy~\citep{Jurafsky2008SpeechAL}. 
As our validation showed that adjectives, adverbs, and verbs all perform well in our application (Figure~\ref{fig: validations}), we used the full vocabulary in our analysis. Using the full vocabulary minimizes design biases stemming from vocabulary selection. We also find that using the full vocabulary is more sample-efficient in producing stable estimates, as indicated by their smaller confidence intervals by bootstrap.

\section*{Figure Legends/Captions}

\begin{figure}[ht!] 
    \centering
    \includegraphics[width=1.00\textwidth]{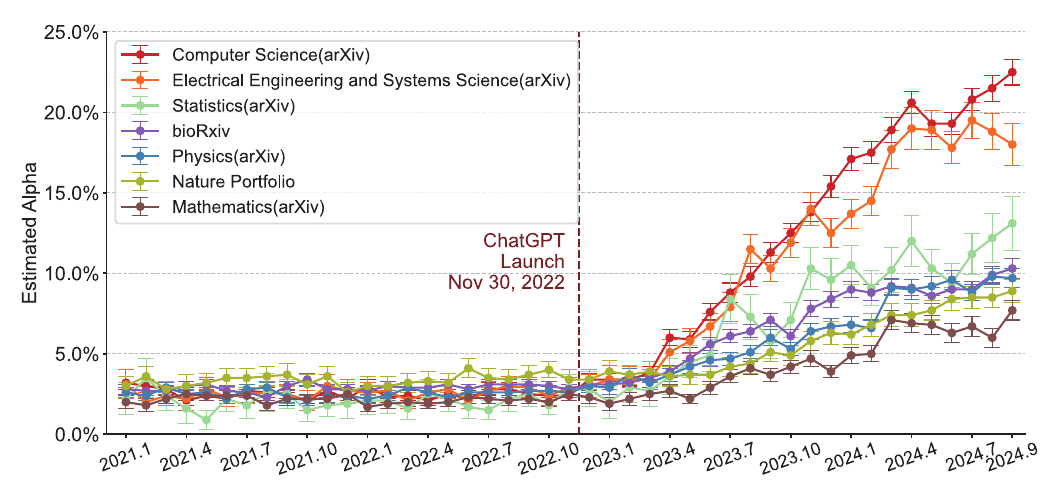}
    \caption{
    \textbf{Estimated fraction of LLM-modified sentences across research paper venues over time. }
    This figure displays the fraction ($\alpha$) of sentences estimated to have been substantially modified by LLM in abstracts from various academic writing venues. The analysis includes five areas within \textit{arXiv} (Computer Science, Electrical Engineering and Systems Science, Mathematics, Physics, Statistics), articles from \textit{bioRxiv}, and a combined dataset from 15 journals within the \textit{Nature} portfolio. Estimates are based on the \textit{distributional GPT quantification} framework, 
    which provides population-level estimates rather than individual document analysis. Each point in time is independently estimated, with no temporal smoothing or continuity assumptions applied.  
    Data are presented as mean $\pm$ 95\% CI based on 1,000 bootstrap iterations.
    For Computer Science (\textit{arXiv}), 
    n = 2,000 independent paper abstracts per month. 
    For Electrical Engineering and Systems Science (\textit{arXiv}), the monthly sample size of independent paper abstracts varied (mean = 708; min = 388; max = 1,041). 
    For Statistics (\textit{arXiv}), 
    the monthly sample size of independent paper abstracts varied (mean = 337; min = 203; max = 513). 
    For \textit{bioRxiv}, 
    n = 2,000 independent paper abstracts per month.
    For Physics (\textit{arXiv}), 
    n = 2,000 independent paper abstracts per month.
    For \textit{Nature} portfolio, 
    the monthly sample size of independent paper abstracts varied (mean = 1,039; min = 601; max = 1,537). 
    For Mathematics (\textit{arXiv}),
    the monthly sample size of independent paper abstracts varied (mean = 1,958; min = 1,444; max = 2,000). 
    }
    \label{fig: temporal-abstract}
\end{figure}

\clearpage
\newpage

\begin{figure}[ht!]
    \centering
    \includegraphics[width=1.00\textwidth]{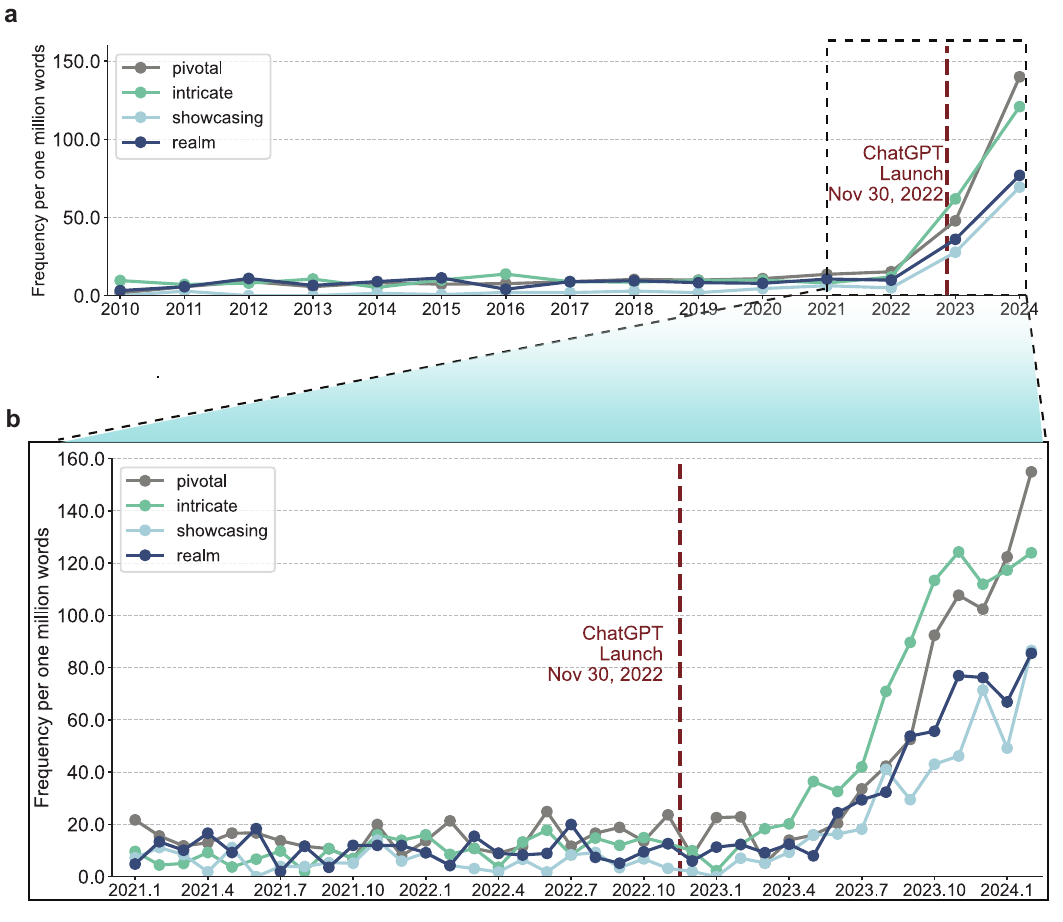}
\caption{
    \textbf{Word frequency shift in \textit{arXiv} computer science abstracts over 14 years (2010-2024).} 
(a) The frequency over time for the top 4 words most disproportionately generated by LLMs in comparison to use in pre-ChatGPT corpora, as measured by the log odds ratio. The words are: \textit{realm}, \textit{intricate}, \textit{showcasing}, \textit{pivotal}. 
These terms maintained a consistently low frequency in \textit{arXiv} CS abstracts over more than a decade (2010--2022) but experienced a sudden surge in usage starting in 2023. (b) Zoomed in frequency between 2021 and 2024. 
} 
\label{fig: arxiv-revisions}
\end{figure}

\clearpage
\newpage

\begin{figure}[htb]
\centering
\includegraphics[width=0.95\textwidth]{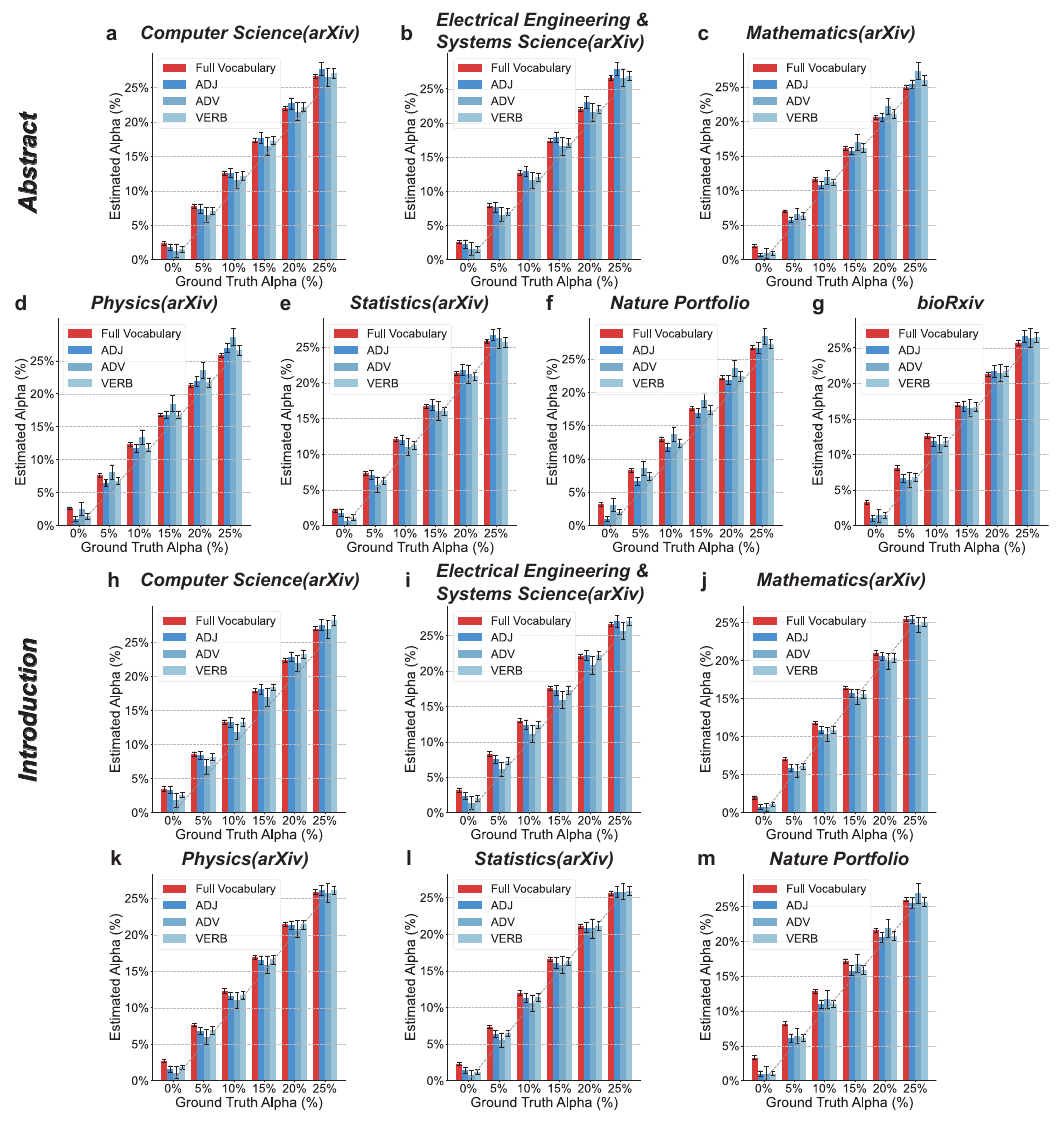}
\caption{
\textbf{Fine-grained validation of estimation accuracy under temporal distribution shift.}
We evaluate the accuracy of our models in estimating the fraction of LLM-modified content ($\alpha$) under a challenging temporal data split, where the validation data (sampled from 2022-01-01 to 2022-11-29) are temporally separated from the training data (collected up to 2020-12-31) by at least a year.
The X-axis indicates the ground truth $\alpha$, while the Y-axis indicates the model's estimated $\alpha$. In all cases, the estimation error for $\alpha$ is less than 3.5\%. 
The first 7 panels (a--g) are the validation on abstracts for each academic writing venue, while the later 6 panels (h--m) are the validation on introductions. We did not include \textit{bioRxiv} introductions due to the unavailability of bulk PDF downloads. Data are presented as mean $\pm$ 95\% CI based on 1,000 bootstrap iterations. For each ground truth alpha, n = 30,000 sentences.
}
\label{fig: validations}
\end{figure}

\clearpage
\newpage

\begin{figure}[htb]
\centering
\includegraphics[width=0.85\textwidth]{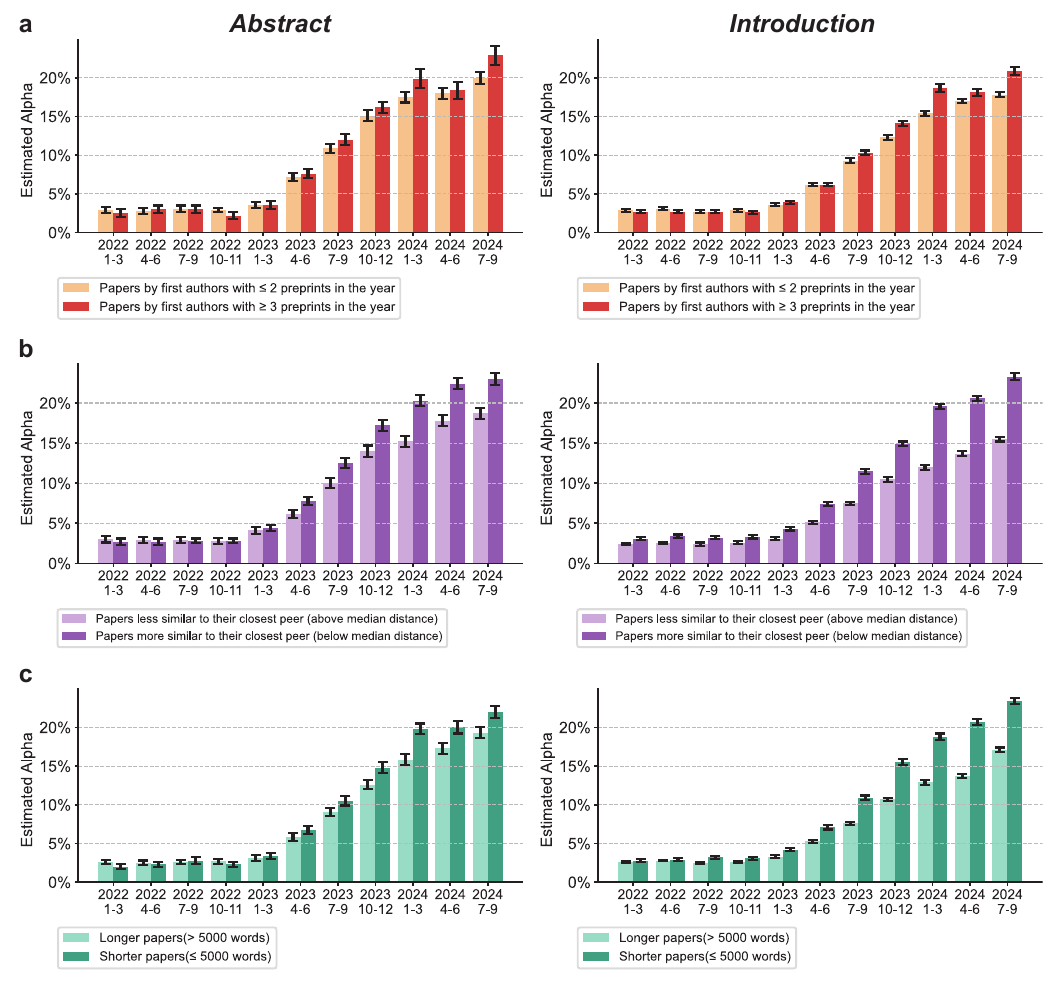}
\caption{
\textbf{Associations between LLM-modification and scientific publishing characteristics in \textit{arXiv} computer science papers. } 
Data are presented as mean $\pm$ 95\% CI based on 1,000 bootstrap iterations.
(a) Papers authored by first authors who post preprints more frequently tend to have a higher fraction of LLM-modified content.
Papers in \textit{arXiv} Computer Science are stratified into two groups based on the preprint posting frequency of their first author, as measured by the number of first-authored preprints in the year.
The sample size of papers by first authors with $\leq$ 2 preprints is n = 2,000 per quarter. For papers by first authors with $\geq$ 3 preprints, the quarterly sample size varied (mean = 1,202; min = 870; max = 1,849).
(b) Papers in more crowded research areas tend to have a higher fraction of LLM-modified content.
Papers in \textit{arXiv} Computer Science are divided into two groups based on their abstract's embedding distance to their closest peer: papers more similar to their closest peer (below median distance) and papers less similar to their closest peer (above median distance). In both groups, n = 2,000 independent papers per quarter.
(c) Shorter papers tend to have a higher fraction of LLM-modified content.
\textit{arXiv} Computer Science papers are stratified by their full text word count, including appendices, into two bins: below or above 5,000 words (the rounded median). 
In both groups, n = 2,000 independent papers per quarter. 
The findings also hold when stratified by more fine-grained subject categories (Figures~\ref{supp:figure:upload}, \ref{supp:figure:crowded}, \ref{supp:figure:length}). 
}
\label{fig: associations}
\end{figure}

\clearpage
\newpage

\begin{figure}[htb]
\centering
\includegraphics[width=1.00\textwidth]{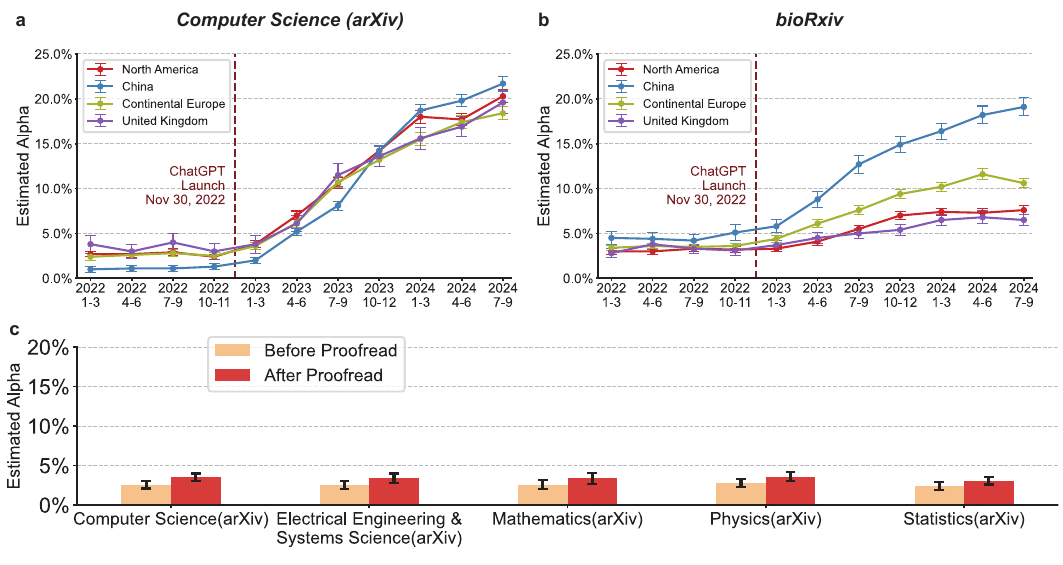}
\caption{
\textbf{Regional trends in the adoption of large language models (LLMs) for academic writing.}  
Data are presented as mean $\pm$ 95\% CI based on 1,000 bootstrap iterations.
(a) Quarterly growth of LLM usage in Computer Science publications on the \textit{arXiv} by first author affiliation region. Distinct regions (North America, China, Continental Europe, and the United Kingdom) exhibit consistent upward trends in LLM adoption.
For North America, the quarterly sample size n = 2,000.
For China, the quarterly sample size varied (mean = 1,752; min = 1,232; max = 2,000).
For Continental Europe, the quarterly sample size varied (mean = 1,929; min = 1,491; max = 2,000).
For the United Kingdom, the quarterly sample size varied (mean = 558; min=346; max=835).
(b) Quarterly growth of LLM usage in Biology publications on \textit{bioRxiv} by first author affiliation region.  Different regions (North America, China, Continental Europe, and the United Kingdom) display consistent increases in LLM usage, with papers from regions with lower rates of English speakers, including China and Continental Europe, showing slightly higher estimated usage rates. 
For North America, the quarterly sample size n = 2,000.
For China, the quarterly sample size varied (mean = 688; min = 439; max=872).
For Continental Europe, the quarterly sample size n = 2,000.
For the United Kingdom, the quarterly sample size varied (mean = 830; min=541; max=972).
(c) Robustness of LLM-modified content prevalence quantification to proofreading. The plot illustrates similar proportions of LLM-modified content estimated after employing LLMs for "proofreading" across various \textit{arXiv} main categories. 
For each area, the sample size is n = 1,000 independent abstracts.
This finding confirms the robustness of our method to minor text edits generated by LLMs, such as those introduced by simple proofreading tasks. 
}
\label{figure: country}
\end{figure}

\clearpage
\newpage

\begin{figure}[ht!] 
    \centering
    \includegraphics[width=1.00\textwidth]{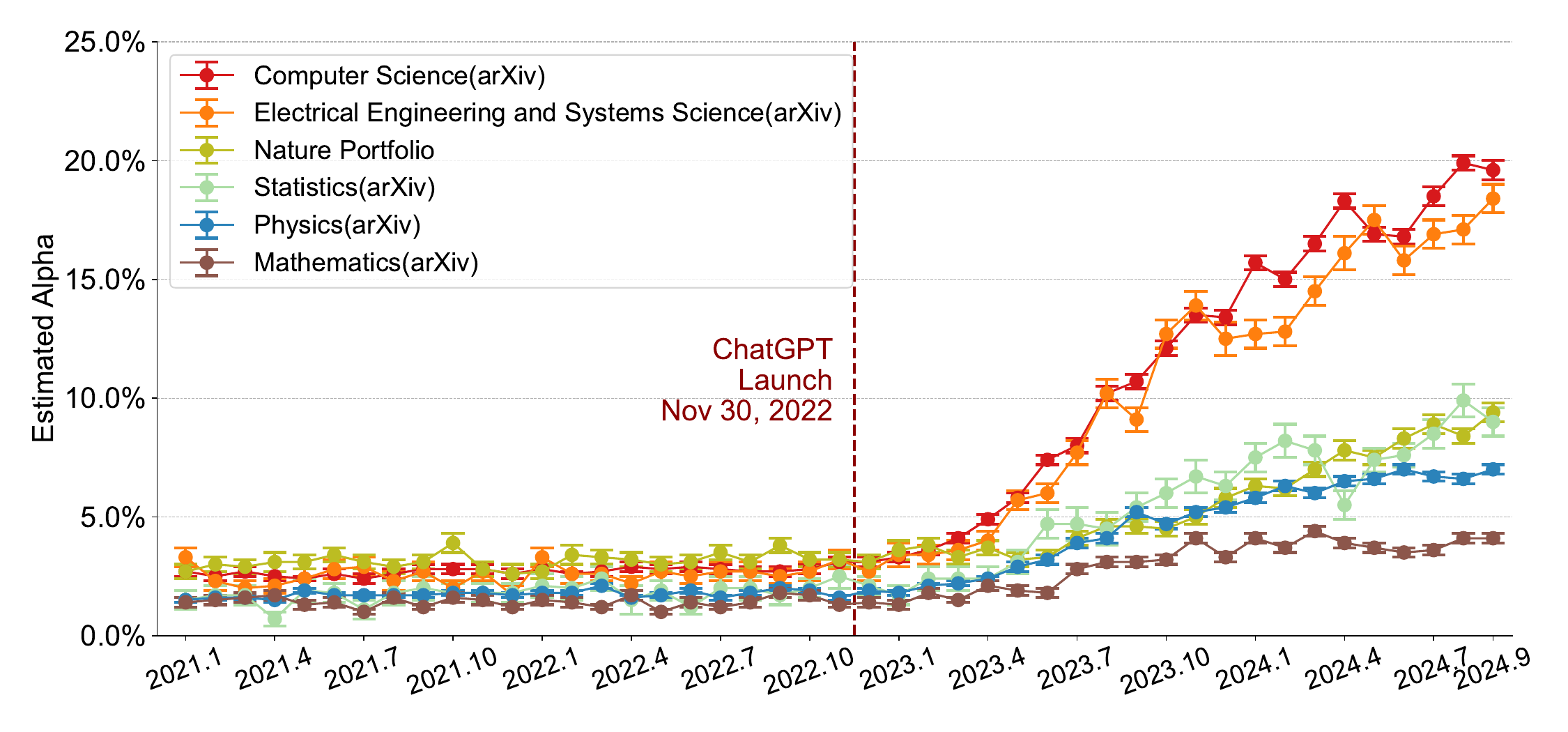}
    \caption{
    \textbf{Estimated fraction of LLM-modified sentences in \textit{Introductions} across academic paper venues over time. }
    We focused on the introduction sections for the main texts, as the introduction was the most consistently and commonly occurring section across different categories of papers.
    This figure presents the estimated fraction ($\a$) of sentences in introductions which are LLM-modified, across the same venues as Figure~\ref{fig: temporal-abstract}. 
    We found that the results are consistent with those observed in abstracts (Figure~\ref{fig: temporal-abstract}). 
    We did not include \textit{bioRxiv} introductions as there is no bulk download of PDFs available.
    Data are presented as mean $\pm$ 95\% CI based on 1,000 bootstrap iterations.
    For Computer Science (\textit{arXiv}), 
    the monthly sample size of independent papers varied (mean = 1,998; min = 1,944; max = 2,000). 
    For Electrical Engineering and Systems Science (\textit{arXiv}), the monthly sample size of independent paper varied (mean = 410; min = 257; max = 796). 
    For Statistics (\textit{arXiv}), 
    the monthly sample size of independent paper varied (mean = 181; min = 104; max = 299). 
    For Physics (\textit{arXiv}), 
    the monthly sample size of independent paper varied (mean = 1992; min = 1,802; max = 2,000). 
    For \textit{Nature} portfolio, 
    the monthly sample size of independent paper varied (mean = 876; min = 445; max = 1,359). 
    For Mathematics (\textit{arXiv}),
    the monthly sample size of independent paper varied (mean = 965; min = 722; max = 1,453). 
    }
    \label{fig: temporal-introduction}
\end{figure}
\clearpage

\begin{figure}[ht!] 
    \centering
    \includegraphics[width=1.00\textwidth]{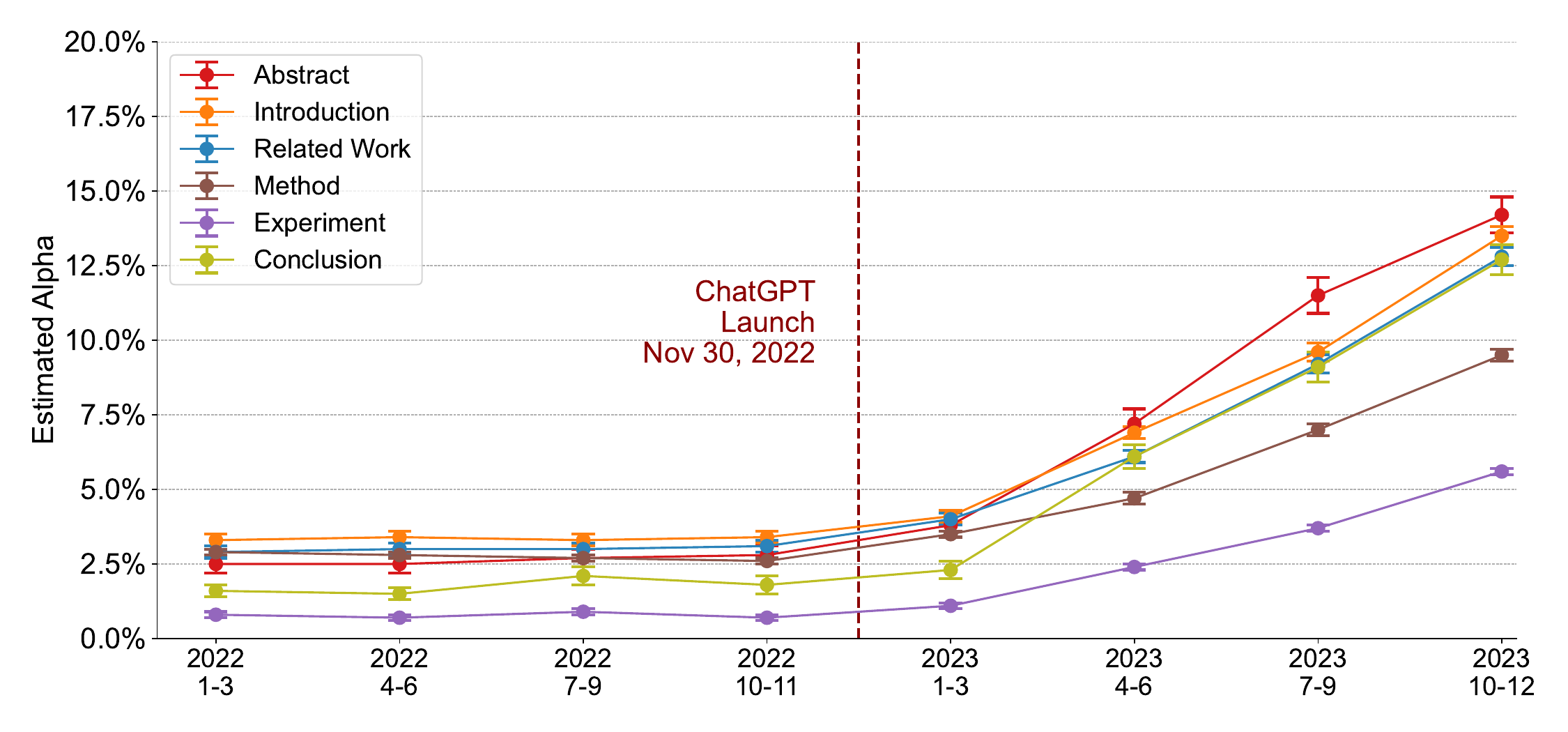}
    \caption{
    \textbf{Quarterly growth of LLM usage in Computer Science publications (n = 2,000 independent papers per quarter) on \textit{arXiv} by different sections. }
    Results are consistent with those observed in abstracts (Figure~\ref{fig: temporal-abstract}). 
    We found that the abstract section yields the highest portion while experiment section yields the lowest.
    Data are presented as mean $\pm$ 95\% CI based on 1,000 bootstrap iterations.
    }
    \label{fig: arxiv-portion}
\end{figure}

\begin{figure}[ht!] 
    \centering
    \includegraphics[width=1.00\textwidth]{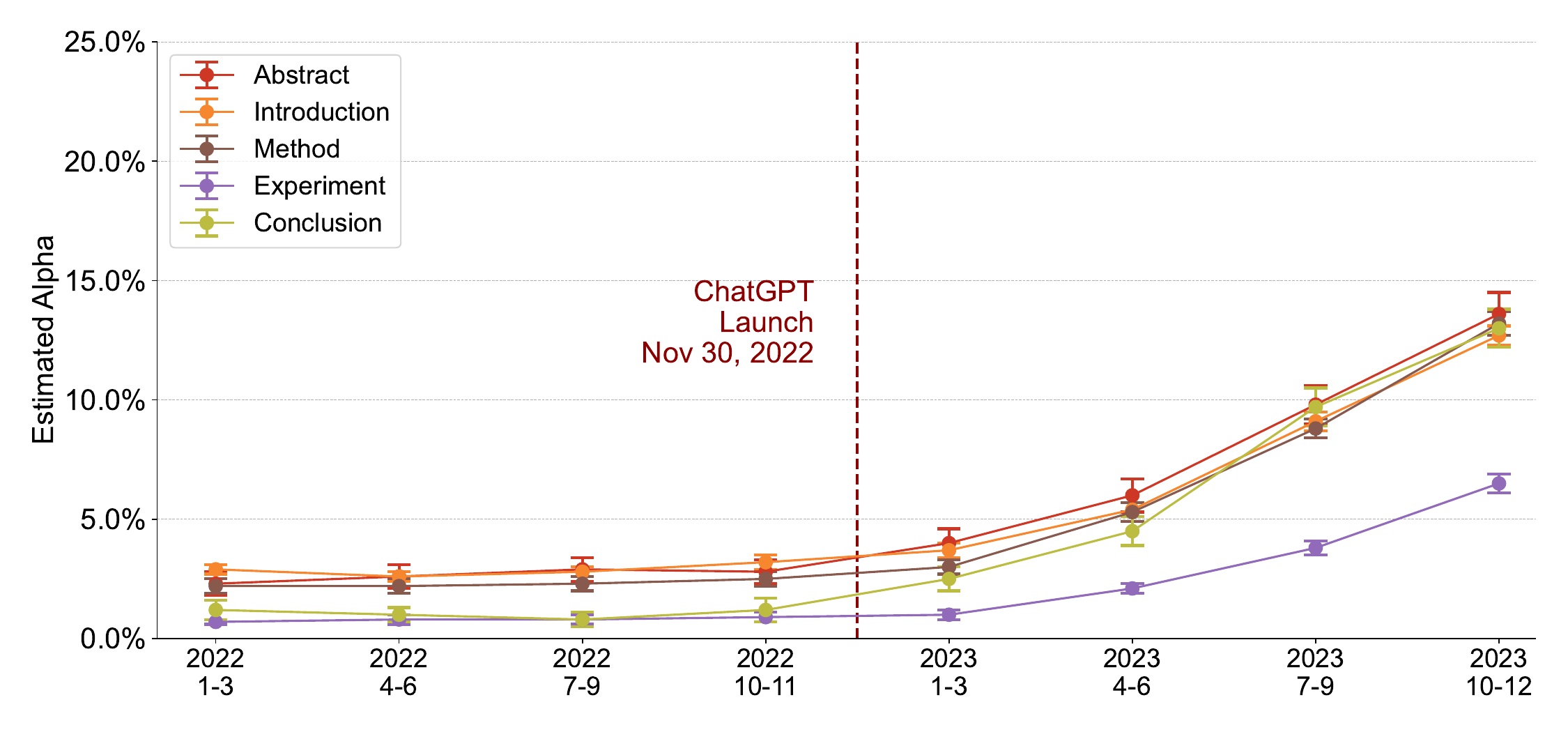}
    \caption{
    \textbf{Quarterly growth of LLM usage in Electrical Engineering and Systems Science publications on \textit{arXiv} by different sections. }
    Results are consistent with those observed in Computer Science publications (Figure~\ref{fig: arxiv-portion}). 
    The abstract section yields the highest portion while the experiment section yields the lowest.
    Data are presented as mean $\pm$ 95\% CI based on 1,000 bootstrap iterations.
    For the abstract section, 
    n = 1,000 independent papers per quarter.
    For the introduction section, 
    the quarterly sample size of independent papers varied (mean = 955; min = 780; max = 1,000). 
    For the method section, 
    the quarterly sample size of independent papers varied (mean = 440; min = 336; max = 591). 
    For the experiment section, 
    the quarterly sample size of independent papers varied (mean = 469; min = 343; max = 604). 
    For the conclusion section, 
    the quarterly sample size of independent papers varied (mean = 843; min = 620; max = 1,000). 
    }
    \label{fig: eess-portion}
\end{figure}
\newpage
\clearpage

\begin{figure}[htb!] 
    \centering
    \includegraphics[width=1.00\textwidth]{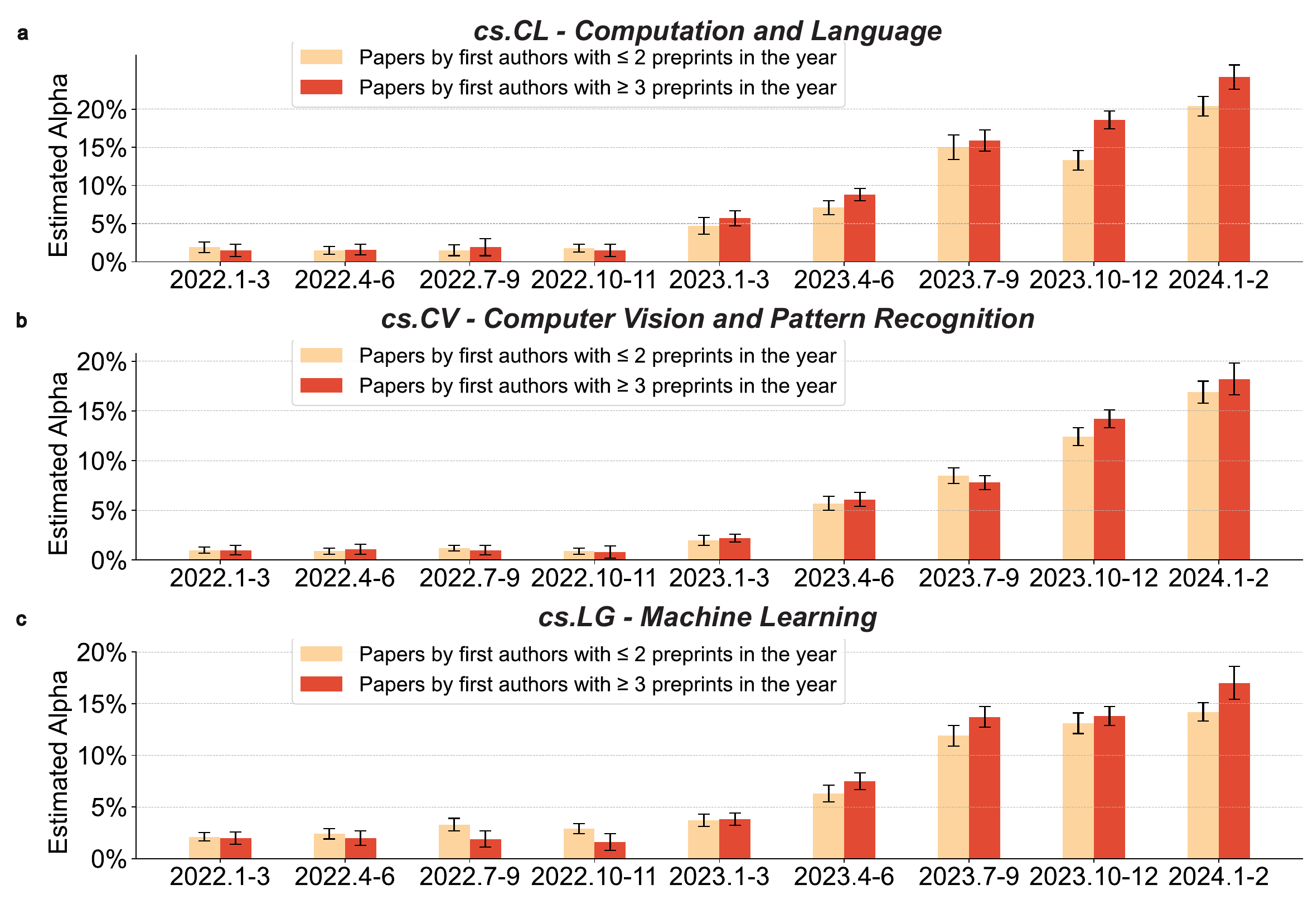}
    \caption{    
\textbf{The relationship between first-author preprint posting frequency and LLM usage holds across \textit{arXiv} Computer Science sub-categories.}
Papers in each \textit{arXiv} Computer Science sub-category (cs.CV, cs.LG, and cs.CL) are stratified into two groups based on the preprint posting frequency of their first author, as measured by the number of first-authored preprints in the year: those with $\leq 2$ preprints and those with $\geq 3$ preprints. 
For cs.CL papers by first authors with $\leq 2$ preprints, the quarterly sample size varied (mean = 1,091; min = 403; max= 1,948). For cs.CL papers by first authors with $\geq 3$ preprints, the quarterly sample size varied (mean = 217; min = 88; max= 430).
For cs.CV papers by first authors with $\leq 2$ preprints, the quarterly sample size varied (mean = 1,759; min = 438; max= 2,000). For cs.CV papers by first authors with $\geq 3$ preprints, the quarterly sample size varied (mean = 369; min = 141; max= 540).
For cs.LG papers by first authors with $\leq 2$ preprints, the quarterly sample size varied (mean = 1,655; min = 443; max= 2,000). For cs.LG papers by first authors with $\geq 3$ preprints, the quarterly sample size varied (mean = 258; min = 103; max= 392).
Data are presented as mean $\pm$ 95\% CI based on 1,000 bootstrap iterations.
    }
    \label{supp:figure:upload}
\end{figure}

\begin{figure}[htb!] 
    \centering
    \includegraphics[width=1.00\textwidth]{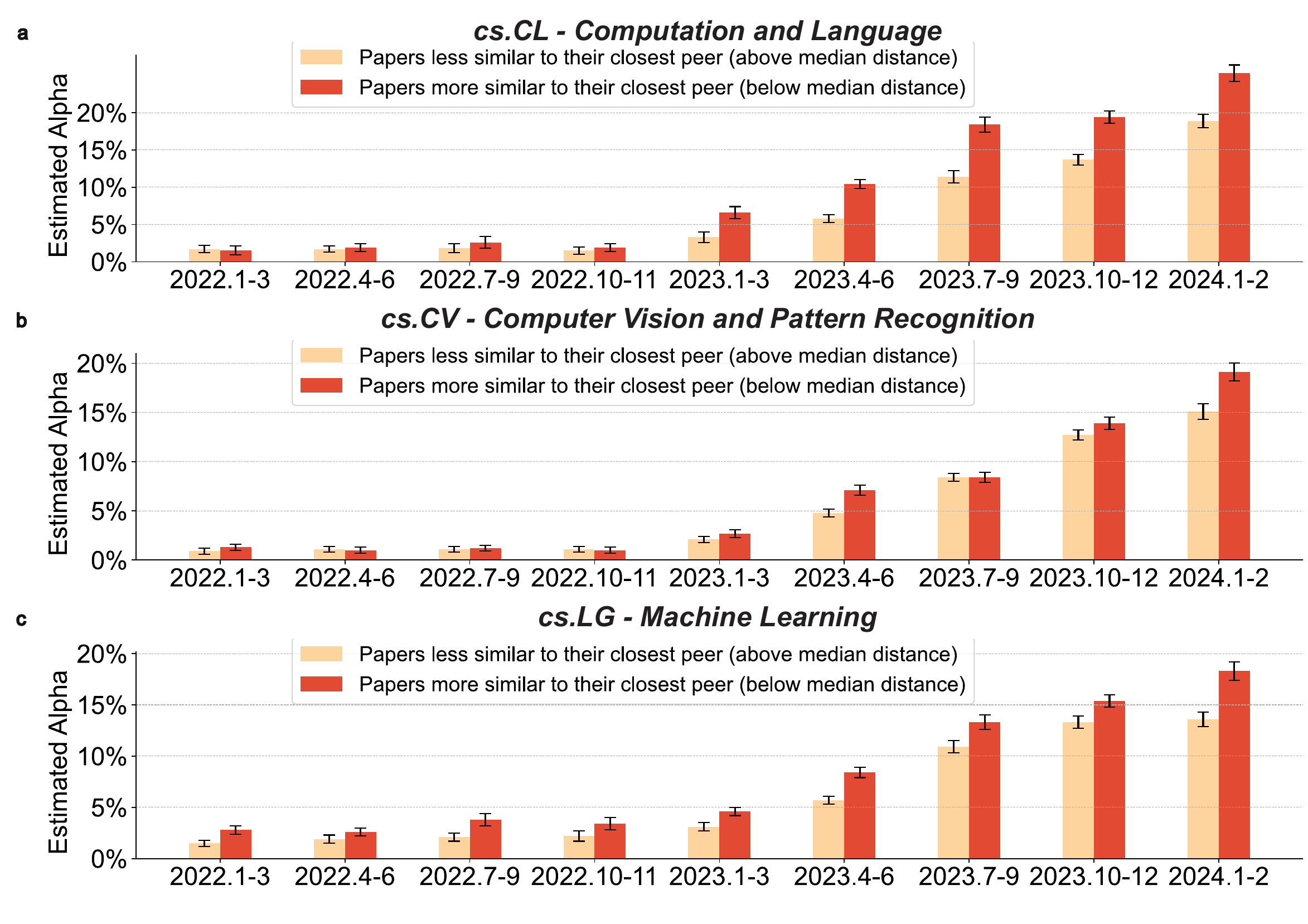}
    \caption{    
\textbf{The relationship between paper similarity and LLM usage holds across \textit{arXiv} Computer Science sub-categories.}
Papers in each \textit{arXiv} Computer Science sub-category (cs.CV, cs.LG, and cs.CL) are divided into two groups based on their abstract's embedding distance to their closest peer within the respective sub-category: papers more similar to their closest peer (below mean distance) and papers less similar to their closest peer (above mean distance). 
For cs.CL, the sample sizes of the two groups remain consistent within each quarter, but the actual values vary across different quarters (mean = 698; min = 402; max = 1,154). For cs.CV, the sample sizes of the two groups remain consistent within each quarter, but the actual values vary across different quarters (mean = 1,277; min = 891; max = 1,718). For cs.LG, the sample sizes of the two groups remain consistent within each quarter, but the actual values vary across different quarters (mean = 1,072; min = 865; max = 1,504). Data are presented as mean $\pm$ 95\% CI based on 1,000 bootstrap iterations.
    }
    \label{supp:figure:crowded}
\end{figure}

\begin{figure}[htb!] 
    \centering
    \includegraphics[width=1.00\textwidth]{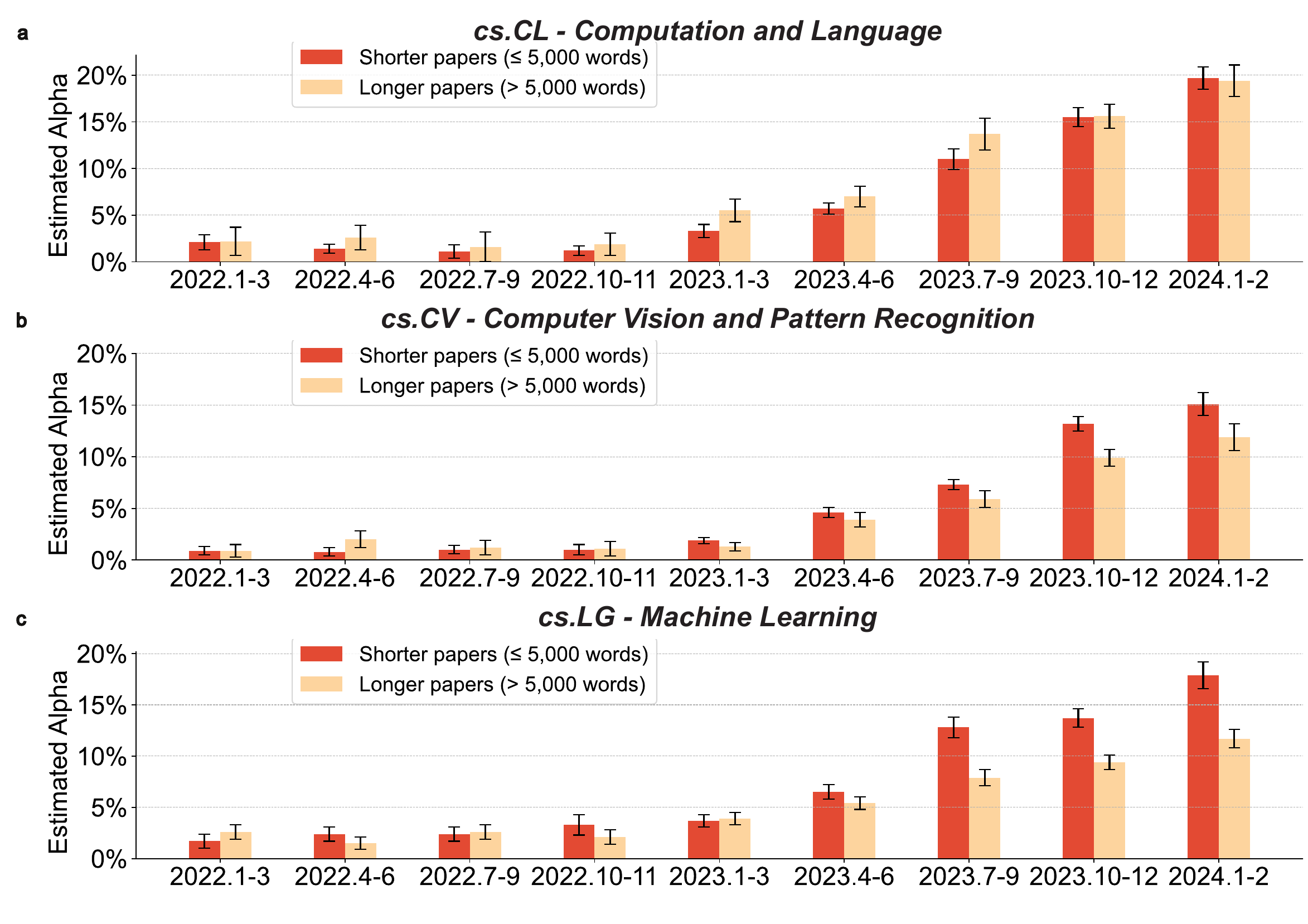}
    \caption{    
\textbf{The relationship between paper length and LLM usage holds for cs.CV and cs.LG, but not for cs.CL.}
Papers in each \textit{arXiv} Computer Science sub-category (cs.CV, cs.LG, and cs.CL) are stratified by their full text word count, including appendices, into two bins: below or above 5,000 words (the rounded median). 
For cs.CL, no consistent difference in LLM usage was found between shorter and longer papers, possibly due to the limited sample size, as only a subset of the LaTeX sources were parsed to calculate the full length.
For cs.CL longer papers, the quarterly sample size varied (mean = 294; min = 130; max = 552). For cs.CL shorter papers, the quarterly sample size varied (mean = 940; min = 555; max = 1,536).
For cs.CV longer papers, the quarterly sample size varied (mean = 560; min = 415; max = 791). For cs.CV shorter papers, the quarterly sample size varied (mean = 1,512; min = 1,056; max = 2,000).
For cs.LG longer papers, the quarterly sample size varied (mean = 830; min = 625; max = 1,206). For cs.LG shorter papers, the quarterly sample size varied (mean = 962; min = 788; max = 1,334).
Data are presented as mean $\pm$ 95\% CI based on 1,000 bootstrap iterations.
}
    \label{supp:figure:length}
\end{figure}
\newpage 
\clearpage

\begin{figure}[htb!]
\begin{lstlisting}
Your task is to proofread the provided sentence for grammatical accuracy. Ensure that the corrections introduce minimal distortion to the original content. 
\end{lstlisting}
\caption{
Example prompt for proofreading.
}
\label{fig:paper-proofread-prompt}
\end{figure}

\newpage 
\clearpage

\begin{figure}[ht!]
    \centering
    \includegraphics[width=1.00\textwidth]{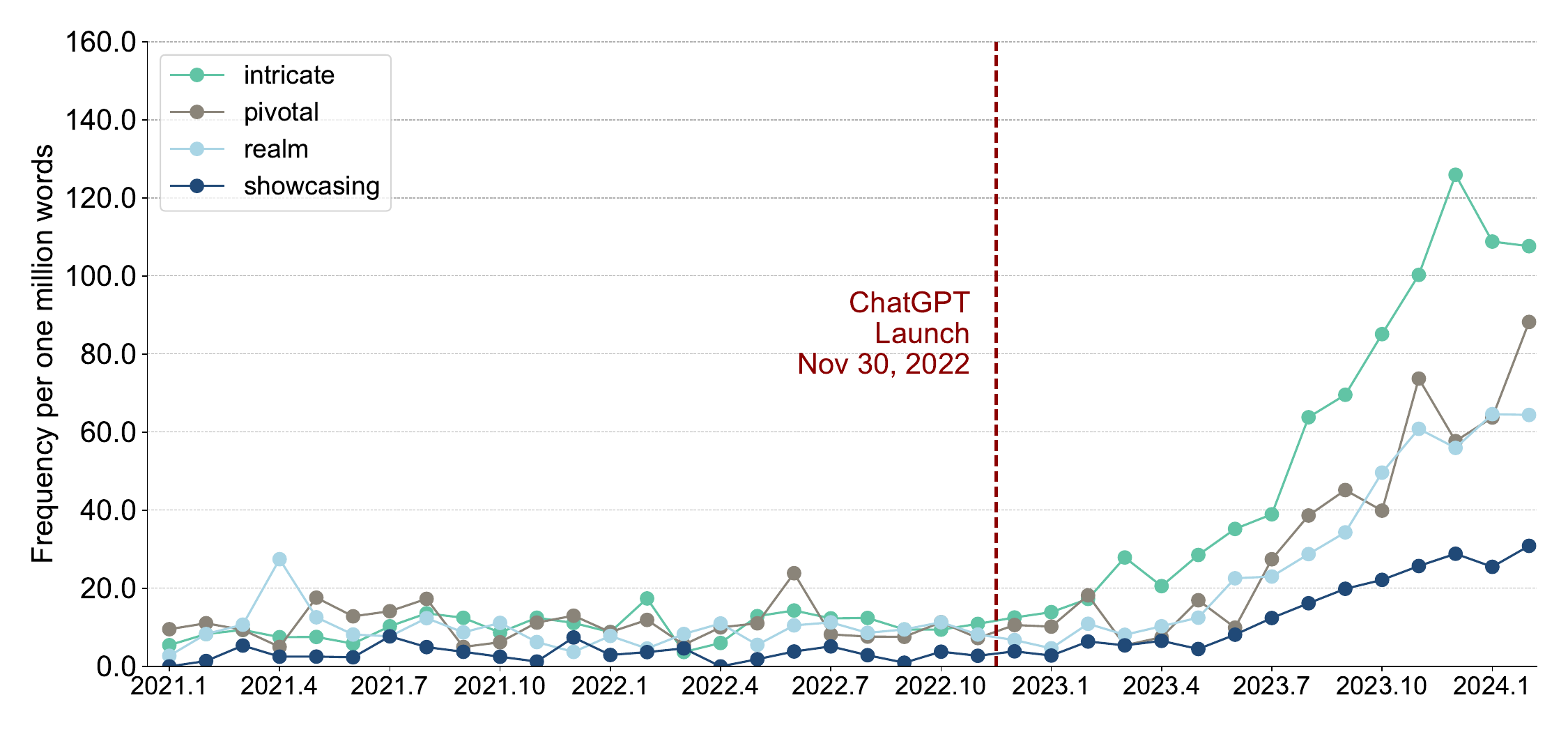}
\caption{
    \textbf{Word frequency shift in sampled \textit{arXiv} Computer Science introductions in the past two years.} 
The plot shows the frequency over time for the same 4 words as demonstrated in Figure \ref{fig: arxiv-revisions}. The words are: \textit{realm}, \textit{intricate}, \textit{showcasing}, \textit{pivotal}. 
The trend is similar for two figures.
Data from 2010-2020 is not included in this analysis due to the computational complexity of parsing the full text from a large number of arXiv papers.
}
\label{fig: word-introduction}
\end{figure}

\clearpage
\newpage

\begin{figure}[htb!]
\begin{lstlisting}
The aim here is to reverse-engineer the author's writing process by taking a piece of text from a paper and compressing it into a more concise form. This process simulates how an author might distill their thoughts and key points into a structured, yet not overly condensed form. 

Now as a first step, first summarize the goal of the text, e.g., is it introduction, or method, results? and then given a complete piece of text from a paper, reverse-engineer it into a list of bullet points.
\end{lstlisting}
\caption{
\textbf{Example prompt for summarizing a paragraph from a human-authored paper into a skeleton.} This process simulates how an author might first only write the main ideas and core information into a concise outline. The goal is to capture the essence of the paragraph in a structured and succinct manner, serving as a foundation for the previous prompt.
}
\label{fig:paper-skeleton-prompt-1}
\end{figure}

\begin{figure}[htb!]
\begin{lstlisting}
Following the initial step of reverse-engineering the author's writing process by compressing a text segment from a paper, you now enter the second phase. Here, your objective is to expand upon the concise version previously crafted. This stage simulates how an author elaborates on the distilled thoughts and key points, enriching them into a detailed, structured narrative. 

Given the concise output from the previous step, your task is to develop it into a fully fleshed-out text.
\end{lstlisting}
\caption{
\textbf{Example prompt for expanding the skeleton into a full text.} The aim here is to simulate the process of using the structured outline as a basis to generate comprehensive and coherent text. This step mirrors the way an author might flesh out the outline into detailed paragraphs, effectively transforming the condensed ideas into a fully articulated section of a paper. The format and depth of the expansion can vary, reflecting the diverse styles and requirements of different academic publications.
}
\label{fig:paper-skeleton-prompt-2}
\end{figure}

\newpage 
\clearpage

\section{Additional Information on Implementations}
\label{appendix:sec:implementation}

In this study, we utilized the gpt-3.5-turbo-0125 model, which was trained on data up to September 2021, to generate the training data for our analysis. 

We chose to focus on ChatGPT due to its dominant position in the generative AI market. According to a comprehensive analysis conducted by FlexOS in early 2024, ChatGPT accounts for an overwhelming 76\% of global internet traffic in the category, followed by Bing AI at 16\%, Bard at 7\%, and Claude at 1\% \citep{vanrossum2024generative}. This market share underscores ChatGPT's widespread adoption and makes it a highly relevant subject for our investigation. Furthermore, recent studies have also shown that ChatGPT demonstrates substantially better understanding of scientific papers than other LLMs~\citep{liang2024can,liu2023reviewergpt}.

We chose to use GPT-3.5 for generating the training data due to its free availability, which lowers the barrier to entry for users and thereby captures a wider range of potential LLM usage patterns. This accessibility makes our study more representative of the broad phenomenon of LLM-assisted writing. 

Furthermore, the previous work by \citet{liang2024monitoring} has demonstrated the framework's robustness and generalizability to other LLMs including Claude~\cite{claude2} and LLAMA~\cite{llama2}. Their findings suggest that the framework can effectively handle significant content shifts and temporal distribution shifts.

Regarding the parameter settings for the LLM, we set the decoding temperature to 1.0 and the maximum decoding length to 2048 tokens during our experiments. The Top P hyperparameter, which controls the cumulative probability threshold for token selection, was set to 1.0. Both the frequency penalty and presence penalty, which can be used to discourage the repetition of previously generated tokens, were set to 0.0. Additionally, we did not configure any specific stop sequences during the decoding process.

\section{Additional Results}

\subsection{Detection accuracy of classifier-based baseline methods} 
We used three off-the-shelf AI text detectors (Fast-DetectGPT\citep{Bao2023FastDetectGPTEZ}, Deepfake Text Detect\citep{Li2023DeepfakeTD} and RADAR\citep{Hu2023RADARRA}) to classify each sentence as AI- or human-generated. Our estimate for $\alpha$ is the fraction of sentences which the classifier believes are AI-generated. We used the same validation procedure as in the previous section. The results are shown in the Table \ref{baseline}. 
Two off-the-shelf classifiers predict that either almost all (RADAR) or a small portion (Fast-DetectGPT) of the text are AI-generated, regardless of the true $\alpha$ level. With the exception of the Deepfake method, the predictions made by all of the classifiers remain nearly constant across all $\alpha$ levels, leading to poor performance for all of them. This may be due to a distribution shift between the data used to train the classifier (likely general text scraped from the internet) vs. text found in academic papers. While Deepfake's estimates for $\alpha$ seem at least positively correlated with the correct $\alpha$ value, the error in the estimate is still large compared to the high accuracy obtained by our method.

\subsection{Robustness to the choice of LLMs}
We expanded our testing to include two additional LLMs (Llama-3.3-70B-Instruct-Turbo and Claude-3.5-Sonnet) to validate our method's robustness and found consistent patterns in the adoption of LLM usage (Figure \ref{different model} ).

\subsection{Robustness to the prompts used for generating the LLM training data}
While our two-stage method can generate more realistic LLM-produced training data by controlling for content through abstractive summarization, it is important to verify that the generated texts maintain consistent characteristics across various prompting strategies. We crafted alternative prompts to have the LLM generate the abstract directly from the paper's introduction and found that the increasing temporal trend of LLM usage detected with this approach closely matched that of the two-stage method (Figures \ref{new_prompt}, \ref{one_stage}), which further supports the robustness of our findings.

\subsection{Similarity Experiment}
\label{appendix:similarity}
To demystify whether the use of LLMs leads to similarity, or whether researchers who publish more similar papers tend to use LLMs more, we selected 1,000 abstracts from 2023 most likely to be machine-generated (denoted as \textbf{Set A}) and another 1,000 most likely to be human-written (denoted as \textbf{Set B}) respectively (Table \ref{simi}).
The papers in set A have a mean distance of 0.798 to their nearest neighbor in 2022 papers, while the papers in set B have a mean distance of 0.792 to their nearest neighbor in 2022 papers. This insignificant difference suggests that the field competitiveness for papers in the two sets is actually similar. However, if we compare the mean distance to their nearest neighbor in the 2023 papers, the difference increases substantially (with 0.781 for papers with least LLM usage and 0.748 for papers with most LLM usage). This pattern could suggest that it is the use of LLMs that leads to the observed similarity, causing papers to cluster more closely in style or content.

\subsection{Relationship Between LLM Usage and Academic Impact Metrics}
We also examined the association between LLM usage and two factors: paper acceptance outcomes and citation metrics (Figures \ref{openreview}, \ref{citation}). We discovered a consistent correlation where papers rejected during the review process exhibited higher LLM usage in their abstract and introduction sections. However, when we stratified computer science papers from arXiv based on citation counts, no consistent differences in LLM usage were observed between highly-cited and less-cited papers.

\subsection{Spectrum of LLM Usage Detection Across Involvement Categories} We divided LLM use in scientific papers into three categories: proofreading, restructuring, and full generation. Proofreading involves using LLMs to correct grammar while introducing minimal distortion to the original content. Restructuring represents a deeper level of involvement where LLMs reorganize content, rewrite paragraphs, and improve flow while preserving the original scientific ideas. Full generation refers to LLMs autonomously producing content based on minimal prompts, effectively writing entire sections with only a few bullet points. Across five arXiv subject areas, the bars representing more substantial LLM involvement (from proofreading to restructuring to full generation) consistently rise to higher levels of estimated LLM usage (Figure \ref{extend}). Specifically, the LLM‐proofread abstracts increase the estimated LLM usage by approximately 1\% relative to the baseline false positive rate, the LLM‐restructured versions raise the predicted usage to around 50\%, and the fully LLM‐generated abstracts can reach as high as 97\%. This indicates that our estimation for LLM usage should serve as a lower bound when considering different types of LLM-assisted writing.

\section{Extended Related Work}
\label{appendix:sec:related-work}

\paragraph{Zero-shot LLM detection.} A major category of LLM text detection uses statistical signatures that are characteristic of machine-generated text, and the scope is to detect the text within individual documents. Initially, techniques to distinguish AI-modified text from human-written text employed various metrics, such as entropy \citep{Lavergne2008DetectingFC}, the frequency of rare n-grams \citep{Badaskar2008IdentifyingRO}, perplexity \citep{Beresneva2016ComputerGeneratedTD}, and log-probability scores \citep{solaiman2019release}, which are derived from language models. More recently, DetectGPT \citep{mitchell2023detectgpt} found that AI-modified text is likely to be found in areas with negative log probability curvature. DNA-GPT \citep{Yang2023DNAGPTDN} improves performance by examining the divergence in n-gram patterns. Fast-DetectGPT \citep{Bao2023FastDetectGPTEZ} enhances efficiency by utilizing conditional probability curvature over raw probability, and  GPTID\citet{Tulchinskii2023IntrinsicDE} studied the intrinsic dimensionality of generated text to perform the detection. We refer to recent surveys by \citet{Yang2023ASO, Ghosal2023TowardsP} for additional details and more related works. However, zero-shot detection requires direct access to LLM internals to enable effective detection. Closed-source commercial LLMs, like GPT-4, necessitate using proxy LLMs, which compromises the robustness of zero-shot detection methods across various scenarios \citep{Sadasivan2023CanAT, Shi2023RedTL, Yang2023ASO, Zhang2023AssayingOT}.

\paragraph{Training-based LLM detection.} Another category is training-based detection, which involves training classification models on datasets that consist of both human and AI-modified texts for the binary classification task of detection. Early efforts applied classification algorithms to identify AI text across various domains, such as peer review submissions \citep{Bhagat2013SquibsWI}, media publications \citep{Zellers2019DefendingAN}, and other contexts \citep{Bakhtin2019RealOF, Uchendu2020AuthorshipAF}. Recently, researchers have finetuned pretrained language model backbones for this binary classification. GPT-Sentinel \citep{Chen2023GPTSentinelDH} uses the constructed dataset OpenGPTText to train RoBERTa \citep{Liu2019RoBERTaAR} and T5 \citep{raffel2020exploring} classifiers. GPT-Pat \citep{Yu2023GPTPT} trains a Siamese neural network to compute the semantic similarity of AI text and human text. \citet{Li2023DeepfakeTD} build a wild testbed by gathering texts from various human writings and texts generated by different LLMs. Using techniques such as contrastive and adversarial learning can enhance classifier robustness \citep{Liu2022CoCoCM, Bhattacharjee2023ConDACD, Hu2023RADARRA}. We refer to recent surveys \citet{Yang2023ASO, Ghosal2023TowardsP} for additional methods and details. However, these publicly available tools for detecting AI-modified content have sparked a debate about their effectiveness and reliability~\citep{OpenAIGPT2,jawahar2020automatic,fagni2021tweepfake,ippolito2019automatic,mitchell2023detectgpt,human-hard-to-detect-generated-text,mit-technology-review-how-to-spot-ai-generated-text,survey-2023, solaiman2019release}. OpenAI's decision to discontinue its AI-modified text classifier in 2023 due to "low rate of accuracy" further highlighted this discussion~\citep{Kirchner2023,Kelly2023}.

Training-based detection methods face challenges such as overfitting to training data and language models, making them vulnerable to adversarial attacks \citep{Wolff2020AttackingNT} and biased against non-dominant language varieties \citep{Liang2023GPTDA}. The theoretical possibility of achieving accurate \textit{instance}-level detection has also been questioned~\citep{Weber-Wulff2023,Sadasivan2023CanAT,chakraborty2023possibilities}.

\paragraph{LLM watermarking.} 
Text watermarking introduces a method to detect LLM-modified text by embedding an imperceptible signal, known as a watermark, directly into the text. This watermark can be retrieved by a detector that shares the model owner's secret key. Early watermarking techniques included synonym substitution \citep{Chiang2003NaturalLW, Topkara2006TheHV} and syntactic restructuring \citep{Atallah2001NaturalLW, Topkara2006NaturalLW}. Modern watermarking strategies involve integrating watermarks into the decoding process of language models \citep{aaronson, kirchenbauer2023watermark, Zhao2023Ginsew}. Researchers have developed various techniques, such as the Gumbel watermark \citep{aaronson}, which uses traceable pseudo-random softmax sampling, and the red-green list approach \citep{kirchenbauer2023watermark, Zhao2023ProvableRW}, which splits the vocabulary based on hash values of previous n-grams. Some methods focus on preserving the original token probability distributions \citep{ Hu2023UnbiasedWF,Kuditipudi2023RobustDW, Wu2023DiPmarkAS}, while others aim to improve detectability and perplexity \citep{zhao2024permute} or incorporate multi-bit watermarks \citep{Yoo2023RobustMN, Fernandez2023ThreeBT}. However, one major concern with watermarking is that it requires the involvement of the model or service owner, such as OpenAI, to implant the watermark during the text generation process. 
In contrast, the framework by Liang et. al.~\cite{liang2024monitoring} operates independently of the model or service owner's intervention, allowing for the monitoring of LLM-modified content without requiring their active participation or adoption.

\paragraph{Implications for LLM Pretraining Data Quality} 

The increasing prevalence of LLM-modified content in academic papers, particularly on platforms like \textit{arXiv}, has important implications for the quality of LLM pretraining data. \textit{arXiv} has become a significant source of training data for LLMs, contributing approximately 2.5\% of the data for models like Llama \citep{touvron2023llama}, 12\% for RedPajama \citep{elazar2023s}, and 8.96\% for the Pile \citep{gao2020pile}. Our findings suggest that a growing proportion of this pretraining data may contain LLM-modified content. Preliminary research indicates that the inclusion of LLM-modified content \citep{veselovsky2023artificial} in LLM training can lead to several pitfalls, such as the reinforcement of stereotypes and biases against anyone who is not a middle-aged "European/North American man" \citep{ghosh2023person, santurkar2023whose}, the flattening of variation in language and content \citep{dell2023navigating}, and the potential failure of models to accurately capture the true distribution of the original content, which may result in model collapse \citep{shumailov2023curse}. 
Research has also shown that this phenomenon amplifies the effect of LLMs providing content that is unrepresentative of most of the world~\citep{santurkar2023whose}. As such, our results underscore the 
importance of robust data curation and filtering strategies even in seemingly unpolluted datasets.

\clearpage
\newpage
\begin{table}[htb!]
\small
\begin{center}
\begin{tabular}{lrcccc}
\cmidrule[\heavyrulewidth]{1-6}
\multirow{2}{*}{\bf No.} 
& \multirow{2}{*}{\bf \begin{tabular}[c]{@{}c@{}} Validation \\ Data Source 
\end{tabular} } 
& \multirow{2}{*}{\bf \begin{tabular}[c]{@{}c@{}} Ground \\ Truth $\alpha$
\end{tabular}}  
& \multirow{2}{*}{\bf \begin{tabular}[c]{@{}c@{}} RADAR \\ Estimated $\alpha$ 
\end{tabular}} 
& \multirow{2}{*}{\bf \begin{tabular}[c]{@{}c@{}} Deepfake  \\ Estimated $\alpha$ 
\end{tabular}}
& \multirow{2}{*}{\bf \begin{tabular}[c]{@{}c@{}} Fast-DetectGPT \\ Estimated $\alpha$
\end{tabular}}
\\
 & & & & &  \\
\cmidrule{1-6}
(1) & \emph{Computer Science(arXiv)} & 0.0\% & 98.8\% & 56.6\% & 14.8\%   \\
(2) & \emph{Computer Science(arXiv)} & 2.5\% & 98.8\% & 57.1\% & 15.0\%  \\
(3) & \emph{Computer Science(arXiv)} & 5.0\% & 98.8\% & 57.8\% & 15.3\%  \\
(4) & \emph{Computer Science(arXiv)} & 7.5\% & 98.8\% & 58.5\% & 15.1\%  \\
(5) & \emph{Computer Science(arXiv)} & 10.0\% & 98.9\% & 59.7\% & 15.1\%  \\
(6) & \emph{Computer Science(arXiv)} & 12.5\% & 98.9\% & 59.8\% & 15.7\%  \\
(7) & \emph{Computer Science(arXiv)} & 15.0\% & 98.9\% & 60.5\% & 15.6\%  \\
(8) & \emph{Computer Science(arXiv)} & 17.5\% & 98.9\% & 60.7\% & 15.7\%  \\
(9) & \emph{Computer Science(arXiv)} & 20.0\% & 99.0\% & 62.2\% & 15.6\%  \\
(10) & \emph{Computer Science(arXiv)}& 22.5\% & 99.1\% & 63.6\% & 15.8\%  \\
(11) & \emph{Computer Science(arXiv)}& 25.0\% & 99.0\% & 62.9\% & 15.6\% \\
\cmidrule[\heavyrulewidth]{1-6}
\end{tabular}
\caption{
\textbf{Validation accuracy for classifier-based methods.} 
Our estimate for $\a$ is the fraction of sentences which the classifier believes are AI-generated. RADAR and Fast-DetectGPT all produce estimates which remain almost constant, independent of the true $\alpha$. The Deepfake estimates are correlated with the true $\alpha$, but the estimates are still far off. This may be due to a distribution shift between the data used to train the classifier (likely general text scraped from the internet) vs. text found in academic papers. For each ground truth alpha, n=10,000 sentences.
}
\label{baseline}
\end{center}
\vspace{-5mm}
\end{table}
\begin{table}[ht!]
\small
\begin{center}
    \begin{tabular}{lcc}
        \toprule
                             & \textbf{2022 Papers} & \textbf{2023 Papers} \\ \midrule
        \textbf{Papers of High LLM Usage} & 0.798                & 0.748                \\ 
        \textbf{Papers of Low LLM Usage}  & 0.792                & 0.781                \\\bottomrule
        
    \end{tabular}
\caption{
\textbf{
Comparison of Mean Distances to Nearest Neighbors.} While differences are minimal when compared to the 2022 papers, the divergence is more pronounced when using the 2023 papers, suggesting that LLM usage may be contributing to increased similarity among recent papers.
}
\label{simi}
\end{center}
\vspace{-5mm}
\end{table}

\clearpage
\newpage

\begin{figure}[ht!] 
    \centering
    \includegraphics[width=1.00\textwidth]{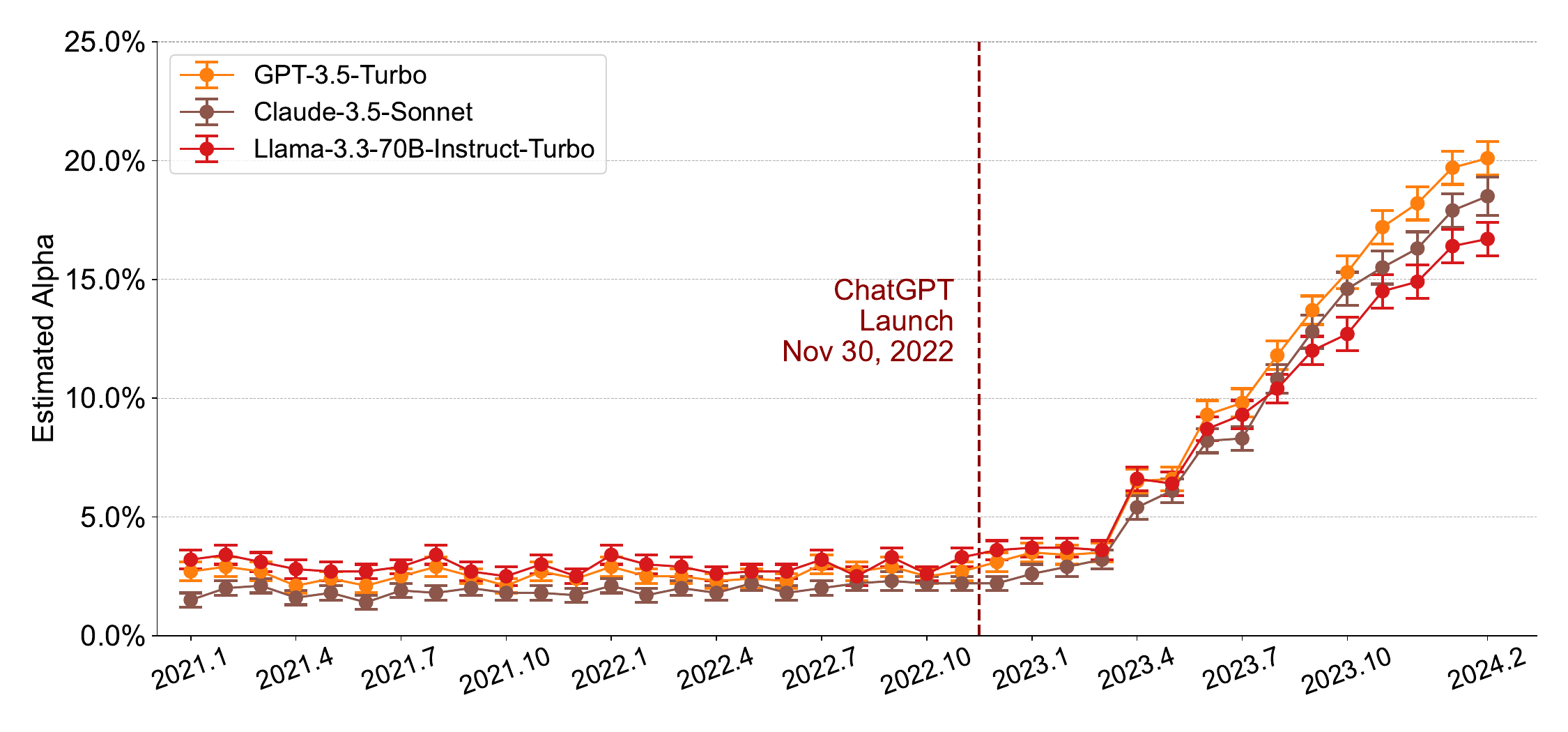}
    \caption{
    \textbf{Monthly growth of LLM usage in Computer Science publications (n = 2,000 independent papers per month) on \textit{arXiv} using training data generated by different models. } Similar upward trajectories are observed with minimal variance when tested separately, indicating the robustness of the method to the choice of LLM. Data are presented as mean $\pm$ 95\% CI based on 1,000 bootstrap iterations.
    }
    \label{different model}
\end{figure}

\clearpage
\newpage

\begin{figure}[htb!]
\begin{lstlisting}
Your task is to write an abstract for an academic paper based on its introduction.

An abstract is a short summary of your (published or unpublished) research paper, usually about a paragraph (c. 6-7 sentences, 150-250 words) long. A well-written abstract serves multiple purposes: 1) an abstract lets readers get the gist or essence of your paper or article quickly, in order to decide whether to read the full paper; 2) an abstract prepares readers to follow the detailed information, analyses, and arguments in your full paper; 3) and, later, an abstract helps readers remember key points from your paper.

Usually an abstract includes the following: 
1) A brief introduction to the topic that you're investigating; 
2) Explanation of why the topic is important in your field/s; 
3) Statement about what the gap is in the research; 
4) Your research question/s / aim/s; 
5) An indication of your research methods and approach; 
6) Your key message; 
7) A summary of your key findings; 
8) An explanation of why your findings and key message contribute to the field/s.

Output abstract only.
\end{lstlisting}
\caption{\textbf{Example prompt for directly generate a paper's abstract based on its introduction.}}

\label{new_prompt}
\end{figure}

\begin{figure}[ht!] 
    \centering
    \includegraphics[width=1.00\textwidth]{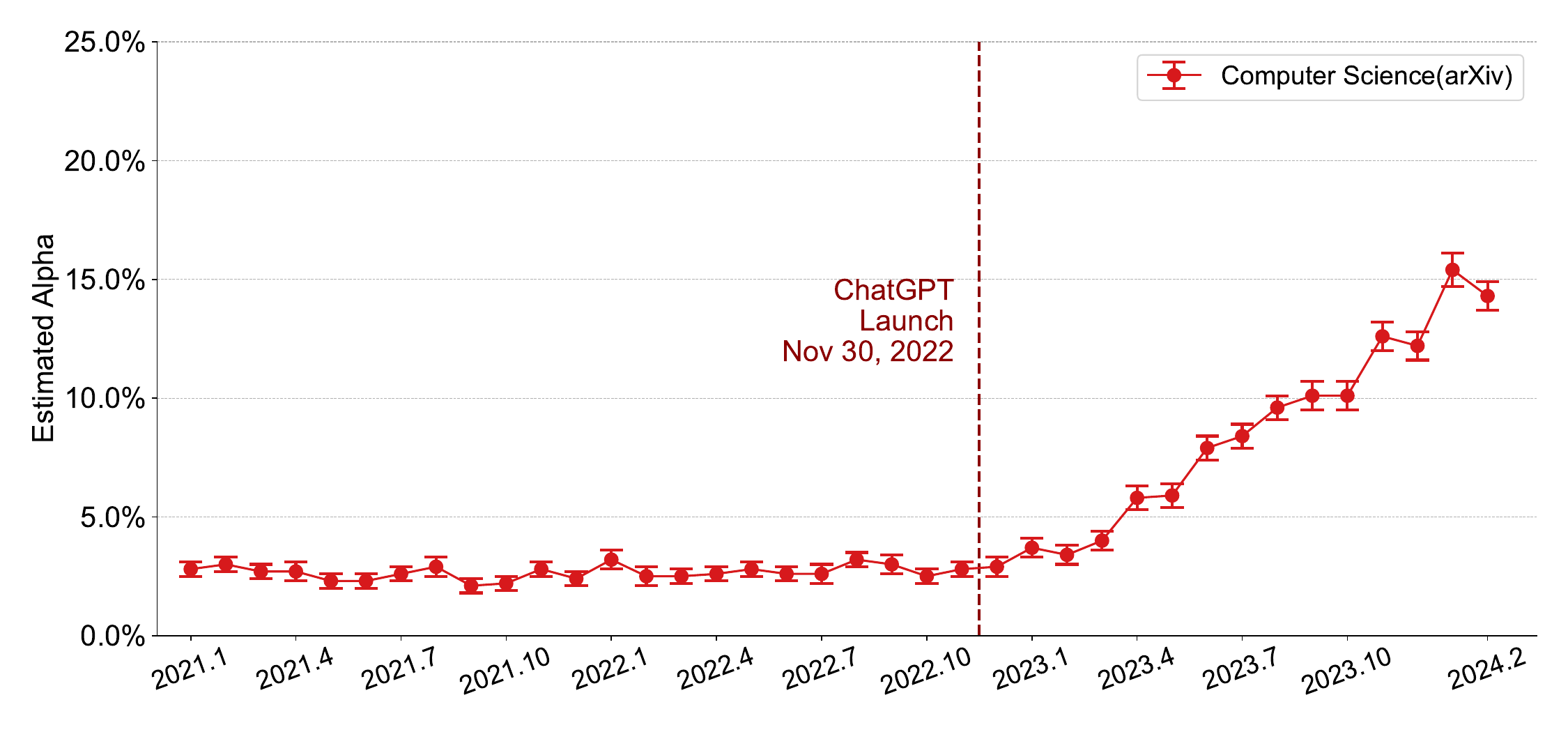}
    \caption{
    \textbf{Monthly growth of LLM usage in Computer Science paper abstracts (n = 2,000 independent samples per month) on \textit{arXiv}.} We use a completely different prompt to let the LLM directly generate abstracts based on papers' introduction. Results are consistent with those observed using two-stage method to generate the LLM training data (Figure~\ref{fig: temporal-abstract}). 
    Data are presented as mean $\pm$ 95\% CI based on 1,000 bootstrap iterations.
    }
    \label{one_stage}
\end{figure}

\newpage
\clearpage

\begin{figure}[ht!] 
    \centering
    \includegraphics[width=1.00\textwidth]{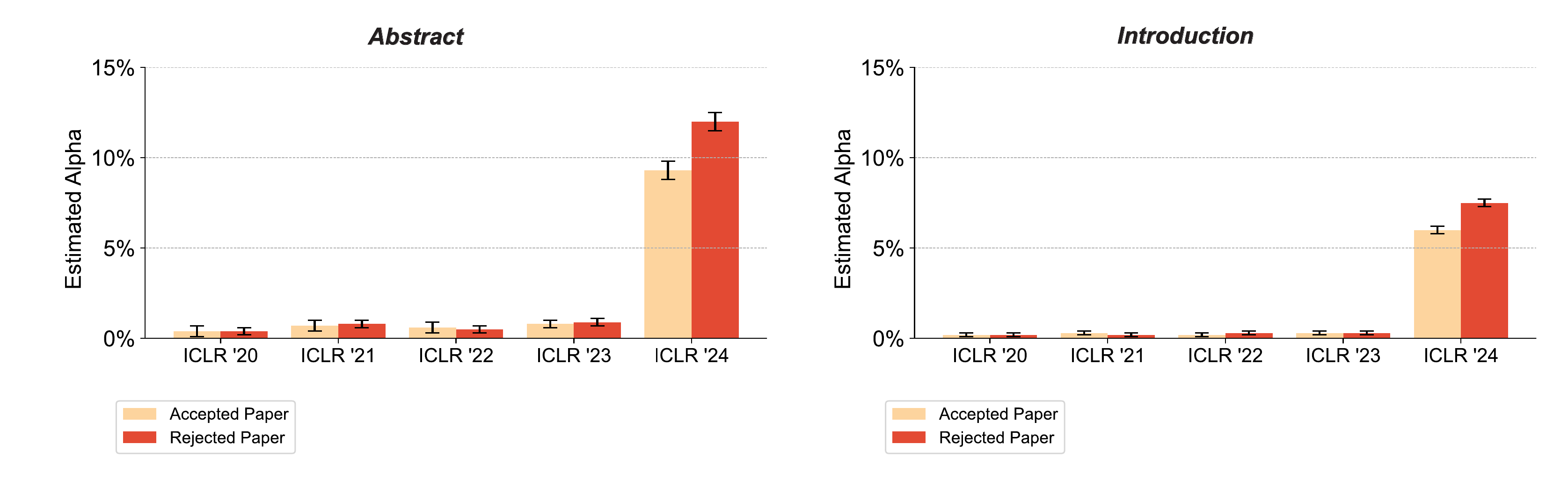}
    \caption{\textbf{Associations between LLM-modification and paper decisions in OpenReview \textit{ICLR} papers}.
Rejected Papers tend to have a higher fraction of LLM-modified content. Data are presented as mean $\pm$ 95\% CI based on 1,000 bootstrap iterations. 
From ICLR '20 to ICLR '24, the sample sizes for accepted papers are: 612, 828, 1,035, 1,513, and 2,136, and for rejected papers: 1,457, 1,643, 1,438, 2,120, and 3,281, respectively.
    }
    \label{openreview}
\end{figure}

\begin{figure}[ht!] 
    \centering
    \includegraphics[width=1.00\textwidth]{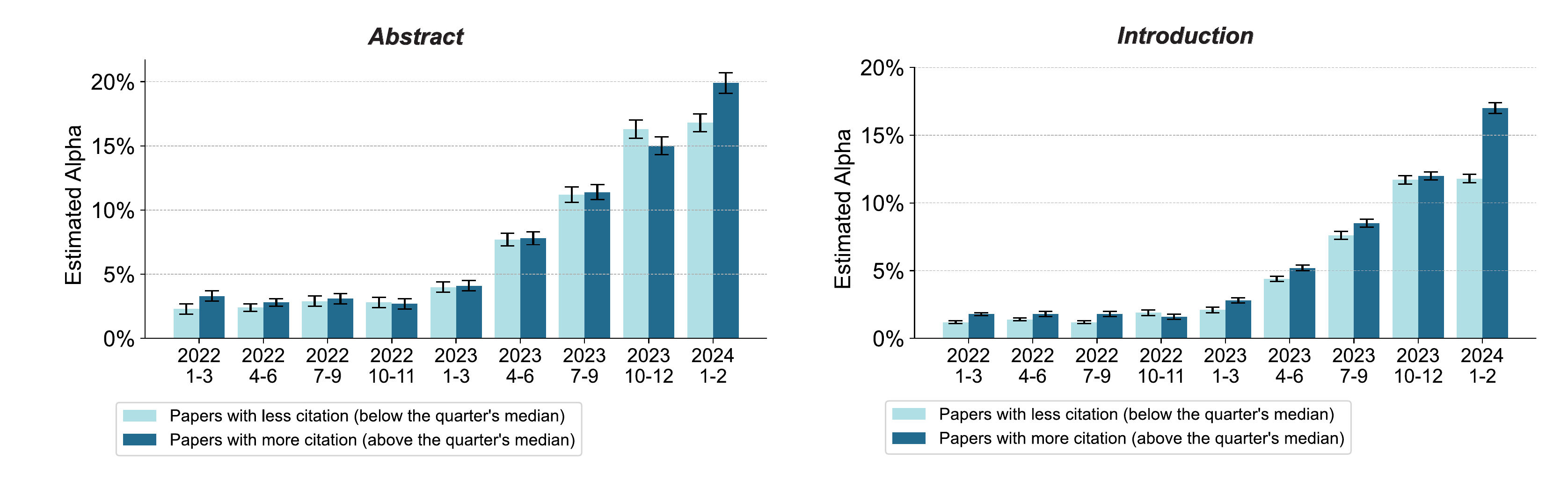}
    \caption{\textbf{Relations between LLM-modification and paper citations in \textit{arXiv} Computer Science papers}.
Papers in \textit{arXiv} Computer Science are stratified into two groups based on
the citation provided by Semantic Scholar. However, no consistent differences are observed between the two groups.
Data are presented as mean $\pm$ 95\% CI based on 1,000 bootstrap iterations. In both groups, n = 2,000 independent papers per bar.
}
    \label{citation}
\end{figure}
\clearpage
\newpage

\begin{figure}[ht!] 
    \centering
    \includegraphics[width=0.50\textwidth]{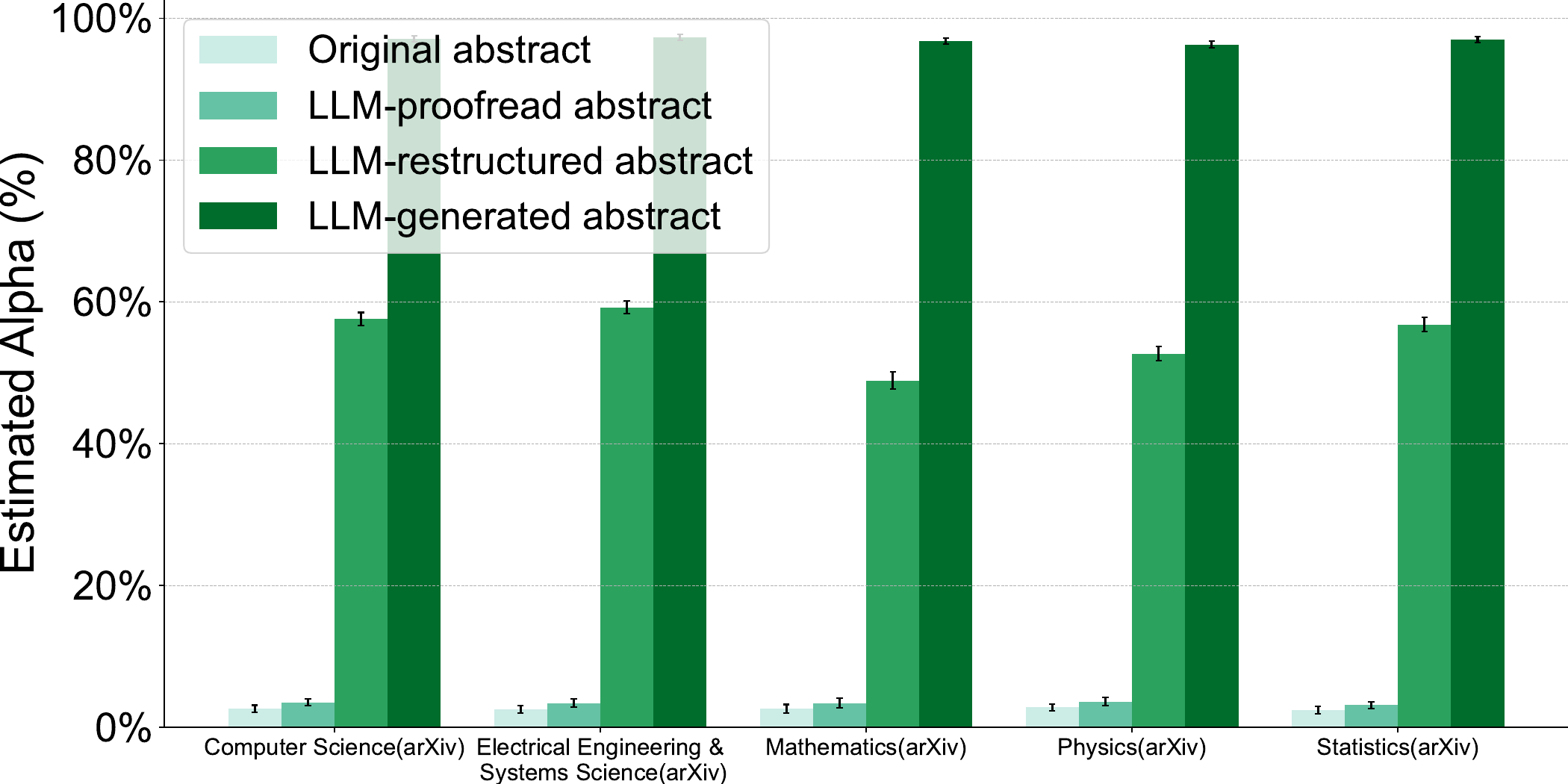}
    \caption{\textbf{Correlation between the extend of LLM modification and estimated results.}
Across five \textit{arXiv} subject areas, the bars representing more substantial LLM involvement (from proofreading to restructuring to full generation) consistently rise to higher levels of estimated LLM usage. Specifically, the LLM‐proofread abstracts increase the estimated LLM usage by approximately 1\% relative to the baseline false positive rate, the LLM‐restructured versions raise the predicted usage to around 50\%, and the fully LLM‐generated abstracts can reach as high as 97\%. This indicates that our estimation for LLM usage should serve as a lower bound when considering different types of LLM-assisted writing. Data are presented as mean $\pm$ 95\% CI based on 1,000 bootstrap iterations. For each bar, the sample size is n = 1,000 independent abstracts.
}
\label{extend}
\end{figure}

\chapter{Widespread LLM Adoption Across Society}

\label{ch:widespread-adoption}

\section{Results}
\label{sec:Results}

\subsection{Widespread Adoption of Large Language Models in Writing Assistance Across Domains}

We systematically analyzed large language model (LLM) adoption patterns across four distinct domains: consumer complaints, corporate PR communications, job postings, and governmental press releases. Our analysis reveals a consistent pattern of initial rapid adoption following ChatGPT's release, followed by a notable stabilization period that emerged between mid to late 2023 across all domains (\textbf{Fig.~\ref{fig:main:1}}).\footnote{While the patterns across all time series show a slower adoption through 2024, these could be (at least partly) the product of more sophistication when adopting AI tools or the developments in LLMs making writing more undistinguishable from human writing.}

In the consumer complaint domain (\textbf{Fig.~\ref{fig:main:1}a}), initial LLM adoption surged about 3-4 months after the release of ChatGPT in November 2022. The proportion of content flagged as LLM-generated or substantially modified rose sharply from a baseline algorithm false positive rate of 1.5\% to 15.3\% by August 2023. This rapid growth plateaued, with only a modest increase to 17.7\% observed through August 2024.

Corporate press releases demonstrated similar adoption trends across platforms (\textbf{Fig.~\ref{fig:main:1}b}), once again about 3-4 months post-ChatGPT release. Newswire saw rapid growth, peaking at 24.3\% in December 2023 and stabilizing at 23.8\% through September 2024. PRNewswire followed closely, reaching 16.4\% in December 2023 and maintaining this level through September 2024. PRWeb exhibited comparable dynamics, with data available through January 2024.

LinkedIn job postings from small organizations showed profession-specific adoption trends but similarly reflected a slowing trajectory (\textbf{Fig.~\ref{fig:main:1}c}). Following a five-month lag post-ChatGPT release, adoption increased steadily across professional categories, peaking in July 2023 between 6-10\%. These figures are higher in the sample of small and young firms, where they reach more than 10\%, and up to 15\% (\textbf{Fig.~\ref{fig:main:4}})
. Adoption rates either plateaued or showed signs of slight declines through October 2023, when the latest data was available. 

International organization communication, here measured by United Nations press releases by country teams followed a similar adoption pattern with initial rapid growth that later plateaued (\textbf{Fig.~\ref{fig:main:1}d}). The initial phase was marked by a rapid increase from 3.1\% in Q1 2023 to 10.1\% in Q3 2023. This was followed by a slower, incremental rise, reaching 13.7\% by Q3 2024. 

\subsection{Geographic and Demographic Disparities in Consumer Complaint LLM Adoption}

Our analysis of Consumer Financial Protection Bureau complaints revealed some geographic and demographic heterogeneity in LLM adoption patterns (\textbf{Fig.~\ref{fig:main:2}}).  At the state level, we observed variation in adoption rates during the January-August 2024 period, with highest adoption in Arkansas (29.2\%, 7,376 complaints), Missouri (26.9\%, 16,807 complaints), and North Dakota (24.8\%, 1,025 complaints). This contrasted sharply with minimal adoption in West Virginia (2.6\%, 2,010 complaints), Idaho (3.8\%, 1,651 complaints), and Vermont (4.8\%, 361 complaints). Notably, major population centers demonstrated much less variation in adoption levels, with California (157,056 complaints) and New York (104,862 complaints) showing rates of 17.4\% and 16.6\%, respectively (\textbf{Fig.~\ref{fig:main:2}a}). However, this could be interpreted either as a genuine differential compared to the smaller states in the left and right tail or the product of lower sample noise (due to higher number of observations).

The adoption of LLMs varied over time between more and less urbanized areas. Analysis using Rural Urban Commuting Area (RUCA) codes showed that highly urbanized and non-highly urbanized areas initially displayed similar adoption trajectories during the early phase (2023Q1-2023Q3). However, these trajectories subsequently diverged, reaching equilibrium levels of 18.2\% in highly urbanized areas compared to 10.9\% in non-highly urbanized areas (\textbf{Fig.~\ref{fig:main:2}b}). These differences were highly statistically significant at all conventional levels.

Areas with lower educational attainment showed somewhat higher LLM adoption rates in consumer complaints. Comparing areas above and below state median levels of bachelor's degree attainment, areas with lower educational attainment ultimately stabilized at rates of around 19.9\% in 2024Q3 (compared with 17.4\%) (\textbf{Fig.\ref{fig:main:2}c}). This pattern persisted even within highly urbanized areas, where lower-education regions demonstrated higher adoption rates (21.4\% versus 17.8\% by 2024Q3) (\textbf{Fig.\ref{fig:main:2}d}). In both comparison, the p-values were less than 0.001, indicating statistically significant differences, despite qualitatively similar trends.

\subsection{LLM Adoption in Corporate Press Releases}
After characterizing consumer-side adoption patterns, we next examined corporate LLM usage across major corporate press release platforms---Newswire, PRWeb, and PRNewswire, each of which caters to different audiences and industries (\textbf{Fig.~\ref{fig:main:1}b}, \textbf{Fig.~\ref{fig:main:3}a-b}).\footnote{A vast oversimplification based on available data would be that PRNewswire generally targets larger corporations with extensive reach to major news outlets and traditional media. PRWeb offers a more affordable, online-focused option with an emphasis on SEO, catering to smaller businesses. Newswire reaches both traditional and online platforms. All three offer some editorial services but focus primarily on distribution of the contents produced by the businesses.}

Before the launch of ChatGPT, the fraction of AI-modified content remained consistently low across all these sources, fluctuating around the 2-3\% mark (i.e., false positives). However, following the launch, a significant increase in AI-modified content was observed across all domains, about 2 quarters post rollout. Newswire, in particular, experienced the most dramatic rise, with the estimated fraction peaking at over 25\% by late-2023. PRWeb and PRNewswire also saw notable growth, though to a lesser degree, plateauing around 15\%. This suggests a widespread uptake of LLM technology in content creation across different types of press releases starting in early 2023.

In \textbf{Fig.~\ref{fig:main:3}a-b}, we show the quarterly growth of LLM usage in press releases across different categories for PRNewswire (a) and PRWeb (b). Both charts show a sharp rise in AI-modified content starting in early 2023, with some differential patterns emerging by topic. In both platforms, the categories "Business \& Money" and "Science \& Tech" exhibit the most pronounced increase, with Science \& Tech reaching just below 17\% by Q4 2023. "People \& Culture" and "Other" categories also demonstrate growth, but at a somewhat slower pace, which may be indicative that LLM adoption has been particularly strong in more technical and business-focused content.

Overall, we show a significant uptick in LLM writing across various press release categories. On one hand, the sharp increase in AI-modified content in press releases suggests that businesses are leveraging LLMs to improve efficiency in content creation. By utilizing AI, companies can potentially produce high-quality communications more quickly and cost-effectively, especially in areas requiring frequent updates and complex information dissemination. This may also be advantageous if companies are trying to withhold more sensitive information from the public and use more generic language. On the other hand, the growing reliance on AI-generated content may introduce challenges in communication. In sensitive categories, over-reliance on AI could result in messages that fail to address concerns or overall release less credible information externally. Over-reliance on AI could also introduce public mistrust in the authenticity of messages sent by firms.

\subsection{LLM Adoption in LinkedIn Job Postings}

We next examined another dimension of corporate LLM adoption through analysis of LinkedIn job postings.
We first took the whole sample of LinkedIn job posting and analyzed the effects (\textbf{Fig.~\ref{fig: full-sample-LinkedIn}}). In this full sample, we see that about 3-4\% of all vacancy postings have LLM modified content. Albeit a small increase, this is generally statistically different from pre-ChatGPT introduction (i.e. false positive) levels (with p-values less than 0.001 across categories).  However, this broader sample heavily features larger firms that post more vacancies and have greater financial and human resources to customize those postings. Such firms may also advertise the same position multiple times throughout the year and rely on their established reputation, reducing the need to update job postings frequently. Consequently, for the remainder of this analysis, we focused on small companies, defined either as firms which post less than the median number of vacancies (2 or less each year), or as businesses with 10 or fewer registered employees in 2021 or those posting two or fewer positions per year on LinkedIn.

Using the sample of small companies based on the number of vacancies posted, our findings reveal a gradual but notable increase in the estimated fraction of AI-modified content for several job categories (\textbf{Fig.~\ref{fig:main:1}d}, \textbf{Fig. \ref{fig:main:4}}). Prior to the launch of ChatGPT, the fraction of AI-modified text hovered between 0–2\% across all categories, reflecting the range of false positives. After ChatGPT became available, a discernible uptick begins around early to mid-2023, leveling off by October 2023 at roughly 5–10\% for all categories. The increase is most pronounced in engineering and sales postings, which each approach 10\% AI-modified content. Finance, Admin, Scientist, and Operations show a somewhat slower growth, albeit the differences between these categories are small. If instead we define small companies by the number of employees (\textbf{Fig.~\ref{fig: supp-robust-small-company-definition}}) the Scientist category ranks first. \footnote{This may be some evidence that firms requiring more advanced scientific, financial, or marketing expertise might be more inclined to adopt AI technologies, although the differences are modest.}

We further stratified these small firms by founding year—grouping them into post‐2015, 2000–2015, 1980–2000, and pre‐1980 cohorts (\textbf{Fig.\ \ref{fig:main:4}}), based on the rough quartiles in the data. Across every job category, more recently founded companies consistently exhibit both the highest levels and the fastest uptake of LLM‐related text generation, especially following ChatGPT’s launch. Firms founded after 2015 reach 10–15\% AI‐modified text in certain roles, whereas those founded between 2000 and 2015 show moderate growth of 5–10\%. By comparison, firms founded before 1980 typically remain below 5\%. These results underscore how younger firms—possibly with younger workforces—more readily integrate new AI technologies into their hiring and onboarding processes, whereas older organizations may adopt such tools more conservatively. Overall, firm age and size emerge as (perhaps the most) significant correlates of the heterogeneity observed in LLM uptake throughout our analyses.

This trend highlights a potential shift in recruitment practices among small firms, showcasing a growing reliance on AI-writing tools. On one hand, this can decrease company hiring costs, with smaller and younger enterprises being more likely to leverage advanced tools to remain competitive despite perhaps being more liquidity constrained. On the other hand, the adoption of LLM writing in job posting could either enhance or decrease the efficiency and effectiveness in attracting qualified candidates. For jobseekers, one possible negative effect is harder differentiation between posting firms quality and position requirements.

The leveling off or even slight decrease in AI-modified content by October 2023 might indicate that the adoption rate has stabilized, potentially reaching a saturation point where firms comfortable with AI have already adopted it. Alternatively, this can be explained by increased sophistication and subtlety of these methods. Overall, the increased integration of AI in job postings suggests a transformative period in hiring, with AI playing an important role in how small firms communicate job opportunities. This could have implications for job seekers as well, who may encounter more uniformly crafted postings and might need to adapt their application strategies accordingly.

\subsection{LLM Adoption in United Nations Press Releases}

United Nations press releases exhibited a similar two-phase adoption pattern, with an initial surge from 3.1\% to 10.1\% in Q1-Q3 2023, followed by a more gradual increase to 13.7\% by Q3 2024 (\textbf{Fig.~\ref{fig:main:1}d}). 
Across UN Member States country teams, we observed consistent adoption patterns across regions, with adoption rates reaching 11-14\% by 2024, with the exception of the UN teams in Latin American and Caribbean countries that had slightly higher adoption rates at about 20\% (\textbf{Fig.~\ref{fig: supp-robust-US-country-groups}}). The steady growth across regions reflects how LLMs are being integrated globally, even in contexts of sensitive, high-stakes communication. 

This rapid uptake suggests that country teams have found LLMs valuable for producing timely updates, which can be especially useful during pressing crises. On the other, this trend raises questions about how LLMs might affect the authenticity of vital international communication. As the UN continues to refine its stance on AI, this highlights a broader trend: even the world’s most prominent international bodies are using LLMs in their communications--underscoring both the perhaps inevitability of AI-driven writing and the questions it raises about authenticity and accountability.

\section{Discussion}

Our findings reveal widespread adoption of large language models across diverse writing domains, ranging consumers, firms and international organizations. This finding complements and extends our previous research that found widespread adoption across academic researchers.\cite{liang2024monitoring} 
A consistent temporal pattern emerges from our data: after an initial lag of 3–4 months following the ChatGPT launch, there was a sharp surge in LLM usage, which then stabilized by late 2023 and remained steady through 2024. This trajectory deviates from traditional diffusion models that predict continuous and gradual growth, suggesting several possibilities. Early adopters may have already reached a saturation point within their domains, or domain-specific barriers (generally, these can range from costs of adoption, regulatory constraints, concerns over authenticity coupled with advances in users recognizing AI writing, etc.) that could be impeding further expansion. Alternatively, improvements in LLM sophistication may be rendering AI-generated content increasingly indistinguishable from human writing, complicating our ability to measure ongoing adoption.

In the consumer complaint domain, the geographic and demographic patterns in LLM adoption present an intriguing departure from historical technology diffusion trends \cite{rogers2014diffusion, bloom2021diffusion, 10.1093/qje/qjaf002} and technology acceptance model \cite{davis1989technology, venkatesh2003user}, where technology adoption has generally been concentrated in urban areas, among higher-income groups, and populations with higher levels of educational attainment \cite{rojas2017demographics, foster2010microeconomics}. While the urban-rural digital divide seems to persist, our finding that areas with lower educational attainment showed modestly higher LLM adoption rates in consumer complaints suggests these tools may serve as equalizing tools in consumer advocacy. This finding aligns with  survey evidence indicating  that younger, less experienced workers may be more likely to use ChatGPT \cite{humlum2024chatgpt}. %
This democratization of access underscores the potentially transformative role LLMs could play in amplifying underserved voices. However, further study is needed to assess whether this increased adoption translates into more effective consumer outcomes.

Corporate communication channels also demonstrated widespread but decelerating LLM integration. The plateauing adoption across platforms like Newswire, PRWeb, and PRNewswire raises important considerations about the balance between cost efficiency and authenticity. While LLMs may enable rapid, cost-effective content generation, overreliance on automated tools could compromise the nuance and credibility required in professional communications, potentially eroding trustworthiness \cite{jakesch2019ai, hong2018bias, kadoma2024generative}. Future research should explore how organizations navigate this trade-off and whether editorial interventions are employed to mitigate potential drawbacks.

In the recruitment process, small firms, particularly those founded after 2015, exhibited the fastest adoption of LLM-generated content. This trend suggests that younger, or companies closer to the technological frontier are leveraging LLMs to streamline hiring processes and reduce costs.\footnote{This is consistent with previous research finding that younger firms may invest relatively more in AI, but may superficially seem in contrast with datapoints showing larger firms and firms with higher cash holdings increase their AI investments more.\cite{babina2024} We think that it is possible that younger, smaller firms may use more cost-effective AI products such as ChatGPT, and may also have a lower time from AI usage to output.} While our study did not directly measure homogenization, prior research on the homogenization of LLM-generated content in academia~\cite{liang2024monitoring,liang2024mapping} suggests that similar effects could occur in job postings. This potential homogenization may inadvertently obscure critical distinctions between roles and organizations, potentially complicating job seekers' decision-making. In fact, recent evidence has shown that while employers who leverage LLM to generate first draft of job post may receive more applications, they are less likely to make a hire\cite{wiles2023impact}. Further investigating how AI-generated postings influence applicant perceptions and hiring could provide valuable insights into the long-term implications of this shift.

International institutions communication, exemplified by United Nations press releases, also demonstrated significant LLM adoption. These patterns remained robust when stratifying by regional country groups (\textbf{Fig.\ref{fig: supp-robust-US-country-groups}}). 
The presence of LLM-generated content within such formal and traditionally cautious institutions suggests that AI-driven tools are gradually influencing even high-stakes communication channels, reflecting the broad and expanding reach of these technologies. As it was the case with corporate communications, these findings raise the same trade-off between cost-efficiency and credibility.

The stabilization of LLM adoption may reflect either the maturation of AI integration or domain-specific friction factors. As LLM technologies continue to evolve, future research should aim to disentangle the drivers of adoption plateaus by examining whether they stem from market saturation, improvements in LLM indistinguishability, or external barriers. They should evaluate the impact of LLM-generated content on communication quality, credibility, and user engagement across sectors and investigate potential homogenization effects in job postings and other domains to assess how uniform AI-generated content might affect decision-making and market dynamics.

Our study has several limitations. While we focused on widely used LLMs like ChatGPT, which account for a significant portion of global usage~\cite{vanrossum2024generative}, we acknowledge that other models also contribute to content generation across domains. Additionally, although prior research has shown that GPT-detection methods can sometimes misclassify non-native writing as AI-generated~\cite{Liang2023GPTDA}, our findings consistently indicated low false positive rates during earlier periods. However, shifts in user demographics or language usage could still influence detection accuracy~\cite{Globalaitalent}. 

Perhaps the biggest limitation in our study is that we cannot reliably detect language that was generated by LLMs, but was either heavily edited by humans or was generated by models that imitate very well human writing. Therefore, one way to interpret our study is as a lower bound of adoption patterns.
Finally, our analysis primarily focuses on English-language content, potentially overlooking adoption trends in non-English-speaking regions. Future research could expand on these findings by incorporating multilingual data and refining detection methodologies.

In conclusion, we show that LLM writing is a new pervasive reality across consumer, corporate, recruitment, and even governmental communications. As these technologies continue to mature, understanding their effects on content quality, creativity, and information credibility will be critical. Addressing the regulatory and ethical challenges associated with AI-generated content will also be essential for ensuring that the benefits of LLMs are realized while maintaining transparency, diversity, and public trust in communication.

\begin{figure}[htb]
\centering
\includegraphics[width=\textwidth]{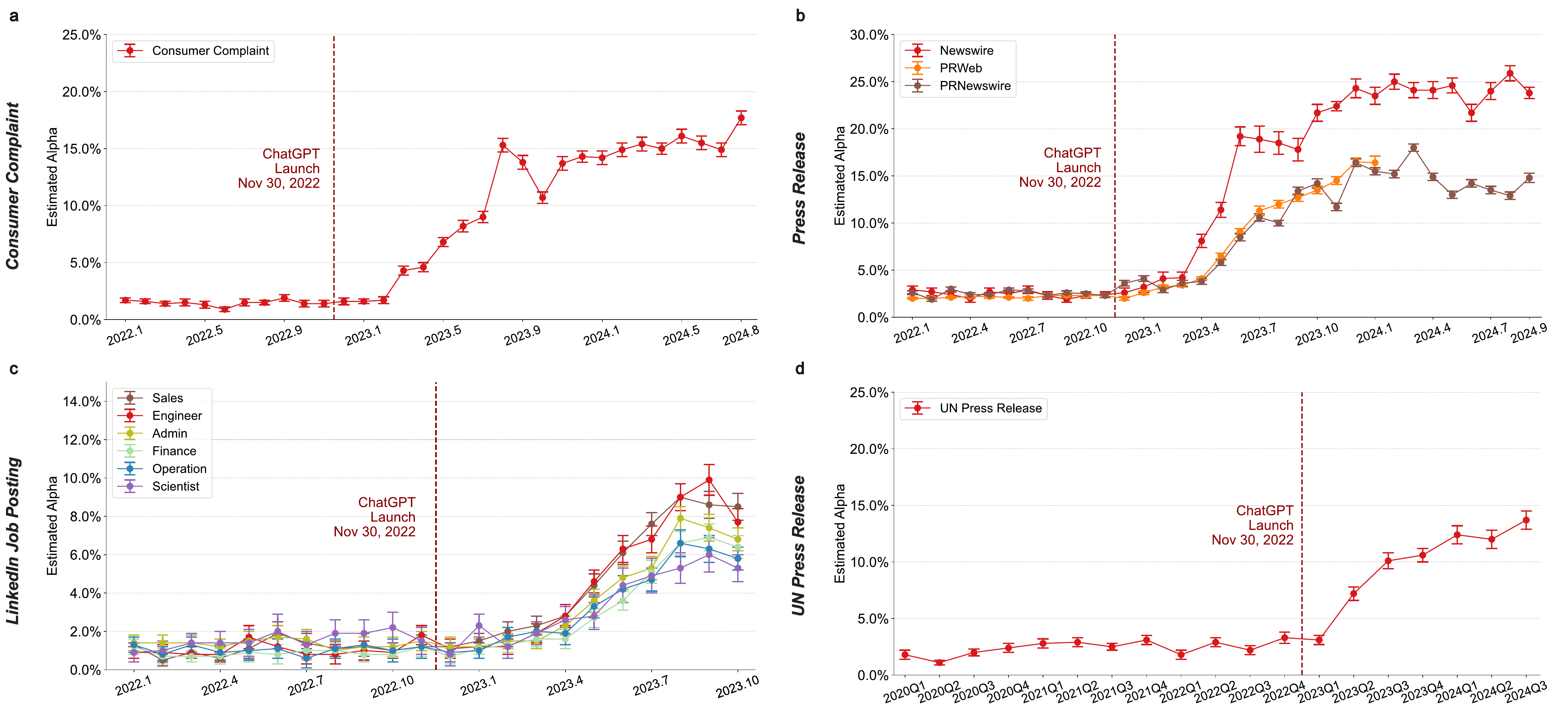}
\caption{
\textbf{Temporal dynamics of large language model (LLM) adoption across diverse writing domains.}
Analysis of LLM-generated or substantially modified content across four domains: (a) Consumer complaints filed with the Consumer Financial Protection Bureau showed algorithm false positive rate of 1.5\% pre-ChatGPT release (November 2022), followed by genuine LLM adoption rising to 15.3\% by August 2023, before plateauing at 17.7\% through August 2024. (b) Corporate press releases demonstrated consistent adoption patterns across platforms: Newswire platform showed rapid uptake reaching 24.3\% by December 2023, stabilizing at 23.8\% through September 2024; PRNewswire demonstrated similar trends with peak adoption at 16.4\% (December 2023) maintaining at 16.5\% through September 2024; PRWeb showed comparable patterns (data available through January 2024). (c) LinkedIn job postings from small organizations (below median job postings) displayed consistent trends across professional categories, with adoption increasing post-ChatGPT release (5-month lag), peaking in July 2023 before plateauing or slightly declining through October 2023. (d) United Nations government press releases showed two phases: rapid initial adoption (Q1 2023: 3.1\% to Q3 2023: 10.1\%), followed by a more gradual increase to 13.7\% by Q3 2024. This figure displays the fraction ($\alpha$) of sentences estimated to have been substantially modified by LLM using our previous method \cite{liang2024monitoring}. Error bars indicate 95\% confidence intervals by bootstrap.
}
\label{fig:main:1}
\end{figure}

\clearpage
\newpage

\begin{figure}[htb]
\centering
\includegraphics[width=\textwidth]{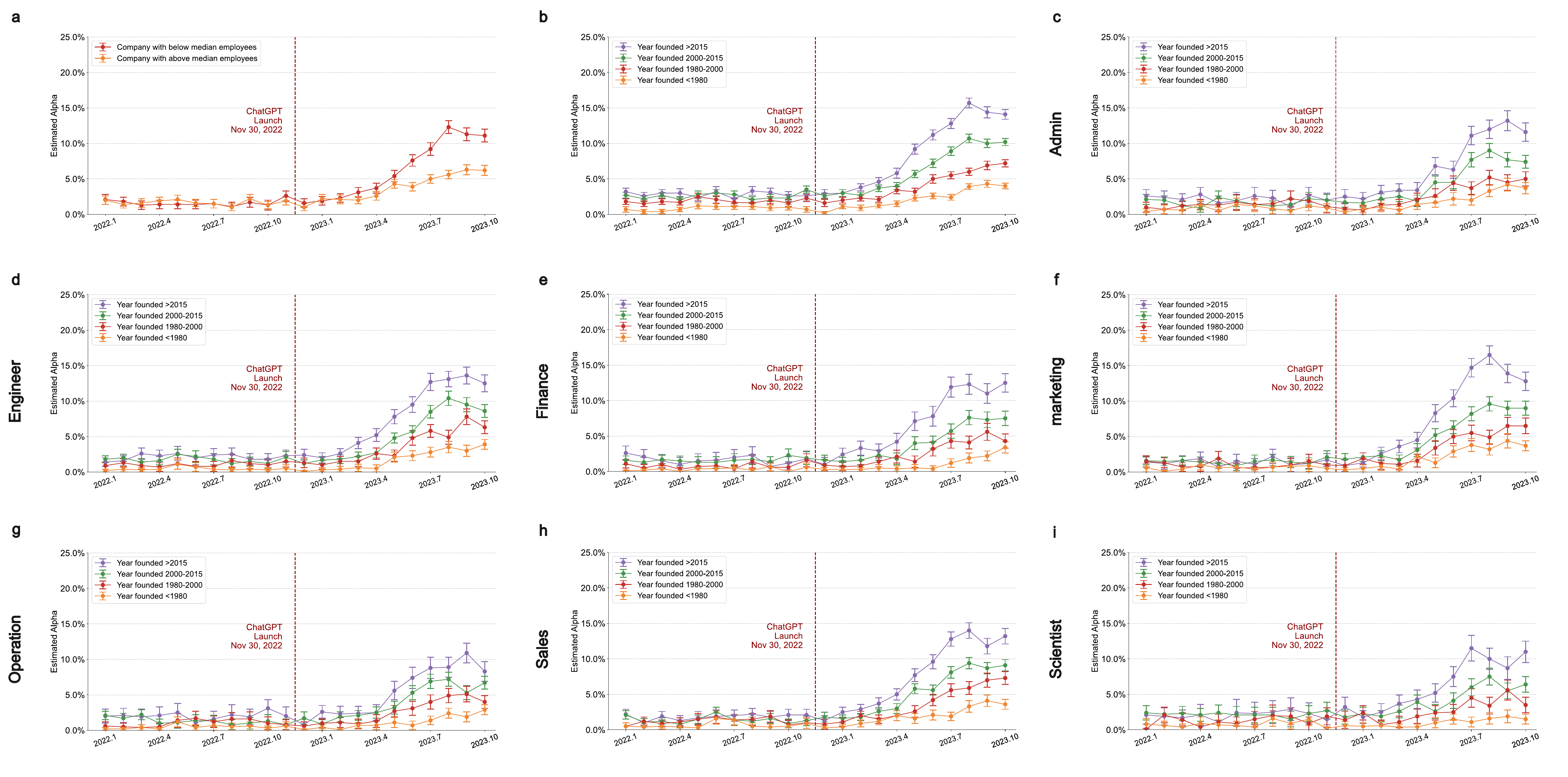}
\caption{
\textbf{Organization age and LLM adoption patterns in LinkedIn job postings from small organizations across professional categories.}
(a) Among small organizations (less than median job vacancies), analysis stratified by number of employees revealed higher LLM adoption rates in firms with below median employees (11.1\% vs 6.2\% by October 2023). (b) Among small organizations (less than median job vacancies), analysis stratified by founding year revealed higher LLM adoption rates in more recently established firms (founded after 2015: 14.1\%; 2010-2015: 10.2\%; 1980-2000: 7.2\%; pre-1980: 4.0\%). (c-i) This age-dependent pattern persisted across professional categories: Admin (c), Engineer (d), Finance (e), Marketing (f), Operations (g), Sales (h), and Scientist (i), with newer organizations consistently showing higher adoption rates. We defined small organizations based on having 2 or less job vacancy postings in a year (median is 3).
}
\label{fig:main:4}
\end{figure}
\clearpage
\newpage

\begin{figure}[htb]
\centering
\includegraphics[width=\textwidth]{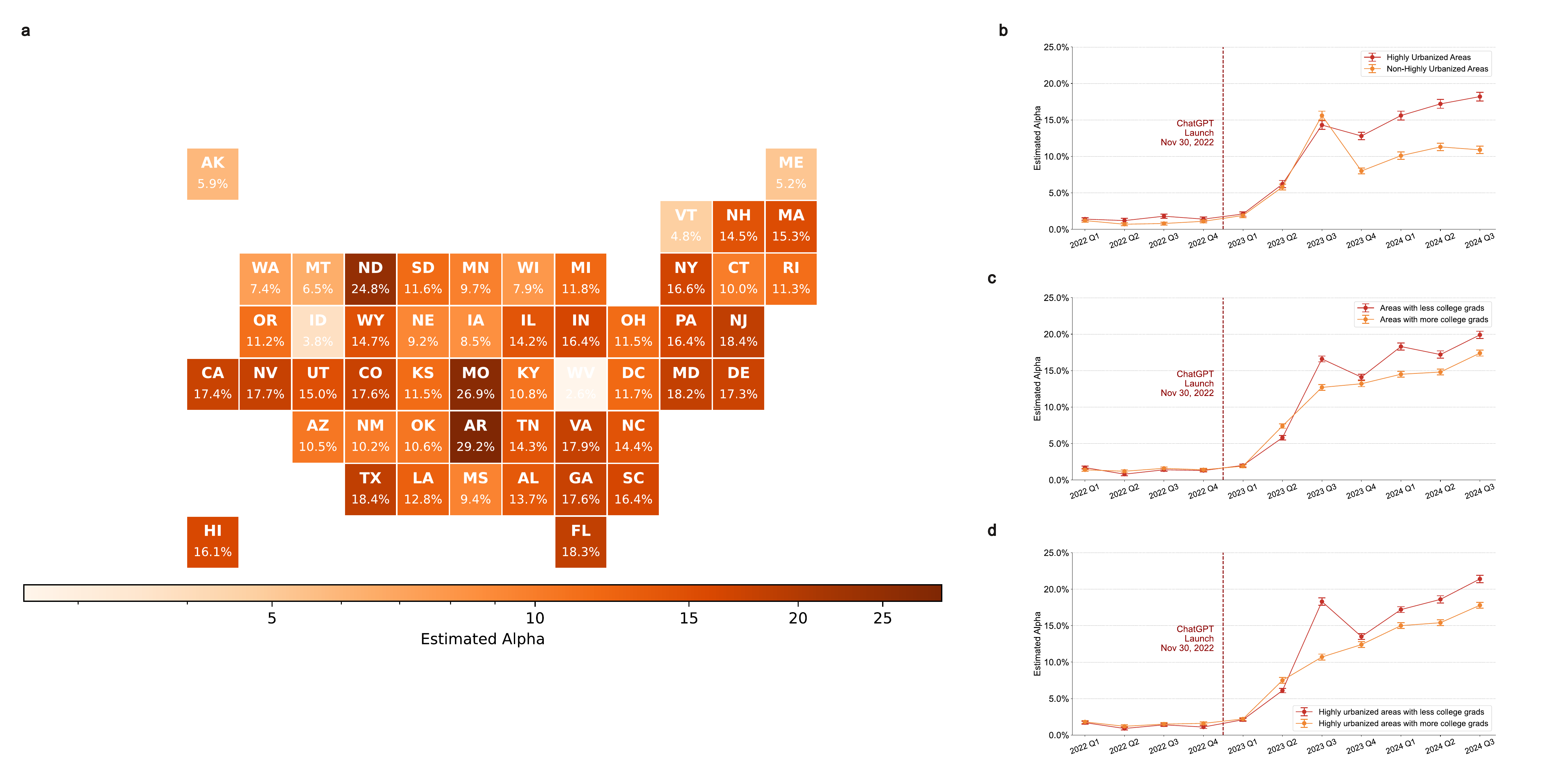}
\caption{
\textbf{Geographic and demographic patterns of LLM adoption in Consumer Financial Protection Bureau complaints.}
(a) State-level analysis (January-August 2024) revealed substantial geographic variation, with highest adoption in Arkansas (29.2\%), Missouri (26.9\%), and North Dakota (24.8\%), contrasting with lowest rates in West Virginia (2.6\%), Idaho (3.8\%), and Vermont (4.8\%). Notable population centers showed moderate adoption (California: 17.4\%, New York: 16.6\%). (b) Analysis by Rural Urban Commuting Area (RUCA) codes showed similar adoption trajectories between highly urbanized and non-highly urbanized areas during initial uptake (2023Q1-2023Q3), before diverging to equilibrium levels of 18.2\% and 10.9\%, respectively. (c) Comparison of areas above and below state median levels of bachelor's degree attainment (population aged 25+) revealed comparable initial adoption patterns (2023Q1-2023Q2), followed by higher stabilized rates in areas with lower educational attainment (19.9\% vs 17.4\% by 2024Q3). (d) Within highly urbanized areas, this educational attainment pattern persisted, with lower-education areas showing higher adoption rates (21.4\% vs 17.8\% by 2024Q3). 
}
\label{fig:main:2}
\end{figure}

\clearpage
\newpage

\begin{figure}[htb]
\centering
\includegraphics[width=\textwidth]{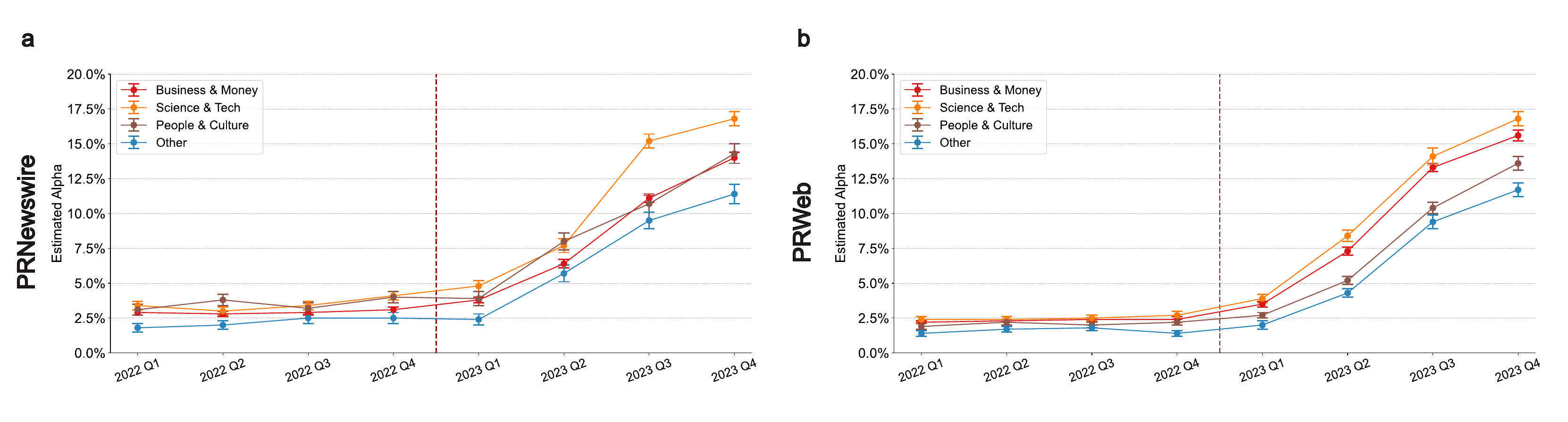}
\caption{
\textbf{Sectoral patterns of LLM adoption in corporate press releases across major distribution platforms.}
Analysis of press releases by sector revealed consistent patterns across platforms, with Science \& Technology showing marginally higher adoption rates. (a) PRNewswire demonstrated similar sectoral patterns by 2023Q4: Science \& Technology (16.8\%), People \& Culture (14.3\%), Business \& Money (14.0\%), and Other sectors (11.4\%). (b) PRWeb exhibited comparable sectoral distribution: Science \& Technology (16.8\%), Business \& Money (15.6\%), People \& Culture (13.6\%), and Other sectors (11.7\%). All sectors showed similar temporal adoption patterns following ChatGPT's release, with initial lag followed by sustained growth through 2023. 
}
\label{fig:main:3}
\end{figure}
\clearpage
\newpage

\clearpage

\begin{figure}[ht!]
    \centering
    \includegraphics[width=0.5\textwidth]{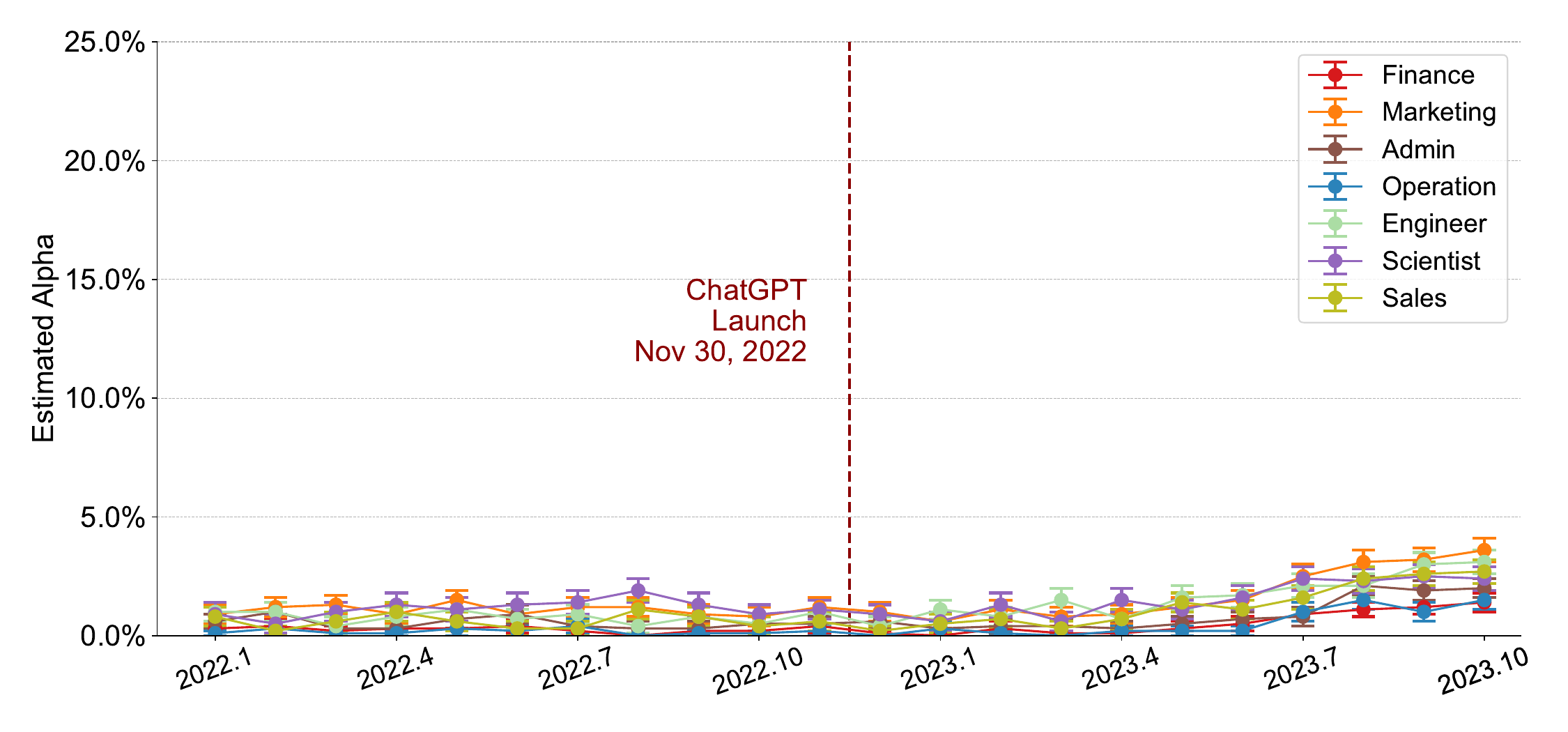}
\caption{
\textbf{Analysis of LLM adoption in LinkedIn job postings across the full sample.} Temporal analysis of the estimated fraction ($\alpha$) of LLM-modified content in job postings across all company sizes shows a modest but statistically significant increase from pre-ChatGPT baseline to approximately 3\% adoption following ChatGPT's introduction (November 30, 2022). This aggregate analysis includes all companies regardless of size, with larger firms (who post more frequent vacancies and typically have dedicated HR resources) representing a greater proportion of the sample. Error bars represent 95\% confidence intervals obtained through bootstrap analysis.
}
\label{fig: full-sample-LinkedIn}
\end{figure}

\begin{figure}[ht!]
    \centering
    \includegraphics[width=0.5\textwidth]{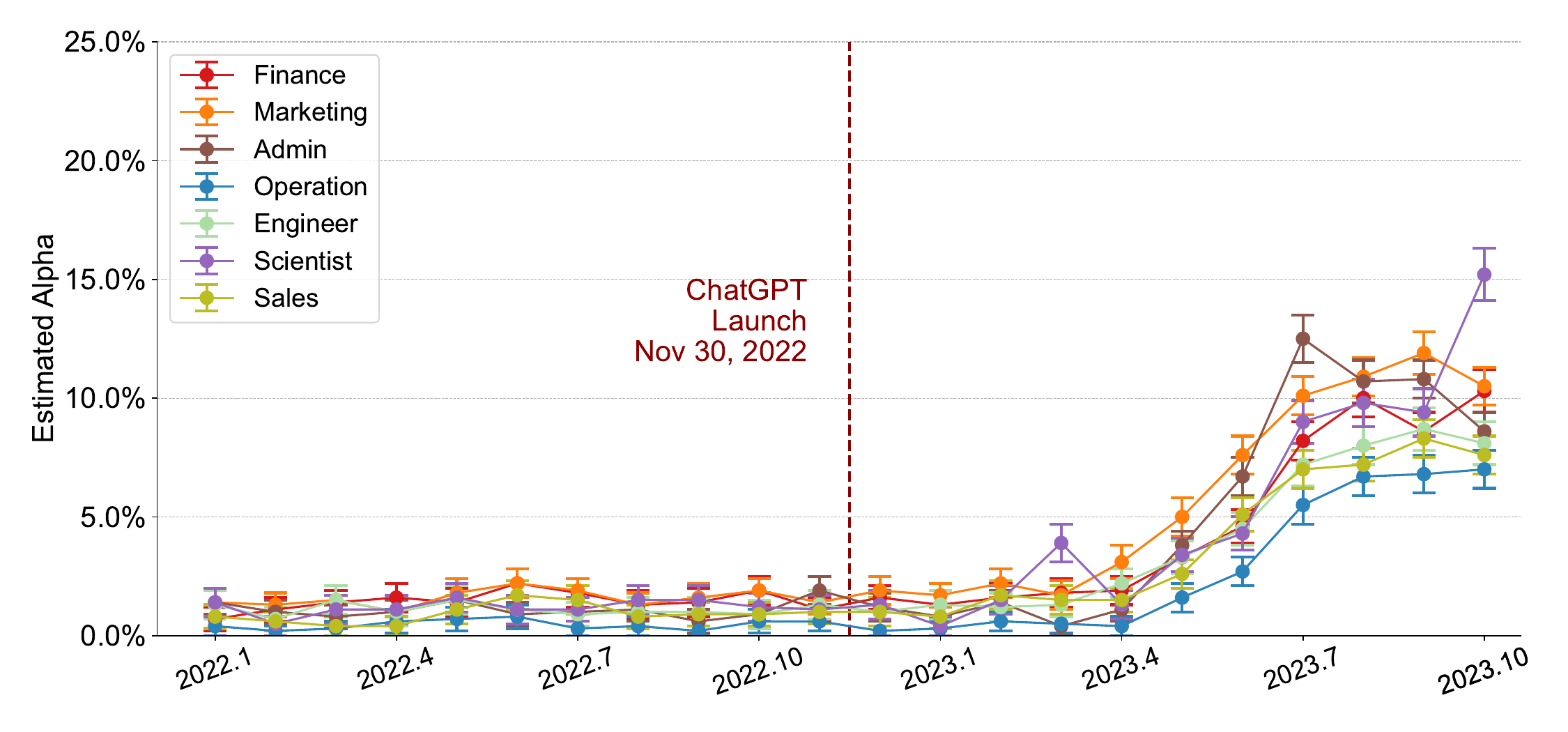}
\caption{
\textbf{LLM adoption patterns in LinkedIn job postings from small organizations ($\leq$10 employees).} 
Temporal analysis of estimated fraction ($\alpha$) of LLM-modified content across professional categories (Finance, Marketing, Admin, Operation, Engineer, Scientist, Sales) shows patterns consistent with main findings based on vacancy frequency. Following ChatGPT's launch (November 30, 2022), organizations with $\leq$10 employees demonstrate similar adoption trajectories to those posting $\leq$2 vacancies annually, with estimated $\alpha$ increasing from 0-2\% pre-launch to 7-15\% by October 2023. Scientist positions show highest adoption ($\approx$15\%), followed by Marketing and Finance ($>$10\%), while Admin, Engineer, Sales and Operations show more moderate adoption (7-9\%). Error bars indicate 95\% confidence intervals obtained through bootstrap analysis. This consistency across different definitions of small organizations (by employee count or vacancy frequency) strengthens the robustness of observed adoption patterns.
}
\label{fig: supp-robust-small-company-definition}
\end{figure}

\clearpage

\begin{figure}[htb!]
\centering
\includegraphics[width=1.00\textwidth]{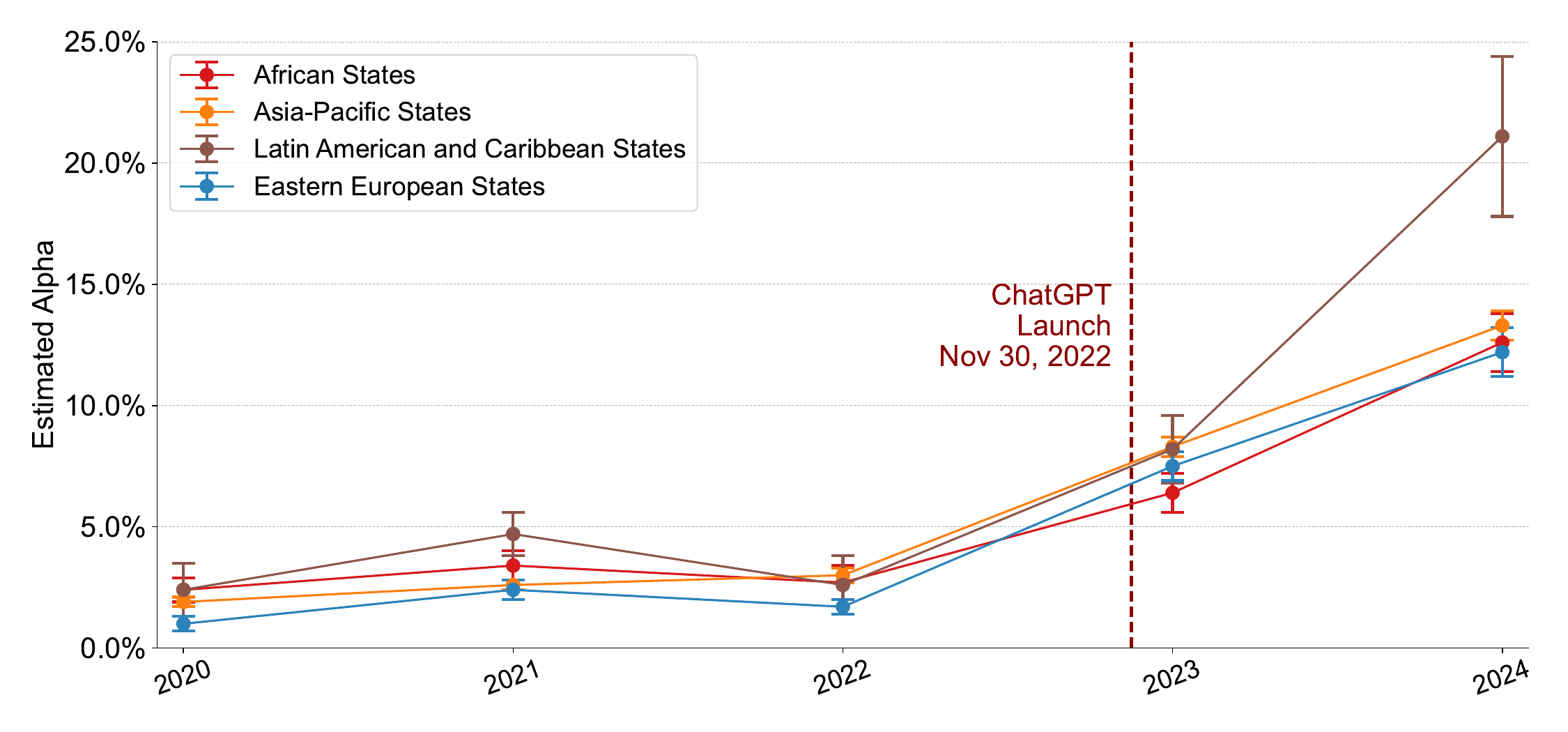}
\caption{
\textbf{Regional variation in LLM adoption across United Nations Member States' press releases.} 
Temporal analysis of estimated fraction ($\alpha$) of LLM-modified content stratified by regional groups shows differential adoption patterns. After ChatGPT's launch (November 30, 2022), Latin American and Caribbean States demonstrated the highest adoption rate, reaching approximately 21\% by 2024, while African States, Asia-Pacific States, and Eastern European States showed more moderate increases to 11-14\%. Error bars indicate 95\% confidence intervals obtained through bootstrap analysis. Regional variations may reflect differences in technological infrastructure, language diversity, and institutional policies across Member States.
}
\label{fig: supp-robust-US-country-groups}
\end{figure}

\clearpage

\begin{figure}[htb]
\centering
\includegraphics[width=0.75\textwidth]{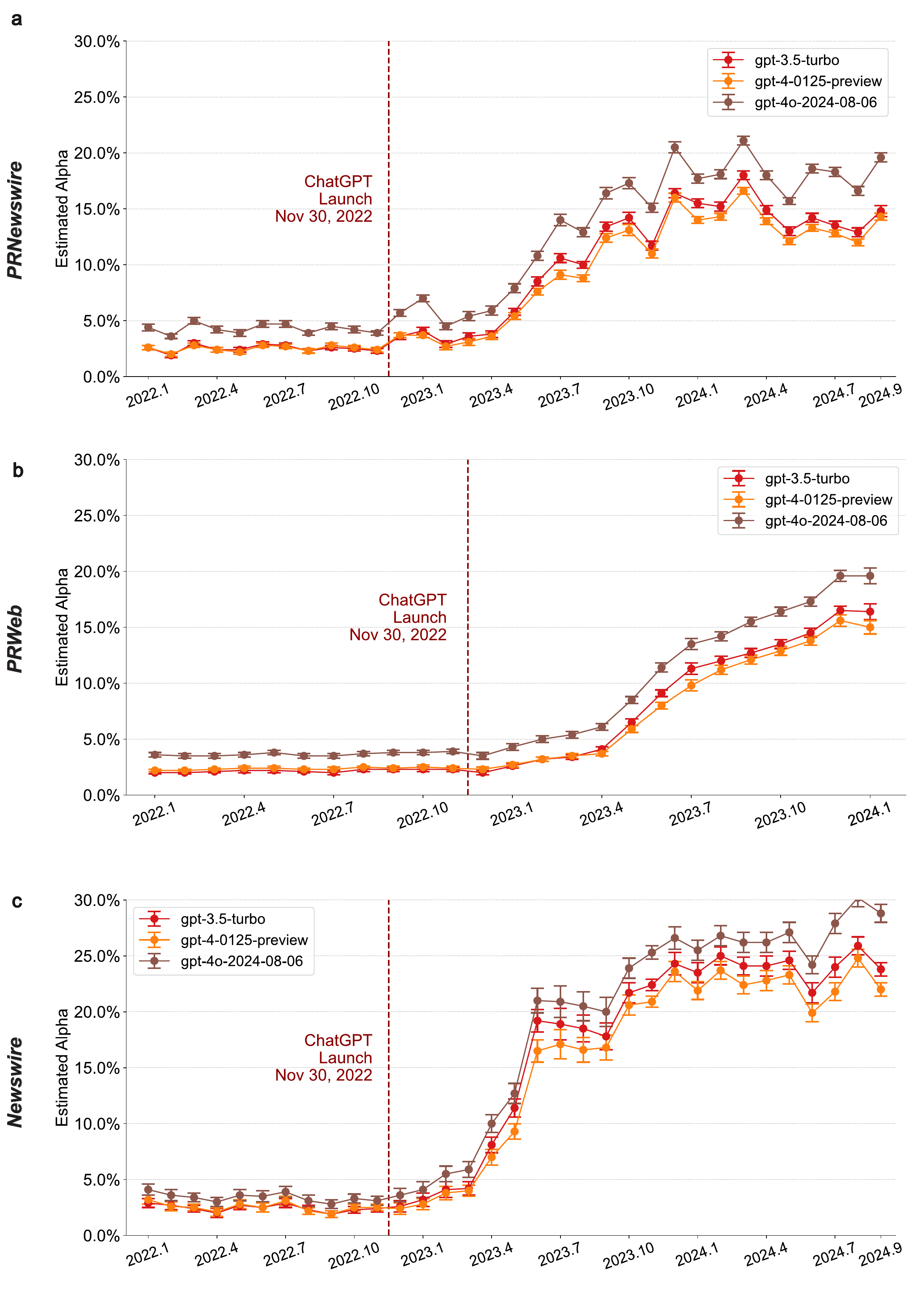}
\caption{
\textbf{Robustness analysis of LLM adoption estimates across different press release platforms using multiple GPT models for training data generation.}
(a) PRNewswire, (b) PRWeb, and (c) Newswire press releases show consistent temporal patterns regardless of the GPT model used for training data generation. Estimated fraction ($\alpha$) of LLM-modified content was calculated using three different models: GPT-3.5-turbo (used in main analysis, released January 25, 2024), GPT-4-0125-preview (released January 25, 2024), and GPT-4-2024-08-06 (released August 6, 2024). While all models reveal similar adoption trajectories following ChatGPT's launch (November 30, 2022), the most recent model GPT-4-2024-08-06 generates marginally higher estimates across platforms, suggesting our main results may be conservative. Error bars indicate 95\% confidence intervals obtained through bootstrap analysis.
}
\end{figure}

\clearpage
\newpage

\begin{table}[htb!]
\small
\begin{center}

\caption{
\textbf{Performance validation of our model} across Consumer Complaint (all predating ChatGPT's launch), using a blend of official human and LLM-generated complaints. 
}
\label{t1}
\begin{tabular}{lrcllc}
\cmidrule[\heavyrulewidth]{1-6}
\multirow{2}{*}{\bf No.} 
& \multirow{2}{*}{\bf \begin{tabular}[c]{@{}c@{}} Validation \\ Data Source 
\end{tabular} } 
& \multirow{2}{*}{\bf \begin{tabular}[c]{@{}c@{}} Ground \\ Truth $\alpha$
\end{tabular}}  
&\multicolumn{2}{l}{\bf Estimated} 
& \multirow{2}{*}{\bf \begin{tabular}[c]{@{}c@{}} Prediction \\ Error 
\end{tabular} } 
\\
\cmidrule{4-5}
 & & & $\alpha$ & $CI$ ($\pm$) & \\
\cmidrule{1-6}
(1) & \emph{Consumer Complaint} & 0.0\% & 1.8\% & 0.2\% & 1.8\% \\
(2) & \emph{Consumer Complaint} & 2.5\% & 4.6\% & 0.2\% & 2.1\% \\
(3) & \emph{Consumer Complaint} & 5.0\% & 7.3\% & 0.2\% & 2.3\% \\
(4) & \emph{Consumer Complaint} & 7.5\% & 9.8\% & 0.2\% & 2.3\% \\
(5) & \emph{Consumer Complaint} & 10.0\% & 12.2\% & 0.3\% & 2.2\% \\
(6) & \emph{Consumer Complaint} & 12.5\% & 14.6\% & 0.2\% & 2.1\% \\
(7) & \emph{Consumer Complaint} & 15.0\% & 17.1\% & 0.3\% & 2.1\% \\
(8) & \emph{Consumer Complaint} & 17.5\% & 19.4\% & 0.3\% & 1.9\% \\
(9) & \emph{Consumer Complaint} & 20.0\% & 21.8\% & 0.3\% & 1.8\% \\
(10) & \emph{Consumer Complaint} & 22.5\% & 24.2\% & 0.3\% & 1.7\% \\
(11) & \emph{Consumer Complaint} & 25.0\% & 26.5\% & 0.3\% & 1.5\% \\
\cmidrule[\heavyrulewidth]{1-6}
\end{tabular}
\end{center}
\vspace{-5mm}
\end{table}

\begin{table}[htb!]
\small
\begin{center}

\caption{
\textbf{Performance validation of our model} across UN Press Release (all predating ChatGPT's launch), using a blend of official human and LLM-generated press releases. 
}
\label{t2}
\begin{tabular}{lrcllc}
\cmidrule[\heavyrulewidth]{1-6}
\multirow{2}{*}{\bf No.} 
& \multirow{2}{*}{\bf \begin{tabular}[c]{@{}c@{}} Validation \\ Data Source 
\end{tabular} } 
& \multirow{2}{*}{\bf \begin{tabular}[c]{@{}c@{}} Ground \\ Truth $\alpha$
\end{tabular}}  
&\multicolumn{2}{l}{\bf Estimated} 
& \multirow{2}{*}{\bf \begin{tabular}[c]{@{}c@{}} Prediction \\ Error 
\end{tabular} } 
\\
\cmidrule{4-5}
 & & & $\alpha$ & $CI$ ($\pm$) & \\
\cmidrule{1-6}
(1) & \emph{UN Press Release} & 0.0\% & 2.5\% & 0.2\% & 2.5\% \\
(2) & \emph{UN Press Release} & 2.5\% & 5.4\% & 0.2\% & 2.9\% \\
(3) & \emph{UN Press Release} & 5.0\% & 8.1\% & 0.3\% & 3.1\% \\
(4) & \emph{UN Press Release} & 7.5\% & 10.7\% & 0.3\% & 3.2\% \\
(5) & \emph{UN Press Release} & 10.0\% & 13.1\% & 0.3\% & 3.1\% \\
(6) & \emph{UN Press Release} & 12.5\% & 15.6\% & 0.3\% & 3.1\% \\
(7) & \emph{UN Press Release} & 15.0\% & 18.0\% & 0.3\% & 3.0\% \\
(8) & \emph{UN Press Release} & 17.5\% & 20.4\% & 0.3\% & 2.9\% \\
(9) & \emph{UN Press Release} & 20.0\% & 22.8\% & 0.3\% & 2.8\% \\
(10) & \emph{UN Press Release} & 22.5\% & 25.1\% & 0.3\% & 2.6\% \\
(11) & \emph{UN Press Release} & 25.0\% & 27.5\% & 0.3\% & 2.5\% \\
\cmidrule[\heavyrulewidth]{1-6}
\end{tabular}
\end{center}
\vspace{-5mm}
\end{table}

\begin{table}[htb!]
\small
\begin{center}

\caption{
\textbf{Performance validation of our model} across PRNewswire, PRWeb, Newswire (all predating ChatGPT's launch), using a blend of official human and LLM-generated press releases. 
Our algorithm demonstrates high accuracy with less than 3.3\% prediction error in identifying the proportion of LLM press release within the validation set.
}
\label{t3}
\begin{tabular}{lrcllc}
\cmidrule[\heavyrulewidth]{1-6}
\multirow{2}{*}{\bf No.} 
& \multirow{2}{*}{\bf \begin{tabular}[c]{@{}c@{}} Validation \\ Data Source 
\end{tabular} } 
& \multirow{2}{*}{\bf \begin{tabular}[c]{@{}c@{}} Ground \\ Truth $\alpha$
\end{tabular}}  
&\multicolumn{2}{l}{\bf Estimated} 
& \multirow{2}{*}{\bf \begin{tabular}[c]{@{}c@{}} Prediction \\ Error 
\end{tabular} } 
\\
\cmidrule{4-5}
 & & & $\alpha$ & $CI$ ($\pm$) & \\
\cmidrule{1-6}
(1) & \emph{PRNewswire} & 0.0\% & 2.9\% & 0.3\% & 2.9\% \\
(2) & \emph{PRNewswire} & 2.5\% & 5.7\% & 0.3\% & 3.2\% \\
(3) & \emph{PRNewswire} & 5.0\% & 8.3\% & 0.3\% & 3.3\% \\
(4) & \emph{PRNewswire} & 7.5\% & 10.8\% & 0.3\% & 3.3\% \\
(5) & \emph{PRNewswire} & 10.0\% & 13.2\% & 0.3\% & 3.2\% \\
(6) & \emph{PRNewswire} & 12.5\% & 15.6\% & 0.3\% & 3.1\% \\
(7) & \emph{PRNewswire} & 15.0\% & 18.0\% & 0.3\% & 3.0\% \\
(8) & \emph{PRNewswire} & 17.5\% & 20.3\% & 0.3\% & 2.8\% \\
(9) & \emph{PRNewswire} & 20.0\% & 22.7\% & 0.3\% & 2.7\% \\
(10) & \emph{PRNewswire}& 22.5\% & 25.0\% & 0.3\% & 2.5\% \\
(11) & \emph{PRNewswire}& 25.0\% & 27.3\% & 0.3\% & 2.3\% \\
\cmidrule{1-6}
(12) & \emph{PRWeb} & 0.0\% & 2.1\% & 0.2\% & 2.1\% \\
(13) & \emph{PRWeb} & 2.5\% & 5.2\% & 0.2\% & 2.7\% \\
(14) & \emph{PRWeb} & 5.0\% & 7.8\% & 0.2\% & 2.8\% \\
(15) & \emph{PRWeb} & 7.5\% & 10.4\% & 0.2\% & 2.9\% \\
(16) & \emph{PRWeb} & 10.0\% & 12.9\% & 0.3\% & 2.9\% \\
(17) & \emph{PRWeb} & 12.5\% & 15.4\% & 0.3\% & 2.9\% \\
(18) & \emph{PRWeb} & 15.0\% & 17.8\% & 0.3\% & 2.8\% \\
(19) & \emph{PRWeb} & 17.5\% & 20.2\% & 0.3\% & 2.7\% \\
(20) & \emph{PRWeb} & 20.0\% & 22.6\% & 0.3\% & 2.6\% \\
(21) & \emph{PRWeb} & 22.5\% & 25.0\% & 0.3\% & 2.5\% \\
(22) & \emph{PRWeb} & 25.0\% & 27.3\% & 0.3\% & 2.3\% \\
\cmidrule{1-6}
(23) & \emph{Newswire} & 0.0\% & 2.3\% & 0.2\% & 2.3\% \\
(24) & \emph{Newswire} & 2.5\% & 5.3\% & 0.2\% & 2.8\% \\
(25) & \emph{Newswire} & 5.0\% & 7.9\% & 0.3\% & 2.9\% \\
(26) & \emph{Newswire} & 7.5\% & 10.5\% & 0.3\% & 3.0\% \\
(27) & \emph{Newswire} & 10.0\% &13.0\% & 0.3\% & 3.0\% \\
(28) & \emph{Newswire} & 12.5\% & 15.4\% & 0.3\% & 2.9\% \\
(29) & \emph{Newswire} & 15.0\% & 17.9\% & 0.3\% & 2.9\% \\
(30) & \emph{Newswire} & 17.5\% & 20.3\% & 0.3\% & 2.8\% \\
(31) & \emph{Newswire} & 20.0\% & 22.6\% & 0.3\% & 2.6\% \\
(32) & \emph{Newswire} & 22.5\% & 25.0\% & 0.3\% & 2.5\% \\
(33) & \emph{Newswire} & 25.0\% & 27.4\% & 0.3\% & 2.4\% \\
\cmidrule[\heavyrulewidth]{1-6}
\end{tabular}
\end{center}
\vspace{-5mm}
\end{table}

\begin{table}[htb!]
\small
\begin{center}

\caption{
\textbf{Performance validation of our model} across Admin, Engineer, Finance, Marketing (all predating ChatGPT's launch), using a blend of official human and LLM-generated job postings. 
}
\label{t4}
\begin{tabular}{lrcllc}
\cmidrule[\heavyrulewidth]{1-6}
\multirow{2}{*}{\bf No.} 
& \multirow{2}{*}{\bf \begin{tabular}[c]{@{}c@{}} Validation \\ Data Category 
\end{tabular} } 
& \multirow{2}{*}{\bf \begin{tabular}[c]{@{}c@{}} Ground \\ Truth $\alpha$
\end{tabular}}  
&\multicolumn{2}{l}{\bf Estimated} 
& \multirow{2}{*}{\bf \begin{tabular}[c]{@{}c@{}} Prediction \\ Error 
\end{tabular} } 
\\
\cmidrule{4-5}
 & & & $\alpha$ & $CI$ ($\pm$) & \\
\cmidrule{1-6}
(1) & \emph{Admin} & 0.0\% & 1.2\% & 0.5\% & 1.2\% \\
(2) & \emph{Admin} & 2.5\% & 4.0\% & 0.6\% & 1.5\% \\
(3) & \emph{Admin} & 5.0\% & 6.6\% & 0.7\% & 1.6\% \\
(4) & \emph{Admin} & 7.5\% & 9.1\% & 0.7\% & 1.6\% \\
(5) & \emph{Admin} & 10.0\% & 11.6\% & 0.8\% & 1.6\% \\
(6) & \emph{Admin} & 12.5\% & 14.1\% & 0.8\% & 1.6\% \\
(7) & \emph{Admin} & 15.0\% & 16.7\% & 0.8\% & 1.7\% \\
(8) & \emph{Admin} & 17.5\% & 19.1\% & 0.8\% & 1.6\% \\
(9) & \emph{Admin} & 20.0\% & 21.6\% & 0.9\% & 1.6\% \\
(10) & \emph{Admin}& 22.5\% & 24.0\% & 0.9\% & 1.5\% \\
(11) & \emph{Admin}& 25.0\% & 26.4\% & 0.9\% & 1.4\% \\
\cmidrule{1-6}
(12) & \emph{Engineer} & 0.0\% & 0.9\% & 0.5\% & 0.9\% \\
(13) & \emph{Engineer} & 2.5\% & 3.6\% & 0.6\% & 1.1\% \\
(14) & \emph{Engineer} & 5.0\% & 6.2\% & 0.7\% & 1.2\% \\
(15) & \emph{Engineer} & 7.5\% & 8.8\% & 0.8\% & 1.3\% \\
(16) & \emph{Engineer} & 10.0\% & 11.3\% & 0.8\% & 1.3\% \\
(17) & \emph{Engineer} & 12.5\% & 13.8\% & 0.8\% & 1.3\% \\
(18) & \emph{Engineer} & 15.0\% & 16.4\% & 0.9\% & 1.4\% \\
(19) & \emph{Engineer} & 17.5\% & 18.9\% & 0.8\% & 1.4\% \\
(20) & \emph{Engineer} & 20.0\% & 21.4\% & 0.9\% & 1.4\% \\
(21) & \emph{Engineer} & 22.5\% & 23.9\% & 0.9\% & 1.4\% \\
(22) & \emph{Engineer} & 25.0\% & 26.4\% & 0.9\% & 1.4\% \\
\cmidrule{1-6}
(23) & \emph{Finance} & 0.0\% & 0.7\% & 0.4\% & 0.7\% \\
(24) & \emph{Finance} & 2.5\% & 3.5\% & 0.6\% & 1.0\% \\
(25) & \emph{Finance} & 5.0\% & 6.0\% & 0.7\% & 1.0\% \\
(26) & \emph{Finance} & 7.5\% & 8.5\% & 0.7\% & 1.0\% \\
(27) & \emph{Finance} & 10.0\% &10.9\% & 0.7\% & 0.9\% \\
(28) & \emph{Finance} & 12.5\% & 13.4\% & 0.7\% & 0.9\% \\
(29) & \emph{Finance} & 15.0\% & 15.9\% & 0.8\% & 0.9\% \\
(30) & \emph{Finance} & 17.5\% & 18.3\% & 0.8\% & 0.8\% \\
(31) & \emph{Finance} & 20.0\% & 20.7\% & 0.9\% & 0.7\% \\
(32) & \emph{Finance} & 22.5\% & 23.1\% & 0.8\% & 0.6\% \\
(33) & \emph{Finance} & 25.0\% & 25.5\% & 0.9\% & 0.5\% \\
\cmidrule{1-6}
(23) & \emph{Marketing} & 0.0\% & 0.6\% & 0.5\% & 0.6\% \\
(24) & \emph{Marketing} & 2.5\% & 3.4\% & 0.6\% & 0.9\% \\
(25) & \emph{Marketing} & 5.0\% & 5.9\% & 0.6\% & 0.9\% \\
(26) & \emph{Marketing} & 7.5\% & 8.4\% & 0.7\% & 0.9\% \\
(27) & \emph{Marketing} & 10.0\% &10.9\% & 0.8\% & 0.9\% \\
(28) & \emph{Marketing} & 12.5\% & 13.4\% & 0.8\% & 0.9\% \\
(29) & \emph{Marketing} & 15.0\% & 15.8\% & 0.8\% & 0.8\% \\
(30) & \emph{Marketing} & 17.5\% & 18.3\% & 0.9\% & 0.8\% \\
(31) & \emph{Marketing} & 20.0\% & 20.8\% & 0.8\% & 0.8\% \\
(32) & \emph{Marketing} & 22.5\% & 23.3\% & 0.9\% & 0.8\% \\
(33) & \emph{Marketing} & 25.0\% & 25.7\% & 0.9\% & 0.7\% \\
\cmidrule[\heavyrulewidth]{1-6}
\end{tabular}
\end{center}
\vspace{-5mm}
\end{table}
\clearpage
\newpage
\begin{table}[htb!]
\small
\begin{center}

\caption{
\textbf{Performance validation of our model} across Operation, Sales, Scientist (all predating ChatGPT's launch), using a blend of official human and LLM-generated job postings. 
}
\label{t5}
\begin{tabular}{lrcllc}
\cmidrule[\heavyrulewidth]{1-6}
\multirow{2}{*}{\bf No.} 
& \multirow{2}{*}{\bf \begin{tabular}[c]{@{}c@{}} Validation \\ Data Category 
\end{tabular} } 
& \multirow{2}{*}{\bf \begin{tabular}[c]{@{}c@{}} Ground \\ Truth $\alpha$
\end{tabular}}  
&\multicolumn{2}{l}{\bf Estimated} 
& \multirow{2}{*}{\bf \begin{tabular}[c]{@{}c@{}} Prediction \\ Error 
\end{tabular} } 
\\
\cmidrule{4-5}
 & & & $\alpha$ & $CI$ ($\pm$) & \\
\cmidrule{1-6}
(1) & \emph{Operation} & 0.0\% & 0.8\% & 0.5\% & 0.8\% \\
(2) & \emph{Operation} & 2.5\% & 3.3\% & 0.6\% & 0.8\% \\
(3) & \emph{Operation} & 5.0\% & 5.9\% & 0.7\% & 0.9\% \\
(4) & \emph{Operation} & 7.5\% & 8.4\% & 0.7\% & 0.9\% \\
(5) & \emph{Operation} & 10.0\% & 10.9\% & 0.8\% & 0.9\% \\
(6) & \emph{Operation} & 12.5\% & 13.3\% & 0.8\% & 0.8\% \\
(7) & \emph{Operation} & 15.0\% & 15.8\% & 0.8\% & 0.8\% \\
(8) & \emph{Operation} & 17.5\% & 18.2\% & 0.9\% & 0.7\% \\
(9) & \emph{Operation} & 20.0\% & 20.7\% & 0.9\% & 0.7\% \\
(10) & \emph{Operation}& 22.5\% & 23.2\% & 0.9\% & 0.7\% \\
(11) & \emph{Operation}& 25.0\% & 25.6\% & 0.9\% & 0.6\% \\
\cmidrule{1-6}
(12) & \emph{Sales} & 0.0\% & 1.2\% & 0.5\% & 1.2\% \\
(13) & \emph{Sales} & 2.5\% & 3.7\% & 0.6\% & 1.2\% \\
(14) & \emph{Sales} & 5.0\% & 6.2\% & 0.7\% & 1.2\% \\
(15) & \emph{Sales} & 7.5\% & 8.6\% & 0.8\% & 1.1\% \\
(16) & \emph{Sales} & 10.0\% & 11.0\% & 0.8\% & 1.0\% \\
(17) & \emph{Sales} & 12.5\% & 13.4\% & 0.8\% & 0.9\% \\
(18) & \emph{Sales} & 15.0\% & 15.8\% & 0.8\% & 0.8\% \\
(19) & \emph{Sales} & 17.5\% & 18.2\% & 0.8\% & 0.7\% \\
(20) & \emph{Sales} & 20.0\% & 20.7\% & 0.9\% & 0.7\% \\
(21) & \emph{Sales} & 22.5\% & 23.1\% & 0.9\% & 0.6\% \\
(22) & \emph{Sales} & 25.0\% & 25.5\% & 0.9\% & 0.5\% \\
\cmidrule{1-6}
(23) & \emph{Scientist} & 0.0\% &  2.0\% & 0.6\% & 2.0\% \\
(24) & \emph{Scientist} & 2.5\% &  4.8\% & 0.7\% & 2.3\% \\
(25) & \emph{Scientist} & 5.0\% &  7.3\% & 0.7\% & 2.3\% \\
(26) & \emph{Scientist} & 7.5\% &  9.8\% & 0.8\% & 2.3\% \\
(27) & \emph{Scientist} & 10.0\% & 12.3\% & 0.8\% & 2.3\% \\
(28) & \emph{Scientist} & 12.5\% & 14.7\% & 0.9\% & 2.2\% \\
(29) & \emph{Scientist} & 15.0\% & 17.2\% & 0.9\% & 2.2\% \\
(30) & \emph{Scientist} & 17.5\% & 19.7\% & 1.0\% & 2.2\% \\
(31) & \emph{Scientist} & 20.0\% & 22.1\% & 0.9\% & 2.1\% \\
(32) & \emph{Scientist} & 22.5\% & 24.5\% & 1.0\% & 2.0\% \\
(33) & \emph{Scientist} & 25.0\% & 27.0\% & 1.0\% & 2.0\% \\
\cmidrule[\heavyrulewidth]{1-6}
\end{tabular}
\end{center}
\vspace{-5mm}
\end{table}

\section{Materials and Methods}

\subsection{Overview of the Consumer Complaint Data}
\label{main:subsec:Consumer Complaint-data}

The Consumer Complaint Database, maintained by the Consumer Financial Protection Bureau (CFPB), is a publicly accessible resource that collects complaints about consumer financial products and services. These complaints are forwarded to companies for their response, while the CFPB—a U.S. government agency—is dedicated to ensuring that banks, lenders, and other financial institutions treat consumers fairly. We focus on 687,241 consumer complaint narrative, starting from January 2022 and ending in August 2024. The dataset offers the mailing ZIP code provided by the consumer, which allow us to check heterogeneity via the educational level and the degree of urbanization by region. Specifically, we employ Rural Urban Commuting Area (RUCA) codes to assess urbanization levels and measure the educational level by the percentage of individuals aged 25 and older who have earned a bachelor’s degree. Corresponding data is available at 
\href{https://www.ers.usda.gov/data-products/rural-urban-commuting-area-codes}{\texttt{here}}
and \href{https://data.census.gov/table/ACSST1Y2023.S1501}{\texttt{here}} respectively.

\subsection{Overview of the LinkedIn Job Posting Data}
\label{main:subsec:job-posting-data}

We use data from the Revelio Labs universe, which collects, cleans and aggregates individual-level job postings sourced from publicly available online sources, such as LinkedIn. The raw dataset includes all LinkedIn postings (active, inactive, removed), the company identifier, the company founding year, the full text of job listings, and associated information (title, salary, etc.).  The raw data are broken out by Revelio Labs into eight job categories: Administration, Engineering, Finance, Marketing, Operations, Sales, Scientist, and Unclassified. We focus on 304,270,122 job postings, starting from January 2021 and ending in October 2023. We focus on the full text of the job postings. To analyze the heterogeneity of LLM usage by company characteristics, we combine the job listings information with the Revelio Labs associated LinkedIn employee data. Similarly to the job postings data, the baseline workforce data was scraped, cleaned and aggregated at the firm level. The workforce data is available going back up to 2008. We define firm characteristics based on pre-ChatGPT introduction characteristics. We define two different definitions for small firms: in our sample, small firms are companies with either 10 or fewer registered employees in 2021 or companies posting less than or equal to about 2 postings per year. We also check heterogeneity via founding year, splitting in terms of years 2015-onwards, 2000-2015, 1980-2000 and before 1980. These time periods are determined based on quantiles of the founding year distribution. Note that although the median number of postings per company per year is 3, the total number of postings drops from 304,270,122 to 1,440,912 when we focus on small companies. This indicates that small companies contribute a relatively minor share to the total posting volume compared to larger companies.

\subsection{Overview of the Corporate Press Release Data}

We collect corporate press release data using the NewsAPI service, which aggregates online news content from various sources. We collected data from: PRNewswire, PRWeb, and Newswire, three of the main companies distributing corporate press releases online. These were chosen due to data avilability and cost. PR Newswire, founded in 1954, is one of the oldest and most widely recognized press release distribution services, offering an extensive network that reaches major news outlets, journalists, and online platforms worldwide. It serves a broad range of clients, from large corporations to small businesses. PRWeb, launched in 1997, focuses primarily on online distribution and SEO optimization, making it a more budget-friendly option for businesses looking to enhance their digital presence. Newswire distributes press releases to both traditional media and online platforms, catering to businesses of various sizes. While all three services offer some level of editorial support, their primary business focus remains distribution.

With a focus on English-language text, we gathered up to 537,413 press releases from January 2022 to September 2024.  Our analysis primarily focused on the full body text. Due to the limited number of articles post-ChatGPT introduction available from Newswire, we conducted detailed robustness checks only on PR Newswire and PRWeb data, which provided sufficient volume for heterogeneity analysis.  We  classified the press releases by four overarching categories: Business \& Money, Science \& Tech, People \& Culture, and Other.

\subsection{Overview of the UN Press Release Data}

We collect United Nations release data using customized scripts. The United Nations (UN), founded in 1945, is an international organization dedicated to fostering global peace, security, and cooperation among its member states~\cite{shin2024adoption}. 
Country teams of United Nations regularly update on the latest developments in that country.
To ensure consistency and maintain a focus on English-language content, articles were selected from the English-language websites of 97 country teams. From January 2019 to September 2024, up to 15,919 press releases were collected, with the analysis primarily concentrating on the full body text. Our investigation revealed that among the remaining 96 country teams, 57 do not have their own websites, 33 lack English-language websites, and 6 do not operate press release websites.

\subsection{Data Split, Model Fitting, and Evaluation}
\label{main:subsec:training-validation}

For model fitting, we count word frequencies for the corpora written before the release of ChatGPT and the LLM-modified corpora. We fit the model with data from 2021 (2019 for UN press release), and use data from January 2022 onwards for validation and inference. We developed individual models for each major category in LinkedIn job postings and for each distribution platform in corporate press releases. For UN press releases and consumer complaints, we fit one model for each domain. During inference, we randomly sample up to 2,000 records per month (per quarter for UN press release) to analyze the increasing temporal trends of LLM usage across various writing domains.

To evaluate model accuracy and calibration under temporal distribution shift, we collected a sample of 2000 records from January 1, 2022, to November 29, 2022, a time period prior to the release of ChatGPT, as the validation data. We construct validation sets with LLM-modified content proportions ($\alpha$) ranging from 0\% to 25\%, in 2.5\% increments, and compared the model's estimated $\alpha$ with the ground truth $\alpha$ (Table \ref{t1}, \ref{t2}, \ref{t3}, \ref{t4}, \ref{t5}). Our models all performed well in our application, with a prediction error consistently less than 3.3\% at the population level across various ground truth $\alpha$ values.

\clearpage

\begin{figure}[htb!]
\begin{lstlisting}
The aim here is to reverse-engineer the author's writing process by taking a piece of text from a consumer complaint and compressing it into a more concise form. This process simulates how an author might distill their thoughts and key points into a structured, yet not overly condensed form. 

Now as a first step, given a complete piece of text from a consumer complaint, reverse-engineer it into a list of bullet points.
\end{lstlisting}
\caption{
\textbf{Example prompt for summarizing a consumer complaint into a skeleton.} This process simulates how an author might first only write the main ideas and core information into a concise outline. The goal is to capture the essence of the complaint in a structured and succinct manner, serving as a foundation for the next prompt.
}
\end{figure}

\begin{figure}[htb!]
\begin{lstlisting}
Following the initial step of reverse-engineering the author's writing process by compressing a text segment from a consumer complaint, you now enter the second phase. Here, your objective is to expand upon the concise version previously crafted. This stage simulates how an author elaborates on the distilled thoughts and key points, enriching them into a detailed, structured narrative. 

Given the concise output from the previous step, your task is to develop it into a fully fleshed-out text.
\end{lstlisting}
\caption{
\textbf{Example prompt for expanding the skeleton into a full text.} The aim here is to simulate the process of using the structured outline as a basis to generate comprehensive and coherent text. This step mirrors the way an author might flesh out the outline into detailed paragraphs, effectively transforming the condensed ideas into a fully articulated consumer complaint. The format and depth of the expansion can vary, reflecting the diverse styles and requirements of different consumer complaints.
}
\end{figure}

\chapter{LLM for Providing Scientific Feedback}
\label{ch:scientific-feedback}

In the previous chapter, we developed an AI method to classify and predict team viability, showing how \textit{technical methods} can be used to further analyze, understand and even augment how people work. In this chapter, we discuss the approach of \textit{design} in envisioning future workplace computing systems through prototyping, and evaluation. We developed a generative AI-based tool to provide scientific feedback in a peer review format to researchers, allowing them to receive timely feedback when needed. Through a mixed-method design that combines a retrospective evaluation and a prospective user study, we systematically evaluate the performance of the proposed feedback tool.

\section{Introduction}

While at Bell Laboratories in the 1940s, Claude Shannon embarked on developing a mathematical framework of information and communication \cite{shannon1948mathematical}.
Throughout this pursuit, he was faced with the challenge of naming his novel measure and considered terms such as `information' and `uncertainty'. Shannon shared his work with John von Neumann, who quickly recognized the profound links between Shannon's work and statistical mechanics, and proposed what later anchored modern information theory: `Information Entropy' \cite{tribus1971energy}. Scientific progress often rests on feedback and critique. Effective feedback among peer scientists not only elucidates and promotes the way new discoveries are made, interpreted, and communicated, but also catalyzes the emergence of new scientific paradigms by connecting individual insights, coordinating concurrent lines of thoughts, and stimulating constructive debates and disagreement \cite{kuhn1962structure}.

However, the process of providing timely, comprehensive, and insightful feedback on scientific research is often laborious, resource-intensive, and complex \cite{horbach2018changing}. This complexity is exacerbated by the exponential growth in scholarly publications and the deepening specialization of scientific knowledge \cite{price1963little, jones2009burden}. Traditional avenues, such as peer review and conference discussions, exhibit constraints in scalability, expertise accessibility, and promptness. For instance, it has been estimated that peer review -- one of the most major channels of scientific feedback -- costs over 100M researcher hours and \$2.5B US dollars in a single year \cite{aczel2021billion}. 
Yet at the same time, it has been increasingly challenging to secure enough qualified reviewers who can provide high-quality feedback given the rapid growth in the number of submissions \cite{alberts2008reviewing, bjork2013publishing,lee2013bias, kovanis2016global, shah2022challenges}. For example, the number of submissions to the \emph{ICLR} machine learning conference increased from 960 in 2018 to 4,966 in 2023.

While shortage of high-quality feedback presents a fundamental constraint on the sustainable growth of science overall, it also becomes a source of deepening global scientific inequities. Marginalized researchers, especially those from non-elite institutions or resource-limited regions, often face disproportionate challenges in accessing valuable feedback, perpetuating a cycle of systemic scientific inequities \cite{bourdieu2018cultural, merton1968matthew}. 

Given these challenges, there is an urgent need for crafting scalable and efficient feedback mechanisms that can enrich and streamline the scientific feedback process. Adopting such advancements holds the promise of not just elevating the quality and scope of scientific research, given the concerning deceleration in scientific advancements \cite{chu2021slowed, bloom2020ideas}, but also of democratizing its access across the scientific community.

Large language models (LLMs) \cite{brown2020language,ouyang2022training, gpt4}, especially those powered by Transformer-based architectures and pre-trained at immense scales, have opened up great potential in various applications~\citep{ChatGPT-Responses-to-Patient-Questions,lee2022evaluating,chatgpt-pass-medical-exam,chatgpt-wharton-mba}. 
While LLMs have made remarkable strides in various domains, the promises and perils of leveraging LLMs for scientific feedback remain largely unknown.
Despite recent attempts that explore the potential uses of such tools in areas such as automating paper screening \cite{schulz2022future},
error identification \cite{liu2023reviewergpt}, and checklist verification \cite{robertson2023gpt4}, we lack large-scale empirical evidence on whether and how LLMs may be used to facilitate scientific feedback and augment current academic practices.

In this work, we present the first large-scale systematic analysis characterizing the potential reliability and credibility of leveraging LLM for generating scientific feedback. 
Specifically, we developed a GPT-4 based scientific feedback generation pipeline that takes the raw PDF of a paper and produces structured feedback  (Fig.~\ref{fig:input_output}a). 
The system is designed to generate constructive feedback across various key aspects, mirroring the review structure of leading interdisciplinary journals~\cite{nature-for-referees,ncomms-for-reviewers} and conferences~\cite{rogers2021aclrolling,acl23-peer-review-policies,aclijcnlp2021-instructions-reviewers,icml2023-reviewer-tutorial,nicholas2011quick}, including:  
1) Significance and novelty, 
2) Potential reasons for acceptance, 
3) Potential reasons for rejection, and 
4) Suggestions for improvement.

To characterize the informativeness of GPT-4 generated feedback, we conducted both a retrospective analysis and a prospective user study.
In the retrospective analysis, we applied our pipeline on papers that had previously been assessed by human reviewers. We then compared the LLM feedback with the human feedback.
We assessed the degree of overlap between key points raised by both sources to gauge the effectiveness and reliability of LLM feedback. 
Furthermore, we compared the topic distributions of LLM feedback and human feedback. 
To enable such analysis, we curated two complementary datasets containing full-text of papers, their meta information, and associated peer reviews after 2022 \footnote{Focusing data after 2022 avoids bias introduced by 'testing on the training set', since GPT-4, the LLM we used, is trained on data up to Sep 2021 \cite{gpt4}.}. The first dataset was sourced from \emph{Nature} family journals, which are leading scientific journals covering multidisciplinary fields including biomedicine and basic sciences. Our second dataset was sourced from \emph{ICLR} (International Conference on Learning Representations), a leading computer science venue on artificial intelligence. This dataset, although narrower in scope, includes complete reviews for both accepted and rejected papers. These two datasets allowed us to evaluate the performance of LLM in generating scientific feedback across different types of scientific writing (e.g. across fields).

For the prospective user study, we developed a survey in which researchers were invited to evaluate the quality of the feedback produced by our GPT-4 system on their authored papers. 
By analyzing researchers' perspectives on the helpfulness, reliability, and potential limitations of LLM feedback, we can gauge the acceptability and utility of the proposed approach in the manuscript improvement process, and understand stakeholder's subjective perceptions of the framework. 
Through recruitment over institute mailing lists, and contacting paper authors who put preprints on arXiv, we were able to collect survey responses from 308 researchers from 110 institutions in the field of AI and computational biology that come from diverse education status, experience, and institutes.
\footnote{
\textbf{Statement on Authorship:} This chapter is based on the following multi-authored publication:\\
\textbf{Weixin Liang*}, Yuhui Zhang*, Hancheng Cao*, Binglu Wang, Daisy Ding, Xinyu Yang, Kailas Vodrahalli, Siyu He, Daniel Smith, Yian Yin, Daniel A. McFarland, James Y. Zou.\\
\textit{Can Large Language Models Provide Useful Feedback on Research Papers? A Large-Scale Empirical Analysis}. NEJM AI, 2024.~\cite{liang2024can}\\
I was the lead author and directed all aspects of the study including methodology, experiments, and writing.
}

\section{Methods}
\label{sec:Methods}

\subsection{Overview of the \emph{Nature} Family Journals Dataset}

Several journals within the \emph{Nature} group have adopted a transparent peer review policy, enabling authors to publish reviewers' comments alongside the accepted papers~\cite{nature2020-transparent-review}. For instance, by 2021, approximately 46\% of \emph{Nature} authors opted to make their reviewer discussions public \cite{nature-trialling-review}. Our dataset comprises papers from 15 \emph{Nature} family journals, published between January 1, 2022, and June 17, 2023. 
We sourced papers from 15 \emph{Nature} family journals, focusing on those published between January 1, 2022, and June 17, 2023. Within this period, our dataset includes 773 accepted papers from \emph{Nature} with 2,324 reviews, 810 sampled accepted papers from \emph{Nature} Communications with 2,250 reviews, and many others. 
In total, our dataset includes 3,096 accepted papers and 8,745 reviews (\textbf{Table~\ref{tab:nature_data}}). The data were sourced directly from the \emph{Nature} website (\url{https://nature.com/}).

\subsection{Overview of the \emph{ICLR} Dataset}

The International Conference on Learning Representations (\emph{ICLR}) is a leading publication venue in the machine learning field. \emph{ICLR} implements an open review policy, making reviews for all papers accessible, including those for rejected papers. Accepted papers at \emph{ICLR} are categorized into Oral presentations (top 5\% of papers), Spotlight (top 25\%), and Poster presentations. In 2022, \emph{ICLR} received 3,407 submissions, which increased to 4,966 by 2023. Using a stratified sampling method, we included 55 Oral (with 200 reviews), 173 Spotlight (664 reviews), 197 Poster (752 reviews), 213 rejected (842 reviews), and 182 withdrawn (710 reviews) papers from 2022. For 2023, we included 90 Oral (317 reviews), 200 Spotlight (758 reviews), 200 Poster (760 reviews), 212 rejected (799 reviews), and 187 withdrawn (703 reviews) papers. The dataset comprises 1709 papers and 6,506 reviews in total (\textbf{Table~\ref{tab:iclr_data}}). The paper PDFs and corresponding reviews were retrieved using the OpenReview API (\url{https://docs.openreview.net/}).

\subsection{Generating Scientific Feedbacks using LLM}

We prototyped a pipeline to generate scientific feedback using OpenAI's GPT-4~\cite{gpt4} (\textbf{Fig. \ref{fig:input_output}$a$}). The system's input was the academic paper in PDF format, which was then parsed with the machine-learning-based ScienceBeam PDF parser~\cite{ecer2017sciencebeam}. Given the token constraint of GPT-4, which allows 8,192 tokens for combined input and output, the initial 6,500 tokens of the extracted title, abstract, figure and table captions, and main text were utilized to construct the prompt for GPT-4 (\textbf{Fig.~\ref{fig:supplementary-input_output}}). This token limit exceeds the 5,841.46-token average of \emph{ICLR} papers and covers over half of the 12,444.06-token average for \emph{Nature} family journal papers (\textbf{Table~\ref{tab:paper_review_length}}).
For clarity and simplicity, we instructed GPT-4 to generate a structured outline of scientific feedback. 
Following the reviewer report instructions from machine learning conferences~\cite{rogers2021aclrolling,acl23-peer-review-policies,aclijcnlp2021-instructions-reviewers,icml2023-reviewer-tutorial,nicholas2011quick} and \emph{Nature} family journals~\cite{nature-for-referees,ncomms-for-reviewers}, we provided specific instructions to generate four feedback sections: significance and novelty, potential reasons for acceptance, potential reasons for rejection, suggestions for improvement (\textbf{Fig.~\ref{fig:Nature-prompt-generate}}). 
The feedback for each paper was generated by GPT-4 in a single pass.

\subsection{Retrospective Extraction and Matching of Comments from Scientific Feedback}

To evaluate the overlap between LLM feedback and human feedback, we developed a two-stage comment matching pipeline (\textbf{Fig.~\ref{fig:supplementary-retrospective_evaluation}}). 
In the first stage, we employed an extractive text summarization approach~\cite{luhn1958,edmundson1969new,mihalcea2004textrank,erkan2004lexrank}. 
Each feedback text, either from the LLM or a human, was processed by GPT-4 to extract a list of the points of comments raised in the text (see prompt in \textbf{Fig.~\ref{fig:extract-comment-prompt}}). The output was structured in a JSON (JavaScript Object Notation) format. Within this format, each JSON key assigns an ID to a specific point, while the corresponding value details the content of the point (\textbf{Fig.~\ref{fig:extract-comment-prompt}}).
We focused on criticisms in the feedback, as they provide direct feedback to help authors improve their papers~\cite{goodman1994manuscript}.
The second stage focused on semantic text matching~\cite{deerwester1990indexing,socher2011dynamic,bowman2015large}. 
Here, we input both the JSON-formatted feedback from the LLM and the human into GPT-4. The LLM then generated another JSON output where each key identified a pair of matching point IDs and the associated value provided the explanation for the match. 
Given that our preliminary experiments showed GPT-4's matching to be lenient, we introduced a similarity rating mechanism. 
In addition to identifying corresponding pairs of matched comments, GPT-4 was also tasked with self-assessing match similarities on a scale from 5 to 10 (\textbf{Fig.~\ref{fig:match-comment-prompt}}). We observed that matches graded as ``5. Somewhat Related'' or ``6. Moderately Related'' introduced variability that did not always align with human evaluations. Therefore, we only retained matches ranked ``7. Strongly Related'' or above for subsequent analyses.

We validated our retrospective comment matching pipeline using human verification. 
In the extractive text summarization stage, we randomly selected 639 pieces of scientific feedback, including 150 from the LLM and 489 from human contributors. Two co-authors assessed each feedback and its corresponding list of extracted comments, identifying true positives (correctly extracted comments), false negatives (missed relevant comments), and false positives (incorrectly extracted or split comments). This process resulted in an F1 score of 0.968, with a precision of 0.977 and a recall of 0.960 (\textbf{Table~\ref{tab:combined_verification}$a$}), demonstrating the accuracy of the extractive summarization stage. 
For the semantic text matching stage, we sampled 760 pairs of scientific feedbacks: 332 comparing GPT to Human feedback and 428 comparing Human feedback. Each feedback pair was processed to enumerate all potential pairings of their extracted comment lists, resulting in 12,035 comment pairs. Three co-authors independently determined whether the comment pairs matched, without referencing the pipeline's predictions. Comparing these annotations with pipeline outputs yielded an F1 score of 0.824, a recall of 0.878, and a precision of 0.777 (\textbf{Table~\ref{tab:combined_verification}$b$}).
To assess inter-annotator agreement, we collected three annotations for 800 randomly selected comment pairs. Given the prevalence of non-matches, we employed stratified sampling, drawing 400 pairs identified as matches by the pipeline and 400 as non-matches. We then calculated pairwise agreement between annotations and the F1 score for each annotation against the majority consensus. The data showed 89.8\% pairwise agreement and an F1 score of 88.7\%, indicating the reliability of the semantic text matching stage.

\subsection{Evaluating Specificity of LLM Feedback through Review Shuffling}

To evaluate the specificity of the feedback generated by the LLM, we compared the overlap between human-authored feedback and shuffled LLM feedback. 
For papers published in the \emph{Nature} journal family, the LLM-generated feedback for a given paper was randomly paired with human feedback for a different paper from the same journal and \emph{Nature} root category. These categories included physical sciences, earth and environmental sciences, biological sciences, health sciences, and scientific community and society. If a paper was classified under multiple categories, the shuffle algorithm paired it with another paper that spanned the same categories. For the \emph{ICLR} dataset, we compared human feedback for a paper with LLM feedback for a different paper, randomly selected from the same conference year, either \emph{ICLR} 2022 or \emph{ICLR} 2023. 
This shuffling procedure was designed to test the null hypothesis: 
if LLM mostly produces generic feedback applicable to many papers, 
then there would be little drop in the pairwise overlap between LLM feedback and the comments from each individual reviewer after the shuffling.

\subsection{Overlap Metrics for Retrospective Evaluations and Control}

In the retrospective evaluation, we assessed the pairwise overlap of both GPT-4 vs. Human and Human vs. Human in terms of hit rate (\textbf{Fig.~\ref{fig:Main-AutoEval}}). The hit rate, defined as the proportion of comments in set \(A\) that match those in set \(B\), was calculated as follows: 
    \[
    \text{{Hit Rate}} = \frac{{|A \cap B|}}{{|A|}}
    \] 
To facilitate a direct comparison between the hit rates of GPT-4 vs. Human and Human vs. Human, we controlled for the number of comments when measuring the hit rate for Human vs. Human. Specifically, we considered only the first $N$ comments made by the first human (i.e., the human comments used as set \(A\)) for matching, where $N$ is the number of comments made by GPT-4 for the same paper. 
The results, with and without this control, were largely similar across both the \emph{ICLR} dataset for different decision outcomes (\textbf{Fig.~\ref{fig:supplementary-ICLR-single-stratify},\ref{fig:supplementary-scatter}}) and the \emph{Nature} family journals dataset across different journals (\textbf{Fig.~\ref{fig:supplementary-Nature-main-single-stratify},\ref{fig:supplementary-Nature-comm-single-stratify},\ref{fig:supplementary-scatter}}).
To examine the robustness of the results across different set overlap metrics, we also evaluated three additional metrics: the Szymkiewicz–Simpson overlap coefficient, the Jaccard index, and the Sørensen–Dice coefficient. These were calculated as follows: 
\begin{align*}
    \text{Szymkiewicz–Simpson Overlap Coefficient} &= \frac{|A \cap B|}{\min(|A|, |B|)} \\
    \text{Jaccard Index} &= \frac{|A \cap B|}{|A \cup B|} \\
    \text{Sørensen–Dice Coefficient} &= \frac{2|A \cap B|}{|A| + |B|}
\end{align*}
Results on these additional metrics suggest that our findings are robust on different set overlap metrics: the overlap GPT-4 vs. Human appears comparable to those of Human vs. Human both with and without control for the number of comments (\textbf{Fig.~\ref{fig:supplementary-metrics-vertical}}).

\subsection{Characterizing the comment aspects in human and LLM feedback}

We curated an annotation schema of 11 key aspects to identify and measure the prevalence of these aspects in human and LLM feedback. This schema was developed with a focus on the \emph{ICLR} dataset, due to its specialized emphasis on Machine Learning. Each aspect was defined by its underlying emphasis, such as novelty, research implications, suggestions for additional experiments, and more. The selection of these 11 key aspects was based on a combination of the common schemes identified in the literature within the machine learning domain~\cite{values-in-ML-research,smith2022real,koch2021reduced,scheuerman2021datasets}, comments from machine learning researchers, and initial exploration by the annotators. 
From the \emph{ICLR} dataset, a random sample of 500 papers was selected to ensure a broad yet manageable representation. Using our extractive text summarization pipeline, we extracted lists of comments from both the LLM and human feedback for each paper. Each comment was then annotated according to our predefined schema, identifying any of the 11 aspects it represented (\textbf{Table~\ref{tab:values_ICLR_1},\ref{tab:values_ICLR_2},\ref{tab:values_ICLR_3}}). To ensure annotation reliability, two researchers with a background in machine learning performed the annotations.

\subsection{Prospective User Study and Survey}

We conduct a prospective user study to further validate the effectiveness of leveraging LLMs to generate scientific feedback. To facilitate our user study, we launched an online Gradio demo~\cite{gradio} of the aforementioned generation pipeline, accessible at a public URL (\textbf{Fig.~\ref{fig:gradio}}). Users are prompted to upload a research paper in its original PDF format, after which the system delivers the feedback comments to user's email. 
We ask users to only upload papers published after 9/2021 to ensure the papers are never seen by GPT-4 during training (the cutdown date of GPT-4 training corpora is 9/2021). We have also incorporated an ethics statement to discourage the direct use of LLM content for any review-related tasks.
After the feedback is generated and sent to users, users are asked to fill a 6-page survey (Figure~\ref{fig:user_study}), which includes 1) author background information, 2) review situation in author's area, 3) general impression of LLM feedbck, 4) detailed evaluation of LLM feedback, 5) comparison with human peer review (if available), and 6) additional questions and feedback, which systematically investigates human evaluations of different aspects of LLM reviews.

We carefully designed the user study instructions and questions by following the standards and best practices in human computer interaction (HCI) user research \cite{ozok2007survey, kjeldskov2003review}. Following prior HCI user study designs \cite{otterbacher2009helpfulness, lee2022evaluating, lee2022coauthor}, our survey first focus on high-level impression (e.g. the overall `helpfulness' of the system) then dive into more specific and detailed evaluations. We adopted the commonly used 5 point Likert Scale in the survey as it allows for a wider range of responses, making it suitable for exploring subtler differences in opinions or attitudes than 3 point Likert Scale \cite{ozok2007survey,day2006evaluating, schrum2020four,sproull1996interface, lee2022evaluating}. The phrasing of our survey questions followed the phrasings used in previous successful HCI studies to reduce potential bias \cite{ozok2007survey, kjeldskov2003review,lee2022evaluating}. Coauthors with expertise in HCI and user studies drafted the survey. Then all the authors discussed and iterated on the questions and descriptions to make sure our questions are objective and unbiased. After we designed initial version of the user study survey, interface and instructions, we conducted a pilot study with 15 participants to further validate the design. We first asked participants to complete the full user study on their own following the study's instructions, then interviewed them to discuss whether there are any unclear parts in the study. We walked through instructions, questions and scales in detail, and noted any part participants reported confusion or have misunderstanding. We then iterated on the design and tested with the participants until no more issues were reported. %

We further validated the survey design by quantifying the inter-rater reliability of the survey responses on the LLM generated scientific feedback. Specifically, three coauthors on this work chose four recent papers (after 2022) that they are familiar with, and then independently completed the survey for GPT-4's feedback on each paper. On the paper specific survey questions, the responses of the three coauthors achieved average Krippendorff's Alpha of 0.76, showing consistency among raters' evaluations on the same paper.

In our formal survey data collection, we recruited the participants by (1) circulating the survey invitation through relevant institute mailing lists, and 2) contacting all authors affiliated with US academic institutions who have published at least one preprint on arXiv in the field of computer science and computational biology during January to March, 2023, provided we can find an associated email address within the first three pages of their arXiv PDF preprints. We acknowledge the potential for self-selection bias among our study participants, a common issue encountered in many online surveys and HCI studies \cite{bethlehem2010selection}. However, we believe the participants in our study still constitute a relevant and meaningful cohort, since researchers who opted into our study likely represent those who are curious about and interested in the capabilities of LLMs to improve their papers, which are also more likely to become potential early adopters of these tools. This population is broadly consistent with the typical target of HCI user study research, which predominantly focuses on prospective users of new tools \cite{ozok2007survey, kjeldskov2003review}. 

We calculated Cronbach Alpha, a widely used measure of internal consistency, on survey questions related to the `helpfulness' of LLM generated scientific feedback. This metric quantifies how well the survey items measure the same concept or construct. The Cronbach Alpha score is 0.81, supporting the  internal consistency of our survey responses.

The study has been approved by Stanford University's Institutional Review Board.

\section{Results: Retrospective Evaluation}
To evaluate the quality of LLM feedback retrospectively, we systematically assess the content overlap between human feedback given to submitted manuscripts and the LLM feedback using two large-scale datasets. 
The first dataset, sourced from \emph{Nature} family journals, includes 8,745 comments from human reviewers for 3,096 accepted papers across 15 \emph{Nature} family journals, including \emph{Nature}, \emph{Nature Biomedical Engineering}, \emph{Nature Human Behaviour}, and \emph{Nature Communications} (\textbf{Table~\ref{tab:nature_data}}). 
The second dataset comprises 6,505 comments from human reviewers for 1,709 papers from the International Conference on Learning Representations (\emph{ICLR}), a leading venue for artificial intelligence research in computer science (\textbf{Table~\ref{tab:iclr_data}}). 
These two datasets complement each other: the first dataset (\emph{Nature} portfolio journals) spans a broad range of prominent journals across various scientific disciplines and impact levels, thereby capturing both the universality and variations in human-based scientific feedback. The second dataset (\emph{ICLR}) provides an in-depth perspective of scientific feedback within leading venues of a rapidly evolving field -- machine learning. Importantly, this second dataset includes expert feedback on both accepted and rejected papers.

We developed a retrospective comment matching pipeline to evaluate the overlap between feedback from LLM and human reviewers (\textbf{Fig.\ref{fig:input_output}$b$}). 
The pipeline first performs extractive text summarization~\cite{luhn1958,edmundson1969new,mihalcea2004textrank,erkan2004lexrank} to extract the comments from both LLM and human-written feedback.
It then applies semantic text matching~\cite{deerwester1990indexing,socher2011dynamic,bowman2015large} to identify shared comments between the two feedback sources. 
We validated the pipeline's accuracy through human verification, yielding an F1 score of 96.8\% for extraction (\textbf{ Table~\ref{tab:combined_verification}$a$}) and 82.4\% for matching (\textbf{Table~\ref{tab:combined_verification}$b$}). As additional validation, four human annotators independently assessed LLM's feedback and human reviews for 400 \emph{Nature} portfolio papers. They also found the overlap between LLM's feedback and human reviews (28\% hit rate) to be higher than the overlap between two human reviews (25\% hit rate).

\subsection{LLM feedback significantly overlaps with human-generated feedback}

We began by examining the overlap between LLM feedback and human feedback on \emph{Nature} family journal data (\textbf{ Table~\ref{tab:nature_data}}). More than half (57.55\%) of the comments raised by GPT-4 were raised by at least one human reviewer (\textbf{Fig.~\ref{fig:supplementary-global-hit-rates}$a$}). This suggests a considerable overlap between LLM feedback and human feedback, indicating potential accuracy and usefulness of the system. 
When comparing LLM feedback with comments from each individual reviewer, approximately one third (30.85\%) of GPT-4 raised comments overlapped with comments from an individual reviewer (\textbf{Fig.~\ref{fig:Main-AutoEval}$a$}). The degree of overlap between two human reviewers was similar (28.58\%), after controlling for the number of comments. 

This indicates that the overlap between LLM feedback and human feedback is comparable to the overlap observed between two human reviewers.
We further stratified these overlap results by academic journals (\textbf{Fig.~\ref{fig:Main-AutoEval}$c$}). While the degree of overlap between LLM feedback and human comments varied across different academic journals within the \emph{Nature} family — from 15.58\% in \emph{Nature} Communications Materials to 39.16\% in \emph{Nature} — the overlap between LLM feedback and human feedback comments largely mirrored the overlap found between two human reviewers. %
The robustness of the finding further indicates that scientific feedback generated from LLM is similar to what researchers could get from peer reviewers.  
As additional sensitivity experiments, we found that the overlap analysis were consistent across other statistical metrics including Szymkiewicz–Simpson overlap coefficient, Jaccard index and Sørensen–Dice coefficient (\textbf{Fig.~\ref{fig:supplementary-metrics-vertical}}). For a subset of 408 \emph{Nature} family publications, we also obtained associated  Research Square preprints. The GPT-4's feedback on these preprints also significantly overlaps the reviewers' comments---55.4\% of the points raised by GPT-4 were raised by at least one human reviewer  (\textbf{Fig.~\ref{fig:supplementary-ResearchSquare}}).

In parallel experiments, we investigated the comment overlap between LLM feedback and human feedback on \emph{ICLR} papers data (\textbf{Table~\ref{tab:iclr_data}}), and the results were largely similar. A majority (77.18\%) of the comments raised by GPT-4 were also raised by at least one human reviewer (\textbf{Fig.~\ref{fig:supplementary-global-hit-rates}$b$}), indicating considerable overlap between LLM feedback and human feedback. 
When comparing LLM feedback with comments from each individual reviewer, more than one third (39.23\%) of GPT-4 raised comments overlapped with comments from an individual reviewer (\textbf{Fig.~\ref{fig:Main-AutoEval}$b$}). The overlap between two human reviewers was similar (35.25\%), after controlling for the number of comments ( \textbf{Fig.~\ref{fig:supplementary-metrics-vertical}}). 
We further stratified these overlap results by the decision outcomes of the papers (\textbf{Fig.~\ref{fig:Main-AutoEval}$d$}). Similar to results over \emph{Nature} family journals, we found that the overlap between LLM feedback and human feedback comments largely mirrored the overlap found between two human reviewers. %

In addition, as \emph{ICLR} dataset includes both accepted and rejected papers, we conducted stratification analysis and found a correlation between worse acceptance decisions and larger overlap in \emph{ICLR} papers. Specifically, papers accepted with oral presentations (representing the top 5\% of accepted papers) have an average overlap of 30.63\% between LLM feedback and human feedback comments. The average overlap increases to 32.12\% for papers accepted with a spotlight presentation (the top 25\% of accepted papers), while rejected papers bear the highest average overlap at 47.09\%. 
A similar trend was observed in the overlap between two human reviewers: 23.54\% for papers accepted with oral presentations (top 5\% accepted papers), 24.52\% for papers accepted with spotlight presentations (top 25\% accepted papers), and 43.80\% for rejected papers. This suggests that rejected papers may have more apparent issues or flaws that both human reviewers and LLMs can consistently identify. Additionally, the increased overlap between LLM feedback and actual human reviewer feedback for rejected papers indicates that LLM feedback could be particularly constructive and formative for papers that require more substantial revisions to be accepted. Indeed, by raising these concerns earlier in the scientific process before review, these papers and the science they report may be improved.

Since our primary analyses focus on GPT-4, we wanted to assess how consistent is GPT-4 feedback over time. Accordingly, we evaluated the March 2023 and June 2023 checkpoints of GPT-4 on \emph{Nature} family and \emph{ICLR} papers, and found that the two GPT-4 checkpoints produced very consistent feedback (\textbf{Fig.~\ref{fig:supplementary-LLAMA}}). 
We also evaluated the ability of two state-of-the-art open-source LLMs: Llama 2 (70 billion parameters) and Falcon (40 billion parameters), to provide feedback. Both open-source LLMs perform significantly worse than GPT-4 (\textbf{Fig.~\ref{fig:supplementary-LLAMA}}).

\subsection{LLM could generate non-generic feedbacks.}

Is it possible that LLM merely generates generic feedback applicable to multiple papers? A potential null model is that LLM mostly produces generic feedback applicable to many papers. To test this hypothesis, we performed a shuffling experiment aimed at verifying the specificity and relevance of LLM generated feedback. 
For each paper in the \emph{Nature} family journal data, the LLM feedback was shuffled for papers from the same journal and within the same \emph{Nature} category. 
If the LLM were producing only generic feedback, we would observe no decrease in the pairwise overlap between shuffled LLM feedback and human feedback. 
In contrast, the pairwise overlap significantly decreased from 30.85\% to 0.43\% after shuffling (\textbf{Fig.~\ref{fig:Main-AutoEval}$a$}). A similar drop from 39.23\% to 3.91\% was observed on \emph{ICLR} (\textbf{Fig.~\ref{fig:Main-AutoEval}$b$}). These results suggest that LLM feedback is paper-specific.

\subsection{LLM is consistent with humans on major comments}

What characteristics do LLMs' comments exhibit? What are the distinctive features of the human comments that align with LLMs'? Here we evaluate the unique characteristics of comments generated by LLMs. 
Our analysis revealed that comments identified by multiple human reviewers are more likely to be echoed by LLMs. 
For instance, in the \emph{Nature} family journal data (\textbf{Fig.~\ref{fig:Main-AutoEval}$e$}), a comment raised by a single human reviewer had an 11.39\% chance of being identified by LLMs. This probability increased to 20.67\% for comments raised by two reviewers, and further to 31.67\% for comments raised by three or more reviewers. 
A similar trend was observed in the \emph{ICLR} data (\textbf{Fig.~\ref{fig:Main-AutoEval}$f$}), where the likelihood of LLMs identifying a comment increased from 15.39\% for a single reviewer to 26.21\% for two reviewers, and 39.33\% for three or more reviewers. 
These findings suggest that LLMs are more likely to identify common issues or flaws that are consistently recognized by multiple human reviewers, compared to specific comments raised by a single reviewer. 
This alignment of LLM with human perspectives indicates its ability to identify what is generally considered as major or significant issues.

We further examined the likelihood of LLM comments overlapping with human feedback based on their position in the sequence, as earlier comments in human feedback (e.g. ``concern 1'') may represent more significant issues. 
To this end, we divided each human reviewer's comment sequence into four quarters within the Nature journal data (\textbf{Fig.~\ref{fig:Main-AutoEval}$g$}). 
Our findings suggest that comments raised in the first quarter of the review text are most likely (21.23\%) to overlap with LLM comments, with subsequent quarters revealing decreasing likelihoods (16.74\% for the second quarter). Similar trends were observed in the ICLR papers data, where earlier comments in the sequence showed a higher probability of overlap with LLM comments (\textbf{Fig.~\ref{fig:Main-AutoEval}$h$}). These findings further support that LLM tends to align with human perspectives on what is generally considered as major or significant issues.

\subsection{LLM feedback emphasizes certain aspects more than humans}
We next analyzed whether certain aspects of feedback are more/less likely to be raised by the LLM and human reviewers. We focus on \emph{ICLR} for this analysis, as it's more homogeneous than \emph{Nature} family journals, making it easier to categorize the main aspects of review.
Drawing on existing research in peer review literature within the machine learning domain~\cite{values-in-ML-research,smith2022real,koch2021reduced,scheuerman2021datasets}, we developed a schema comprising 11 distinct aspects of comments. We then performed human annotation on a randomly sampled subset (see Methods section).

\textbf{Fig.~\ref{fig:ICLR_value}} presents the relative frequency of each of the 11 aspects of comments raised by humans and LLM. LLM comments on the implications of research 7.27 times more frequently than humans do. Conversely, LLM is 10.69 times less likely to comment on novelty than humans are. While both LLM and humans often suggest additional experiments, their focuses differ: humans are 6.71 times more likely than LLM to request more ablation experiments, whereas LLM is 2.19 times more likely than humans to request experiments on more datasets. These findings suggest that the emphasis put on certain aspects of comments varies between LLMs and human reviewers. This variation highlights the potential advantages that a human-AI collaboration could provide. Rather than having LLM fully automate the scientific feedback process, humans can raise important points that LLM may overlook. Similarly, LLM could supplement human feedback by providing more comprehensive comments.

\section{Results: Prospective User Study and Survey}

Taken together, our retrospective evaluations above suggest that LLMs can generate scientific feedback that focuses on similar aspects as human reviewers. Yet, consistency in concerns is only one of the many factors contributing to the utility of scientific feedback. Recent studies in human-AI interaction have also identified additional factors that individuals may consider when evaluating and adopting AI-based tools \cite{lee2022evaluating}, prompting us to ask: how do scientific researchers respond to feedback generated by LLMs? Thus, we launched a survey study on 308 researchers from 110  institutions who opted in to receive LLM-generated scientific feedback on their own papers, and were asked to evaluate its utility and performance. While our sampling approach is subject to  biases of self-selection, the data can provide valuable insights and subjective perspectives from researchers  complementing our retrospective analysis \cite{meyer2015household,ross2022women}. The results from the user study are illustrated in \textbf{Fig.~\ref{fig:user_study}}.

Our user study provides additional evidence that is largely consistent with retrospective evaluations. First, the user study survey results corroborate the findings from the retrospective evaluation on significant overlaps between LLM feedback and human feedback: more than 70\% of participants think there is at least “partial alignment” between LLM feedback and what they think/would expect on the significant points and issues with their paper, and 35\% of participants think the alignment is considerable or substantial (\textbf{Fig.~\ref{fig:user_study}$b$}). Second, the survey study further corroborates the findings from the automated evaluation on the ability of the language model to generate non-generic feedback: 32.9\% of participants think our system-generated feedback is “less specific than many, but more specific than some peer reviewers", while 17.3\% and 14\% think it is “about as specific as peer reviewers", or “more specific than many peer reviewers", further corroborating that LLMs can generate non-generic reviews (\textbf{Fig.~\ref{fig:user_study}$d$}).

\subsection{Researchers find LLM feedback helpful}
Participants were surveyed about the extent to which they found the LLM feedback helpful in improving their work or understanding of a subject. The majority responded positively, with over 50.3\% considering the feedback to be helpful, and 7.1\% considering it to be very helpful (\textbf{Fig.~\ref{fig:user_study}$a$}). When compared with human feedback, while 17.5\% of participants considered it to be inferior to human feedback, 41.9\% considered it to be less helpful than many, but more helpful than some human feedback. Additionally, 20.1\% considered it to be about the same level of helpfulness as human feedback, and 20.4\% considered it to be even more helpful than human feedback (\textbf{Fig.~\ref{fig:user_study}$c$}). Our evaluation also revealed that the perceptions of alignment and helpfulness were consistent across various demographic groups. Individuals from different educational backgrounds, ranging from undergraduate to postgraduate levels, found the feedback equally helpful and aligned with human feedback. Similarly, whether an experienced or a novice researcher, participants across the spectrum of publishing and reviewing experience reported similar levels of satisfaction and utility from the LLM based feedback, indicating that LLM based feedback tools could potentially be helpful to a diverse range of population (\textbf{Fig.~\ref{fig:plots_helpfulness_years_pdf},\ref{fig:plots_helpfulness_status_pdf}}).

In line with the helpfulness of the system, 50.5\% of survey participants further expressed their willingness to reuse the system (\textbf{Fig.~\ref{fig:user_study}$g$}). The participants expressed optimism about the potential improvements that continued use of the system could bring to the traditional human feedback process (\textbf{Fig.~\ref{fig:user_study}$e,f$}). They believe that the LLM technology can further refine the quality of reviews and possibly introduce new capabilities. Interestingly, the evaluation also revealed that participants believe authors are more likely to benefit from LLM based feedback than other stakeholders such as reviewers, and area chairs (\textbf{Fig.~\ref{fig:user_study}$h$}). Many participants envisioned a timely feedback tool for authors to receive comments on their papers in a timely manner, e.g. one participant wrote,
``The review took five minutes and was of a reasonably high quality. This can tremendously help authors to receive a fast turnaround feedback and help in polishing their submissions.'' Another participant wrote,
``After writing a paper or a review, GPT could help me gain another perspective to re-check the paper.''

\subsection{LLM could generate novel feedback not mentioned by humans.}

Beyond generating feedback that aligns with humans, our results also suggest that LLM could potentially generate useful feedback that has not been mentioned by humans, e.g., 65.3\% of participants think at least to some extent LLM feedback offers perspectives that have been overlooked or underemphasized by humans.
Several participants mentioned that:
\begin{itemize}[noitemsep,topsep=0pt]
    \item ``It consists more points, covering aspects which human may forget to think about.''
    \item ``It actually highlighted a few limitations which human reviewers didn't point out to, but as authors we were aware of it and were expecting it. But this GPT figured out some of them, so that's interesting.''
    \item ``The GPT-generated review suggested me to do visualization to make a more concrete case for interpretability. It also asked to address data privacy issues. Both are important, and human reviewers missed this point.''
\end{itemize}

\subsection{Limitations of LLM feedback}
Study participants also discussed limitations of the current system. The most important limitation is its ability to generate specific and actionable feedback, e.g.
\begin{itemize}[noitemsep,topsep=0pt]
    \item ``Potential Reasons are too vague and not domain specific.''
    \item ``GPT cannot provide specific technical areas for improvement, making it potentially difficult to improve the paper.''
    \item ``The reviews crucially lacked much in-depth critique of model architecture and design, something actual reviewers would be able to comment on given their likely considerable experience in fields closely related to the focus of the paper.''
\end{itemize}
As such, one future direction to improve the LLM based scientific feedback system is to nudge the system towards generating more concrete and actionable feedback, e.g. through pointing to specific missing work, experiments to add. As one participant nicely summarized:
\begin{itemize}[noitemsep,topsep=0pt]
    \item ``(large languge model generated) reviews were less about the content and more about the testing regime as well as less ML details-focused, but this is okay as it still gave relevant and actionable advice on areas of improvement in terms of paper layout and presenting results. GPT-generated reviews are especially useful here when less-experience authors may leave out details on implementation and construction or forget to thoroughly explain testing regime by providing pointers on areas to polish the paper in, potentially decreasing the number of review cycles before publication.''
\end{itemize}

\section{Discussion and Conclusion}

In this study, we characterized the usefulness and reliability of LLM in scientific evaluation by building and evaluating an LLM-based scientific feedback generation framework. Through a combination of retrospective (comparing LLM feedback with human feedback from the peer review process) and prospective evaluation design (user study with researchers), we have seen a substantial level of overlap and positive user perceptions regarding the usefulness of LLM feedback. Furthermore, in evaluating user perceptions, we found that a majority of participants regarded LLM feedback as useful in the manuscript improvement process, and sometimes LLM could bring up novel points not covered by humans. The positive feedback from users highlights the potential value and utility of leveraging LLM feedback as a valuable resource for authors seeking constructive feedback and suggestions for enhancing their manuscripts. This could be especially helpful for researchers who lack access to timely quality feedback mechanisms, e.g., researchers from traditionally underprivileged regions who may not have resources to access conferences, or even peer review (their works are much more likely than those of “mainstream” researchers to get desk rejected by journals and thus seldom go through the peer review process \cite{merton1968matthew}). For others, the framework could be used as a  mechanism for authors to self-check and improve their work in a timely manner, especially in an age of exponentially growing scientific papers and increasing challenges to secure timely and quality peer reviewer feedback. Our analysis suggests that people from diverse educational backgrounds and publishing experience can find the LLM scientific feedback generation framework useful (\textbf{Fig.~\ref{fig:plots_helpfulness_years_pdf},\ref{fig:plots_helpfulness_status_pdf}}).

Despite the potential of LLMs in providing timely and helpful scientific feedback, it is important to note that expert human feedback will still be the cornerstone of rigorous scientific evaluation. As demonstrated in our findings, our analysis reveals limitations of the framework, e.g., LLM is biased towards certain aspects of scientific feedback (e.g., “add experiments on more datasets”), and sometimes feels “generic” to the authors (while participants also indicate that quite often human reviewers are “generic”). While comparable and even better than some reviewers, the current LLM feedback cannot substitute specific and thoughtful human feedback by domain experts.

It is also important to note the potential misuse of LLM for scientific feedback. We argue that LLM feedback should be primarily used by researchers identify areas of improvements in their manuscripts prior to official submission. It is important that expert human reviewers should deeply engage with the manuscripts and provide independent assessment without relying on LLM feedback. Automatically generating reviews without thoroughly reading the manuscript would undermine the rigorous evaluation process that forms the bedrock of scientific progress.

More broadly, our study contributes to the recent discussions on the impacts of LLM and generative AI on existing work practices. Researchers have discussed the potential of LLM to improve productivity \cite{noy2023experimental,peng2023impact}, creativity \cite{epstein2023art}, and facilitate scientific discovery \cite{wang2023scientific}. We envision that LLM and generative AI, if deployed responsibly, could also potentially bring a paradigm change to how researchers conduct research, collaborate, and provide evaluations, influencing the way science and technology advance. Our work brings a preliminary investigation into such potentials through a concrete prototype for scientific feedback.

There are several limitations to our study that are important to highlight. First, our results are based on one specific instantiation of scientific feedback from LLM, i.e., our framework is based on the GPT-4 model, enabled by a specific prompt. While we have spent significant efforts in improving the performance of our GPT-4 feedback pipeline (and achieved reasonable utility), the results should be interpreted as a lower bound, rather than an upper bound, on the potential of leveraging LLMs for scientific feedback. Moreover, our system only leverages zero-shot learning of GPT-4 without fine-tuning on additional datasets. Further, the architecture and prompt used in our study only represent one of the many possible forms of LLM-based scientific feedback. Aside from exploring other LLMs and conducting more sophisticated prompt engineering, future work could incorporate labeled datasets of “high quality scientific feedback” to further fine-tune the LLM, or prompt LLM to leverage tools (e.g., fact check API) so that the feedback could be more detailed and method-specific. Nevertheless, our proposed framework proves to be helpful and aligns well with comments brought up by human reviewers, demonstrating the  potential of incorporating LLM in the scientific evaluation process, echoing prior works arguing for AI in the scientific process \cite{wang2023scientific}.  Our retrospective evaluation used \emph{Nature} family data and \emph{ICLR} data. While these data cover a range of scientific fields, including biology, computer science, etc., and the \emph{ICLR} dataset includes both accepted and rejected papers, all studied papers are targeted at top venues in English. Future work should further evaluate the framework with more coverage. Our user study is limited in coverage of participant population and suffer from a self-selection issue. For example, we focused on U.S. based researchers, and those who opted in to receive LLM generated scientific feedback may not fully represent either their own scientific fields or researchers generally (see \textbf{Methods} for a more extended discussion). Current results are based on responses from researchers in machine learning and computational biology, and while we aim to ensure representativeness of researchers by reaching out to randomly sampled authors who upload preprints to arXiv in recent months, participants who opt into the study are likely to be interested and familiar with LLM or AI in general. Finally, the version of the GPT-4 model we utilized for our primary experiments does not possess the capability to understand or interpret visual data such as tables, graphs, and figures, which are integral components of scientific literature. As we wrapped out our study, OpenAI released GPT-4V(ision). In a pilot analysis, we applied GPT-4V on 500 \emph{Nature} family papers, now including all the figures, and evaluated it using the same framework as above. Adding the vision capability further increased the overlap between AI generated reviews and human reviews (hit rate of 43\%) compared to 31\% without vision). This finding suggests that a promising direction of future work is to leverage visual-language models to provide more comprehensive feedback on manuscript drafts.

One other direction for future work is to explore the extent to which the proposed approach can help identify and correct errors in scientific papers. This would involve artificially introducing various types of errors, including typos, mistakes in data analyses, and errors in mathematical equations. By evaluating whether LLM-generated feedback can effectively detect and rectify these errors, we can gain further insights into the system's ability to improve the overall accuracy and quality of scientific manuscripts. Furthermore, investigating the limitations and challenges associated with error detection and correction by LLM is crucial. This includes understanding the types of errors that may be more challenging for the model to detect and correct, as well as evaluating the potential impact of false positives or false negatives in the generated feedback. Such insights can inform the development of more robust and accurate AI-assisted review systems. Additionally, we intend to broaden the scope of evaluated scientific papers to include manuscripts written in languages other than English, or by authors for whom English is not their first language, in order to understand whether LLMs can provide useful feedback for such papers.

\clearpage 
\newpage

\begin{figure*}[t!] 
\centering
\includegraphics[width=0.65\textwidth]{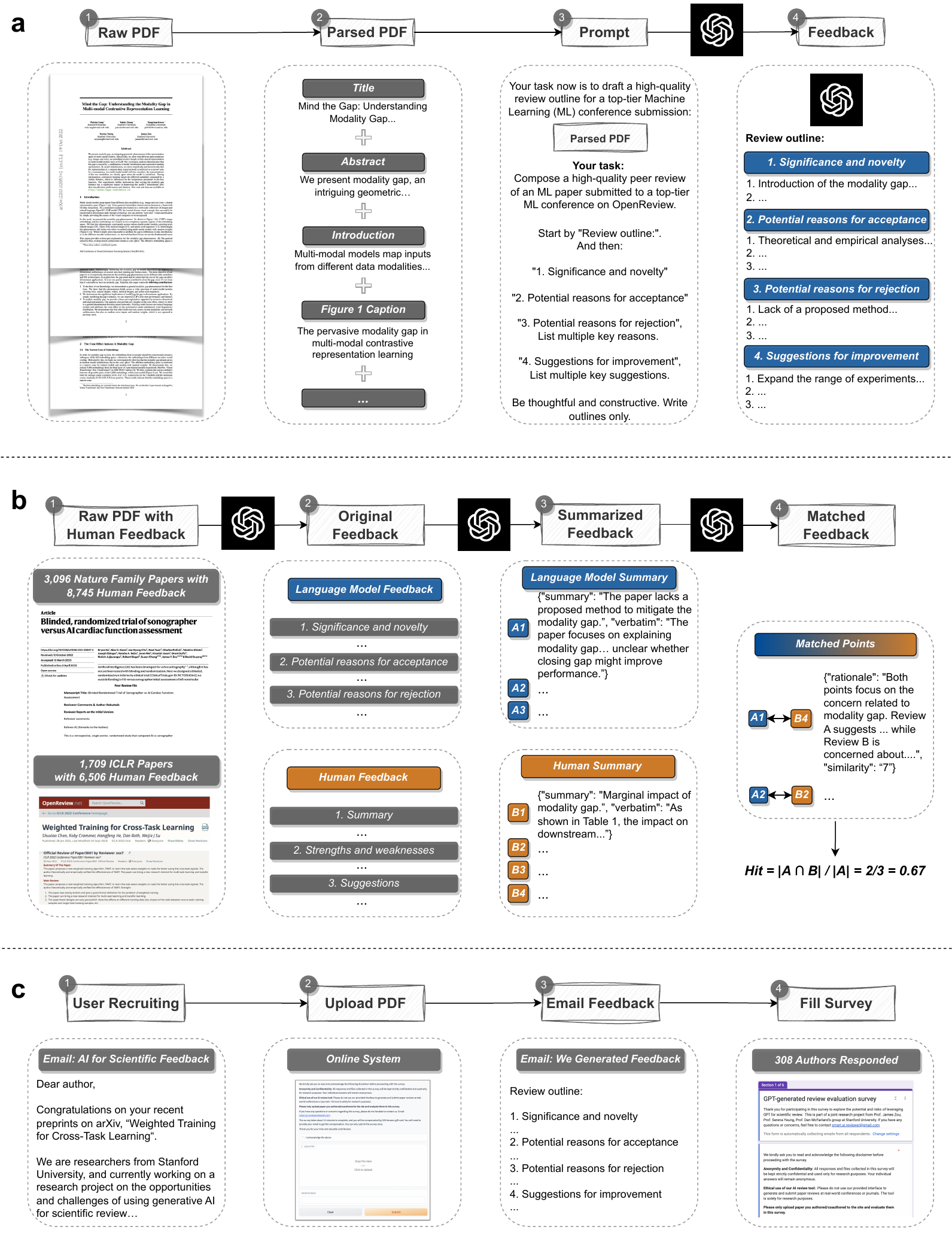}
\caption{
\textbf{Characterizing the capability of LLM in providing helpful feedback to researchers.}
\textbf{a}, 
Pipeline for generating LLM scientific feedback using GPT-4. Given a PDF, we parse and extract the paper's title, abstract, figure and table captions, and main text to construct the prompt. 
We then prompt GPT-4 to provide structured comments with four sections, following the feedback structure of leading interdisciplinary journals and conferences: 
significance and novelty, 
potential reasons for acceptance, 
potential reasons for rejection, and 
suggestions for improvement.
\textbf{b}, 
Retrospective analysis of LLM feedback on 3,096 \emph{Nature} family papers and 1,709 \emph{ICLR} papers. 
We systematically compare LLM feedback with human feedback using a two-stage comment matching pipeline. 
The pipeline first performs extractive text summarization to extract the points of comments raised in LLM and human-written feedback respectively, and then performs semantic text matching to match the points of shared comments between LLM and human feedback. 
\textbf{c}, Prospective user study survey with 308 researchers from 110 US institutions in the field of AI and computational biology. 
Each researcher uploaded a paper they authored, and filled out a survey on the LLM feedback generated for them.  
}
\label{fig:input_output}
\end{figure*}

\begin{figure*}[htb]  
\centering
\includegraphics[width=1\textwidth]{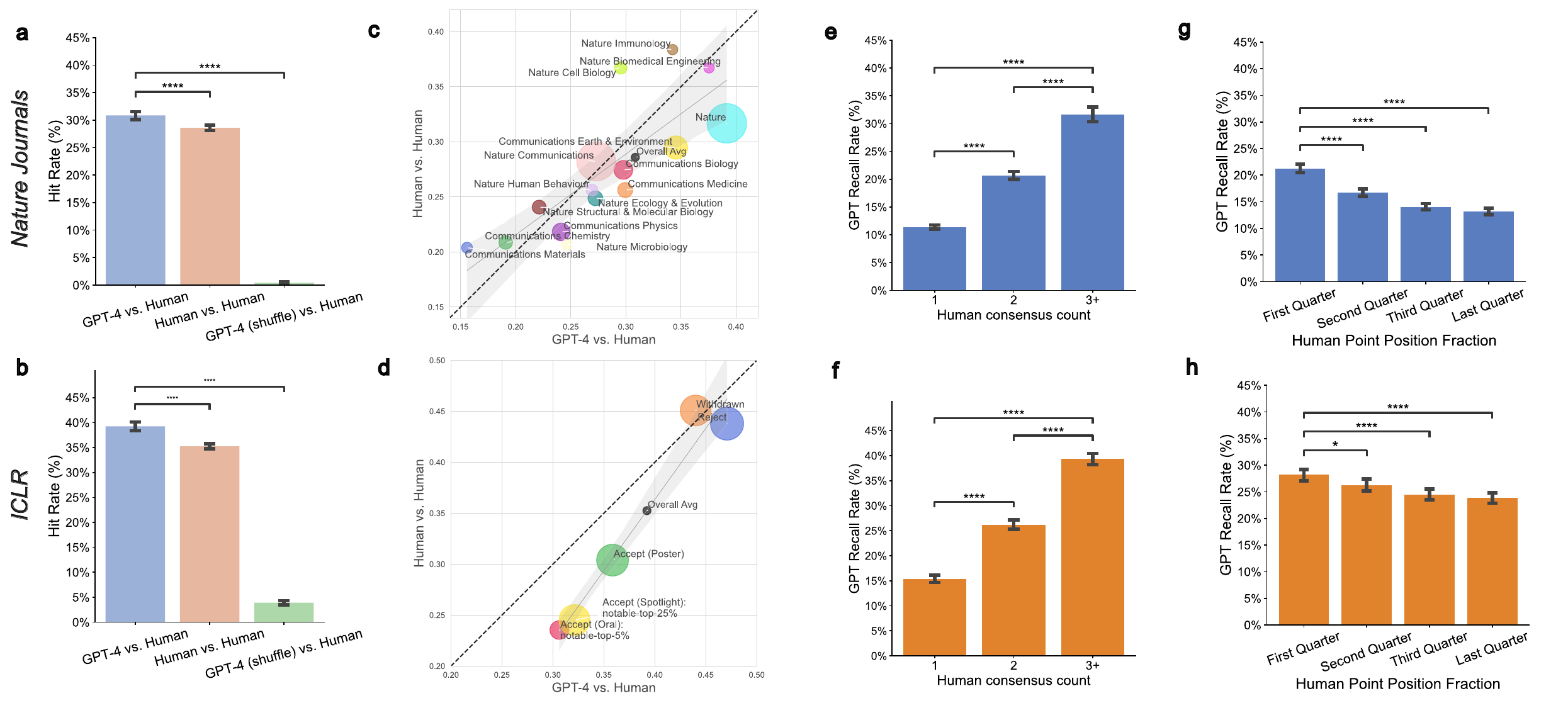}
\caption{
\textbf{Retrospective analysis of LLM and human scientific feedback.}
\textbf{a}, Retrospective overlap analysis between feedback from the LLM versus individual human reviewers on papers submitted to \emph{Nature} Family Journals. 
Approximately one third (30.85\%) of GPT-4 raised comments overlap with the comments from an individual reviewer (hit rate). 
“GPT-4 (shuffle)” indicates feedback from GPT-4 for another randomly chosen paper from the same journal and category. 
As a null model, if LLM mostly produces generic feedback applicable to many papers, 
then there would be little drop in the pairwise overlap between LLM feedback and the comments from each individual reviewer after the shuffling. In contrast, the hit rate drops substantially from 57.55\% to 1.13\% after shuffling, indicating that the LLM feedback is paper-specific. 
\textbf{b}, 
In the International Conference on Learning Representations (\emph{ICLR}), more than one third (39.23\%) of GPT-4 raised comments overlap with the comments from an individual reviewer. 
The shuffling experiment shows a similar result, indicating that the LLM feedback is paper-specific. 
\textbf{c-d}, The overlap between LLM feedback and human feedback appears comparable to the overlap observed between two human reviewers across \emph{Nature} family journals (\textbf{c}) (\(r = 0.80\), \(P = 3.69 \times 10^{-4}\)) and across \emph{ICLR} decision outcomes (\textbf{d}) (\(r = 0.98\), \(P = 3.28 \times 10^{-3}\)). 
\textbf{e-f},  
Comments raised by multiple human reviewers are disproportionately more likely to be hit by GPT-4 on \emph{Nature} Family Journals (\textbf{e}) and \emph{ICLR} (\textbf{f}).  
The X-axis indicates the number of reviewers raising the comment. 
The Y-axis indicates the likelihood that a human reviewer comment matches a GPT-4 comment (GPT-4 recall rate).
\textbf{g-h}, 
Comments presented at the beginning of a reviewer's feedback are more likely to be identified by GPT-4 on \emph{Nature} Family Journals (\textbf{g}) and \emph{ICLR} (\textbf{h}). 
The X-axis indicates a comment's position in the sequence of comments raised by the human reviewer.
Error bars represent 95\% confidence intervals. 
*P < 0.05, **P < 0.01, ***P < 0.001, and ****P < 0.0001.
}
\label{fig:Main-AutoEval}
\end{figure*}

\begin{figure*}[htb]  
\centering
\includegraphics[width=0.55\textwidth]{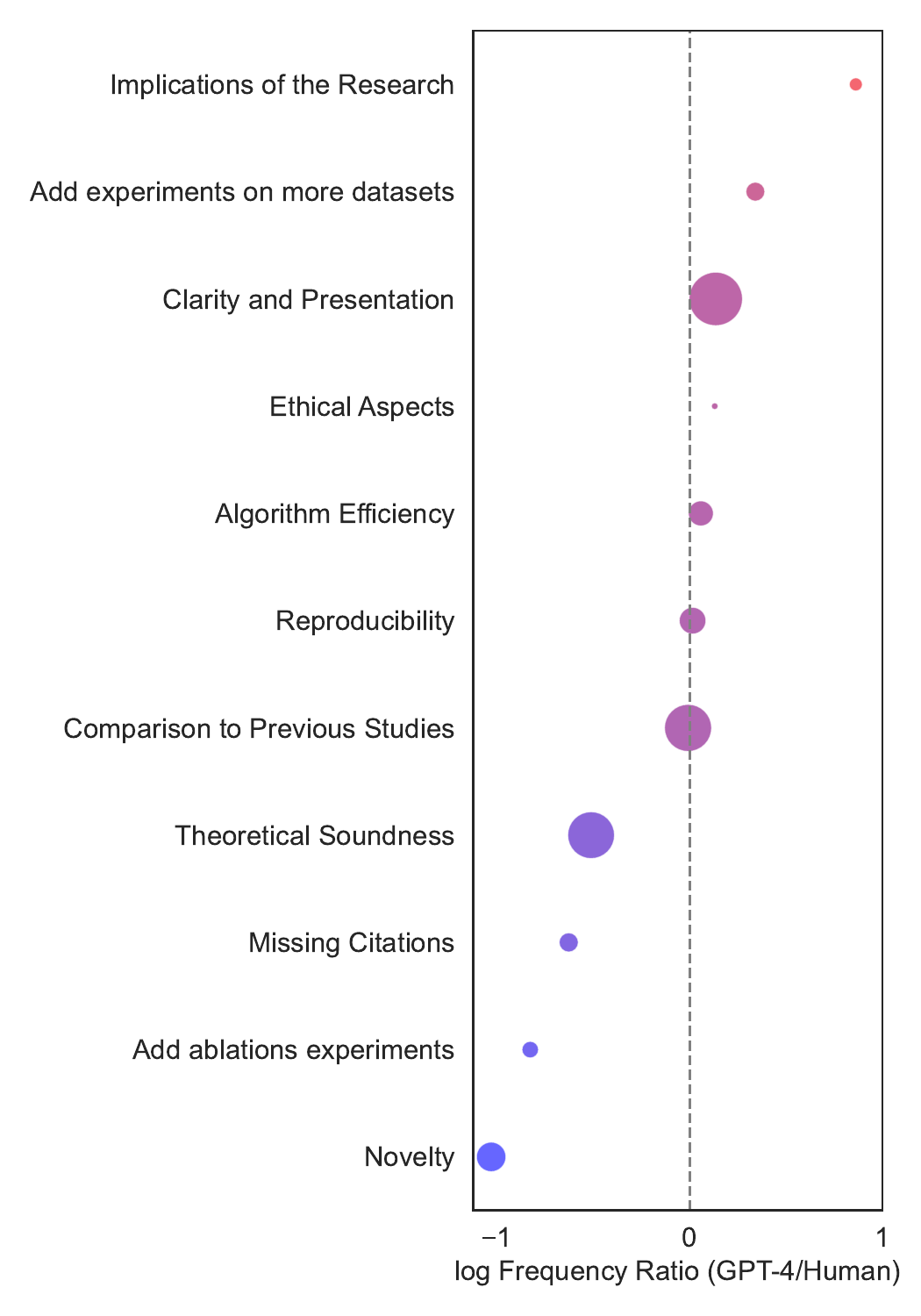}
\caption{
\textbf{LLM based feedback emphasizes certain aspects more than humans.} 
LLM comments on the implications of research 7.27 times more frequently than human reviewers. 
Conversely, LLM is 10.69 times less likely to comment on novelty compared to human reviewers. 
While both LLM and humans often suggest additional experiments, their focuses differ: human reviewers are 6.71 times more likely than LLM to request additional ablation experiments, whereas LLM is 2.19 times more likely than humans to request experiments on more datasets. 
Circle size indicates the prevalence of each aspect in human feedback. 
}
\label{fig:ICLR_value}
\end{figure*}

\begin{figure*}[t!]  
\centering
\includegraphics[width=1\textwidth]{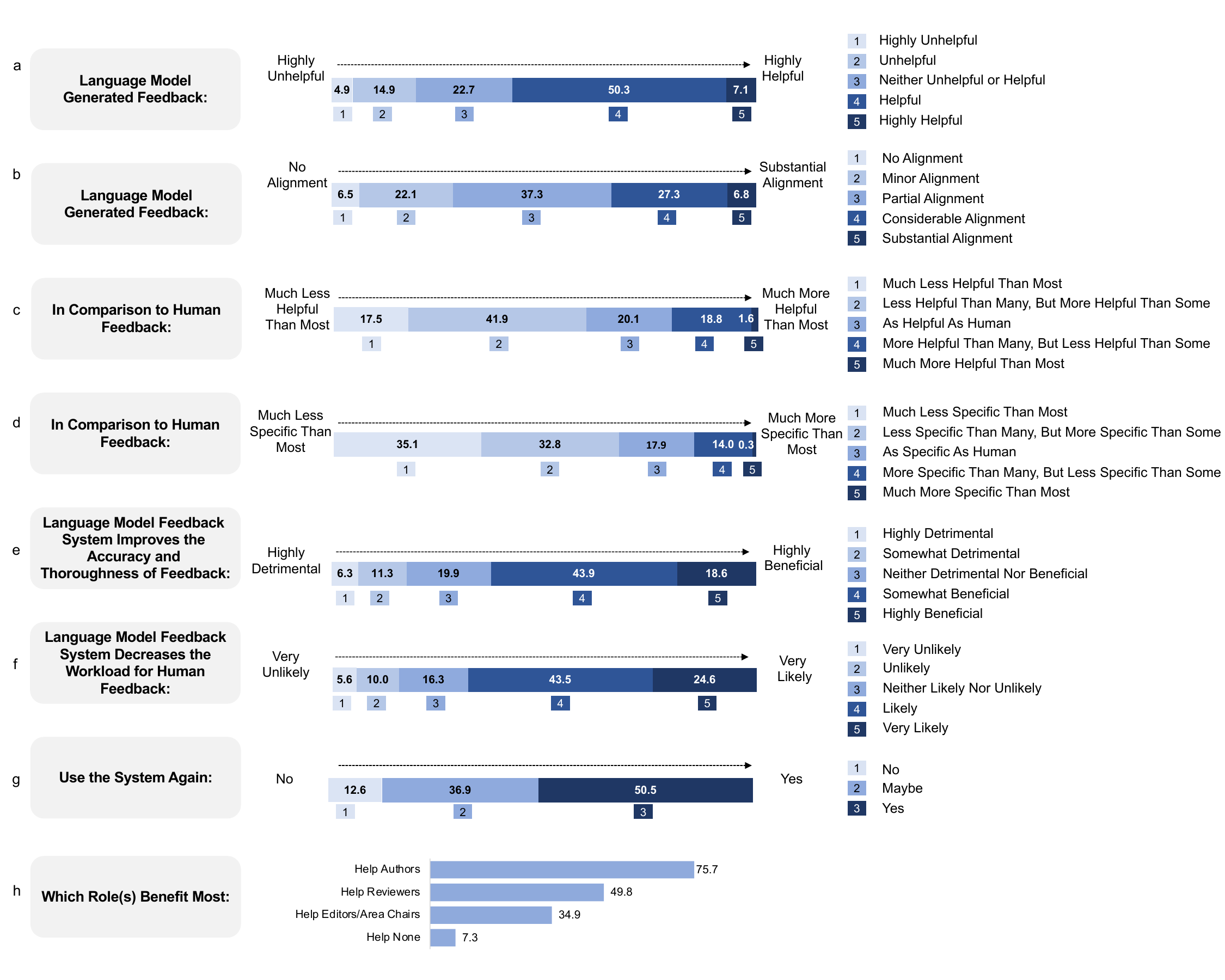}
\caption{
\textbf{Human study of LLM and human review feedback ($n = 308$).}
\textbf{a-b}, LLM generated feedback is generally helpful and has substantial overlaps with actual feedback from human reviewers.
\textbf{c-d}, Compared to human feedback, LLM feedback is slightly less helpful and less specific.
\textbf{e-f}, Users generally believe that the LLM feedback system can improve the accuracy and thoroughness of reviews, and reduce the workload of reviewers.
\textbf{g}, Most users intend to use or potentially use the LLM feedback system again.
\textbf{h}, Users believe that the LLM feedback system mostly helps authors, followed by reviewers and editors / area chairs.
Numbers are percentages (\%).
}
\label{fig:user_study}
\end{figure*}

\clearpage
\newpage

\clearpage 
\newpage

\begin{table}
\centering
\caption{\textbf{Summary of papers and their associated reviews sampled from 15 \emph{Nature} family journals.}}
\label{tab:nature_data}
\begin{tabular}{lcc}
\toprule
\multirow{2}{*}{\textbf{Journal}} & \multicolumn{2}{c}{\textbf{Count}} \\
\cmidrule(lr){2-3}
 & \textbf{Papers} & \textbf{Reviews} \\
\midrule
\emph{Nature} & 773 & 2324 \\
\emph{Nature Communications} & 810 & 2250 \\
\emph{Communications Earth \& Environment} & 299 & 807 \\
\emph{Communications Biology} & 200 & 526 \\
\emph{Communications Physics} & 174 & 464 \\
\emph{Communications Medicine} & 123 & 343 \\
\emph{Nature Ecology \& Evolution} & 113 & 332 \\
\emph{Nature Structural \& Molecular Biology} & 110 & 290 \\
\emph{Communications Chemistry} & 101 & 279 \\
\emph{Nature Cell Biology} & 78 & 233 \\
\emph{Nature Human Behaviour} & 72 & 211 \\
\emph{Communications Materials} & 67 & 181 \\
\emph{Nature Immunology} & 62 & 165 \\
\emph{Nature Microbiology} & 57 & 174 \\
\emph{Nature Biomedical Engineering} & 57 & 166 \\
\midrule
Total & 3096 & 8745 \\
\bottomrule
\end{tabular}
\end{table}

\clearpage 
\newpage

\begin{table}
\centering
\caption{\textbf{Summary of \emph{ICLR} papers and their associated reviews sampled from the years 2022 and 2023, grouped by decision.}}
\label{tab:iclr_data}
\resizebox{0.95\textwidth}{!}{%
\begin{tabular}{lcccc}
\toprule
\multirow{2}{*}{\textbf{Decision Category}} & \multicolumn{2}{c}{\textbf{\emph{ICLR} 2022}} & \multicolumn{2}{c}{\textbf{\emph{ICLR} 2023}} \\
\cmidrule(lr){2-3} \cmidrule(lr){4-5}
 & \# of Papers & \# of Reviews & \# of Papers & \# of Reviews \\
\midrule
Accept (Oral) - notable-top-5\% & 55 & 200 & 90 & 317 \\
Accept (Spotlight) - notable-top-25\% & 173 & 664 & 200 & 758 \\
Accept (Poster) & 197 & 752 & 200 & 760 \\
Reject after author rebuttal & 213 & 842 & 212 & 799 \\
Withdrawn after reviews & 182 & 710 & 187 & 703 \\
\midrule
Total & 820 & 3168 & 889 & 3337 \\
\bottomrule
\end{tabular}
}
\end{table}

\clearpage 
\newpage

\begin{table}
\centering
\caption{
\textbf{Results of human verification on the retrospective comment extraction and matching pipeline.} 
\textbf{a}, 
Human verification for the extractive summarization stage. 
We randomly sampled 639 scientific feedbacks. 
Human annotators reported the numbers of true positives (correctly extracted comments), false negatives (overlooked relevant comments), and false positives (incorrectly extracted or split comments).
Results suggest high accuracy of the extractive summarization stage.
\textbf{b}, 
Human verification for the semantic matching stage.
We randomly sampled 12,035 pairs of extracted comments. Human annotators annotated each pair of extracted comments, determining whether the two comments are semantically matched in content. 
Results suggest high accuracy of the semantic text matching stage.
}
\label{tab:combined_verification}
\begin{minipage}{0.5\textwidth}
\centering
\textbf{(a) Extractive Summarization}
\begin{threeparttable}
\begin{tabular}{@{}lc@{}}
\toprule
\textbf{Extracted Comments} & \textbf{Count} \\
\midrule
TP (True Positives) & \cellcolor{lightgray}2634 \\
FN (False Negatives) & 110 \\
FP (False Positives) & 63 \\
\midrule
\textbf{Precision} & 0.977 \\
\textbf{Recall} & 0.960 \\
\textbf{F1 Score} & 0.968 \\
\bottomrule
\end{tabular}
\end{threeparttable}
\end{minipage}%
\begin{minipage}{0.5\textwidth}
\centering
\textbf{(a) Extractive Summarization}
\begin{threeparttable}
\begin{tabular}{@{}lcc@{}}
\toprule
 & \multicolumn{2}{c}{\textbf{Predicted Matching}} \\
\cmidrule(lr){2-3}
\textbf{Human Matching} & Matched & Not Matched \\
\midrule
Matched & \cellcolor{lightgray}685 & 95 \\
Not Matched & 197 & \cellcolor{lightgray}11058 \\
\midrule
\textbf{Precision} & 0.777 & \\
\textbf{Recall} & 0.878 & \\
\textbf{F1 Score} & 0.824 & \\
\bottomrule
\end{tabular}
\end{threeparttable}
\end{minipage}
\end{table}

\clearpage 
\newpage 

\begin{table}[ht]
\centering
\caption{\textbf{Mean token lengths of papers and human reviews in the two datasets.}}
\label{tab:paper_review_length}
\begin{tabular}{lcc}
\toprule
\textbf{Dataset} & \textbf{Paper (Mean Token Length)} & \textbf{Human Review (Mean Token Length)} \\
\midrule
\emph{ICLR} & 5,841.46 & 671.53 \\
\emph{Nature} Family Journals & 12,444.06 & 1,337.93 \\
\bottomrule
\end{tabular}
\end{table}

\clearpage 
\newpage 

\begin{table}[ht]
\centering
\caption{\textbf{Example comments extracted from LLM and human feedback on \emph{ICLR} by human coding.}}
\label{tab:values_ICLR_1}
\footnotesize
\begin{tabulary}{0.95\textwidth}{lcL@{}}
\toprule
\textbf{Human Coding} & \textbf{Source} & \textbf{Comment} \\
\midrule
Clarity and Presentation & Human & The writing is hard to follow. Since this paper introduces multiple new concepts, it was hard for me to understand... \\
\midrule
Clarity and Presentation & GPT-4 & The paper is highly technical and may be difficult to understand for readers who are not familiar with the field. The authors could provide a more detailed explanation of the IB principle... \\
\midrule
Comparison to Previous Studies & Human & 
The comparisons are flawed. In particular the 'label consistency' and 'class-center consistency' losses are disjoint with the GNN methodology, and a fairer comparison would be with GNNs that also use these two losses... \\
\midrule
Comparison to Previous Studies & GPT-4 & The paper lacks a thorough comparison with existing methods. While the authors have compared their approach with a few baselines, a more comprehensive comparison with... \\
\midrule
Theoretical Soundness & Human & IMHO, the theoretical proof is relatively trivial. The final conclusion is if the similarity is proper, the predicted action is accurate. Since the model is actually learning the proper similarity, this is equivalent to saying if the model $h$ is well trained, the output is accurate. This is obviously true. \\
\midrule
Theoretical Soundness & GPT-4 & The authors should provide more details on the theoretical analysis of the connections between message passing and consistency constraints to make it more understandable for readers... \\
\midrule
Novelty & Human & A major concern of the paper is about the model's novelty. The reviewer has doubts on the argument that a new combination of existing techniques (BN, LSTM, S\&E, skip connection) for the task of video prediction is significant enough to publish in \emph{ICLR}... \\
\midrule
Novelty & GPT-4 & The paper could be more explicit in explaining the novelty of the proposed method compared to existing techniques. It is not entirely clear how the proposed method differs from other methods that use projection features...  \\
\midrule
Reproducibility & Human & As for reproducibility, although the authors provide a reproducibility statement, it is still better to provide codes for other readers to implement the experiments... \\
\midrule
Reproducibility & GPT-4 & Reproducibility: The authors should provide more details about the experimental setup to ensure reproducibility... \\
\bottomrule
\end{tabulary}
\end{table}

\clearpage 
\newpage 

\begin{table}[ht]
\centering
\footnotesize
\caption{\textbf{Example comments extracted from LLM and human feedback on \emph{ICLR} by human coding.}}
\label{tab:values_ICLR_2}
\begin{tabulary}{0.95\textwidth}{lcL@{}}
\toprule
\textbf{Human Coding} & \textbf{Source} & \textbf{Comment} \\
\midrule
Add ablations experiments & Human & It's best to make clearer how much of the observed performance gains in experiments is from the proposed learning confidence-conditioned values, vs from some of the empirical/engineering decisions in Sec 5 practical algorithms (IQN, approx inverse visitation). Perhaps if there's a simpler (maybe tabular?) problem that doesn't have these deep RL complexities that would strengthen the results and help the reader understand the contribution better - and help to show what the learned Q-values are actually doing... \\
\midrule
Add ablations experiments & GPT-4 & The paper does not provide a clear explanation of how the proposed auxiliary tasks contribute to the improved performance of the model. The authors should provide a more detailed analysis of the impact of these tasks on the model's performance. It would also be helpful to see a comparison of the model's performance with and without these tasks... \\
\midrule
Implications of the Research & Human & One aspect I would have liked to see more of is how the findings of this submission influence future work on Gaussian processes and other Bayesian models? Are there any specific takeaways which might guide the design of models yielding improved estimates with better uncertainty calibration? \\
\midrule
Implications of the Research & GPT-4 & The paper could provide more information on the potential real-world implications of membership leakage in PLMs. For example: The authors could discuss the potential impact on users' privacy and data security. The authors could explore the potential legal and ethical implications of such privacy leakage... \\
\midrule
Ethical Aspects & Human & My main concern: I did not find IRB approval information on the human experiment. If there is, it should be mentioned, if not the authors should explain why it is not necessary in this case (and should be validated with the conference chairs). Also, the details of the experiment and instructions to the demonstrators should accompany the paper. \\
\midrule
Ethical Aspects & GPT-4 & The paper does not discuss the ethical implications of their research. While the authors' intention is to improve the security of federated learning systems, their research could potentially be misused by malicious actors... \\
\bottomrule
\end{tabulary}
\end{table}

\clearpage 
\newpage 

\begin{table}[ht]
\centering
\footnotesize
\caption{\textbf{Example comments extracted from LLM and human feedback on \emph{ICLR} by human coding.}}
\label{tab:values_ICLR_3}
\begin{tabulary}{0.95\textwidth}{lcL@{}}
\toprule
\textbf{Human Coding} & \textbf{Source} & \textbf{Comment} \\
\midrule
Add ablations experiments & Human & It's best to make clearer how much of the observed performance gains in experiments is from the proposed learning confidence-conditioned values, vs from some of the empirical/engineering decisions in Sec 5 practical algorithms (IQN, approx inverse visitation). Perhaps if there's a simpler (maybe tabular?) problem that doesn't have these deep RL complexities that would strengthen the results and help the reader understand the contribution better - and help to show what the learned Q-values are actually doing... \\
\midrule
Add ablations experiments & GPT-4 & The paper does not provide a clear explanation of how the proposed auxiliary tasks contribute to the improved performance of the model. The authors should provide a more detailed analysis of the impact of these tasks on the model's performance. It would also be helpful to see a comparison of the model's performance with and without these tasks... \\
\midrule
Implications of the Research & Human & One aspect I would have liked to see more of is how the findings of this submission influence future work on Gaussian processes and other Bayesian models? Are there any specific takeaways which might guide the design of models yielding improved estimates with better uncertainty calibration? \\
\midrule
Implications of the Research & GPT-4 & The paper could provide more information on the potential real-world implications of membership leakage in PLMs. For example: The authors could discuss the potential impact on users' privacy and data security. The authors could explore the potential legal and ethical implications of such privacy leakage... \\
\midrule
Ethical Aspects & Human & My main concern: I did not find IRB approval information on the human experiment. If there is, it should be mentioned, if not the authors should explain why it is not necessary in this case (and should be validated with the conference chairs). Also, the details of the experiment and instructions to the demonstrators should accompany the paper. \\
\midrule
Ethical Aspects & GPT-4 & The paper does not discuss the ethical implications of their research. While the authors' intention is to improve the security of federated learning systems, their research could potentially be misused by malicious actors... \\
\bottomrule
\end{tabulary}
\end{table}

\clearpage 
\newpage

\begin{figure*}[htb]  
\centering
\includegraphics[width=0.75\textwidth]{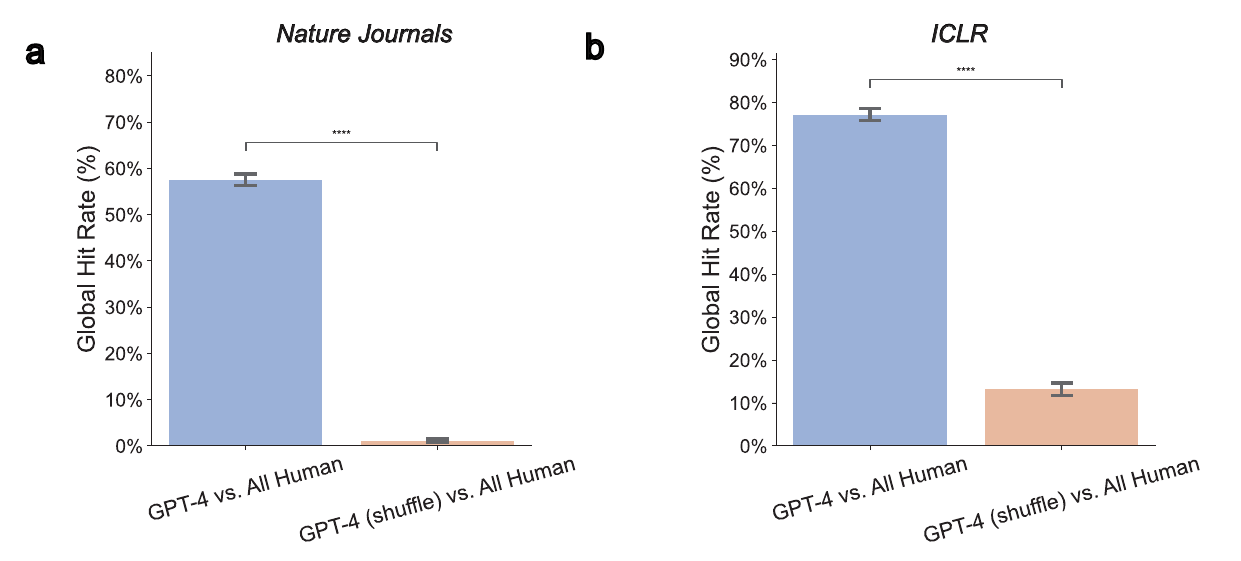}
\caption{
\textbf{Fraction of GPT-4 comments that overlap with comments raised by at least one human reviewer.}
\textbf{a}, In the \emph{Nature} family journal data, 57.55\% of the comments made by GPT-4 overlap with comments made by at least one human reviewer, suggesting a significant overlap between LLM feedback and human feedback. 
“GPT-4 (shuffle)” refers to feedback from GPT-4 for a randomly selected manuscript from the same journal and category. 
As a null model, if LLM mostly produces generic feedback applicable to many papers, 
then there would be little drop in the global hit rate between LLM feedback and the comments from each individual reviewer after the shuffling.
However, the global hit rate decreases markedly from 57.55\% to 1.13\% after shuffling, indicating that the LLM feedback is paper-specific. 
\textbf{b}, The results are similar with \emph{ICLR} data.
77.18\% of the comments made by GPT-4 overlap with comments made by at least one human reviewer. 
The shuffling result also suggests that LLM feedback is paper-specific. 
Error bars represent 95\% confidence intervals. 
*P < 0.05, **P < 0.01, ***P < 0.001, and ****P < 0.0001.
}
\label{fig:supplementary-global-hit-rates}
\end{figure*}

\clearpage 
\newpage 

\begin{figure*}[htb]  
\centering
\includegraphics[width=0.6\textwidth]{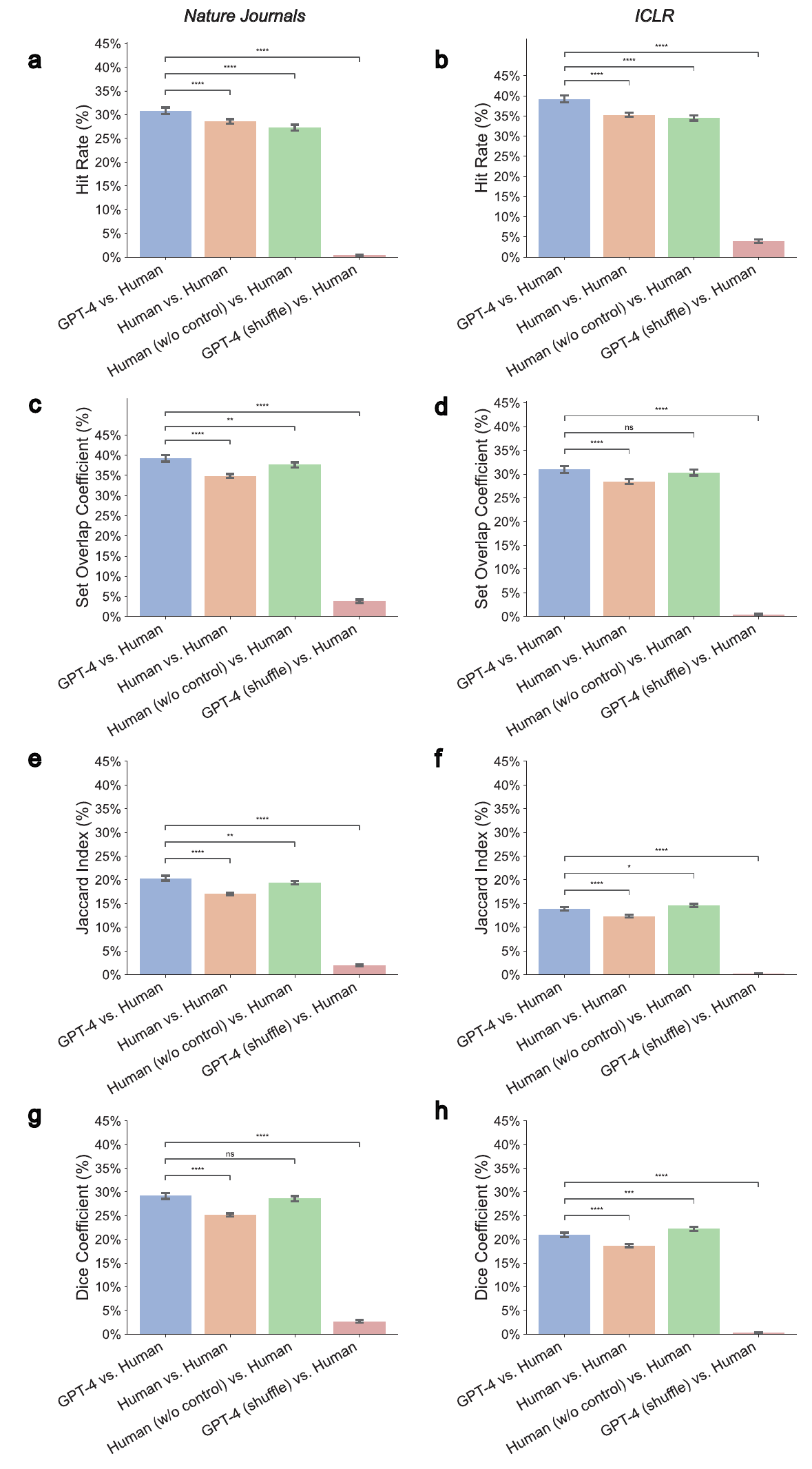}
\caption{
\textbf{Retrospective evaluation using alternative set overlap metrics for robustness check.}
\textbf{(a, b)} Hit rate.
\textbf{(c, d)} Szymkiewicz–Simpson overlap coefficient.
\textbf{(e, f)} Jaccard index.
\textbf{(g, h)} Sørensen–Dice coefficient.
Additional metrics results suggest the overlap between GPT-4 and Human is comparable to Human vs. Human overlap, suggesting the robustness of our findings across different set overlap metrics.  
Error bars represent 95\% confidence intervals. 
*P < 0.05, **P < 0.01, ***P < 0.001, and ****P < 0.0001.
}
\label{fig:supplementary-metrics-vertical}
\end{figure*}

\clearpage 
\newpage

\begin{figure*}[htb]  
\centering
\includegraphics[width=0.75\textwidth]{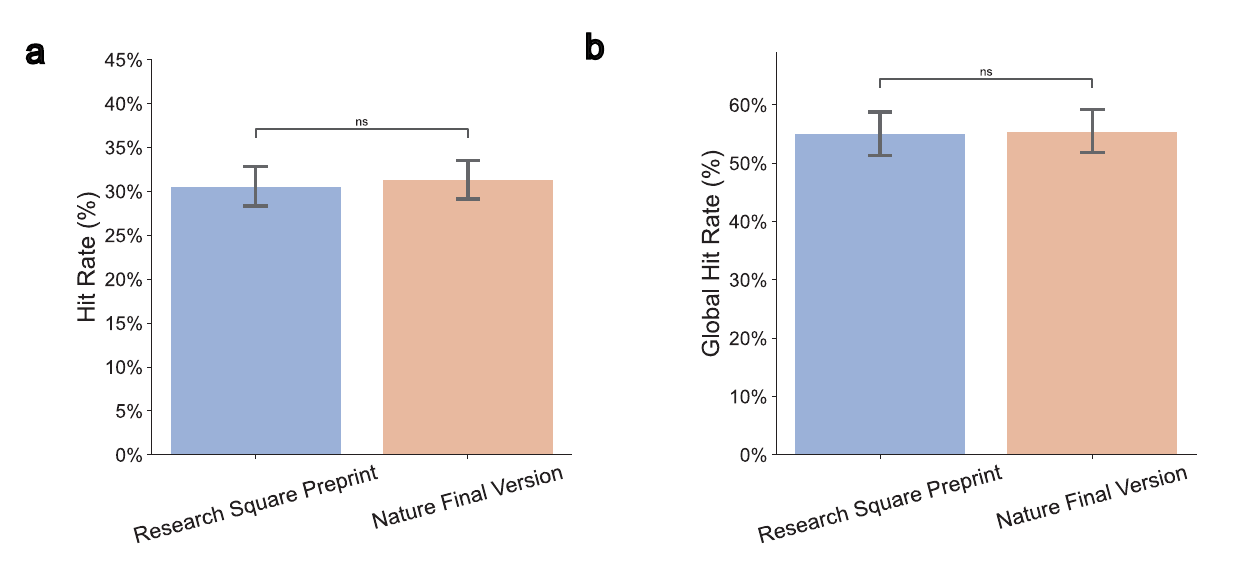}
\caption{
\textbf{Assessing GPT-4's Feedback on Research Square Preprints from the \emph{Nature} Family of Publications.}
This analysis focuses on 408 preprints from the Nature family of publications available on Research Square. 
We selected only the preprints posted within four week of the official “Received Date” indicated in the final publication to ensure that the preprint matches the version sent for review. These 408 preprints include, for example, 103 papers eventually published in \emph{Nature Communications}, 72 papers published in \emph{Nature}. 
\textbf{(a)} Overlap analysis between GPT-4’s feedback on these preprints and individual human reviews. 
Over 30\% of GPT-4 raised comments n these preprints overlap with the comments from an individual reviewer (hit rate). 
\textbf{(b)} Over half (55.4\%) of the suggestions made by GPT-4 on these preprints were mentioned by at least one human reviewer. 
This is referred to as the global hit rate, which is the percentage of GPT-4 comments that overlap with comments raised by at least one human reviewer.
Error bars represent 95\% confidence intervals. 
}
\label{fig:supplementary-ResearchSquare}
\end{figure*}

\clearpage 
\newpage

\begin{figure*}[htb]  
\centering
\includegraphics[width=0.75\textwidth]{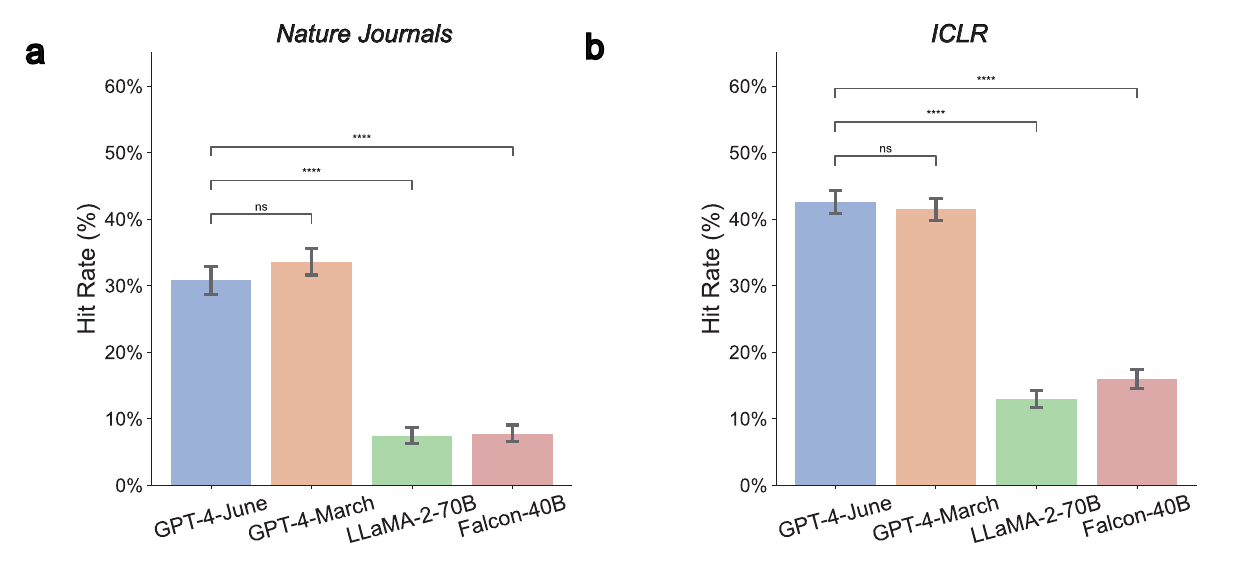}
\caption{
\textbf{Assessing variability in GPT-4's feedback and feedback of open-source LLMs.} 
 Overlap analysis of the March 2023 and June 2023 checkpoints of GPT-4, along with two state-of-the-art open-source language models: Llama 2 (70 billion parameters) and Falcon (40 billion parameters) to provide feedback. 
We evaluated their performance on two datasets: \textbf{(a)} 500 randomly selected papers from the \emph{Nature} family of journals and \textbf{(b)} 500 randomly selected papers from the \emph{ICLR} conference.
The two versions of GPT-4 have very similar overlap with human reviews: for \emph{Nature} family, 33.6\% overlap with human reviews in March and 30.8\% in June; for \emph{ICLR}, 41.5\% overlap with human reviews in March and 42.6\% in June. 
In contrast, both open-source models, Llama 2 and Falcon, perform significantly worse than GPT-4. 
For \emph{Nature} family papers, the overlap between Llama 2 and human reviews is just 7.5\%, and for Falcon, it's 7.8\%. For \emph{ICLR} papers, the overlap is slightly higher, with Llama 2 at 13.0\% and Falcon at 15.9\%.
Error bars represent 95\% confidence intervals. 
*P < 0.05, **P < 0.01, ***P < 0.001, and ****P < 0.0001.
}
\label{fig:supplementary-LLAMA}
\end{figure*}

\clearpage 
\newpage 

\begin{figure*}[t!]
    \centering
    \includegraphics[width=0.7\linewidth]{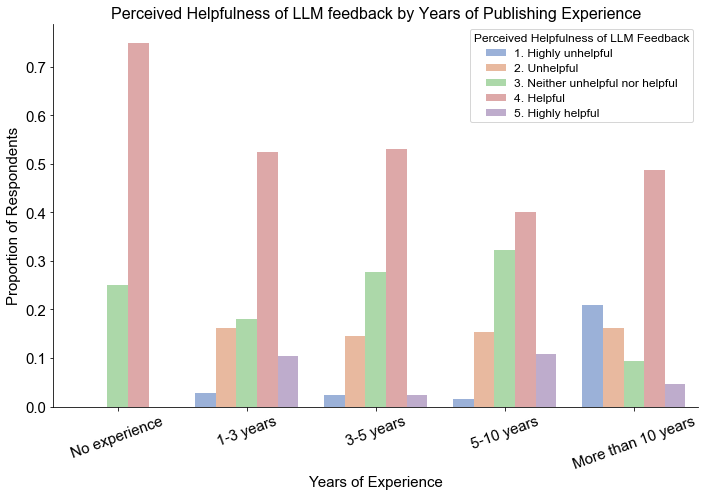}
    
    \caption{\textbf{LLM-based scientific feedback is considered helpful among participants with varying publishing experience.}}
    \label{fig:plots_helpfulness_years_pdf}
\end{figure*}

\begin{figure*}[t!]
    \centering
    \includegraphics[width=0.7\linewidth]{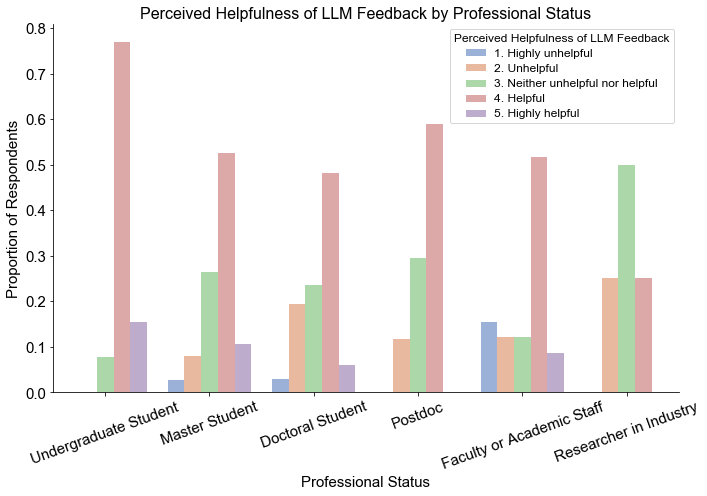}
    
    \caption{\textbf{LLM-based scientific feedback is considered helpful among participants of different professional statuses.}}
    \label{fig:plots_helpfulness_status_pdf}
\end{figure*}

\clearpage 
\newpage 

\begin{figure*}[t!]  
\centering
\includegraphics[width=\textwidth]{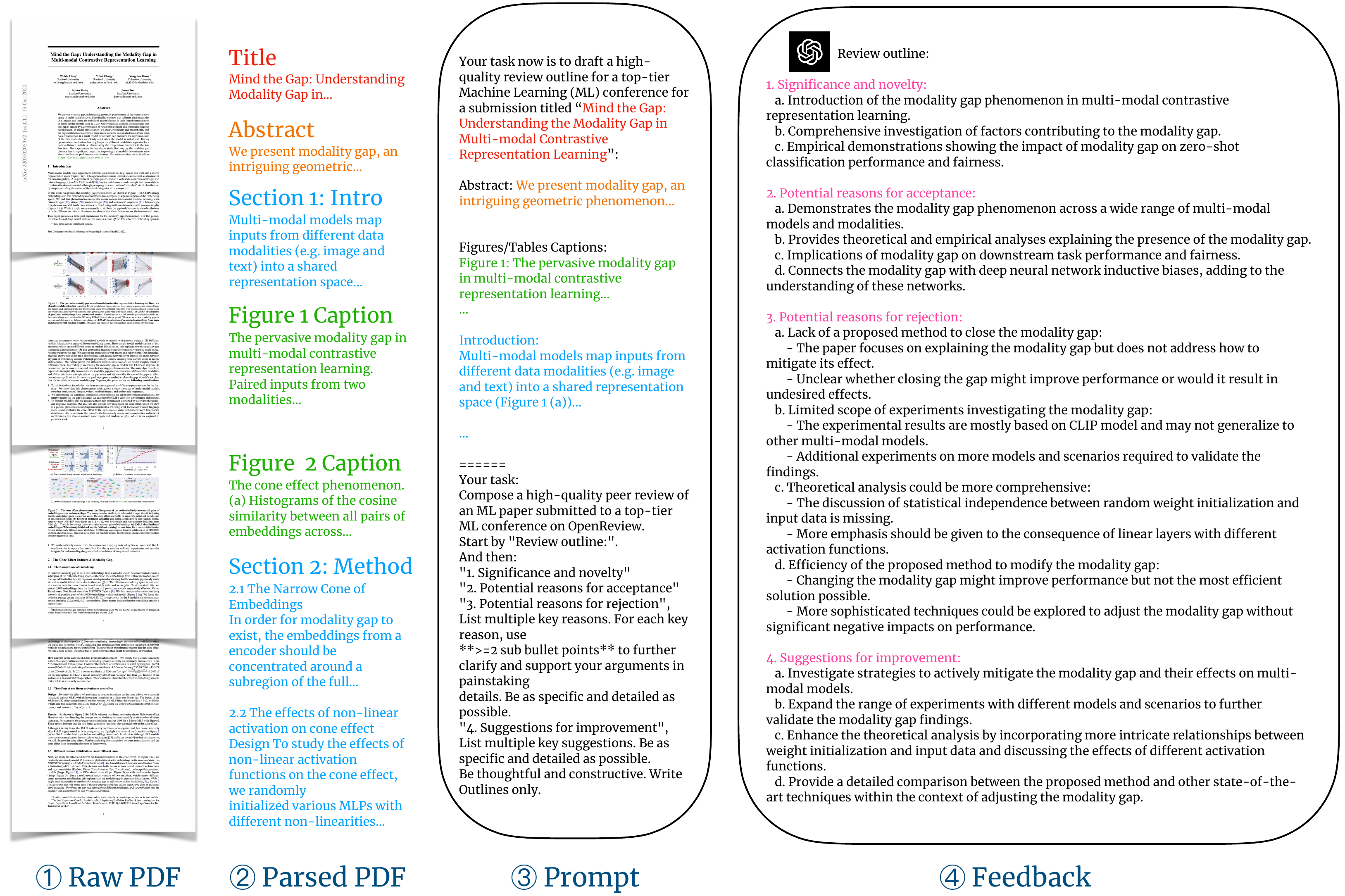}
\caption{
\textbf{Schematic of the LLM scientific feedback generation system.} Manuscript text, including figure captions, is extracted from the manuscript PDFs and integrated into a prompt for LLM GPT-4, which then generates feedback. 
The generated feedback provides structured comments in four sections: 
significance and novelty, 
potential reasons for acceptance, 
potential reasons for rejection, and 
suggestions for improvement.
In the example, GPT-4 raised a comment that the paper reported a modality gap phenomenon but did not propose methods to close the gap or demonstrate the benefits of doing so. 
}
\label{fig:supplementary-input_output}
\end{figure*}

\clearpage 
\newpage

\begin{figure*}[t!]  
\centering
\includegraphics[width=0.75\textwidth]{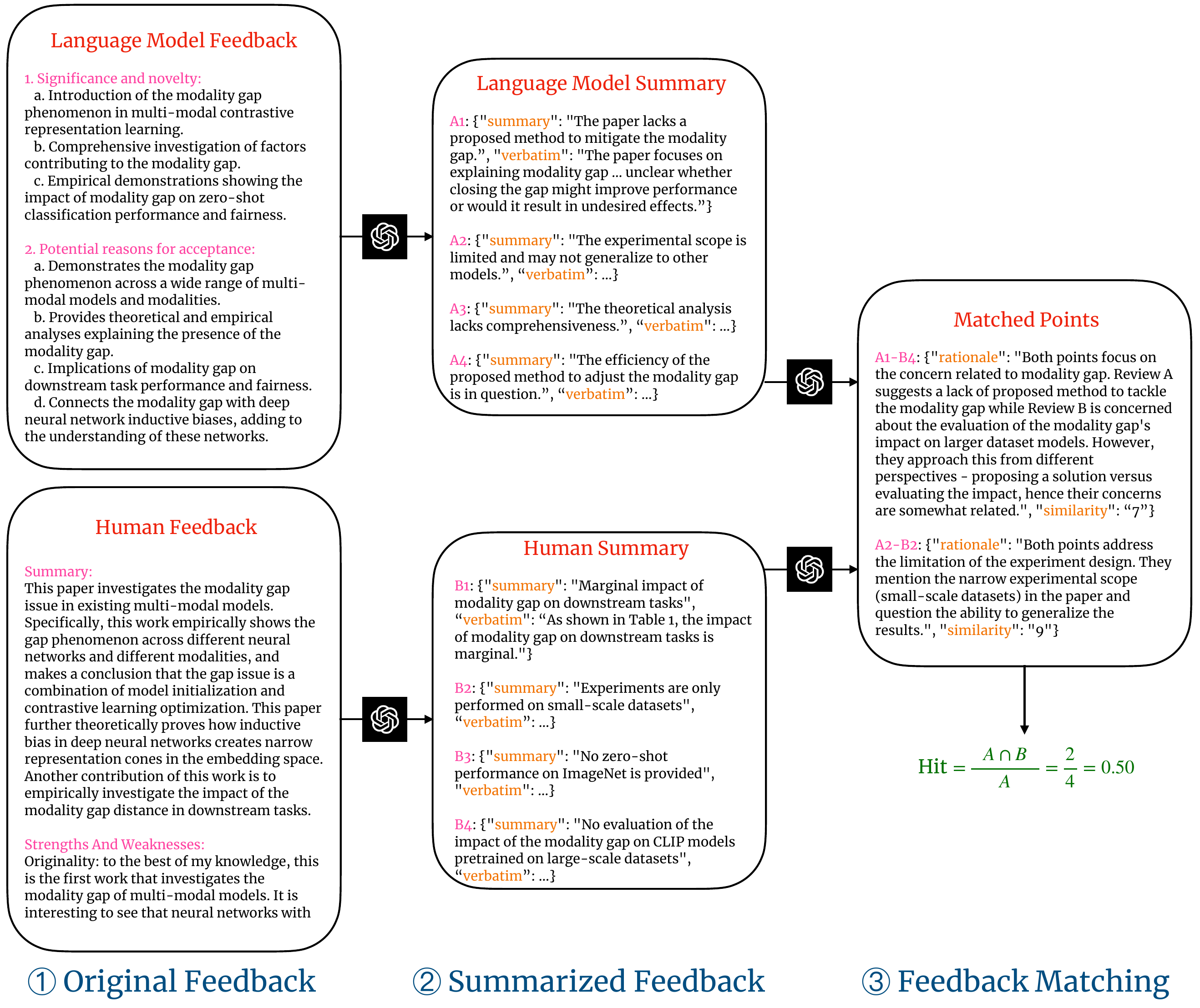}
\caption{
\textbf{Workflow of the retrospective comment matching pipeline for scientific feedback texts.}
\textbf{a}, This two-stage pipeline compares comments raised in LLM generated feedback with those from human reviewers.
\textbf{b}, Extraction: Utilizing LLM's capabilities for information extraction, key comments are extracted from both LLM generated and human-written reviews.
\textbf{c}, Matching: LLM is used for semantic similarity analysis, where comments from LLM and human feedback are matched. For each paired comment, a similarity rating and justifications are provided. A similarity threshold $\ge$ 7 is set to filter out weakly-matched comments. This threshold is chosen based on human validations of the matching stage. 
}
\label{fig:supplementary-retrospective_evaluation}
\end{figure*}

\clearpage 
\newpage

\begin{figure*}[htb]  
\centering
\includegraphics[width=1\textwidth]{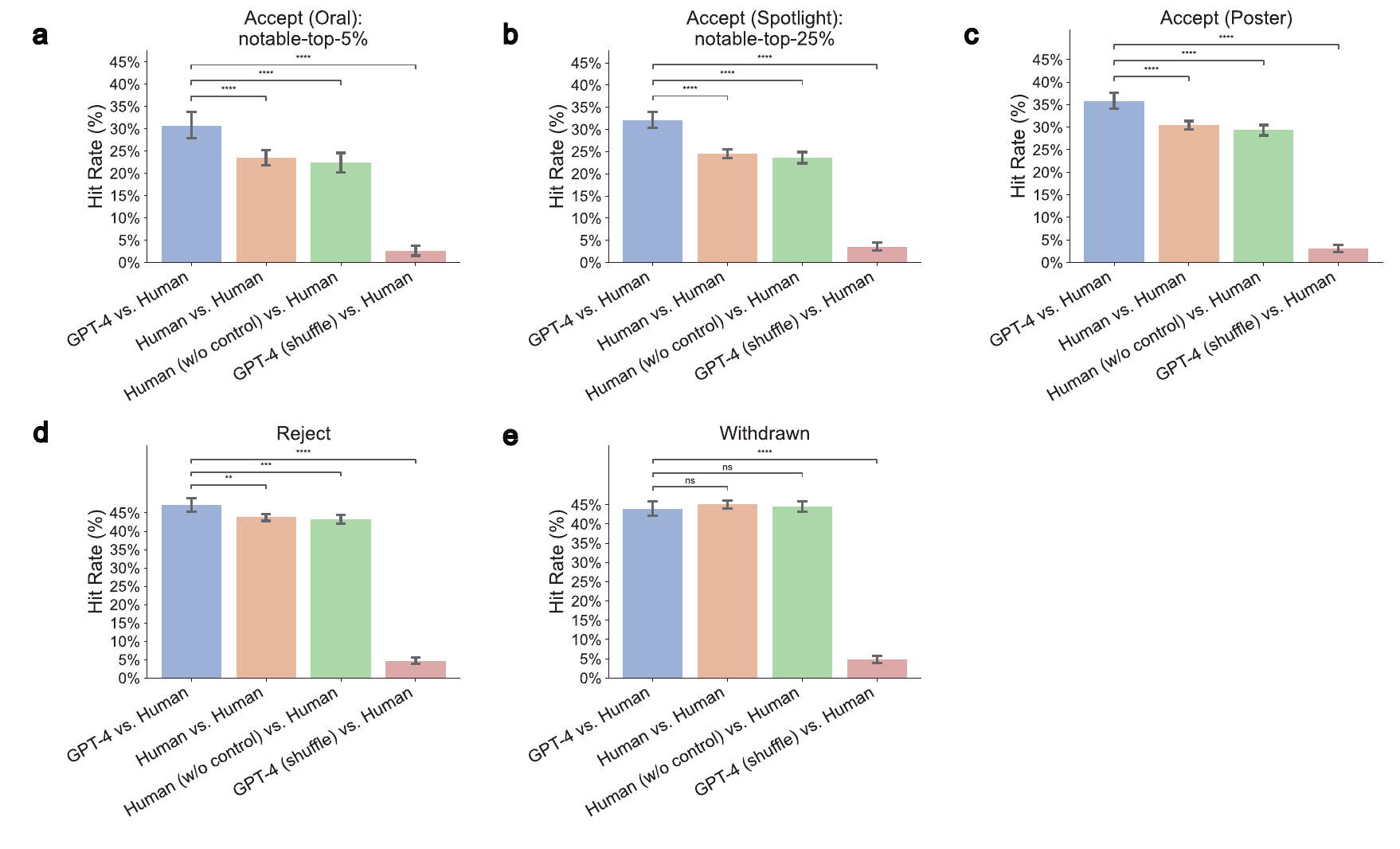}
\caption{
\textbf{Robustness check on controlling the number of comments for hit rate measurement in \emph{ICLR} data.}
Consider two sets of comments \(A\) and \(B\). The hit rate refers to the percentage of comments in set \(A\) that match those in set \(B\). 
To enable a more direct comparison between the hit rates of GPT-4 vs. Human and Human vs. Human, we have controlled for the number of comments when calculating the hit rate for Human vs. Human.  Specifically, we chose only the first $N$ comments made by the first reviewer (i.e., the comments used as set \(A\)) for matching, where $N$ corresponds to the number of comments made by GPT-4 for the same paper. 
We also included Human (w/o control) vs. Human, where we matched all human comments without limiting the number of comments. 
\textbf{a}, Hit rate comparison for \emph{ICLR} papers that were accepted with oral presentations (top 5\% accepted papers), 
\textbf{b}, papers accepted with spotlights (top 25\% of accepted papers),
\textbf{c}, papers accepted for poster presentations,
\textbf{d}, rejected papers,
\textbf{e}, papers withdrawn post-review. 
This figure indicates that results with and without the control for the number of comments are largely similar.
The overlap between LLM feedback and human feedback appears comparable to the overlap observed between two human reviewers.
Error bars represent 95\% confidence intervals. 
*P < 0.05, **P < 0.01, ***P < 0.001, and ****P < 0.0001.
}
\label{fig:supplementary-ICLR-single-stratify}
\end{figure*}

\clearpage 
\newpage 

\begin{figure*}[htb]  
\centering
\includegraphics[width=1\textwidth]{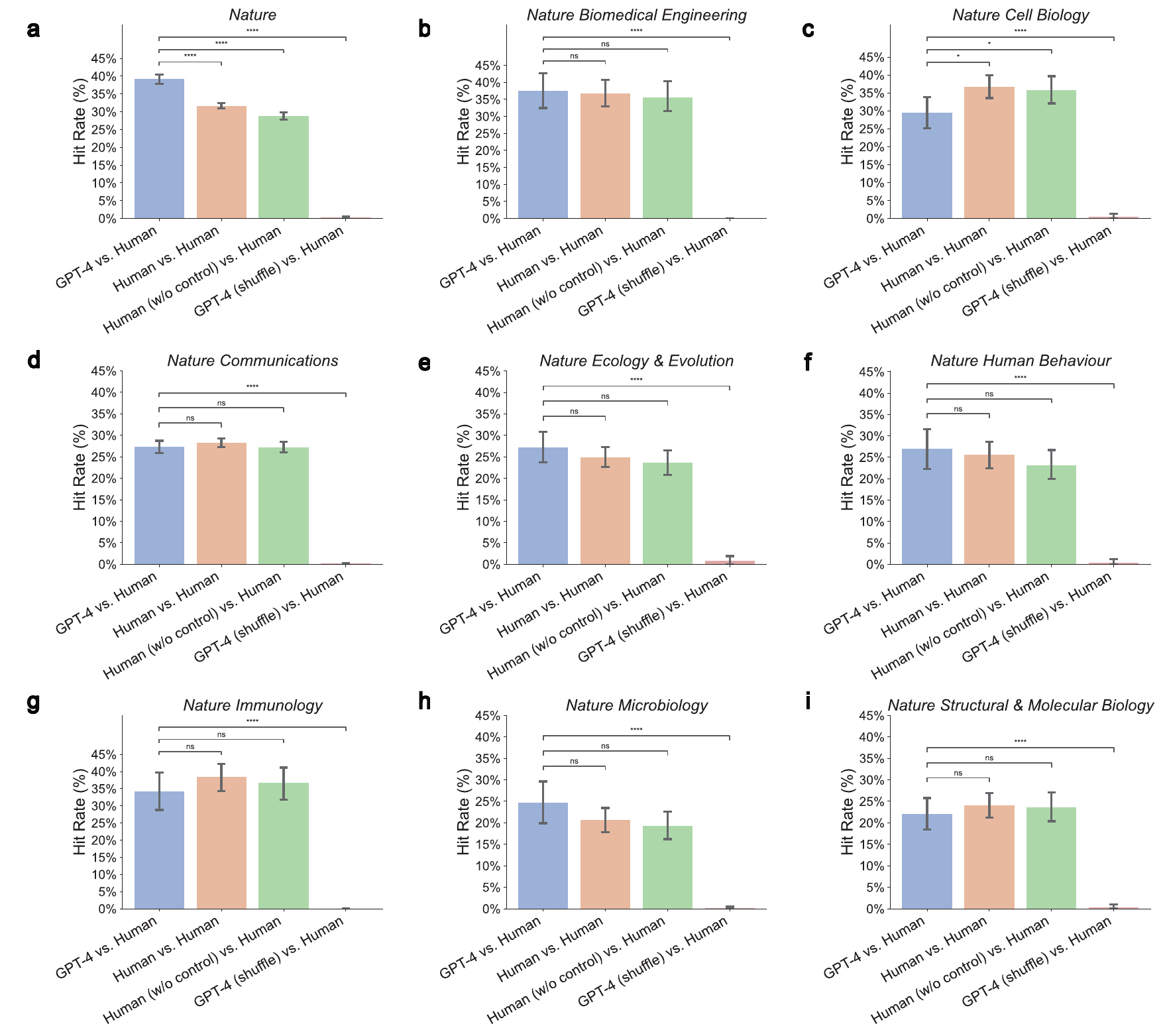}
\caption{
\textbf{Robustness check on controlling for the number of comments when measuring overlap in \emph{Nature} family journal data (1/2).}
Results are stratified by journals: 
(\textbf{a}) \emph{Nature}, 
(\textbf{b}) \emph{Nature Biomedical Engineering}, 
(\textbf{c}) \emph{Nature Cell Biology},
(\textbf{d}) \emph{Nature Ecology \& Evolution}, 
(\textbf{e}) \emph{Nature Human Behaviour}, 
(\textbf{f}) \emph{Nature Communications}, 
(\textbf{g}) \emph{Nature Immunology}, 
(\textbf{h}) \emph{Nature Microbiology}, 
and (\textbf{i}) \emph{Nature Structural \& Molecular Biology}. 
This figure indicates that results with and without the control for the number of comments are largely similar. The overlap between LLM feedback and human feedback seems comparable to the overlap observed between two human reviewers.
Results for additional \emph{Nature} family journals are shown in \textbf{Fig.~\ref{fig:supplementary-Nature-comm-single-stratify}}.  
Error bars represent 95\% confidence intervals. 
*P < 0.05, **P < 0.01, ***P < 0.001, and ****P < 0.0001.
}
\label{fig:supplementary-Nature-main-single-stratify}
\end{figure*}

\clearpage 
\newpage 

\clearpage 
\newpage 

\begin{figure*}[htb]  
\centering
\includegraphics[width=1\textwidth]{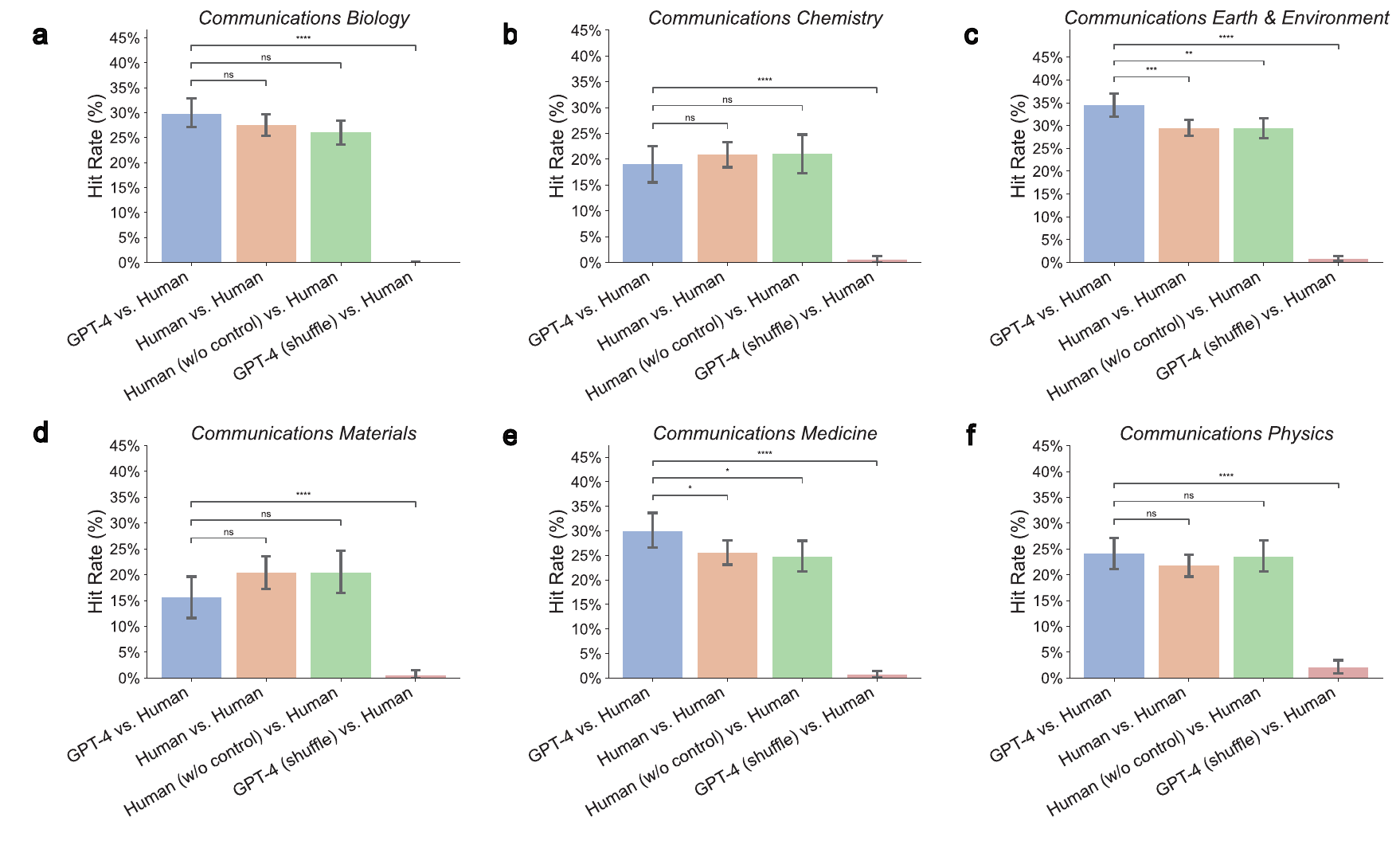}
\caption{
\textbf{Robustness check on controlling for the number of comments in measuring hit rates in \emph{Nature} family journal data (2/2).}
Results are stratified by journals (continuing \textbf{Fig.~\ref{fig:supplementary-Nature-main-single-stratify}}): 
(\textbf{a}) \emph{Communications Biology},  
(\textbf{b}) \emph{Communications Chemistry}, 
(\textbf{c}) \emph{Communications Earth \& Environment}, 
(\textbf{d}) \emph{Communications Materials}, 
(\textbf{e}) \emph{Communications Medicine}, 
and (\textbf{f}) \emph{Communications Physics}. 
This figure indicates that results with and without the control for the number of comments are largely similar. The overlap between LLM feedback and human feedback seems comparable to the overlap observed between two human reviewers.
Error bars represent 95\% confidence intervals. 
*P < 0.05, **P < 0.01, ***P < 0.001, and ****P < 0.0001.
}
\label{fig:supplementary-Nature-comm-single-stratify}
\end{figure*}

\clearpage 
\newpage

\begin{figure*}[htb]  
\centering
\includegraphics[width=1\textwidth]{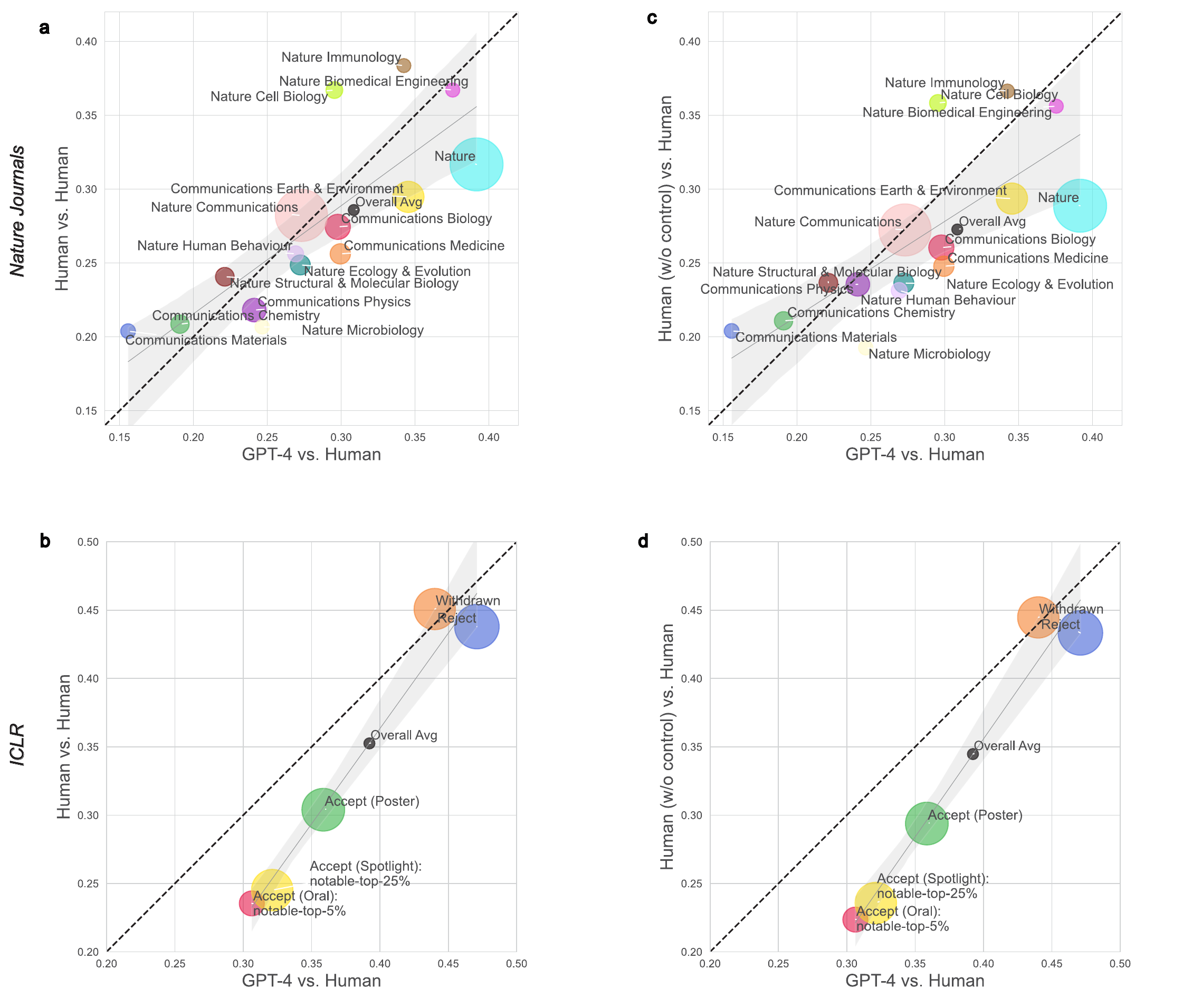}
\caption{
\textbf{Robustness check on controlling for the number of comments in the correlation of hit rates.}
\textbf{(a)} Hit rates across various \emph{Nature} family journals, controlling for the number of comments (\(r = 0.80\), \(P = 3.69 \times 10^{-4}\)). 
\textbf{(b)} Hit rates across \emph{ICLR} papers with different decision outcomes, controlling for the number of comments (\(r = 0.98\), \(P = 3.28 \times 10^{-3}\)). 
\textbf{(c)} Hit rates across various \emph{Nature} family journals without control for the number of comments (\(r = 0.75\), \(P = 1.37 \times 10^{-3}\)).
\textbf{(d)} Hit rates across \emph{ICLR} papers with different decision outcomes without control for the number of comments (\(r = 0.98\), \(P = 2.94 \times 10^{-3}\)). 
Human (w/o control) represents matching with all human comments without controlling for the number of comments. 
Circle size indicates sample size.
}
\label{fig:supplementary-scatter}
\end{figure*}

\clearpage 
\newpage

\begin{figure}
    \centering
    \includegraphics[width=\linewidth]{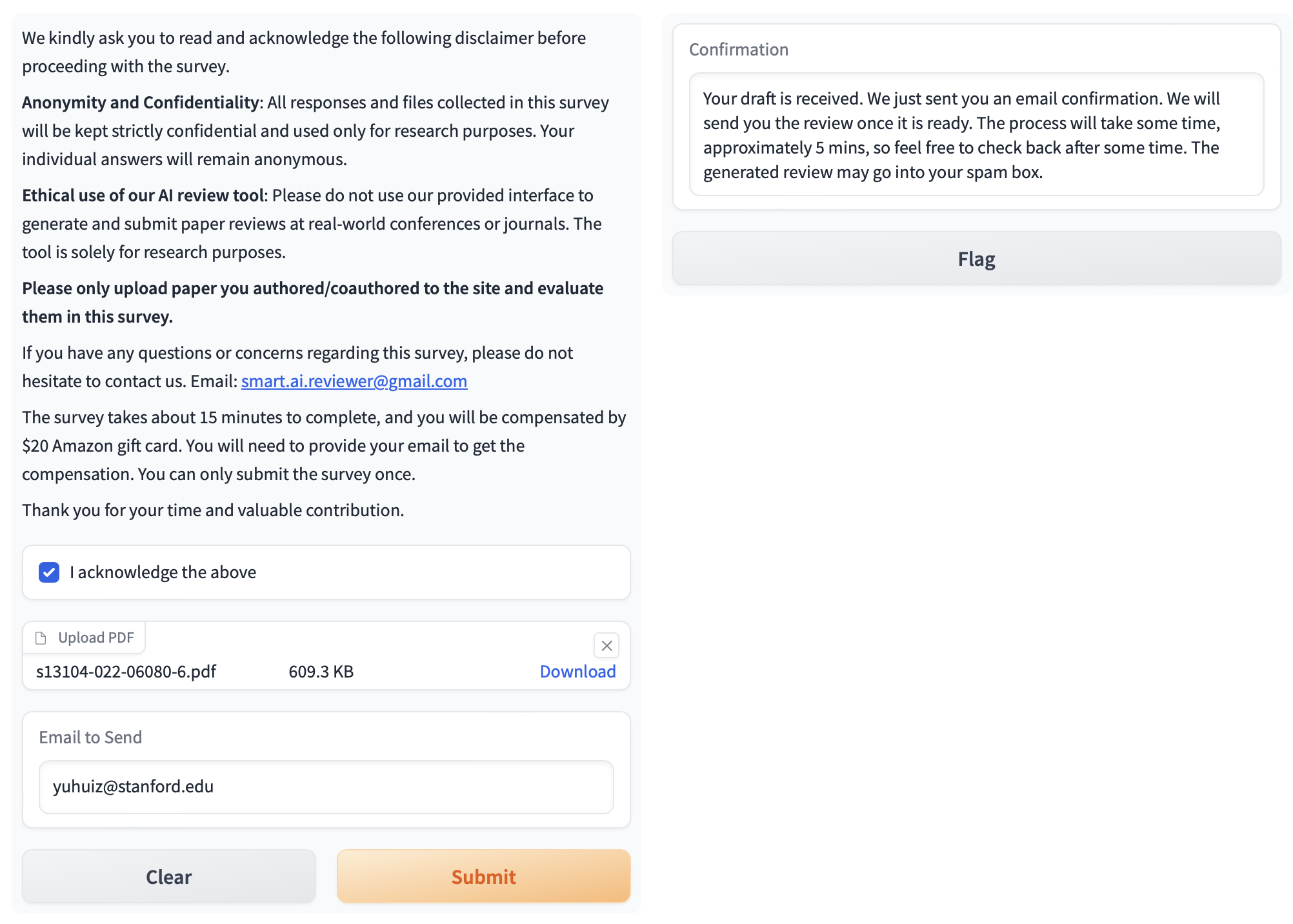}
\caption{\textbf{Web interface for our prospective user study designed to characterize the capability of LLM in providing helpful scientific feedback.} Users are instructed to upload a research paper they authored in PDF format, following which LLM feedback is sent to their email. Users were guided to upload only papers published after GPT-4's last training cut-off in September 2021. An ethics statement has been included to discourage the direct application of LLM content for review-related tasks.}
    \label{fig:gradio}
\end{figure}

\clearpage 
\newpage

\clearpage
\newpage

\begin{figure}[htb]
\begin{lstlisting}
Your task now is to draft a high-quality review outline for a Nature family journal for a submission titled <Title>: 

```
<Paper_content>
```


======
Your task: 
Compose a high-quality peer review of a paper submitted to a Nature family journal.

Start by "Review outline:".
And then: 
"1. Significance and novelty"
"2. Potential reasons for acceptance"
"3. Potential reasons for rejection", List multiple key reasons. For each key reason, use **>=2 sub bullet points** to further clarify and support your arguments in painstaking details. Be as specific and detailed as possible. 
"4. Suggestions for improvement", List multiple key suggestions. Be as specific and detailed as possible. 

Be thoughtful and constructive. Write Outlines only. 
\end{lstlisting}
\caption{
\textbf{Prompt template employed with GPT-4 for generating scientific feedback on papers from the \emph{Nature} journal family dataset.} 
<Paper\_content> denotes text extracted from the paper, including the paper's abstract, figure and table captions, and other main text sections. 
For clarity and succinctness, GPT-4 was directed to formulate a structured outline of scientific feedback. 
GPT-4 was requested to generate four feedback sections: significance and novelty, potential reasons for acceptance, potential reasons for rejection, and suggestions for improvement.
The feedback was generated by GPT-4 in a single pass.
}
\label{fig:Nature-prompt-generate}
\end{figure}

\clearpage 
\newpage

\begin{figure}[htb]
\begin{lstlisting}
Your goal is to identify the key concerns raised in the review, focusing only on potential reasons for rejection.

Please provide your analysis in JSON format, including a concise summary, and the exact wording from the review.

Submission Title: <Title>

=====Review:
```
<Review_Text>
```

=====
Example JSON format:
{{
    "1": {{"summary": "<your concise summary>", "verbatim": "<concise, copy the exact wording in the review>"}},
    "2": ...
}}

Analyze the review and provide the key concerns in the format specified above. Ignore minor issues like typos and clarifications. Output only json. 
\end{lstlisting}
\caption{
\textbf{Prompt template employed with GPT-4 for extractive text summarization of comments in LLM and human feedback.} The output is structured in JSON (JavaScript Object Notation) format, where each JSON key assigns an ID to a specific point, and the corresponding value provides the content of the point. 
}
\label{fig:extract-comment-prompt}
\end{figure}

\clearpage 
\newpage 

\begin{figure}[htb]
\begin{lstlisting}
Your task is to carefully analyze and accurately match the key concerns raised in two reviews, ensuring a strong correspondence between the matched points. Examine the verbatim closely.

=====Review A:
```
<JSON extracted comments for Review A from previous step>
```

=====Review B:
```
<JSON extracted comments for Review B from previous step>
```

Please follow the example JSON format below for matching points. For instance, if point 1 from review A is nearly identical to point 2 from review B, it should look like this:

{{
"A1-B2": {{"rationale": "<explain why A1 and B2 are nearly identical>", "similarity": "<5-10, only an integer>"}},
...
}}


Note that you should only match points with a significant degree of similarity in their concerns. Refrain from matching points with only superficial similarities or weak connections. For each matched pair, rate the similarity on a scale of 5-10.

5. Somewhat Related: Points address similar themes but from different angles.
6. Moderately Related: Points share a common theme but with different perspectives or suggestions.
7. Strongly Related: Points are largely aligned but differ in some details or nuances.
8. Very Strongly Related: Points offer similar suggestions or concerns, with slight differences.
9. Almost Identical: Points are nearly the same, with minor differences in wording or presentation.
10. Identical: Points are exactly the same in terms of concerns, suggestions, or praises.


If no match is found, output an empty JSON object. Provide your output as JSON only.
\end{lstlisting}
\caption{
\textbf{Prompt template employed with GPT-4 for semantic text matching to match the points of shared comments between two feedback.} 
The input comprises two lists of comments in JSON format obtained from the preceding step. 
GPT-4 was then directed to identify common points between the two lists, and generate a new JSON where each key corresponds to a pair of matching point IDs, and the associated value provides the rationale for the match. 
}
\label{fig:match-comment-prompt}
\end{figure}

\chapter{Discussion}
\label{ch:conclusion}

This dissertation presents the first systematic, large-scale analysis of how large language models are transforming writing and information ecosystems across society. 
Across multiple empirical studies, we develop scalable methods for monitoring LLM-modified content, uncover adoption trends across scientific and societal contexts, and assess the utility of LLMs as tools for scientific feedback. These contributions offer foundational evidence and tools for understanding how generative AI is changing the nature of writing, communication, and knowledge production.

A key methodological contribution of this work is the development of population-level approaches for understanding LLM impact that sidestep the limitations and biases inherent in individual-level detection methods. Our distributional GPT quantification framework~\citep{liang2024monitoring} represents a fundamental shift from trying to identify specific instances of AI-generated content to estimating aggregate patterns across large corpora. This approach is more than seven orders of magnitude more computationally efficient than existing detection methods while achieving superior accuracy under realistic distribution shifts. Crucially, this population-level perspective enables us to detect corpus-wide trends such as the homogenization of review content or the subtle linguistic shifts that accompany widespread LLM adoption.

Our investigation of institutional responses reveals a critical fairness challenge in AI governance discussions. Widely deployed GPT detectors risk systematically misclassifying writing by non-native English speakers as AI-generated while accurately identifying native-written texts~\citep{Liang2023GPTDA}. This bias stems from detectors' reliance on linguistic variability patterns that inadvertently penalize the constrained expressions characteristic of non-native speakers. As institutions increasingly adopt AI content detectors in educational and evaluative settings, these tools risk exacerbating existing inequities rather than promoting academic integrity. The finding that simple prompting strategies can easily bypass current detection methods while simultaneously revealing their bias against non-native speakers underscores the fundamental challenges facing AI governance approaches that rely on black-box detection technologies.

Our analysis reveals that LLM usage has diffused rapidly across diverse domains following the release of ChatGPT. In scientific peer review, we estimate that between 7--17\% of conference reviews may have been substantially modified by LLMs~\citep{liang2024monitoring}, with usage concentrated in last-minute reviews and correlated with lower epistemic engagement. In scientific publishing, we detect a sharp post-ChatGPT rise in LLM-modified text across over 1 million papers from \textit{arXiv}, \textit{bioRxiv}, and \textit{Nature} journals~\citep{liang2024mapping}. These trends are particularly pronounced in computer science and among authors who frequently post preprints. Extending beyond academia, we find widespread adoption of LLM-generated content in consumer complaints, corporate press releases, job postings, and international organizations~\citep{liang2025widespread}, with plateauing patterns by late 2023. The consistent temporal adoption trajectory—initial lag, sharp rise, and stabilization—suggests either saturation or improved model sophistication that renders LLM-generated content indistinguishable from human writing.

These studies offer a lens into the sociotechnical dynamics underlying generative AI adoption. Our results highlight that LLM use is not uniform; instead, it varies by region, field, institution, and demographic context. For instance, we observe higher estimated adoption among younger companies, more crowded research areas, and non-native English-speaking regions~\citep{liang2025widespread}. These patterns raise nuanced considerations: while LLMs may democratize writing by lowering linguistic and resource barriers, they may also introduce epistemic homogenization—dampening stylistic and argumentative diversity within corpora~\citep{liang2024monitoring}.

Beyond measuring usage, this thesis explores the potential of LLMs to serve as constructive tools for enhancing access to scientific feedback. Our study demonstrates that LLM-generated feedback on research manuscripts shows considerable overlap with human peer reviews, especially for weaker papers. In a user study involving over 300 researchers, the majority found GPT-4 feedback helpful and often complementary to traditional reviews~\citep{liang2024can}. These results suggest that LLM-based feedback systems may help address systemic inequities in scientific evaluation, particularly for researchers from under-resourced regions or earlier stages of career development. However, we caution that LLM feedback should augment—not replace—expert human judgment. Responsible integration must avoid over-reliance, especially in formal peer review settings where nuance and domain-specific rigor are essential.

Several limitations constrain the interpretation of our findings and point toward important directions for future research. While our framework demonstrates high accuracy in detecting LLM-modified content, it identifies statistical patterns consistent with LLM-generated text rather than providing direct evidence of LLM usage. The method may not reliably detect content that was generated by LLMs but heavily edited by humans, or content generated by increasingly sophisticated models that better mimic human writing patterns. Our analysis focuses primarily on English-language content and may not capture adoption trends in multilingual contexts where LLM usage patterns could differ significantly. Additionally, the correlational nature of our findings limits causal inferences about the relationships between LLM usage and observed changes in writing patterns or institutional practices.

Future research should investigate the downstream consequences of widespread LLM adoption for content quality, creativity, and information credibility~\cite{liang2025mixtureoftransformers,liang2025mixture,zhang2025adaptive,liang2024systematic,liang2023random}. How do LLM-assisted papers compare to traditionally authored works in terms of accuracy, novelty, and impact? What are the long-term implications of potential homogenization effects for scientific discourse and democratic deliberation? How can institutions design governance frameworks that harness the benefits of LLMs while mitigating risks of bias and over-dependence? Addressing these questions will require interdisciplinary collaboration combining computational methods, social science theory, and policy expertise.

The rapid and widespread adoption of LLMs documented in this dissertation represents a fundamental shift in how humans engage with writing and information. As these technologies become increasingly embedded in institutional and societal infrastructures, ensuring their responsible integration becomes critical for preserving the diversity, authenticity, and epistemic robustness that underpin effective communication and knowledge production. Our work provides both methodological tools and empirical evidence to support evidence-based governance approaches, but realizing the benefits of human-AI collaboration in writing will require continued vigilance, adaptation, and commitment to equity and transparency in AI deployment.

\bibliographystyle{plain}
\bibliography{main}

\end{document}